\newcommand{\bb}{\mathbb}
\newcommand{\eu}{\EuScript}
\newcommand{\Scr}{\mathscr}
\begin{document}

\title{Hilbert Space Embeddings and Metrics on Probability Measures}

\author{\name Bharath K. Sriperumbudur \email bharathsv@ucsd.edu \\
       \addr Department of Electrical and Computer Engineering\\
       University of California, San Diego\\
       La Jolla, CA 92093-0407, USA
       \AND
		 \name Arthur Gretton \email arthur@tuebingen.mpg.de\\
		 \addr MPI for Biological Cybernetics\\
		 Spemannstra\ss e 38\\
		 72076, T\"{u}bingen, Germany
		 \AND
		 \name Kenji Fukumizu \email fukumizu@ism.ac.jp\\
		 \addr Institute of Statistical Mathematics\\
		 4-6-7 Minami-Azabu, Minato-ku\\
		 Tokyo 106-8569, Japan
		 \AND
		 \name Bernhard Sch\"{o}lkopf \email 	bernhard.schoelkopf@tuebingen.mpg.de\\
		 \addr MPI for Biological Cybernetics\\
		 Spemannstra\ss e 38\\
		 72076, T\"{u}bingen, Germany
		 \AND
		 \name Gert R. G. Lanckriet \email gert@ece.ucsd.edu \\
       \addr Department of Electrical and Computer Engineering\\
       University of California, San Diego\\
       La Jolla, CA 92093-0407, USA
}


\editor{}

\maketitle

\begin{abstract}
A Hilbert space embedding for probability measures has recently been proposed, with applications including dimensionality reduction, homogeneity testing, and independence testing. This embedding represents any probability measure as a mean element in a reproducing kernel Hilbert space (RKHS). A pseudometric on the space of probability measures can be defined as the distance between  distribution embeddings: we denote this as $\gamma_k$, indexed by the kernel function $k$ that defines the inner product in the RKHS.
\par We present three theoretical properties of $\gamma_k$. First, we consider the question of determining the conditions on the kernel $k$
for which $\gamma_k$ is a metric: such $k$ are denoted {\em characteristic kernels}. Unlike pseudometrics, a metric is zero only when two distributions
coincide, thus ensuring the RKHS embedding maps all distributions uniquely (i.e., the embedding is injective). While previously published conditions may apply only in restricted circumstances (e.g. on compact domains), and are difficult to check, our conditions are straightforward and intuitive: \emph{integrally strictly positive definite kernels} are characteristic. Alternatively, if a bounded continuous kernel is translation-invariant on $\bb{R}^d$, then it is characteristic if and only if the support of its Fourier transform is the entire $\bb{R}^d$. Second, we show that the distance between distributions under $\gamma_k$ results from an interplay between the properties of the kernel and the distributions, by demonstrating that distributions are close in the embedding space when their differences occur at higher frequencies. Third, to understand the nature of the topology induced by $\gamma_k$, we relate $\gamma_k$ to other popular metrics on probability measures, and present conditions on the kernel $k$ under which $\gamma_k$ metrizes the weak topology.
\end{abstract}

\begin{keywords}
Probability metrics, Homogeneity tests, Independence tests, Kernel methods, Universal kernels, Characteristic kernels, Hilbertian metric, Weak topology.
\end{keywords}

\section{Introduction}\label{Sec:Introduction}
The concept of distance between probability measures is a fundamental one and has found many applications in probability theory, information theory and statistics \citep{Rachev-91, Rachev-98, Liese-06}. In statistics, distances between probability measures are used in a variety of applications, including hypothesis tests (homogeneity tests, independence tests, and goodness-of-fit tests), density estimation, Markov chain monte carlo, etc. As an example, homogeneity testing, also called the two-sample problem, involves choosing whether to accept or reject a null hypothesis $H_0\,:\,\bb{P}=\bb{Q}$ versus the alternative $H_1\,:\,\bb{P}\ne \bb{Q}$, using random samples $\{X_j\}^{m}_{j=1}$ and $\{Y_j\}^{n}_{j=1}$ drawn i.i.d. from probability distributions $\bb{P}$ and $\bb{Q}$ on a topological space $(M,\eu{A})$. It is easy to see that solving this problem is equivalent to testing $H_0:\gamma(\bb{P},\bb{Q})=0$ versus $H_1:\gamma(\bb{P},\bb{Q})>0$, where $\gamma$ is a metric (or, more generally, a semi-metric\footnote{Given a set $M$, a \emph{metric} for $M$ is a function $\rho:M\times M\rightarrow\bb{R}_+$ such that \emph{(i)} $\forall\,x,\,\rho(x,x)=0$, \emph{(ii)} $\forall\,x,y,\,\rho(x,y)=\rho(y,x)$, \emph{(iii)} $\forall\,x,y,z,\,\rho(x,z)\le \rho(x,y)+\rho(y,z)$, and \emph{(iv)} $\rho(x,y)=0\Rightarrow x=y$. A semi-metric only satisfies \emph{(i)}, \emph{(ii)} and \emph{(iv)}. A pseudometric only satisfies \emph{(i)-(iii)} of the properties of a metric. 
Unlike a metric space $(M,\rho)$, points in a pseudometric space need not be distinguishable: one may have $\rho(x,y)=0$ for $x\ne y$. \par Now, in the two-sample test, though we mentioned that $\gamma$ is a metric/semi-metric, it is sufficient that $\gamma$ satisfies \emph{(i)} and \emph{(iv)}.}) on the space of all probability measures defined on $M$. The problems of testing independence and goodness-of-fit can be posed in an analogous form. In non-parametric density estimation, $\gamma(p_n,p_0)$ can be used to study the quality of the density estimate, $p_n$, that is based on the samples $\{X_j\}^n_{j=1}$ drawn i.i.d. from $p_0$. Popular examples for $\gamma$ in these statistical applications include the \emph{Kullback-Leibler divergence},
the \emph{total variation distance}, the \emph{Hellinger distance} \citep{Vajda-89}~---~these three are specific instances of the generalized $\phi$-divergence 
\citep{Ali-66, Csiszar-67}~---~the \emph{Kolmogorov distance} \citep[Section 14.2]{Lehmann-05}, the \emph{Wasserstein distance} \citep{Barrio-99}, 
etc. 
\par In probability theory, the distance between probability measures is used in studying limit theorems, the popular example being the central limit theorem. Another application is in metrizing the weak convergence of probability measures on a separable metric space, where the \emph{L\'{e}vy-Prohorov distance} \citep[Chapter 11]{Dudley-02} and \emph{dual-bounded Lipschitz distance} (also called the \emph{Dudley metric}) \citep[Chapter 11]{Dudley-02} are commonly used.
\par In the present work, we will consider a particular pseudometric\footnotemark[1] on probability
distributions which is an instance of an \emph{integral probability metric} (IPM) \citep{Muller-97}. Denoting $\Scr{P}$ the set of all Borel probability measures on $(M,\eu{A})$, the IPM between $\bb{P}\in \Scr{P}$ and $\bb{Q}\in \Scr{P}$ is defined as
\begin{equation}\label{Eq:MMD}
\gamma_\eu{F}(\bb{P},\bb{Q})=\sup_{f\in\eu{F}}\left|\int_Mf\,d\bb{P}-\int_Mf\,d\bb{Q}\right|,
\end{equation}
where $\eu{F}$ is a class of real-valued bounded measurable functions on $M$. In addition to the general application domains discussed earlier for metrics on probabilities, IPMs have been used in proving central limit theorems using Stein's method \citep{Stein-72, Barbour-05}, and are popular in empirical process theory \citep{Vaart-96}. Since most of the applications listed above require $\gamma_\eu{F}$ to be a metric on $\Scr{P}$, the choice of $\eu{F}$ is critical (note that irrespective of $\eu{F}$, $\gamma_\eu{F}$ is a pseudometric on $\Scr{P}$). The following are some examples of $\eu{F}$ for which $\gamma_\eu{F}$ is a metric.
\begin{itemize}
\item[(a)] $\eu{F}=C_b(M)$, the space of bounded continuous functions on $(M,\rho)$, where $\rho$ is a metric \citep[Chapter 19, Definition 1.1]{Shorack-00}.
\item[(b)] $\eu{F}=C_{bu}(M)$, the space of bounded $\rho$-uniformly continuous functions on $(M,\rho)$~---~Portmonteau theorem \citep[Chapter 19, Theorem 1.1]{Shorack-00}.
\item[(c)] $\eu{F}=\{f:\Vert f\Vert_\infty\le 1\}=:\eu{F}_{TV}$, where $\Vert f\Vert_\infty=\sup_{x\in M}|f(x)|$. $\gamma_\eu{F}$ is called the \emph{total variation distance} \citep[Chapter 19, Proposition 2.2]{Shorack-00}, which we denote as $TV$, i.e., $\gamma_{\eu{F}_{TV}}=:TV$. 
\item[(d)] $\eu{F}=\{f:\Vert f\Vert_L\le 1\}=:\eu{F}_W$, where $\Vert f\Vert_L:=\sup\{|f(x)-f(y)|/\rho(x,y):x\ne y\,\,\text{in}\,\, M\}$. $\Vert f\Vert_L$ is the Lipschitz semi-norm of a real-valued function $f$ on $M$ and $\gamma_\eu{F}$ is called the \emph{Kantorovich metric}. If $(M,\rho)$ is separable, then $\gamma_\eu{F}$ equals the \emph{Wasserstein distance} \cite[Theorem 11.8.2]{Dudley-02}, denoted as $W:=\gamma_{\eu{F}_W}$.
\item[(e)] $\eu{F}=\{f:\Vert f\Vert_{BL}\le 1\}=:\eu{F}_\beta$, where $\Vert f\Vert_{BL}:=\Vert f\Vert_{L}+\Vert f\Vert_\infty$. $\gamma_\eu{F}$ is called the \emph{Dudley metric} \citep[Chapter 19, Definition 2.2]{Shorack-00}, denoted as $\beta:=\gamma_{\eu{F}_\beta}$.
\item[(f)] $\eu{F}=\{\mathds{1}_{(-\infty,t]}:t\in\bb{R}^d\}=:\eu{F}_{KS}$. $\gamma_\eu{F}$ is called the \emph{Kolmogorov distance} \citep[Theorem 2.4]{Shorack-00}.
\item[(g)] $\eu{F}=\{e^{\sqrt{-1}\langle\omega,\cdot\rangle}:\omega\in\bb{R}^d\}=:\eu{F}_c$. This choice of $\eu{F}$ results in the maximal difference between the characteristic functions of $\bb{P}$ and $\bb{Q}$. That $\gamma_{\eu{F}_c}$ is a metric on $\Scr{P}$ follows from the \emph{uniqueness theorem} for characteristic functions \citep[Theorem 9.5.1]{Dudley-02}.
\end{itemize}
\par Recently, \cite{Gretton-06} and \cite{Smola-07}  considered $\eu{F}$ to be the unit ball in a reproducing kernel Hilbert space (RKHS) $\eu{H}$ \citep{Aronszajn-50}, with $k$ as its reproducing kernel (r.k.), i.e., $\eu{F}=\{f:\Vert f\Vert_\eu{H}\le 1\}=:\eu{F}_k$ (also see Chapter 4 of \cite{Berlinet-04} and references therein for  related work): we denote $\gamma_{\eu{F}_k}=:\gamma_k$. 
While we have seen many possible $\eu{F}$ for which $\gamma_\eu{F}$ a metric,
$\eu{F}_k$ has a number of important advantages:
\begin{itemize}
\item \textbf{Estimation of $\gamma_\eu{F}$:} In applications such as hypothesis testing, $\bb{P}$ and $\bb{Q}$ are known only through the respective random samples $\{X_j\}^m_{j=1}$ and $\{Y_j\}^n_{j=1}$ drawn i.i.d. from each, and $\gamma_\eu{F}(\bb{P},\bb{Q})$ is estimated based on these samples. One approach is to compute $\gamma_\eu{F}(\bb{P},\bb{Q})$ using the empirical measures $\bb{P}_m=\frac{1}{m}\sum^m_{j=1}\delta_{X_j}$ and $\bb{Q}_n=\frac{1}{n}\sum^n_{j=1}\delta_{Y_j}$, where $\delta_x$ represents a Dirac measure at $x$. It can be shown that choosing $\eu{F}$ as 
$C_b(M)$, $C_{bu}(M)$, $\eu{F}_{TV}$ or $\eu{F}_c$ results in this approach not yielding consistent estimates of $\gamma_\eu{F}(\bb{P},\bb{Q})$ for all $\bb{P}$ and $\bb{Q}$ \citep{Devroye-90}. Although choosing $\eu{F}=\eu{F}_W$ or $\eu{F}_\beta$ yields consistent estimates of $\gamma_\eu{F}(\bb{P},\bb{Q})$ for all $\bb{P}$ and $\bb{Q}$ when $M=\bb{R}^d$, the rates of convergence are dependent on $d$ and become slow for large $d$ \citep{Sriperumbudur-09}. On the other hand, $\gamma_{k}(\bb{P}_m,\bb{Q}_n)$ is a $\sqrt{mn/(m+n)}$-consistent estimator of $\gamma_{k}(\bb{P},\bb{Q})$ if $k$ is measurable and bounded, for all $\bb{P}$ and $\bb{Q}$. If $k$ is translation invariant on $M=\bb{R}^d$, the rate is independent of $d$ \citep{Gretton-06, Sriperumbudur-09}, an important property when dealing with high dimensions. Moreover, $\gamma_\eu{F}$ is not straightforward to compute when $\eu{F}$ is $C_b(M),\,C_{bu}(M),\,\eu{F}_W$ or $\eu{F}_\beta$ \citep[Section 2.3]{Weaver-99}: by contrast, $\gamma^2_k(\bb{P},\bb{Q})$ is simply a sum of expectations of the kernel $k$ (see Theorem~\ref{Theorem:MMD-II} and (\ref{Eq:computeMMD})).
\item \textbf{Comparison to $\phi$-divergences:} Instead of using $\gamma_\eu{F}$ in statistical applications, one can also use $\phi$-divergences. However, the estimators of $\phi$-divergences (especially the Kullback-Leibler divergence) exhibit arbitrarily slow rates of convergence depending on the distributions (see \citet{Wang-05, Nguyen-08} and references therein for details), while, as noted above, $\gamma_k(\bb{P}_m,\bb{Q}_n)$ exhibits good convergence behavior.
\item \textbf{Structured domains:} Since $\gamma_k$ is dependent only on the kernel (see Theorem~\ref{Theorem:MMD-II}) and kernels can be defined on arbitrary domains $M$ \citep{Aronszajn-50}, choosing $\eu{F}=\eu{F}_k$ provides the flexibility of measuring the distance between probability measures defined on structured domains \citep{Borgwardt-06b} like graphs, strings, etc., unlike $\eu{F}=\eu{F}_{KS}$ or $\eu{F}_c$, which can handle only $M=\bb{R}^d$.
\end{itemize}
\par The distance measure $\gamma_k$ has appeared in a wide variety of applications. These include statistical hypothesis testing, of homogeneity \citep{Gretton-06}, independence \citep{Gretton-08}, and conditional independence \citep{Fukumizu-08a}; as well as in machine learning
applications including kernel independent component analysis \citep{Bach-02,Gretton-05a} and  kernel based dimensionality reduction for supervised learning \citep{Fukumizu-04}. In these applications, kernels offer a linear approach to deal with higher order statistics: given the problem of homogeneity testing, for example, differences in higher order moments are encoded as differences in the means of nonlinear features of the variables. To capture all nonlinearities that are relevant to the problem at hand, the embedding RKHS therefore has to be ``sufficiently large" that differences in the embeddings correspond to differences of interest in the distributions. Thus, a natural question is how to guarantee  $k$ provides a sufficiently rich RKHS so as to detect {\em any} difference in distributions. 
A second problem is to determine what properties of  distributions result in their being proximate or distant in the embedding space. Finally, we would like to compare  $\gamma_k$  to the classical integral probability metrics listed earlier, when used to measure convergence of distributions. In the following
 section, we describe the contributions of the present paper, addressing each of these three questions in turn.

\subsection{Contributions}
The contributions in this paper are three-fold and explained in detail below. 
\subsubsection{When is $\eu{H}$ characteristic?}\label{subsubsec:contribution1}
Recently, \cite{Fukumizu-08a} introduced the concept of a \emph{characteristic kernel}, i.e., a reproducing kernel for which $\gamma_k(\bb{P},\bb{Q})=0\Leftrightarrow\bb{P}=\bb{Q}$, $\bb{P},\bb{Q}\in\Scr{P}$, i.e., $\gamma_k$ is a metric on $\Scr{P}$. The corresponding RKHS, $\eu{H}$ is referred to as a \emph{characteristic RKHS}. 
The following are two characterizations for characteristic RKHSs that have already been studied in literature:
\begin{enumerate}
\item When $M$ is compact, \citet{Gretton-06} showed that $\eu{H}$ is characteristic if $k$ is \emph{universal} in the sense of \citet[Definition 4]{Steinwart-01}, i.e., $\eu{H}$ is dense in the Banach space of bounded continuous functions with respect to the supremum norm. Examples of such $\eu{H}$ include those induced by the Gaussian and Laplacian kernels on every compact subset of $\bb{R}^d$.
\item \citet{Fukumizu-08a, Fukumizu-09} extended this characterization to non-compact $M$ and showed that $\eu{H}$ is characteristic if and only if the direct sum of $\eu{H}$ and $\bb{R}$ is dense in the Banach space of $r$-integrable (for some $r\ge 1$) functions. Using this characterization, they showed that the RKHSs induced by the Gaussian and Laplacian kernels (supported on the entire $\bb{R}^d$) are characteristic.
\end{enumerate}
\par In the present study, we provide alternative conditions for characteristic RKHSs which address several limitations of the foregoing. First, it can be difficult to verify the conditions of denseness in both of the above characterizations. Second, universality is in any case an overly restrictive condition because universal kernels assume $M$ to be compact, i.e., they induce a metric only on the space of probability measures that are supported on compact $M$. In addition, there are compactly supported kernels which are not universal, e.g., $B_{2n+1}$-splines \citep{Steinwart-01}, which can be shown to be characteristic. 
\par In Section~\ref{subsec:strictpd}, we present the simple characterization that \emph{integrally strictly positive definite} (pd) kernels (see Section~\ref{Sec:Notation} for the definition) are characteristic, i.e., the induced RKHS is characteristic \citep[also see][Theorem 4]{Sriperumbudur-09c}. This condition is more natural~---~strict pd is a natural property of interest for kernels, unlike the denseness condition~---~and much easier to understand than the characterizations mentioned above. 
Examples of integrally strictly pd kernels on $\bb{R}^d$ include the Gaussian, Laplacian, inverse multiquadratics, Mat\'{e}rn kernel family, $B_{2n+1}$-splines, etc. 
\par Although the above characterization of integrally strictly pd kernels being characteristic is simple to understand, it is only a sufficient condition and does not provide an answer for kernels that are not integrally strictly pd,\footnote{It can be shown that integrally strictly pd kernels are strictly pd (see footnote~\ref{footnote:ispd}). Therefore, examples of kernels that are not integrally strictly pd include those kernels that are not strictly pd.} e.g., a Dirichlet kernel. 
Therefore, in Section~\ref{subsec:Rd}, we provide an easily checkable condition, after making some assumptions on the kernel. We present a complete characterization of characteristic kernels when the kernel is translation invariant on $\bb{R}^d$. We show that a bounded continuous translation invariant kernel on $\bb{R}^d$ is characteristic if and only if the support of the Fourier transform of the kernel is the entire $\bb{R}^d$. This condition is easy to check compared to the characterizations described above. 
An earlier version of this result was provided by \citet{Sriperumbudur-08}: by comparison, we now present a simpler and more elegant proof. We also show that all compactly supported translation invariant kernels on $\bb{R}^d$ are characteristic. Note, however, that the characterization of integral strict positive definiteness in Section~\ref{subsec:strictpd} does not assume $M$ to be $\bb{R}^d$ nor $k$ to be translation invariant.
\par We  extend the result of Section~\ref{subsec:Rd} to $M$ being a $d$-Torus, i.e., $\bb{T}^d=S^1\times\stackrel{d}{\ldots}\times S^1\equiv [0,2\pi)^d$, where $S^1$ is a circle. In Section~\ref{subsec:Td}, we show that a translation invariant kernel on $\bb{T}^d$ is characteristic if and only if the Fourier series coefficients of the kernel are positive, i.e., the support of the Fourier spectrum is the entire $\bb{Z}^d$. The proof of this result is similar in flavor to the one in Section~\ref{subsec:Rd}. As examples, the Poisson kernel can be shown to be characteristic, while the Dirichlet kernel is not.
\par Based on the discussion so far, it is clear that the characteristic property of $k$ is characterized in many ways. Given these characterizations, we would like to understand the relation betweeen them. For example, we know that if $k$ is universal, then it is characteristic. Is the converse true? Similarly, as we mentioned before, integrally strictly pd kernels are characteristic and are also also strictly pd. Then what is the relation between characteristic and strictly pd kernels? In Section~\ref{subsec:relation}, we address these questions by exploring the relation between these characterizations, which are summarized in Figure~\ref{Fig:relation}.

\subsubsection{Dissimilar distributions with small $\gamma_k$}\label{subsubsec:contribution2}
As we have seen, the characteristic property of a kernel is critical in distinguishing between distinct probability measures. Suppose, however, that for a given characteristic kernel $k$ and for any $\varepsilon>0$, there exist $\bb{P}$ and $\bb{Q}$, $\bb{P}\ne \bb{Q}$, such that $\gamma_k(\bb{P},\bb{Q})<\varepsilon$. Though $k$ distinguishes between such $\bb{P}$ and $\bb{Q}$, it can be difficult to tell the distributions apart in applications (even with characteristic kernels), since $\bb{P}$ and $\bb{Q}$ are then replaced with finite samples, and the distance between them may not be statistically significant \citep{Gretton-06}. Therefore, given a characteristic kernel, it is of interest to determine the properties of distributions $\bb{P}$ and $\bb{Q}$ that will cause their embeddings to be close. To this end, in Section~\ref{Sec:limitation}, we show that given a kernel $k$ (see Theorem~\ref{Thm:generic} for conditions on the kernel), for any $\varepsilon>0$, there exists $\bb{P}\ne \bb{Q}$ (with non-trivial differences between them) such that $\gamma_{k}(\bb{P},\bb{Q})<\varepsilon$. 
These distributions are constructed so as to differ at a sufficiently high frequency, which is then penalized by the RKHS norm when computing $\gamma_k$.

\subsubsection{When does $\gamma_k$ metrize the weak topology on $\Scr{P}$?}\label{subsubsec:contribution3}
Given $\gamma_k$, which is a metric on $\Scr{P}$, a natural question of theoretical and practical importance to ask is ``how is $\gamma_k$ related to other probability metrics, such as the Dudley metric ($\beta$), Wasserstein distance ($W$), total variation metric ($TV$), etc?" For example, in applications like density estimation, wherein the unknown density is estimated based on finite samples drawn i.i.d. from it, the quality of the estimate is measured by computing the distance between the true density and the estimated density. In such a setting, given two probability metrics, $\rho_1$ and $\rho_2$, one might want to use the \emph{stronger}\footnote{Two metrics $\rho_1:Y\times Y\rightarrow\bb{R}_+$ and $\rho_2:Y\times Y\rightarrow \bb{R}_+$ are said to be equivalent if $\rho_1(x,y)=0\Leftrightarrow\rho_2(x,y)=0,\,\forall\,x,y\in Y$. On the other hand, $\rho_1$ is said to be stronger than $\rho_2$ if $\rho_1(x,y)=0\Rightarrow\rho_2(x,y)=0,\,\forall\,x,y\in Y$ but not vice-versa. If $\rho_1$ is stronger than $\rho_2$, then we say $\rho_2$ is weaker than $\rho_1$. Note that if $\rho_1$ is stronger (\emph{resp.} weaker) than $\rho_2$, then the topology induced by $\rho_1$ is finer (\emph{resp.} coarser) than the one induced by $\rho_2$.\label{fnote:strong-weak}} of the two to determine this distance, as the convergence of the estimated density to the true density in the stronger metric implies the convergence in the weaker metric, while the converse is not true. On the other hand, one might need to use a metric of weaker topology (i.e., coarser topology) to show convergence of some estimators, as the convergence might not occur w.r.t. a metric of strong topology. Clarifying and comparing the topology of a metric on the probabilities is, thus, important in the analysis of density estimation. Based on this motivation, in Section~\ref{Sec:weak}, we analyze the relation between $\gamma_k$ and other probability metrics, and show that $\gamma_k$ is weaker than all these other metrics.
\par It is well known in probability theory that $\beta$ is weaker than $W$ and $TV$, and it metrizes the weak topology (we will provide formal definitions in Section~\ref{Sec:weak}) on $\Scr{P}$  \citep{Shorack-00, Gibbs-02}. Since $\gamma_k$ is weaker than all these other probability metrics, i.e., the topology induced by $\gamma_k$ is coarser than the one induced by these metrics, the next interesting question to answer would be, ``When does $\gamma_k$ metrize the weak topology on $\Scr{P}$?" In other words, for what $k$, does the topology induced by $\gamma_k$ coincides with the weak topology? Answering this question would show that $\gamma_k$ is equivalent to $\beta$, while it is weaker than $W$ and $TV$. In probability theory, the metrization of weak topology is of prime importance in proving results related to the weak convergence of probability measures. Therefore, knowing the answer to the above question will help in using $\gamma_k$ as a theoretical tool in probability theory. To this end, in Section~\ref{Sec:weak}, we show that universal kernels on compact $(M,\rho)$ metrize the weak topology on $\Scr{P}$. For the non-compact setting, we assume $M=\bb{R}^d$ and provide sufficient conditions on the kernel such that $\gamma_k$ metrizes the weak topology on $\Scr{P}$.\\
\par In the following section, we introduce the notation and some definitions that are used throughout the paper. 
Supplementary results used in proofs are collected in Appendix A.
\subsection{Definitions and notation}\label{Sec:Notation}
For $M\subset\bb{R}^d$ and $\mu$ a Borel measure on $M$, $L^r(M,\mu)$ denotes the Banach space of $r$-power ($r\ge 1$) $\mu$-integrable functions. We will also use $L^r(M)$ for $L^r(M,\mu)$ and $dx$ for $d\mu(x)$ if $\mu$ is the Lebesgue measure on $M$. $C_b(M)$ denotes the space of all bounded, continuous functions on $M$. The space of all $r$-continuously differentiable functions on $M$ is denoted by $C^r(M),\,0\le r\le\infty$. For $x\in\bb{C}$, $\overline{x}$ represents the complex conjugate of $x$. We denote as $i$ the imaginary unit $\sqrt{-1}$. 
\par For a measurable function $f$ and a signed measure $\bb{P}$, $\bb{P}f:=\int f\,d\bb{P}=\int_{M}f(x)\,d\bb{P}(x)$. $\delta_{x}$ represents the Dirac measure at $x$. The symbol $\delta$ is overloaded to represent the Dirac measure, the Dirac-delta distribution, and the Kronecker-delta, which should be distinguishable from the context. For $M=\bb{R}^d$, the characteristic function, $\phi_\bb{P}$ of $\bb{P}\in\Scr{P}$ is defined as  $\phi_\bb{P}(\omega):=\int_{\bb{R}^d} e^{i\omega^Tx}\,d\bb{P}(x),\,\omega\in\bb{R}^d$. \vspace{2mm}\par\noindent
\textbf{Vanishing at infinity and $C_0(M)$:} A complex function $f$ on a locally compact Hausdorff space $M$ is said to \emph{vanish at infinity} if for every $\epsilon>0$ there exists a compact set $K\subset M$ such that $|f(x)|<\epsilon$ for all $x\notin K$. The class of all continuous $f$ on $M$ which vanish at infinity is denoted as $C_0(M)$. \vspace{2mm}\par\noindent
\textbf{Holomorphic and entire functions:} Let $D\subset\bb{C}^d$ be an open subset and $f:D\rightarrow\bb{C}$ be a function. $f$ is said to be \emph{holomorphic} at the point $z_0\in D$ if 
\begin{equation}f^\prime(z_0):=\lim_{z\rightarrow z_0} \frac{f(z_0)-f(z)}{z_0-z}
\end{equation} exists. Moreover, $f$ is called holomorphic if it is holomorphic at every $z_0\in D$. $f$ is called an \emph{entire function} if $f$ is holomorphic and $D=\bb{C}^d$.\vspace{2mm}\par\noindent
\textbf{Positive definite and strictly positive definite:} A function $k:M\times M\rightarrow\bb{R}$ is called \emph{positive definite} (pd) if, for all $n\in\bb{N}$, $\alpha_1,\ldots,\alpha_n\in\bb{R}$ and all $x_1,\ldots,x_n\in M$, we have 
\begin{equation}\label{Eq:pd}
\sum^n_{i,j=1}\alpha_i\alpha_jk(x_i,x_j)\ge 0.
\end{equation}
Furthermore, $k$ is said to be \emph{strictly pd} if, for mutually distinct $x_1,\ldots,x_n\in X$, equality in (\ref{Eq:pd}) only holds for $\alpha_1=\cdots=\alpha_n=0$. $\psi$ is said to be a positive definite function on $\bb{R}^d$ if $k(x,y)=\psi(x-y)$ is positive definite.\vspace{2mm}\par\noindent
\textbf{Integrally strictly positive definite:} Let $M$ be a topological space. A measurable and bounded kernel, $k$ is said to be integrally strictly positive definite if 
\begin{equation}\label{Eq:ispd}
\int\!\!\!\int_M k(x,y)\,d\mu(x)\,d\mu(y)>0
\end{equation}
for all finite non-zero signed Borel measures, $\mu$ defined on $M$.
\par The above definition is a generalization of \emph{integrally strictly positive definite functions} \cite[Section 6]{Stewart-76}: $\int\!\!\!\int_M k(x,y)f(x)f(y)\,dx\,dy>0$ for all $f\in L_2(\bb{R}^d)$, which is the strictly positive definiteness of the integral operator given by the kernel. Note that the above definition is \emph{not} equivalent to the definition of strictly pd kernels: if $k$ is integrally strictly pd, then it is strictly pd, while the converse is not true.\footnote{Suppose $k$ is not strictly pd. This means for all $n\in\bb{N}$ and for all mutually distinct $x_1,\ldots,x_n\in M$, there exists $\bb{R}\ni \alpha_j\ne 0$ for some $j\in \{1,\ldots, n\}$ such that $\sum^n_{j,l=1}\alpha_j\alpha_l k(x_j,x_l)=0$. By defining $\mu=\sum^n_{j=1}\alpha_j\delta_{x_j}$, it is easy to see that there exists $\mu\ne 0$ such that $\int\!\!\!\int_M k(x,y)\,d\mu(x)\,d\mu(y)=0$, which means $k$ is not integrally strictly pd. Therefore, if $k$ is integrally strictly pd, then it is strictly pd. However, the converse is not true. See \citet[Proposition 4.60, Theorem 4.62]{Steinwart-08} for an example.\label{footnote:ispd}} \vspace{2mm}\par\noindent
\textbf{Fourier transform in $\bb{R}^d$:} For $f\in L^1(\bb{R}^d)$, $\widehat{f}$ and $f^\vee$ represent the Fourier transform and inverse Fourier transform of $f$ respectively, defined as
\begin{eqnarray}
\widehat{f}(y)&\!\!\!:=\!\!\!&\frac{1}{(2\pi)^{d/2}}\int_{\bb{R}^d}e^{-iy^Tx} f(x)\,dx,\,\,y\in\bb{R}^d,\label{Eq:FT}\\
f^\vee(x)&\!\!\!:=\!\!\!&\frac{1}{(2\pi)^{d/2}}\int_{\bb{R}^d}e^{ix^Ty} f(y)\,dy,\,\,x\in\bb{R}^d.\label{Eq:IFT}
\end{eqnarray}\par\noindent
\textbf{Convolution:} If $f$ and $g$ are complex functions in $\bb{R}^d$, their convolution $f\ast g$ is defined by
\begin{equation}\label{Eq:conv}
(f\ast g)(x):=\int_{\bb{R}^d}f(y)g(x-y)\,dy,
\end{equation}
provided that the integral exists for almost all $x\in\bb{R}^d$, in the Lebesgue sense. Let $\mu$ be a finite Borel measure on $\bb{R}^d$ and $f$ be a bounded measurable function on $\bb{R}^d$. The convolution of $f$ and $\mu$, $f\ast\mu$, which is a bounded measurable function, is defined by
\begin{equation}\label{Eq:measureconvolve}
(f\ast\mu)(x):=\int_{\bb{R}^d}f(x-y)\,d\mu(y).
\end{equation}
\par\noindent
\textbf{Rapidly decaying functions, $\Scr{D}_d$ and $\Scr{S}_d$:} Let $\Scr{D}_d$ be the space of compactly supported infinitely differentiable functions on $\bb{R}^d$, i.e., $\Scr{D}_d=\{f\in C^\infty(\bb{R}^d)\,|\,\text{supp}(f)\,\,\text{is bounded}\}$, where $\text{supp}(f)=\text{cl}\left(\{x\in\bb{R}^d\,|\,f(x)\ne 0\}\right)$. A function $f:\bb{R}^d\rightarrow\bb{C}$ is said to decay rapidly, or be rapidly decreasing, if for all $N\in\bb{N}$,
\begin{equation}
\sup_{\Vert\alpha\Vert_1\le N}\sup_{x\in\bb{R}^d}(1+\Vert x\Vert^2_2)^N|(T_\alpha f)(x)|<\infty,
\end{equation}
where $\alpha=(\alpha_1,\ldots,\alpha_d)$ is an ordered $d$-tuple of non-negative $\alpha_j$, $\Vert\alpha\Vert_1=\sum^d_{j=1}\alpha_j$ and $T_\alpha=\left(\frac{1}{i}\frac{\partial}{\partial x_1}\right)^{\alpha_1}\cdots\left(\frac{1}{i}\frac{\partial}{\partial x_d}\right)^{\alpha_d}$. $\Scr{S}_d$, called the Schwartz class, denotes the vector space of rapidly decreasing functions. Note that $\Scr{D}_d\subset\Scr{S}_d$. It can be shown that for any $f\in\Scr{S}_d$, $\widehat{f}\in\Scr{S}_d$ and $f^\vee\in\Scr{S}_d$ (see \citet[Chapter 9]{Folland-99} and \citet[Chapter 6]{Rudin-91} for details).
\vspace{2mm}\par\noindent
\textbf{Distributions, tempered distributions, $\Scr{D}^\prime_d$ and $\Scr{S}^\prime_d$:} A linear functional on $\Scr{D}_d$ which is continuous  with respect to the Fr\'{e}chet topology \citep[see][Definition 6.3]{Rudin-91} is called a \emph{distribution} in $\bb{R}^d$. The space of all distributions in $\bb{R}^d$ is denoted by $\Scr{D}^\prime_d$. A linear continuous functional over the space $\Scr{S}_d$ is called a \emph{tempered distribution} and the space of all tempered distributions in $\bb{R}^d$ is denoted by $\Scr{S}^\prime_d$.
\vspace{2mm}\par\noindent
\textbf{Support of a distribution:} For an open set $U\subset\mathbb{R}^d$, $\mathscr{D}_d(U)$ denotes the subspace of $\mathscr{D}_d$ consisting of the functions with support contained in $U$. Suppose $D\in\Scr{D}^\prime_d$. If $U$ is an open set of $\bb{R}^d$ and if $D(\varphi)=0$ for every $\varphi\in\Scr{D}_d(U)$, then $D$ is said to \emph{vanish} or be \emph{null} in $U$. Let $W$ be the union of all open $U\subset\mathbb{R}^d$ in which $D$ vanishes. The complement of $W$ is the \emph{support} of $D$.
\par For complete details on distribution theory and Fourier transforms of distributions, we refer the reader to \citet[Chapter 9]{Folland-99} and \citet[Chapter 6]{Rudin-91}.

\section{Hilbert Space Embedding of Probability Measures}\label{Sec:MMD}
We previously mentioned that $\gamma_k$ is related to the theory of RKHS embedding of probability measures described in \citet{Gretton-06,Smola-07},
and originally introduced and studied  in the late 70's and early 80's (see \citet[Chapter~4]{Berlinet-04} and references therein for details). The following result shows how such embedding can be obtained through an alternative representation for $\gamma_k$.
\begin{theorem}\label{Theorem:MMD-II}
Let $\Scr{P}_k:=\{\bb{P}\in\Scr{P}:\int_M \sqrt{k(x,x)}\,d\bb{P}(x)<\infty\}$, where $k$ is measurable on $M$. 
Then for any $\bb{P},\bb{Q}\in\Scr{P}_k$, \begin{equation}
\gamma_k(\bb{P},\bb{Q})=\left\Vert\int_M k(\cdot,x)\,d\bb{P}(x)-\int_M k(\cdot,x)\,d\bb{Q}(x)\right\Vert_\eu{H}=:\Vert \bb{P}k-\bb{Q}k\Vert_{\eu{H}},\label{Eq:MMD-II}
\end{equation}
where $\eu{H}$ is the RKHS generated by $k$.
\end{theorem}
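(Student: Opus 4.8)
The plan is to realize each integration functional $f\mapsto\bb{P}f$ as an inner product against a fixed element of $\eu{H}$, and then collapse the supremum in the definition of $\gamma_k$ to a Cauchy--Schwarz computation. Fix $\bb{P}\in\Scr{P}_k$. The first step is to show that the $\eu{H}$-valued map $x\mapsto k(\cdot,x)$ is Bochner-$\bb{P}$-integrable. Its (weak) measurability follows from measurability of $k$, since $x\mapsto\langle f,k(\cdot,x)\rangle_\eu{H}=f(x)$ is measurable for each $f\in\eu{H}$; combined with separability of the closed span of $\{k(\cdot,x):x\in M\}$ this gives strong measurability via the Pettis theorem. Integrability then follows from
\[
\int_M\Vert k(\cdot,x)\Vert_\eu{H}\,d\bb{P}(x)=\int_M\sqrt{k(x,x)}\,d\bb{P}(x)<\infty,
\]
which is exactly the defining condition of $\Scr{P}_k$. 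Hence the Bochner integral $\bb{P}k:=\int_M k(\cdot,x)\,d\bb{P}(x)$ exists in $\eu{H}$ and commutes with every bounded linear functional, in particular with $\langle f,\cdot\rangle_\eu{H}$ for each $f\in\eu{H}$.

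Using the reproducing property, this gives, for every $f\in\eu{H}$,
\[
\langle f,\bb{P}k\rangle_\eu{H}=\int_M\langle f,k(\cdot,x)\rangle_\eu{H}\,d\bb{P}(x)=\int_M f(x)\,d\bb{P}(x)=\bb{P}f,
\]
where the integrand on the right is $\bb{P}$-integrable because $|f(x)|=|\langle f,k(\cdot,x)\rangle_\eu{H}|\le\Vert f\Vert_\eu{H}\sqrt{k(x,x)}$. Now let $\bb{P},\bb{Q}\in\Scr{P}_k$, so that both $\bb{P}k$ and $\bb{Q}k$ lie in $\eu{H}$. For every $f$ with $\Vert f\Vert_\eu{H}\le 1$,
\[
\left|\bb{P}f-\bb{Q}f\right|=\left|\langle f,\bb{P}k-\bb{Q}k\rangle_\eu{H}\right|\le\Vert\bb{P}k-\bb{Q}k\Vert_\eu{H}
\]
by Cauchy--Schwarz, with equality attained at $f=(\bb{P}k-\bb{Q}k)/\Vert\bb{P}k-\bb{Q}k\Vert_\eu{H}$ when $\bb{P}k\ne\bb{Q}k$ and trivially otherwise. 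Taking the supremum over the unit ball of $\eu{H}$ yields $\gamma_k(\bb{P},\bb{Q})=\Vert\bb{P}k-\bb{Q}k\Vert_\eu{H}$, which is the claim.

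I expect the only genuine obstacle to be the measure-theoretic bookkeeping in the first step: verifying strong $\bb{P}$-measurability of $x\mapsto k(\cdot,x)$ so the Bochner integral is well defined, and justifying the interchange of the integral with the inner product. The condition $\bb{P}\in\Scr{P}_k$ is tailored precisely to make $\Vert k(\cdot,x)\Vert_\eu{H}=\sqrt{k(x,x)}$ integrable, which simultaneously guarantees existence of $\bb{P}k$ and $\bb{P}$-integrability of every $f\in\eu{H}$; everything after that is Riesz representation and Cauchy--Schwarz. If one prefers to sidestep Bochner integration, the same conclusion follows by applying the Riesz representation theorem to the bounded linear functional $f\mapsto\bb{P}f$ on $\eu{H}$ (boundedness being the estimate $|\bb{P}f|\le\Vert f\Vert_\eu{H}\int_M\sqrt{k(x,x)}\,d\bb{P}(x)$), obtaining a unique representer $\mu_\bb{P}\in\eu{H}$, and then identifying $\mu_\bb{P}$ with $\int_M k(\cdot,x)\,d\bb{P}(x)$ by testing against $k(\cdot,y)$ and invoking the reproducing property.
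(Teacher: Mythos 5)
Your primary route via the Bochner integral is a genuinely different argument from the paper's, which goes directly through the Riesz representation theorem: the paper shows $T_\bb{P}[f]:=\int_M f\,d\bb{P}$ is a bounded linear functional on $\eu{H}$ (using exactly your estimate $|\bb{P}f|\le\Vert f\Vert_\eu{H}\int_M\sqrt{k(x,x)}\,d\bb{P}(x)$), invokes Riesz to obtain a representer $\lambda_\bb{P}\in\eu{H}$, identifies $\lambda_\bb{P}=\bb{P}k$ by testing against kernel sections $k(\cdot,u)$, and then collapses the supremum. That is precisely the fallback you sketch in your final paragraph, so your alternative argument coincides with the paper's.

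As for the Bochner version: it is a natural and commonly used formulation, but under the paper's stated hypotheses it has a gap in the measurability step. Pettis's theorem requires $x\mapsto k(\cdot,x)$ to be essentially separably valued, and separability of $\overline{\mathrm{span}}\{k(\cdot,x):x\in M\}$ does not follow from $k$ being merely measurable on a general topological space $M$; one would need an additional assumption such as separability of $M$ or continuity of $k$. The Riesz route sidesteps this: $\lambda_\bb{P}$ exists purely by representation of a bounded functional, and $\bb{P}k=\int_M k(\cdot,x)\,d\bb{P}(x)$ is then identified with $\lambda_\bb{P}$ only in the pointwise sense $\lambda_\bb{P}(u)=\int_M k(u,x)\,d\bb{P}(x)$, so no strongly measurable vector-valued integrand is ever needed. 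Your second argument is therefore the one that works at the stated level of generality; the Bochner argument would require you to either strengthen the hypotheses or quietly reinterpret $\bb{P}k$ as a Pettis (weak) integral, at which point it reduces to the Riesz argument anyway.
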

\begin{proof}
Let $T_\bb{P}:\eu{H}\rightarrow\bb{R}$ be the linear functional defined as $T_\bb{P}[f]:=\int_{M}f(x)\,d\bb{P}(x)$ with $\Vert T_\bb{P}\Vert:=\sup_{f\in\eu{H},f\ne 0}\frac{|T_\bb{P}[f]|}{\Vert f\Vert_{\eu{H}}}$. Consider 
\begin{equation}
|T_\bb{P}[f]|=\left|\int_{M}f\,d\bb{P}\right|\le\int_{M} |f(x)|\,d\bb{P}(x)=\int_{M}|\langle f,k(\cdot,x)\rangle_{\eu{H}}|\,d\bb{P}(x)
\le\int_M\sqrt{k(x,x)}\Vert f\Vert_\eu{H}\,d\bb{P}(x),\nonumber
\end{equation} 
which implies $\Vert T_{\bb{P}}\Vert<\infty,\,\forall\,\bb{P}\in\Scr{P}_k$, i.e., $T_\bb{P}$ is a bounded linear functional on $\eu{H}$. Therefore, by the Riesz representation theorem \citep[Theorem II.4]{Reed-72}, for each $\bb{P}\in\Scr{P}_k$, there exists a unique $\lambda_\bb{P}\in\eu{H}$ such that $T_\bb{P}[f]=\langle f,\lambda_\bb{P}\rangle_{\eu{H}},\,\forall\,f\in\eu{H}$. Let $f=k(\cdot,u)$ 
for some $u\in M$. Then, $T_\bb{P}[k(\cdot,u)]=\langle k(\cdot,u),\lambda_\bb{P}\rangle_{\eu{H}}=\lambda_\bb{P}(u)$, which implies $\lambda_\bb{P}=\int_{M}k(\cdot,x)\,d\bb{P}(x)=:\bb{P}k$. Therefore, with 
\begin{equation}
\left|\bb{P}f-\bb{Q}f\right|=\left|T_\bb{P}[f]-T_\bb{Q}[f]\right|=\left|\langle f,\lambda_\bb{P}\rangle_\eu{H}-\langle f,\lambda_\bb{Q}\rangle_\eu{H}\right|=\left|\langle f, \lambda_\bb{P}-\lambda_\bb{Q}\rangle_{\eu{H}}\right|,\nonumber
\end{equation}
we have 
\begin{equation}
\gamma_k(\bb{P},\bb{Q})=\sup_{\Vert f\Vert_{\eu{H}}\le 1}\left|\bb{P}f-\bb{Q}f\right|=\Vert\lambda_\bb{P}-\lambda_\bb{Q}\Vert_{\eu{H}}=\Vert \bb{P}k-\bb{Q}k \Vert_{\eu{H}}.\nonumber
\end{equation}
Note that this holds for any $\bb{P},\bb{Q}\in\Scr{P}_k$.
\end{proof}
Given a kernel, $k$, 
(\ref{Eq:MMD-II}) holds for all $\bb{P}\in\Scr{P}_k$. However, in practice, especially in statistical inference applications, it is not possible to check whether $\bb{P}\in\Scr{P}_k$ as $\bb{P}$ is not known. Therefore, one would prefer to have a kernel such that 
\begin{equation}\label{Eq:condition}
\int_M \sqrt{k(x,x)}\,d\bb{P}(x)<\infty,\,\forall\,\bb{P}\in\Scr{P}.
\end{equation} 
The following proposition shows that (\ref{Eq:condition}) is equivalent to the kernel being bounded. Therefore, combining Theorem~\ref{Theorem:MMD-II} and Proposition~\ref{proposition:condition} shows that if $k$ is measurable and bounded, then $\gamma_k(\bb{P},\bb{Q})=\Vert \bb{P}k-\bb{Q}k\Vert_\eu{H}$ for any $\bb{P},\bb{Q}\in\Scr{P}$.
\begin{proposition}\label{proposition:condition}
Let $f$ be a measurable function on $M$. Then $\int_M f(x)\,d\bb{P}(x)<\infty$ for all $\bb{P}\in\Scr{P}$ if and only if $f$ is bounded.
\end{proposition}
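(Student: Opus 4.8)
The plan is to prove the two implications separately; the reverse direction is immediate, and the forward direction is handled by contraposition through an explicit discrete probability measure. Since in the intended application $f(x)=\sqrt{k(x,x)}\ge 0$, I would at the outset reduce to the case $f\ge 0$ (replace $f$ by $|f|$, noting that $f$ is bounded iff $|f|$ is and reading the condition ``$\int_M f\,d\bb{P}<\infty$'' as ``$\int_M|f|\,d\bb{P}<\infty$''), which removes any ambiguity in the meaning of the integral.

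If $f$ is bounded, say $|f|\le C$ on $M$, then for every $\bb{P}\in\Scr{P}$ we have $\int_M|f|\,d\bb{P}\le C\,\bb{P}(M)=C<\infty$, using only that $\bb{P}$ is a probability measure; this needs nothing further. For the converse, suppose $f$ is unbounded, so that $\{x\in M:f(x)\ge n2^n\}\ne\emptyset$ for every $n\in\bb{N}$; pick a point $x_n$ in this set for each $n$ and set $\bb{P}:=\sum_{n=1}^\infty 2^{-n}\delta_{x_n}$. This is a countable convex combination of Dirac measures with weights summing to one, hence a Borel probability measure, so $\bb{P}\in\Scr{P}$ (the points $x_n$ need not be distinct, which is harmless). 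Because $f$ is measurable and nonnegative, term-by-term integration gives $\int_M f\,d\bb{P}=\sum_{n=1}^\infty 2^{-n}f(x_n)\ge\sum_{n=1}^\infty 2^{-n}\cdot n2^n=\sum_{n=1}^\infty n=\infty$, contradicting the assumption that $\int_M f\,d\bb{P}<\infty$ for all $\bb{P}\in\Scr{P}$. Hence $f$ is bounded.

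I do not expect a genuine obstacle here: the argument is elementary. The only points that warrant a line of justification are that $\bb{P}=\sum_n 2^{-n}\delta_{x_n}$ is indeed a countably additive Borel probability measure on $M$ — which holds on any measurable space, independently of separation axioms or of whether singletons are measurable — and that the integral may be evaluated term by term, $\int_M f\,d\bb{P}=\sum_n 2^{-n}\int_M f\,d\delta_{x_n}$, which follows from monotone convergence (equivalently, Tonelli's theorem) since the integrand is nonnegative. Everything else is routine bookkeeping.
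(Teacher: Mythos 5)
Your proof is correct and takes essentially the same approach as the paper: build a discrete probability measure $\bb{P}=\sum_n w_n\,\delta_{x_n}$ concentrated on points where $f$ is large and show $\int f\,d\bb{P}=\infty$. The only cosmetic difference is that you fix the weights $w_n=2^{-n}$ in advance and pick $x_n$ with $f(x_n)\ge n2^n$, whereas the paper fixes $x_n$ with $f(x_n)>n^2$ and takes weights $w_n$ proportional to $1/f(x_n)$; the two constructions are interchangeable.
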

\begin{proof}
One direction is straightforward because if $f$ is bounded, then $\int_M f(x)\,d\bb{P}(x)<\infty$ for all $\bb{P}\in\Scr{P}$. Let us consider the other direction. Suppose $f$ is not bounded. Then there exists a sequence $\{x_n\}\subset M$ such that $f(x_n)\stackrel{n\rightarrow\infty}{\longrightarrow} \infty$. By taking a subsequence, if necessary, we can assume $f(x_n) > n^2$ for all $n$. Then, $A := \sum^\infty_{n=1} \frac{1}{f(x_n)}<\infty$. Define a probability measure $\bb{P}$ on $M$ by
$\bb{P}=\sum^\infty_{n=1} \frac{1}{A f(x_n)}\, \delta_{x_n}$, where $\delta_{x_n}$ is a Dirac measure at $x_n$. Then, $\int_M f(x)\, d\bb{P}(x) = \frac{1}{A} \sum^\infty_{n=1} \frac{f(x_n)}{f(x_n)} = \infty$, which means if $f$ is not bounded, then there exists a $\bb{P}\in\Scr{P}$ such that $\int_M f(x)\,d\bb{P}(x)=\infty$.
\end{proof}
The representation of $\gamma_k$ in (\ref{Eq:MMD-II}) yields the embedding, 
\begin{equation}\label{Eq:embedding}
\Pi:\Scr{P}\rightarrow\eu{H}\qquad \bb{P}\mapsto\int_M k(\cdot,x)\,d\bb{P}(x),
\end{equation} 
as proposed by \citet[Chapter 4, Section 1.1]{Berlinet-04} and \citet{Smola-07}. \citet{Berlinet-04} derived this embedding as a generalization of $\delta_x\mapsto k(\cdot,x)$ (see Chapter 4 of \citet{Berlinet-04} for details), while \cite{Gretton-06} arrived at this embedding by choosing $\eu{F}=\eu{F}_k$ in (\ref{Eq:MMD}). Since $\gamma_k(\bb{P},\bb{Q})=\Vert \Pi[\bb{P}]-\Pi[\bb{Q}]\Vert_\eu{H}$, the question ``When is $\gamma_k$ a metric on $\Scr{P}$?'' is equivalent to the question ``When is $\Pi$ injective?''. Addressing these questions is the central focus of the paper and is discussed in Section~\ref{Sec:mainresults}. 
\par Before proceeding further, we present some other, equivalent representations of $\gamma_k$ which will not only improve our understanding of $\gamma_k$, but also be helpful in its computation. First, note that by exploiting the reproducing property of $k$, $\gamma_k$ can be equivalently represented as
\begin{eqnarray}
\gamma^2_k(\bb{P},\bb{Q})&\!\!=\!\!& \left\Vert \int_M k(\cdot,x)\,d\bb{P}(x)-\int_M k(\cdot,x)\,d\bb{Q}(x)\right\Vert^2_\eu{H}\nonumber\\
&\!\!=\!\!& \left\langle\int_M k(\cdot,x)\,d\bb{P}(x)-\int_M k(\cdot,x)\,d\bb{Q}(x),\int_M k(\cdot,y)\,d\bb{P}(y)-\int_M k(\cdot,y)\,d\bb{Q}(y)\right\rangle_\eu{H}\nonumber\\
&\!\!=\!\!& \left\langle\int_M k(\cdot,x)\,d\bb{P}(x),\int_M k(\cdot,y)\,d\bb{P}(y)\right\rangle_\eu{H} \nonumber\\
&&\quad\quad+ \left\langle\int_M k(\cdot,x)\,d\bb{Q}(x),\int_M k(\cdot,y)\,d\bb{Q}(y)\right\rangle_\eu{H}\nonumber\\
&&\qquad\qquad\qquad -2\left\langle\int_M k(\cdot,x)\,d\bb{P}(x),\int_M k(\cdot,y)\,d\bb{Q}(y)\right\rangle_\eu{H}\nonumber\\
&\!\!\stackrel{(a)}{=}\!\!& \int\!\!\!\int_M k(x,y)\,d\bb{P}(x)\,d\bb{P}(y)+\int\!\!\!\int_M k(x,y)\,d\bb{Q}(x)\,d\bb{Q}(y)\nonumber\\
&&\quad\quad -2\int\!\!\!\int_M k(x,y)\,d\bb{P}(x)\,d\bb{Q}(y)\label{Eq:computeMMD}\\
\label{Eq:computeMMD-1}
&\!\!=\!\!&\int\!\!\!\int_M k(x,y)\,d(\bb{P}-\bb{Q})(x)\,d(\bb{P}-\bb{Q})(y),
\end{eqnarray}
where $(a)$ follows from the fact that $\int_M f(x)\,d\bb{P}(x)=\langle f,\int_M k(\cdot,x)\,d\bb{P}(x)\rangle_\eu{H}$ for all $f\in\eu{H}$, $\bb{P}\in\Scr{P}$ (see proof of Theorem~\ref{Theorem:MMD-II}), applied with $f=\int_M k(\cdot,y)\,d\bb{P}(y)$. As motivated in Section~\ref{Sec:Introduction}, $\gamma^2_k$ is a straightforward sum of expectations of  $k$, and can be computed easily, e.g., using (\ref{Eq:computeMMD}) either in  closed form or using numerical integration techniques, depending on the choice of $k$, $\bb{P}$ and $\bb{Q}$. It is easy to show that, if $k$ is a Gaussian kernel with $\bb{P}$ and $\bb{Q}$ being normal distributions on $\bb{R}^d$, then $\gamma_k$ can be computed in a closed form (see \citet[Section III-C]{Sriperumbudur-09} for examples). In the following corollary to Theorem~\ref{Theorem:MMD-II}, we prove three results which provide a nice interpretation for $\gamma_k$ when $M=\bb{R}^d$ and $k$ is translation invariant, i.e., $k(x,y)=\psi(x-y)$, where $\psi$ is a positive definite function. We provide a detailed explanation for Corollary~\ref{cor:L2distance} in Remark~\ref{rem:L2distance}. Before stating the results, we need a famous result due to Bochner, that characterizes $\psi$. We quote this result from \citet[Theorem 6.6]{Wendland-05}.
\begin{theorem}[Bochner]\label{Theorem:Bochner}
A continuous function $\psi:\bb{R}^d\rightarrow\bb{R}$ is positive definite 
if and only if it is the Fourier transform of a finite nonnegative Borel measure $\Lambda$ on $\bb{R}^d$, i.e.,
\begin{equation}\label{Eq:Bochner}
\psi(x)=\int_{\bb{R}^d}e^{-ix^T\omega}\,d\Lambda(\omega),\,\,x\in\bb{R}^d.
\end{equation}
\end{theorem}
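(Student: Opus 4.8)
The plan is to prove the two implications separately: the ``if'' direction is an elementary computation, while the ``only if'' direction, the substance of the theorem, proceeds by Gaussian regularization followed by a weak-compactness argument. Throughout I use positive definiteness in its Hermitian form (complex coefficients with conjugation, as in \citet[Definition 6.1]{Wendland-05}), which for a real-valued $\psi$ is equivalent to (\ref{Eq:pd}) together with the symmetry $\psi(-x)=\psi(x)$; the latter is used freely. \textbf{Sufficiency.} If $\psi(x)=\int_{\bb{R}^d}e^{-ix^T\omega}\,d\Lambda(\omega)$ for a finite nonnegative Borel measure $\Lambda$, then dominated convergence (the integrand is bounded by $1$ and $\Lambda$ is finite) gives continuity of $\psi$, and for any $n$, $\alpha_1,\dots,\alpha_n\in\bb{R}$ and $x_1,\dots,x_n\in\bb{R}^d$,
\begin{equation*}
\sum_{j,l=1}^{n}\alpha_j\alpha_l\,\psi(x_j-x_l)=\int_{\bb{R}^d}\Big|\sum_{j=1}^{n}\alpha_j e^{-ix_j^T\omega}\Big|^2\,d\Lambda(\omega)\ge 0,
\end{equation*}
so $\psi$ is positive definite.

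\textbf{Necessity: regularization.} Assume $\psi$ is continuous and positive definite. Applying positive definiteness to the two-point configuration $\{0,x\}$ shows the $2\times 2$ matrix with diagonal $\psi(0)$ and off-diagonal $\psi(\pm x)$ is positive semidefinite, whence $\psi(0)\ge 0$, $\psi(-x)=\psi(x)$, and $|\psi(x)|\le\psi(0)$; if $\psi(0)=0$ then $\psi\equiv 0$ and $\Lambda=0$ works, so assume $\psi(0)>0$. For $t>0$ put $g_t(x)=e^{-t\Vert x\Vert_2^2/2}$ and $\psi_t=\psi\,g_t$. By the sufficiency part $g_t$ is positive definite, so for any points $x_1,\dots,x_n$ the matrices $[\psi(x_j-x_l)]$ and $[g_t(x_j-x_l)]$ are both positive semidefinite, and their Hadamard product $[\psi_t(x_j-x_l)]$ is positive semidefinite; thus $\psi_t$ is a continuous positive definite function, and the Gaussian factor gives $\psi_t\in L^1(\bb{R}^d)\cap L^2(\bb{R}^d)$.

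\textbf{Necessity: nonnegativity of $\widehat{\psi_t}$ and the measure $\Lambda_t$.} Approximating the double integral by Riemann sums and using continuity and integrability of $\psi_t$, the pointwise positive-definiteness inequality upgrades to $\int\!\!\int\psi_t(x-y)h(x)\overline{h(y)}\,dx\,dy\ge 0$ for all $h\in\Scr{S}_d$; by Parseval's identity the left side equals $c_d\int_{\bb{R}^d}\widehat{\psi_t}(\omega)\,|\widehat{h}(\omega)|^2\,d\omega$ with $c_d>0$. Since $h\mapsto\widehat h$ maps $\Scr{S}_d$ onto itself, choosing $h$ so that $|\widehat h|^2$ is an $L^1$-approximate identity about an arbitrary $\omega_0$ and using continuity of $\widehat{\psi_t}$ (Riemann--Lebesgue) gives $\widehat{\psi_t}(\omega_0)\ge 0$, so $p_t:=\widehat{\psi_t}\ge 0$ everywhere. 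A standard argument (the multiplication formula $\int\widehat{p_t}\,g_s=\int p_t\,\widehat{g_s}$ with Gaussians $g_s$, $s\downarrow 0$, plus $p_t\ge0$ and monotone convergence) shows $p_t\in L^1(\bb{R}^d)$, so Fourier inversion applies: $\psi_t(x)=(2\pi)^{-d/2}\int_{\bb{R}^d}e^{-ix^T\omega}p_t(\omega)\,d\omega$. Defining $\Lambda_t$ by $d\Lambda_t(\omega)=(2\pi)^{-d/2}p_t(\omega)\,d\omega$, we get $\psi_t=\widehat{\Lambda_t}$ in the sense of (\ref{Eq:Bochner}) with $\Lambda_t(\bb{R}^d)=\psi_t(0)=\psi(0)$.

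\textbf{Necessity: passage to the limit, and the main obstacle.} The family $\{\Lambda_t\}_{t>0}$ has constant total mass $\psi(0)$ and is uniformly tight: the classical estimate bounding $\Lambda_t(\{\Vert\omega\Vert>R\})$ by a constant times the average of $\psi(0)-\mathrm{Re}\,\widehat{\Lambda_t}$ over a ball of radius $\sim 1/R$ around the origin, combined with $\psi_t\to\psi$ uniformly on compacta and the continuity of $\psi$ at $0$, forces $\sup_{t}\Lambda_t(\{\Vert\omega\Vert>R\})\to 0$ as $R\to\infty$. By Prokhorov's theorem \citep[Chapter 11]{Dudley-02} some sequence $t_n\downarrow 0$ has $\Lambda_{t_n}$ converging weakly to a finite nonnegative Borel measure $\Lambda$ with $\Lambda(\bb{R}^d)=\psi(0)$. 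For fixed $x$, $\omega\mapsto e^{-ix^T\omega}$ is bounded and continuous, so $\psi_{t_n}(x)=\int e^{-ix^T\omega}\,d\Lambda_{t_n}(\omega)\to\int e^{-ix^T\omega}\,d\Lambda(\omega)$, while $\psi_{t_n}(x)=\psi(x)e^{-t_n\Vert x\Vert_2^2/2}\to\psi(x)$; equating the limits yields (\ref{Eq:Bochner}). (Uniqueness of $\Lambda$, not needed for the statement, follows from injectivity of the Fourier transform on finite Borel measures.) I expect the genuinely delicate point to be precisely this last step: deriving uniform tightness of $\{\Lambda_t\}$ from nothing more than continuity of $\psi$ at the origin, so that the limiting measure stays finite and no mass escapes to infinity; the earlier steps are routine once the Gaussian regularization is set up.
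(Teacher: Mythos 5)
The paper does not prove this theorem at all: it is quoted verbatim as a classical result from \citet[Theorem 6.6]{Wendland-05} and used as a black box, so there is no in-paper proof to compare against. Your proof is correct and self-contained, following the classical route to Bochner's theorem: sufficiency via the squared-modulus identity $\sum_{j,l}\alpha_j\alpha_l\psi(x_j-x_l)=\int\bigl|\sum_j\alpha_j e^{-ix_j^T\omega}\bigr|^2\,d\Lambda(\omega)\ge 0$; necessity via Gaussian regularization (the Schur product theorem makes $\psi_t=\psi\, g_t$ positive definite, continuous, and in $L^1\cap L^2$), the Parseval/localization argument forcing $\widehat{\psi_t}\ge 0$, the multiplication-formula trick with monotone convergence giving $\widehat{\psi_t}\in L^1$ so that Fourier inversion yields a finite nonnegative measure $\Lambda_t$ with $\Lambda_t(\mathbb{R}^d)=\psi(0)$, and the classical truncation inequality together with Prokhorov's theorem to extract a weak limit $\Lambda$ along $t_n\downarrow 0$. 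You correctly identify the one genuinely delicate step, and your resolution of it is right: uniform tightness of $\{\Lambda_t\}$ follows because $\psi_t(0)=\psi(0)$ for every $t$ while $\psi_t=\psi\, g_t\to\psi$ uniformly on any fixed neighborhood of the origin, so the truncation bound involving the average of $\psi_t(0)-\mathrm{Re}\,\psi_t$ near $0$ is uniform in $t$ and the full mass $\psi(0)$ survives the limit. Your implicit reliance on the fact that real positive definiteness of a real even function is equivalent to Hermitian positive definiteness (needed when passing to complex test functions $h$) is also sound, since evenness annihilates the imaginary part of the Hermitian quadratic form.
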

\begin{corollary}[Different interpretations of $\gamma_k$]\label{cor:L2distance}
(i) Let $M=\bb{R}^d$ and $k(x,y)=\psi(x-y)$, where $\psi:M\rightarrow\bb{R}$ is a bounded, continuous positive definite function. Then for any $\bb{P},\bb{Q}\in\Scr{P}$,
\begin{equation}\label{Eq:L2distance}
\gamma_k(\bb{P},\bb{Q})=\sqrt{\int_{\bb{R}^d}\left|\phi_\bb{P}(\omega)-\phi_\bb{Q}(\omega)\right|^2\,d\Lambda(\omega)}=:\Vert\phi_\bb{P}-\phi_\bb{Q}\Vert_{L^2(\bb{R}^d,\Lambda)},
\end{equation}
where $\phi_\bb{P}$ and $\phi_\bb{Q}$ represent the characteristic functions of $\bb{P}$ and $\bb{Q}$ respectively.\vspace{1.5mm}\\
(ii) Suppose $\theta\in L^1(\bb{R}^d)$ is a continuous bounded positive definite function and $\int_{\bb{R}^d}\theta(x)\,dx=1$. Let $\psi(x):=\psi_t(x)=t^{-d}\theta(t^{-1}x)$. Assume that $p$ and $q$ are bounded uniformly continuous Radon-Nikodym derivatives of $\bb{P}$ and $\bb{Q}$ w.r.t. the Lebesgue measure, i.e., $d\bb{P}=p\,dx$ and $d\bb{Q}=q\,dx$. Then,
\begin{equation}\label{Eq:limiting}
\lim_{t\rightarrow 0}\gamma_k(\bb{P},\bb{Q})=\Vert p-q\Vert_{L^2(\bb{R}^d)}.
\end{equation}
In particular, if $|\theta(x)|\le C(1+\Vert x\Vert_2)^{-d-\varepsilon}$ for some $C,\,\varepsilon>0$, then (\ref{Eq:limiting}) holds for all bounded $p$ and $q$ (not necessarily uniformly continuous).\vspace{1.5mm}\\
(iii) Suppose $\psi\in L^1(\bb{R}^d)$ and $\sqrt{\widehat{\psi}}\in L^1(\bb{R}^d)$. Then,
\begin{equation}\label{Eq:noise}
\gamma_k(\bb{P},\bb{Q})=(2\pi)^{-d/4}\Vert \Phi\ast\bb{P}-\Phi\ast\bb{Q}\Vert_{L^2(\bb{R}^d)},
\end{equation}
where $\Phi:=\left(\sqrt{\widehat{\psi}}\right)^\vee$ and $d\Lambda=(2\pi)^{-d/2}\widehat{\psi}\,d\omega$. Here, $\Phi\ast\bb{P}$ represents the convolution of $\Phi$ and $\bb{P}$.
\end{corollary}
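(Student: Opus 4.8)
The plan is to derive all three identities from representations of $\gamma_k$ that are already established: the ``sum of kernel expectations'' form (\ref{Eq:computeMMD-1}), Bochner's theorem (Theorem~\ref{Theorem:Bochner}), Fubini's theorem, and Plancherel's theorem. Since $\psi$ is bounded, $k$ is a bounded measurable kernel, so Proposition~\ref{proposition:condition} gives $\Scr{P}_k=\Scr{P}$, and the representations (\ref{Eq:MMD-II})--(\ref{Eq:computeMMD-1}) are available for every $\bb{P},\bb{Q}\in\Scr{P}$.

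\emph{Part (i).} I would substitute the Bochner representation $\psi(x-y)=\int_{\bb{R}^d}e^{-i(x-y)^T\omega}\,d\Lambda(\omega)$ into (\ref{Eq:computeMMD-1}) and interchange the order of integration. Fubini applies because the integrand has modulus $1$, $\Lambda$ is finite, and $\bb{P}-\bb{Q}$ has finite total variation. After the interchange the inner double integral factors as $\left(\int_M e^{-ix^T\omega}\,d(\bb{P}-\bb{Q})(x)\right)\left(\int_M e^{iy^T\omega}\,d(\bb{P}-\bb{Q})(y)\right)=\overline{\phi_\bb{P}(\omega)-\phi_\bb{Q}(\omega)}\,\bigl(\phi_\bb{P}(\omega)-\phi_\bb{Q}(\omega)\bigr)=|\phi_\bb{P}(\omega)-\phi_\bb{Q}(\omega)|^2$, which is exactly (\ref{Eq:L2distance}). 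Nothing here is delicate beyond this bookkeeping.

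\emph{Part (iii).} Starting from (\ref{Eq:L2distance}), I would first note that for a continuous bounded pd $\psi\in L^1$ one has $\widehat{\psi}\ge 0$ and, by Bochner together with Fourier inversion, $d\Lambda=(2\pi)^{-d/2}\widehat{\psi}\,d\omega$; hence $\gamma^2_k(\bb{P},\bb{Q})=(2\pi)^{-d/2}\int_{\bb{R}^d}|\phi_\bb{P}-\phi_\bb{Q}|^2\,\widehat{\psi}\,d\omega$. Factoring $\widehat{\psi}=\sqrt{\widehat{\psi}}\cdot\sqrt{\widehat{\psi}}$ and using $\widehat{\Phi}=\sqrt{\widehat{\psi}}$ together with the convolution theorem for the convolution of an $L^1$ function with a finite measure, the integrand equals $\bigl|\,\widehat{\Phi\ast(\bb{P}-\bb{Q})}\,\bigr|^2$, so that $\gamma^2_k(\bb{P},\bb{Q})=(2\pi)^{-d/2}\bigl\Vert\,\widehat{\Phi\ast(\bb{P}-\bb{Q})}\,\bigr\Vert^2_{L^2(\bb{R}^d)}=(2\pi)^{-d/2}\Vert\Phi\ast\bb{P}-\Phi\ast\bb{Q}\Vert^2_{L^2(\bb{R}^d)}$ by Plancherel, which is (\ref{Eq:noise}). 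The main obstacle in this part is the analytic housekeeping needed to make these steps legitimate: one must check that $\sqrt{\widehat{\psi}}\in L^1$ forces $\widehat{\psi}\in L^1$ (hence $\sqrt{\widehat{\psi}}\in L^2$ and $\Phi\in L^2$), that $\Phi\ast\bb{P}\in L^2$, and that $\Phi\ast\bb{P}$ has Fourier transform $\sqrt{\widehat{\psi}}\,\overline{\phi_\bb{P}}$ in the tempered-distribution sense, so that Plancherel can be invoked.

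\emph{Part (ii).} Here I would again use (\ref{Eq:computeMMD-1}): writing $d\bb{P}=p\,dx$, $d\bb{Q}=q\,dx$, $h:=p-q$ and using that $\psi_t$ is even (real pd functions are even),
\begin{equation*}
\gamma^2_k(\bb{P},\bb{Q})=\int_{\bb{R}^d}\!\!\int_{\bb{R}^d}\psi_t(x-y)\,h(x)\,h(y)\,dx\,dy=\int_{\bb{R}^d}h(x)\,(\psi_t\ast h)(x)\,dx=\langle h,\psi_t\ast h\rangle_{L^2(\bb{R}^d)},
\end{equation*}
where $h\in L^1\cap L^\infty\subset L^2$ since $p,q$ are bounded densities. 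Because $\int_{\bb{R}^d}\theta=1$, the family $\{\psi_t\}_{t>0}$ is an approximate identity, so $\psi_t\ast h\to h$ as $t\to 0$; when $p$ and $q$ are uniformly continuous this convergence is uniform, and combined with $\Vert h\Vert_{L^1}<\infty$ it gives $\langle h,\psi_t\ast h\rangle_{L^2}\to\langle h,h\rangle_{L^2}=\Vert p-q\Vert^2_{L^2(\bb{R}^d)}$, i.e. (\ref{Eq:limiting}). For the ``in particular'' claim, the bound $|\theta(x)|\le C(1+\Vert x\Vert_2)^{-d-\varepsilon}$ dominates $\theta$ by an integrable radially decreasing function, which is a standard sufficient condition for $\psi_t\ast h\to h$ in $L^2$ for merely bounded $p,q$; continuity of the $L^2$ inner product then closes the argument. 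Removing uniform continuity in this way is the main obstacle for (ii). An alternative route through part (i) is also available: a change of variables shows the Bochner measure of $\psi_t$ is $\rho(t\,\cdot)\,d\xi$ with $\rho\ge 0$ the bounded continuous spectral density of $\theta$, and dominated convergence as $t\to 0$, together with Plancherel and $\phi_\bb{P}-\phi_\bb{Q}=(2\pi)^{d/2}(p-q)^\vee$, again yields (\ref{Eq:limiting}).
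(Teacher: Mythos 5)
Your proof is correct and for parts (i) and (ii) it follows essentially the paper's route. For (i) both you and the authors substitute Bochner's representation into (\ref{Eq:computeMMD-1}), invoke Fubini, and factor the resulting double integral into $|\phi_\bb{P}-\phi_\bb{Q}|^2$. For (ii) the paper expands $\gamma^2_k$ into the three cross terms $\int(\psi_t\ast p)p$, $\int(\psi_t\ast q)q$, $-2\int(\psi_t\ast q)p$ and passes to the limit in each via Theorem~\ref{Thm:Folland-1} plus dominated convergence (and Theorem~\ref{Thm:Folland-2} for the ``in particular'' clause), whereas you bundle everything into $\langle h,\psi_t\ast h\rangle_{L^2}$ with $h=p-q$; these are the same argument up to bookkeeping, and your alternative spectral route through part~(i) is also sound (the dominated-convergence step there is justified because $\rho$ is bounded and $\phi_\bb{P}-\phi_\bb{Q}\in L^2$).

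The one place you genuinely diverge from the paper is part (iii). The paper starts from the right-hand side of (\ref{Eq:noise}): it expands $\Vert\Phi\ast\bb{P}-\Phi\ast\bb{Q}\Vert^2_{L^2}$, substitutes the Fourier-inversion formula for $\Phi$, uses Fubini twice, and then invokes the distributional identity $\frac{1}{(2\pi)^d}\int e^{i(\omega-\xi)^Tx}\,dx=\delta(\omega-\xi)$ to collapse the triple integral. You instead go from left to right, starting at (\ref{Eq:L2distance}), writing $\widehat{\psi}=\sqrt{\widehat{\psi}}\cdot\sqrt{\widehat{\psi}}$, identifying $\sqrt{\widehat{\psi}}\,(\overline{\phi_\bb{P}}-\overline{\phi_\bb{Q}})$ as the Fourier transform of $\Phi\ast(\bb{P}-\bb{Q})$ via the $L^1$-function-by-finite-measure convolution theorem, and applying Plancherel. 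This avoids the formal delta-function manipulation and is the cleaner calculation, at the modest cost of the housekeeping you flag (checking $\sqrt{\widehat{\psi}}\in L^1\Rightarrow\widehat{\psi}\in L^1$, which holds since $\widehat{\psi}$ is bounded; and checking $\Phi\ast(\bb{P}-\bb{Q})\in L^2$ so that Plancherel is licit). Both arguments are valid; yours is more in the spirit of the remark (\ref{Eq:rkhsnorm}) the authors make just after the corollary.
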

\begin{proof}
\emph{(i)} Let us consider (\ref{Eq:computeMMD-1}) with $k(x,y)=\psi(x-y)$. Then, we have
\begin{eqnarray}\label{Eq:fourier}
\gamma^2_k(\bb{P},\bb{Q})&\!\!\!=\!\!\!&\int\!\!\!\int_{\bb{R}^d} \psi(x-y)\,d(\bb{P}-\bb{Q})(x)\,d(\bb{P}-\bb{Q})(y)\nonumber\\
&\!\!\!\stackrel{(a)}{=}\!\!\!&\int\!\!\!\int\!\!\!\int_{\bb{R}^d} e^{-i(x-y)^T\omega}\,d\Lambda(\omega)\,d(\bb{P}-\bb{Q})(x)\,d(\bb{P}-\bb{Q})(y)\nonumber\\
&\!\!\!\stackrel{(b)}{=}\!\!\!&\int\!\!\!\int_{\bb{R}^d} e^{-ix^T\omega}\,d(\bb{P}-\bb{Q})(x)\int_{\bb{R}^d} e^{iy^T\omega}\,d(\bb{P}-\bb{Q})(y)\,d\Lambda(\omega)\nonumber\\
&\!\!\!=\!\!\!&\int_{\bb{R}^d}\left(\phi_\bb{P}(\omega)-\phi_\bb{Q}(\omega)\right)\left(\overline{\phi_\bb{P}(\omega)}-\overline{\phi_\bb{Q}(\omega)}\right)\,d\Lambda(\omega)\nonumber\\
&\!\!\!=\!\!\!&\int_{\bb{R}^d}\left|\phi_\bb{P}(\omega)-\phi_\bb{Q}(\omega)\right|^2\,d\Lambda(\omega)\nonumber,
\end{eqnarray}
where Bochner's theorem (Theorem~\ref{Theorem:Bochner}) is invoked in $(a)$, while Fubini's theorem \citep[Theorem 2.37]{Folland-99}
is invoked in $(b)$.\vspace{1.5mm}\\
\emph{(ii)} Consider (\ref{Eq:computeMMD}) with $k(x,y)=\psi_t(x-y)$,
\begin{eqnarray}
\hspace{-.1in}\gamma^2_k(\bb{P},\bb{Q})&\!\!\!\!=\!\!\!\!&\int\!\!\!\int_{\bb{R}^d}\psi_t(x-y)p(x)p(y)\,dx\,dy
+\int\!\!\!\int_{\bb{R}^d}\psi_t(x-y)q(x)q(y)\,dx\,dy\nonumber\\
&&\qquad\qquad-2\int\!\!\!\int_{\bb{R}^d}\psi_t(x-y)p(x)q(y)\,dx\,dy\nonumber\\
&\!\!\!\!=\!\!\!\!&\int_{\bb{R}^d}(\psi_t\ast p)(x)p(x)\,dx
+\int_{\bb{R}^d}(\psi_t\ast q)(x)q(x)\,dx
-2\int_{\bb{R}^d}(\psi_t\ast q)(x)p(x)\,dx.\label{Eq:chain}
\end{eqnarray}
Note that $\lim_{t\rightarrow 0} \int_{\bb{R}^d}(\psi_t\ast p)(x)p(x)\,dx=\int_{\bb{R}^d}\lim_{t\rightarrow 0}(\psi_t\ast p)(x) p(x)\,dx$, by invoking the dominated convergence theorem. Since $p$ is bounded and uniformly continuous, by Theorem~\ref{Thm:Folland-1} (see Appendix A), we have $p\ast\psi_t\rightarrow p$ uniformly as $t\rightarrow 0$, which means $\lim_{t\rightarrow 0} \int_{\bb{R}^d}(\psi_t\ast p)(x)p(x)\,dx=\int_{\bb{R}^d}p^2(x)\,dx$. Using this in (\ref{Eq:chain}), we have
\begin{equation}
\lim_{t\rightarrow 0}\gamma^2_k(\bb{P},\bb{Q})=\int_{\bb{R}^d}(p^2(x)+q^2(x)-2p(x)q(x))\,dx=\Vert p-q\Vert^2_{L^2(\bb{R}^d)}.\nonumber
\end{equation}
Suppose $|\theta(x)|\le(1+\Vert x\Vert_2)^{-d-\varepsilon}$ for some $C,\,\varepsilon>0$. Since $p\in L^1(\bb{R}^d)$, by Theorem~\ref{Thm:Folland-2} (see Appendix A), we have $(p\ast\psi_t)(x)\rightarrow p(x)$ as $t\rightarrow 0$ for almost every $x$. Therefore $\lim_{t\rightarrow 0} \int_{\bb{R}^d}(\psi_t\ast p)(x)p(x)\,dx=\int_{\bb{R}^d}p^2(x)\,dx$ and the result follows.\vspace{1.5mm}\\
\emph{(iii)} Since $\psi$ is positive definite, $\widehat{\psi}$ is nonnegative and therefore $\sqrt{\widehat{\psi}}$ is valid. Since $\sqrt{\widehat{\psi}}\in L^1(\bb{R}^d)$, $\Phi$ exists.
Define $\phi_{\bb{P},\bb{Q}}:=\phi_\bb{P}-\phi_\bb{Q}$. Now, consider
\begin{eqnarray}
\Vert\Phi\ast\bb{P}-\Phi\ast\bb{Q}\Vert^2_{L^2(\bb{R}^d)}&\!\!\!=\!\!\!&\int_{\bb{R}^d}\left|(\Phi\ast(\bb{P}-\bb{Q}))(x)\right|^2\,dx\nonumber\\
&\!\!\!=\!\!\!&\int_{\bb{R}^d}\left|\int_{\bb{R}^d}\Phi(x-y)\,d(\bb{P}-\bb{Q})(y)\right|^2\,dx\nonumber\\
&\!\!\!=\!\!\!&\frac{1}{(2\pi)^d}\int_{\bb{R}^d}\left|\int\!\!\!\int_{\bb{R}^d}\sqrt{\widehat{\psi}(\omega)}\,e^{i(x-y)^T\omega}\,d\omega\,\,d(\bb{P}-\bb{Q})(y)\right|^2\,dx\nonumber\\
&\!\!\!\stackrel{(c)}{=}\!\!\!&\frac{1}{(2\pi)^d}\int_{\bb{R}^d}\left|\int_{\bb{R}^d}\sqrt{\widehat{\psi}(\omega)}(\overline{\phi_\bb{P}(\omega)}-\overline{\phi_\bb{Q}(\omega)})\,e^{ix^T\omega}\,d\omega\right|^2\,dx\nonumber\\
&\!\!\!=\!\!\!&\frac{1}{(2\pi)^d}\int\!\!\!\int\!\!\!\int_{\bb{R}^d}\sqrt{\widehat{\psi}(\omega)}\sqrt{\widehat{\psi}(\xi)}\,\overline{\phi_{\bb{P},\bb{Q}}(\omega)}\,\phi_{\bb{P},\bb{Q}}(\xi)\,e^{i(\omega-\xi)^Tx}\,d\omega\,d\xi\,dx\nonumber\\
&\!\!\!\stackrel{(d)}{=}\!\!\!&\int\!\!\!\int_{\bb{R}^d}\sqrt{\widehat{\psi}(\omega)}\sqrt{\widehat{\psi}(\xi)}\,\overline{\phi_{\bb{P},\bb{Q}}(\omega)}\,\phi_{\bb{P},\bb{Q}}(\xi)\left[\frac{1}{(2\pi)^d}\int_{\bb{R}^d}e^{i(\omega-\xi)^Tx}\,dx\right]\,d\omega\,d\xi\nonumber\\
&\!\!\!=\!\!\!&\int\!\!\!\int_{\bb{R}^d}\sqrt{\widehat{\psi}(\omega)}\sqrt{\widehat{\psi}(\xi)}\,\overline{\phi_{\bb{P},\bb{Q}}(\omega)}\,\phi_{\bb{P},\bb{Q}}(\xi)\,\delta(\omega-\xi)\,d\omega\,d\xi\nonumber\\
&\!\!\!=\!\!\!&\int_{\bb{R}^d}\widehat{\psi}(\omega)\left|\phi_\bb{P}(\omega)-\phi_\bb{Q}(\omega)\right|^2\,d\omega\nonumber\\
&\!\!\!=\!\!\!&(2\pi)^{d/2}\gamma^2_k(\bb{P},\bb{Q}),\nonumber
\end{eqnarray}
where $(c)$ and $(d)$ are obtained by invoking Fubini's theorem.\vspace{-6mm}
\end{proof}
\begin{remark}\label{rem:L2distance}
(a) (\ref{Eq:L2distance}) shows that $\gamma_k$ is the $L^2$-distance between the characteristic functions of $\bb{P}$ and $\bb{Q}$ computed w.r.t. the non-negative finite Borel measure, $\Lambda$, which is the Fourier transform of $\psi$. If $\psi\in L^1(\bb{R}^d)$, then (\ref{Eq:L2distance}) is a rephrase of the well known fact \citep[Theorem 10.12]{Wendland-05}: for any $f\in\eu{H}$, 
\begin{equation}\label{Eq:rkhsnorm}
\Vert f\Vert^2_\eu{H}=\int_{\bb{R}^d} \frac{|\widehat{f}(\omega)|^2}{\widehat{\psi}(\omega)}\,d\omega.
\end{equation}
Choosing $f=(\bb{P}-\bb{Q})\ast\psi$ in (\ref{Eq:rkhsnorm}) yields $\widehat{f}=(\phi_\bb{P}-\phi_\bb{Q})\widehat{\psi}$ and therefore the result in (\ref{Eq:L2distance}).\vspace{1.5mm}\\
(b) Suppose $d\Lambda(\omega)=(2\pi)^{-d}\,d\omega$. Assume $\bb{P}$ and $\bb{Q}$ have $p$ and $q$ as Radon-Nikodym derivatives w.r.t. the Lebesgue measure, i.e., $d\bb{P}=p\,dx$ and $d\bb{Q}=q\,dx$. Using these in (\ref{Eq:L2distance}), it can be shown that $\gamma_k(\bb{P},\bb{Q})=\Vert p-q\Vert_{L^2(\bb{R}^d)}$. However, this result should be interpreted in a limiting sense as mentioned in Corollary~\ref{cor:L2distance}(ii) because
the choice of $d\Lambda(\omega)=(2\pi)^{-d}\,d\omega$ implies $\psi(x)=\delta(x)$, which does not satisfy the conditions of  Corollary~\ref{cor:L2distance}(i). It can be shown that $\psi(x)=\delta(x)$ is obtained in a limiting sense \citep[Proposition 9.1]{Folland-99}: $\psi_t\rightarrow \delta$ in $\Scr{D}^\prime_d$ as $t\rightarrow 0$.
\vspace{1.5mm}\\
(c) Choosing $\theta(x)=(2\pi)^{-d/2}e^{-\Vert x\Vert^2_2/2}$ in Corollary~\ref{cor:L2distance}(ii) corresponds to $\psi_t$ being a Gaussian kernel (with appropriate normalization such that $\int_{\bb{R}^d}\psi_t(x)\,dx=1$). Therefore, (\ref{Eq:limiting}) shows that as the bandwidth, $t$ of the Gaussian kernel approaches zero, $\gamma_k$ approaches the $L^2$-distance between the densities $p$ and $q$. The same result also holds for choosing $\psi_t$ as the Laplacian kernel, $B_{2n+1}$-spline, inverse multiquadratic, etc. Therefore, $\gamma_k(\bb{P},\bb{Q})$ can be seen as a generalization of the $L^2$-distance between probability measures, $\bb{P}$ and $\bb{Q}$.\vspace{1.5mm}\\
(d) The result in (\ref{Eq:limiting}) holds if $p$ and $q$ are bounded and uniformly continuous. Since any condition on $\bb{P}$ and $\bb{Q}$ is usually difficult to check in statistical applications, it is better to impose conditions on $\psi$ rather than on $\bb{P}$ and $\bb{Q}$. In Corollary~\ref{cor:L2distance}(ii), by imposing additional conditions on $\psi_t$, the result in (\ref{Eq:limiting}) is shown to hold for all $\bb{P}$ and $\bb{Q}$ with bounded densities $p$ and $q$. The condition, $|\theta(x)|\le C(1+\Vert x\Vert_2)^{-d-\varepsilon}$ for some $C,\,\varepsilon>0$, is, e.g., satisfied by the inverse multiquadratic kernel, $\theta(x)=\widetilde{C}(1+\Vert x\Vert^2_2)^{-\tau},\,x\in\bb{R}^d,\,\tau>d/2$, where $\widetilde{C}=\left(\int_{\bb{R}^d}(1+\Vert x\Vert^2_2)^{-\tau}\,dx\right)^{-1}$.\vspace{1.5mm}\\
(e) The result in Corollary~\ref{cor:L2distance}(ii) has connections to the kernel density estimation in $L^2$-sense using Parzen windows \citep{Rosenblatt-75}, where $\psi$ can be chosen as the Parzen window.
\vspace{1.5mm}\\
(f) (\ref{Eq:noise}) shows that $\gamma_k$ is proportional to the $L^2$-distance between $\Phi\ast\bb{P}$ and $\Phi\ast\bb{Q}$. Let $\Phi$ be such that $\Phi$ is nonnegative and $\Phi\in L^1(\bb{R}^d)$. Then, defining $\widetilde{\Phi}:=\left(\int_{\bb{R}^d}\Phi(x)\,dx\right)^{-1}\Phi=\Phi/\sqrt{\widehat{\psi}(0)}=\left(\int_{\bb{R}^d}\psi(x)\,dx\right)^{-1/2}\Phi$ and using this in (\ref{Eq:noise}), we have
\begin{equation}\label{Eq:noisemodified}
\gamma_k(\bb{P},\bb{Q})=(2\pi)^{-d/4}\sqrt{\widehat{\psi}(0)}\left\Vert\widetilde{\Phi}\ast\bb{P}-\widetilde{\Phi}\ast\bb{Q}\right\Vert_{L^2(\bb{R}^d)}.
\end{equation}
The r.h.s. of (\ref{Eq:noisemodified}) can be interpreted as follows. Let $X$, $Y$ and $N$ be independent random variables such that $X\sim\bb{P}$, $Y\sim\bb{Q}$ and $N\sim\widetilde{\Phi}$. This means $\gamma_k$ is proportional to the $L^2$-distance computed between the densities associated with the perturbed random variables, $X+N$ and $Y+N$. Note that $\Vert p-q\Vert_{L^2(\bb{R}^d)}$ is the $L^2$-distance between the densities of $X$ and $Y$. Examples of $\psi$ that satisfy the conditions in Corollary~\ref{cor:L2distance}(iii) in addition to the conditions on $\Phi$ as mentioned here include the Gaussian and Laplacian kernels on $\bb{R}^d$. The result in (\ref{Eq:noise}) holds even if $\sqrt{\widehat{\psi}}\notin L^1(\bb{R}^d)$ as the proof of (iii) can be handled using distribution theory. However, we assumed $\sqrt{\widehat{\psi}}\in L^1(\bb{R}^d)$ to keep the proof simple, without delving into distribution theory.
\end{remark}
Although we will not be using all the results of Corollary~\ref{cor:L2distance} in deriving our main results in the following sections, Corollary~\ref{cor:L2distance} was presented to provide a better intuitive understanding of $\gamma_k$. To summarize, the core results of this section are Theorem~\ref{Theorem:MMD-II} (combined with Proposition~\ref{proposition:condition}), which provides a closed form expression for $\gamma_k$ in terms of the measurable and bounded $k$, and Corollary~\ref{cor:L2distance}\emph{(i)}, which provides an alternative representation for $\gamma_k$ when $k$ is bounded, continuous and translation invariant on $\bb{R}^d$.

\section{Conditions for Characteristic Kernels}\label{Sec:mainresults}
In this section, we address the question ``When is $\gamma_k$ a metric on $\Scr{P}$?". In other words, ``When is $\Pi$ injective?" or ``Under what conditions is $k$ characteristic?". To this end, we start with the definition of characteristic kernels and provide some examples where $k$ is such that $\gamma_k$ is not a metric on $\Scr{P}$. As discussed in Section~\ref{subsubsec:contribution1}, although some characterizations are available for $k$ so that $\gamma_k$ is a metric on $\Scr{P}$, they are difficult to check in practice. So, in Section~\ref{subsec:strictpd}, we provide the characterization that if $k$ is integrally strictly pd, then $\gamma_k$ is a metric on $\Scr{P}$. In Section~\ref{subsec:Rd}, we present more easily checkable conditions wherein we show that if $\text{supp}(\Lambda)=\bb{R}^d$ (see footnote~\ref{footnote:Radon} for the definition of the support of a Borel measure), then $\gamma_k$ is a metric on $\Scr{P}$. This result is extended in a straightforward way to $\bb{T}^d$ ($d$-Torus) in Section~\ref{subsec:Td}. The main results of this section are summarized in Table~\ref{tab:summary}.
\begin{table}[t]
\begin{center}
\begin{tabular}{|ccccc|}\hline
& & & &\\
\multicolumn{5}{|c|}{\bfseries Summary of Main Results}\\
& & & &\\\hline\hline
Domain & Property & $\Scr{Q}$ & Characteristic & Reference \\
\hline\hline
& & & &\\
$M$ & $k$ is integrally strictly pd & $\Scr{P}$ & Yes & Theorem~\ref{thm:strictpd} \\
& & & &\\\hline\hline
& & & &\\
$\bb{R}^d$ & $\Omega=\bb{R}^d$ & $\Scr{P}$ & Yes & Theorem~\ref{Thm:countablyinfinite}\\
& & & &\\
$\bb{R}^d$ & $\text{supp}(\psi)$ is compact & $\Scr{P}$ & Yes & Corollary~\ref{cor:compactsupport}\\
& & & &\\
$\bb{R}^d$ & $\Omega\subsetneq\bb{R}^d$, $\text{int}(\Omega)\neq\emptyset$  & $\Scr{P}_1$ & Yes & Theorem~\ref{Thm:compact}\\
& & & &\\
$\bb{R}^d$ & $\Omega\subsetneq\bb{R}^d$ & $\Scr{P}$ & No & Theorem~\ref{Thm:countablyinfinite}\\
& & & &\\
\hline\hline
& & & &\\
$\bb{T}^d$ & $A_\psi(0)\ge 0,\,A_\psi(n)>0,\,\forall\,n\ne 0$ & $\Scr{P}$ & Yes & Theorem~\ref{Thm:torus}\\
& & & &\\
$\bb{T}^d$ & $\exists\,n\ne 0\,|\,A_\psi(n)=0$ & $\Scr{P}$ & No & Theorem~\ref{Thm:torus}\\
& & & &\\
\hline
\end{tabular}
\caption{The table should be read as: If ``Property" is satisfied on ``Domain", then $k$ is characteristic (or not) to $\Scr{Q}$. $\Scr{P}$ is the set of all Borel probability measures on a topological space, $M$. See Section~\ref{Sec:Notation} for the definition of integrally strictly pd kernels. When $M=\bb{R}^d$, $k(x,y)=\psi(x-y)$, where $\psi$ is a bounded, continuous positive definite function on $\bb{R}^d$. $\psi$ is the Fourier transform of a finite nonnegative Borel measure, $\Lambda$, and $\Omega:=\text{supp}(\Lambda)$ (see Theorem~\ref{Theorem:Bochner} and footnote~\ref{footnote:Radon} for details). $\Scr{P}_1:=\{\bb{P}\in\Scr{P}:\phi_\bb{P}\in L^1(\bb{R}^d)\cup L^2(\bb{R}^d),\,\bb{P}\ll\lambda\,\,\text{and supp}(\bb{P})\,\,\text{is compact}\}$, where $\phi_\bb{P}$ is the characteristic function of $\bb{P}$ and $\lambda$ is the Lebesgue measure. $\bb{P}\ll\lambda$ denotes that $\bb{P}$ is absolutely continuous w.r.t. $\lambda$. When $M=\bb{T}^d$, $k(x,y)=\psi(x-y)$, where $\psi$ is a bounded, continuous positive definite function on $\bb{T}^d$. $\{A_\psi(n)\}^\infty_{n=-\infty}$ are the Fourier series coefficients of $\psi$ which are nonnegative and summable (see Theorem~\ref{Theorem:Bochnerdiscrete} for details).\label{tab:summary}}
\end{center}
\vspace{-7.5mm}
\end{table}
\par We start by defining characteristic kernels.
\begin{definition}[Characteristic kernel]\label{def:characterizing}
A bounded measurable positive definite kernel $k$ is characteristic to a set $\Scr{Q}\subset\Scr{P}$ of probability measures defined on $(M,\eu{A})$ if for $\bb{P},\bb{Q}\in\Scr{Q}$, $\gamma_k(\bb{P},\bb{Q})=0\Leftrightarrow \bb{P}=\bb{Q}$. $k$ is simply said to be characteristic if it is characteristic to $\Scr{P}$. The RKHS, $\eu{H}$ induced by such a $k$ is called a characteristic RKHS.
\end{definition}
As mentioned before, the injectivity of $\Pi$ is related to the characteristic property of $k$. If $k$ is characteristic, then $\gamma_k(\bb{P},\bb{Q})=\Vert \Pi[\bb{P}]-\Pi[\bb{Q}]\Vert_\eu{H}=0\Rightarrow \bb{P}=\bb{Q}$, which means $\bb{P}\mapsto\int_M k(\cdot,x)\,d\bb{P}(x)$, i.e., $\Pi$ is injective. Therefore, when $M=\bb{R}^d$, the embedding of a distribution to a characteristic RKHS 
can be seen as a generalization of the characteristic function, $\phi_\bb{P}=\int_{\bb{R}^d}e^{i\langle\cdot,x\rangle}\,d\bb{P}(x)$. This is because, by the uniqueness theorem for characteristic functions \citep[Theorem 9.5.1]{Dudley-02}, $\phi_\bb{P}=\phi_\bb{Q}\Rightarrow\bb{P}=\bb{Q}$, which means $\bb{P}\mapsto \int_{\bb{R}^d}e^{i\langle\cdot,x\rangle}\,d\bb{P}(x)$ is injective. So, in this context, intuitively $e^{i\langle y,x\rangle}$ can be treated as the characteristic kernel, $k$, although, formally, this is not true as $e^{i\langle y,x\rangle}$ is not a pd kernel.
\par Before we get to the characterization of characteristic kernels, the following examples show that there exist bounded measurable kernels that are not characteristic.
\begin{example}[Trivial kernel]\label{Exm:trivial}
Let $k(x,y)=\psi(x-y)=C$, $\forall\,x,y\in\bb{R}^d$ with $C>0$. Using this in (\ref{Eq:computeMMD}), we have $\gamma^2_k(\bb{P},\bb{Q})=C+C-2C=0$ for any $\bb{P},\bb{Q}\in\Scr{P}$, which means $k$ is not characteristic.\vspace{-2mm}
\end{example} 
\begin{example}[Dot product kernel]\label{Exm:dotproduct}
Let $k(x,y)= x^Ty$, $x,y\in\bb{R}^d$. Using this in (\ref{Eq:computeMMD}), we have \begin{equation}
\gamma^2_k(\bb{P},\bb{Q})=\mu^T_\bb{P}\mu_\bb{P}+\mu^T_\bb{Q}\mu_\bb{Q}-2\mu^T_\bb{P}\mu_\bb{Q}=\Vert \mu_\bb{P}-\mu_\bb{Q}\Vert^2_2,\nonumber
\end{equation}
where $\mu_\bb{P}$ and $\mu_\bb{Q}$ represent the means associated with $\bb{P}$ and $\bb{Q}$ respectively, i.e., $\mu_\bb{P}:=\int_{\bb{R}^d}x\,d\bb{P}(x)$. It is clear that $k$ is not characteristic as $\gamma_k(\bb{P},\bb{Q})=0\Rightarrow \mu_\bb{P}=\mu_\bb{Q}\nRightarrow\bb{P}=\bb{Q}$ for all $\bb{P},\bb{Q}\in\Scr{P}$.
\end{example}
\begin{example}[Polynomial kernel of order 2]\label{Exm:polynomial}
Let $k(x,y)=(1+x^Ty)^2$, $x,y\in\bb{R}^d$. Using this in (\ref{Eq:computeMMD-1}), we have \begin{eqnarray}
\gamma^2_k(\bb{P},\bb{Q})&\!\!\!=\!\!\!&\int\!\!\!\int_{\bb{R}^d}(1+2x^Ty+x^Tyy^Tx)\,d(\bb{P}-\bb{Q})(x)\,d(\bb{P}-\bb{Q})(y)\nonumber\\
&\!\!\!=\!\!\!& 2\Vert\mu_\bb{P}-\mu_\bb{Q}\Vert^2_2+\Vert\Sigma_\bb{P}-\Sigma_\bb{Q}+\mu_\bb{P}\mu^T_\bb{P}-\mu_\bb{Q}\mu^T_\bb{Q}\Vert^2_F,\nonumber
\end{eqnarray}
where $\Sigma_\bb{P}$ and $\Sigma_\bb{Q}$ represent the covariance matrices associated with $\bb{P}$ and $\bb{Q}$ respectively, i.e., $\Sigma_\bb{P}:=\int_{\bb{R}^d}xx^T\,d\bb{P}(x)-\mu_\bb{P}\mu^T_\bb{P}$. $\Vert \cdot\Vert_F$ represents the Frobenius norm. Since $\gamma_k(\bb{P},\bb{Q})=0\Rightarrow (\mu_\bb{P}=\mu_\bb{Q}\,\,\text{and}\,\,\Sigma_\bb{P}=\Sigma_\bb{Q})\nRightarrow\bb{P}=\bb{Q}$ for all $\bb{P},\bb{Q}\in\Scr{P}$, $k$ is not characteristic.
\end{example}
In the following sections, we address the question of when $k$ is characteristic, i.e., for what $k$ is $\gamma_k$ a metric on $\Scr{P}$?

\subsection{Integrally strictly positive definite kernels are characteristic}\label{subsec:strictpd}
Compared to the existing characterizations in literature \citep{Gretton-06, Fukumizu-08a, Fukumizu-09}, the following result provides a more natural and easily understandable characterization for characteristic kernels, which shows that integrally strictly pd kernels are characteristic to $\Scr{P}$. 
\begin{theorem}[Integrally strictly pd kernels are characteristic]\label{thm:strictpd}
If $k$ is integrally strictly positive definite on a topological space, $M$, then $k$ is characteristic to $\Scr{P}$.
\end{theorem}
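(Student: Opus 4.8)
The plan is to reduce the claim directly to the representation of $\gamma^2_k$ in~(\ref{Eq:computeMMD-1}) together with the definition of integral strict positive definiteness. First I would record that, since $k$ is integrally strictly pd, it is by definition measurable and bounded; hence $\sqrt{k(x,x)}$ is bounded, so $\Scr{P}_k=\Scr{P}$ by Proposition~\ref{proposition:condition}, and Theorem~\ref{Theorem:MMD-II} applies to \emph{every} pair $\bb{P},\bb{Q}\in\Scr{P}$. In particular, the chain of equalities leading to~(\ref{Eq:computeMMD-1}) is valid, giving
\[
\gamma^2_k(\bb{P},\bb{Q})=\int\!\!\!\int_M k(x,y)\,d(\bb{P}-\bb{Q})(x)\,d(\bb{P}-\bb{Q})(y).
\]

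The implication $\bb{P}=\bb{Q}\Rightarrow\gamma_k(\bb{P},\bb{Q})=0$ is immediate from the definition of $\gamma_k$ (or from the display above). For the converse, suppose $\bb{P}\ne\bb{Q}$. Then $\mu:=\bb{P}-\bb{Q}$ is a finite, non-zero signed Borel measure on $M$, so the defining property~(\ref{Eq:ispd}) of integrally strictly pd kernels yields $\int\!\!\int_M k(x,y)\,d\mu(x)\,d\mu(y)>0$, i.e., $\gamma^2_k(\bb{P},\bb{Q})>0$. Contrapositively, $\gamma_k(\bb{P},\bb{Q})=0\Rightarrow\bb{P}=\bb{Q}$. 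Combining both directions, $\gamma_k(\bb{P},\bb{Q})=0\Leftrightarrow\bb{P}=\bb{Q}$ for all $\bb{P},\bb{Q}\in\Scr{P}$, which is precisely the statement that $k$ is characteristic to $\Scr{P}$.

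There is essentially no technical obstacle: the content has been front-loaded into Theorem~\ref{Theorem:MMD-II} and into the definition of integral strict positive definiteness, which was tailored so that $\bb{P}-\bb{Q}$ ranges over a subclass of the finite non-zero signed measures as $\bb{P},\bb{Q}$ range over distinct probability measures. The only points worth a line of care are (i) confirming that~(\ref{Eq:computeMMD-1}) applies on all of $\Scr{P}$ rather than merely $\Scr{P}_k$, which follows from boundedness of $k$ as noted above, and (ii) observing that differences of probability measures do not exhaust all finite signed measures, so integral strict positive definiteness is in fact slightly stronger than what is strictly needed here — but it is the clean, checkable hypothesis one wants to state, and it is what later remarks compare against strict positive definiteness.
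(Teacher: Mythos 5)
Your proof is correct and follows essentially the same route as the paper: both reduce to the representation $\gamma^2_k(\bb{P},\bb{Q})=\int\!\!\int_M k\,d(\bb{P}-\bb{Q})\,d(\bb{P}-\bb{Q})$ and then observe that $\mu:=\bb{P}-\bb{Q}$ is a finite non-zero signed Borel measure whenever $\bb{P}\ne\bb{Q}$, so integral strict positive definiteness forces $\gamma^2_k>0$. The only cosmetic difference is that the paper packages this observation as one direction of a standalone equivalence (Lemma~\ref{lemma:strictpd}) before invoking its contrapositive, whereas you inline exactly the half that is needed.
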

Before proving Theorem~\ref{thm:strictpd}, we provide a supplementary result in Lemma~\ref{lemma:strictpd} that provides necessary and sufficient conditions for a kernel \emph{not} to be  characteristic. We show that choosing $k$ to be integrally strictly pd violates the conditions in Lemma~\ref{lemma:strictpd}, and $k$ is therefore characteristic to $\Scr{P}$.
\begin{lemma}\label{lemma:strictpd}
Let $k$ be measurable and bounded on a topological space, $M$. Then $\exists\,\bb{P}\ne \bb{Q},\,\bb{P},\bb{Q}\in\Scr{P}$ such that $\gamma_{k}(\bb{P},\bb{Q})=0$ if and only if there exists a finite non-zero signed Borel measure $\mu$ that satisfies:
\begin{itemize}
\item[(i)] $\int\!\!\!\int_Mk(x,y)\,d\mu(x)\,d\mu(y)=0$,
\item[(ii)] $\mu(M)=0$.
\end{itemize}
\end{lemma}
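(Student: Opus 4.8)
The plan is to prove both implications by passing between the pair $(\bb{P},\bb{Q})$ and the signed measure $\mu$ via $\mu\propto\bb{P}-\bb{Q}$, using in the converse the Jordan decomposition of $\mu$ to recover $\bb{P}$ and $\bb{Q}$. Throughout I would use that, since $k$ is bounded, Proposition~\ref{proposition:condition} gives $\Scr{P}_k=\Scr{P}$, so Theorem~\ref{Theorem:MMD-II} and the computation leading to (\ref{Eq:computeMMD-1}) apply to every pair of Borel probability measures and yield
\[
\gamma^2_k(\bb{P},\bb{Q})=\int\!\!\!\int_M k(x,y)\,d(\bb{P}-\bb{Q})(x)\,d(\bb{P}-\bb{Q})(y),\qquad \bb{P},\bb{Q}\in\Scr{P}.
\]
Boundedness of $k$ also makes every double integral below absolutely convergent, since $\int\!\!\!\int_M|k(x,y)|\,d|\nu|(x)\,d|\nu|(y)\le\Vert k\Vert_\infty\,|\nu|(M)^2<\infty$ for any finite signed Borel measure $\nu$, where $|\nu|$ denotes the total variation measure.

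For the ``only if'' direction, given $\bb{P}\ne\bb{Q}$ with $\gamma_k(\bb{P},\bb{Q})=0$, I would set $\mu:=\bb{P}-\bb{Q}$. This is a finite signed Borel measure, non-zero because $\bb{P}\ne\bb{Q}$; it satisfies $\mu(M)=\bb{P}(M)-\bb{Q}(M)=0$, giving \emph{(ii)}; and the displayed identity gives $\int\!\!\!\int_M k\,d\mu\,d\mu=\gamma^2_k(\bb{P},\bb{Q})=0$, giving \emph{(i)}.

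For the ``if'' direction, suppose the finite non-zero signed Borel measure $\mu$ satisfies \emph{(i)} and \emph{(ii)}. Writing the Jordan decomposition $\mu=\mu^+-\mu^-$ into mutually singular finite non-negative measures, condition \emph{(ii)} forces $\mu^+(M)=\mu^-(M)=:a$, and $a>0$ since otherwise $\mu=0$. I would then define $\bb{P}:=a^{-1}\mu^+$ and $\bb{Q}:=a^{-1}\mu^-$, which are Borel probability measures; they are distinct because $\mu^+\perp\mu^-$, so picking disjoint Borel sets carrying $\mu^+$ and $\mu^-$ shows $\bb{P}$ and $\bb{Q}$ assign masses $1$ and $0$ to one of these sets. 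Since $\bb{P}-\bb{Q}=a^{-1}\mu$, the displayed identity gives $\gamma^2_k(\bb{P},\bb{Q})=a^{-2}\int\!\!\!\int_M k\,d\mu\,d\mu=0$ by \emph{(i)}, hence $\gamma_k(\bb{P},\bb{Q})=0$.

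There is no real obstacle; the only points that need a little care are invoking $\Scr{P}_k=\Scr{P}$ so that (\ref{Eq:computeMMD-1}) is legitimately available for arbitrary probability measures (and the integrals are finite), and, in the ``if'' direction, arguing $\bb{P}\ne\bb{Q}$ — for which mutual singularity of the Jordan components, not merely $\mu\ne0$, is exactly what is used.
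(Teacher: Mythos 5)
Your proof is correct and follows essentially the same route as the paper's: forward direction via $\mu=\bb{P}-\bb{Q}$, converse via the Jordan decomposition and normalization, both plugged into the representation (\ref{Eq:computeMMD-1}). You simply spell out a couple of steps the paper leaves implicit (that $a>0$, that mutual singularity gives $\bb{P}\ne\bb{Q}$, and that boundedness makes (\ref{Eq:computeMMD-1}) valid for all of $\Scr{P}$), which is fine but not a different argument.
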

\begin{proof}
($\,\Leftarrow\,$) Suppose there exists a finite non-zero signed Borel measure, $\mu$ that satisfies \emph{(i)} and \emph{(ii)} in Lemma~\ref{lemma:strictpd}. By the Jordan decomposition theorem \cite[Theorem 5.6.1]{Dudley-02}, there exist unique positive measures $\mu^+$ and $\mu^-$ such that $\mu=\mu^+-\mu^-$ and $\mu^+\perp \mu^-$ ($\mu^+$ and $\mu^-$ are singular). By \emph{(ii)}, we have $\mu^+(M)=\mu^-(M)=:\alpha$. Define $\bb{P}=\alpha^{-1}\mu^+$ and $\bb{Q}=\alpha^{-1}\mu^-$. Clearly, $\bb{P}\ne \bb{Q},\,\bb{P},\bb{Q}\in\Scr{P}$. Then, by (\ref{Eq:computeMMD-1}), we have
\begin{equation}
\gamma^2_{k}(\bb{P},\bb{Q})=\int\!\!\!\int_{M}k(x,y)\,d(\bb{P}-\bb{Q})(x)\,d(\bb{P}-\bb{Q})(y)=\alpha^{-2}\int\!\!\!\int_{M}k(x,y)\,d\mu(x)\,d\mu(y)\stackrel{(a)}{=}0,\nonumber
\end{equation}
where \emph{(a)} is obtained by invoking \emph{(i)}. So, we have constructed $\bb{P}\ne\bb{Q}$ such that $\gamma_k(\bb{P},\bb{Q})=0$.\vspace{1mm}\\
($\,\Rightarrow\,$) Suppose $\exists\,\bb{P}\ne \bb{Q},\,\bb{P},\bb{Q}\in\Scr{P}$ such that $\gamma_{k}(\bb{P},\bb{Q})=0$. Let $\mu=\bb{P}-\bb{Q}$. Clearly $\mu$ is a finite non-zero signed Borel measure that satisfies $\mu(M)=0$. Note that by (\ref{Eq:computeMMD-1}), 
\begin{equation}
\gamma^2_k(\bb{P},\bb{Q})=\int\!\!\!\int_M k(x,y)\,d(\bb{P}-\bb{Q})(x)\,d(\bb{P}-\bb{Q})(y)=\int\!\!\!\int_M k(x,y)\,d\mu(x)\,d\mu(y),\nonumber
\end{equation} and therefore \emph{(i)} follows.\vspace{-2mm}
\end{proof}
\begin{proof}
\hspace{-.05in}\textbf{(of Theorem~\ref{thm:strictpd})} Since $k$ is integrally strictly pd on $M$, we have 
\begin{equation}
\int\!\!\!\int_M k(x,y)\,d\eta(x)d\eta(y)>0,\nonumber
\end{equation}
for any finite non-zero signed Borel measure $\eta$. This means there does not exist a finite non-zero signed Borel measure that satisfies \emph{(i)} in Lemma~\ref{lemma:strictpd}. Therefore, by Lemma~\ref{lemma:strictpd}, there does not exist $\bb{P}\ne\bb{Q},\,\bb{P},\bb{Q}\in\Scr{P}$ such that $\gamma_{k}(\bb{P},\bb{Q})=0$, which implies $k$ is characteristic.
\end{proof}
Examples of integrally strictly pd kernels on $\bb{R}^d$ include the Gaussian, $\exp(-\sigma\Vert x-y\Vert^2_2),\,\sigma>0$; the Laplacian, $\exp(-\sigma\Vert x-y\Vert_1),\,\sigma>0$; inverse multiquadratics, $(\sigma^2+\Vert x-y\Vert^2_2)^{-c},\,c>0,\,\sigma>0$, etc, 
which are translation invariant kernels on $\bb{R}^d$. A \emph{translation variant} integrally strictly pd kernel, $\widetilde{k}$, can be obtained from a translation invariant integrally strictly pd kernel, $k$, as $\widetilde{k}(x,y)=f(x)k(x,y)f(y)$, where $f:M\rightarrow\bb{R}$ is a bounded continuous function. 
A simple example of a translation variant integrally strictly pd kernel on $\bb{R}^d$ is $\widetilde{k}(x,y)=\exp(\sigma x^Ty),\,\sigma>0$, where we have chosen $f(.)=\exp(\sigma\Vert .\Vert^2_2/2)$ and $k(x,y)=\exp(-\sigma\Vert x-y\Vert^2_2/2),\,\sigma>0$. Clearly, this kernel is characteristic on compact subsets of $\bb{R}^d$. The same result can also be obtained from the fact that $\widetilde{k}$ is universal on compact subsets of $\bb{R}^d$ \citep[Section 3, Example 1]{Steinwart-01}.
\par Although the condition for characteristic $k$ in Theorem~\ref{thm:strictpd} is easy to understand compared to other characterizations in literature, it is 
not always easy to check for integral strict positive definiteness of $k$. In the following section, we assume $M=\bb{R}^d$ and $k$ to be translation invariant and present a complete characterization for characteristic $k$ which is simple to check.

\subsection{Characterization for translation invariant $k$ on $\bb{R}^d$}\label{subsec:Rd}
The complete, detailed proofs of the main results in this section are provided in Section~\ref{subsubsec:proofs}. Compared to \citet{Sriperumbudur-08}, we now present simple proofs for these results without resorting to distribution theory. Let us start with the following assumption.
\begin{assumption}\label{assume-1}
$k(x,y)=\psi(x-y)$ where $\psi$ is a bounded continuous real-valued positive definite function on $M=\bb{R}^d$.
\end{assumption}
The following theorem characterizes all translation invariant kernels in $\bb{R}^d$ that are characteristic.
\begin{theorem}\label{Thm:countablyinfinite}
Suppose $k$ satisfies Assumption~\ref{assume-1}. Then $k$ is characteristic if and only if $\emph{supp}(\Lambda)=\bb{R}^d$, where $\Lambda$ is defined as in (\ref{Eq:Bochner}).\footnote{For a finite regular measure $\mu$, there is a largest open set $U$ with $\mu(U)=0$. The complement of $U$ is called the \emph{support} of $\mu$, denoted by $\text{supp}(\mu)$.\label{footnote:Radon}}
\end{theorem}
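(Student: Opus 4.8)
The plan is to read off both directions from the spectral representation $\gamma_k^2(\bb{P},\bb{Q})=\int_{\bb{R}^d}\left|\phi_\bb{P}(\omega)-\phi_\bb{Q}(\omega)\right|^2\,d\Lambda(\omega)$ proved in Corollary~\ref{cor:L2distance}(i), together with the uniqueness theorem for characteristic functions \citep[Theorem 9.5.1]{Dudley-02}. A preliminary observation I would use in the ``only if'' direction: since $\psi$ is real-valued, $\Lambda$ is symmetric ($\Lambda(-A)=\Lambda(A)$), hence both $\text{supp}(\Lambda)$ and $U:=\bb{R}^d\setminus\text{supp}(\Lambda)$ are symmetric. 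For \emph{sufficiency}, assume $\text{supp}(\Lambda)=\bb{R}^d$ and $\gamma_k(\bb{P},\bb{Q})=0$; the spectral formula forces $\phi_\bb{P}=\phi_\bb{Q}$ $\Lambda$-a.e., so the set $\{\omega:\phi_\bb{P}(\omega)\neq\phi_\bb{Q}(\omega)\}$ (open by continuity of characteristic functions) is $\Lambda$-null, hence empty because $\text{supp}(\Lambda)=\bb{R}^d$ admits no nonempty open null set (footnote~\ref{footnote:Radon}); thus $\phi_\bb{P}\equiv\phi_\bb{Q}$ and $\bb{P}=\bb{Q}$. Note this half needs no assumption that $\bb{P},\bb{Q}$ have densities.

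For \emph{necessity} I would prove the contrapositive: if $\text{supp}(\Lambda)\subsetneq\bb{R}^d$, exhibit $\bb{P}\neq\bb{Q}$ with $\gamma_k(\bb{P},\bb{Q})=0$. Since $U$ is nonempty and open, pick $\omega_0\neq 0$ and $r>0$ with $\overline{B(\omega_0,r)}\cup\overline{B(-\omega_0,r)}\subseteq U$. Next build an even, nonnegative, continuous $\phi\in L^1(\bb{R}^d)$ with $\int\phi\,dx=1$ and $\text{supp}(\widehat\phi)\subseteq B(0,r)$: take $\beta\in\Scr{D}_d$ real, even, not identically zero with $\text{supp}(\beta)\subseteq B(0,r/2)$, put $g:=\beta^\vee\in\Scr{S}_d$ (real and even), and set $\phi:=g^2/\Vert g\Vert^2_{L^2(\bb{R}^d)}$, so that $\widehat\phi$ is a positive multiple of $\beta\ast\beta$, supported in $B(0,r)$. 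Now let $\bb{P},\bb{Q}$ have densities $p(x):=\phi(x)\left(1+\sin(\omega_0^Tx)\right)$ and $q(x):=\phi(x)\left(1-\sin(\omega_0^Tx)\right)$; these are nonnegative and integrate to $1$ because $\int\phi(x)\sin(\omega_0^Tx)\,dx=0$ by evenness of $\phi$. Since $p-q=2\phi\,\sin(\omega_0^T\cdot)$, a modulation argument gives $\text{supp}(\phi_\bb{P}-\phi_\bb{Q})\subseteq B(\omega_0,r)\cup B(-\omega_0,r)\subseteq U$, so $|\phi_\bb{P}-\phi_\bb{Q}|^2$ vanishes $\Lambda$-a.e.\ and the spectral formula gives $\gamma_k(\bb{P},\bb{Q})=0$; yet $\bb{P}\neq\bb{Q}$, since $\phi>0$ on a set of positive Lebesgue measure while $\sin(\omega_0^T\cdot)$ vanishes only on a null set, so $p\neq q$. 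Hence $k$ is not characteristic.

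The only nontrivial ingredient — the ``hard part'' — is the explicit construction of a probability density whose Fourier transform is supported in an arbitrarily small prescribed ball (equivalently, an $L^1$ spectral bump realized as the transform of a nonnegative function), together with the bookkeeping that makes $p,q$ bona fide, distinct densities; everything else follows immediately from Corollary~\ref{cor:L2distance}(i) and the defining properties of the support of a finite Borel measure. I would also remark that this spectral argument replaces the distribution-theoretic proof of \citet{Sriperumbudur-08} by a simpler one.
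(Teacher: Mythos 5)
Your argument is correct in both directions, and the necessity half takes a genuinely different, cleaner route than the paper's. The paper isolates the necessity step as a separate characterization (Lemma~\ref{lem:constructp}): it writes down conditions (i)--(v) on a perturbation $\theta$, fixes a heavy-tailed reference density $q$ of Cauchy-product type, sets $p=q+\theta^\vee$ with $\theta$ a compactly supported bump in $U:=\bb{R}^d\setminus\text{supp}(\Lambda)$, and then tunes a scaling parameter $\alpha$ so that $p\ge 0$. Your construction sidesteps all of this bookkeeping: starting from $\beta\in\Scr{D}_d$ supported in $B(0,r/2)$ you set $\phi:=(\beta^\vee)^2/\Vert\beta^\vee\Vert_{L^2}^2$, so that $\widehat{\phi}\propto\beta\ast\beta$ is supported in $B(0,r)$, and then take $p=\phi\cdot(1+\sin\langle\omega_0,\cdot\rangle)$, $q=\phi\cdot(1-\sin\langle\omega_0,\cdot\rangle)$. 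Multiplying by $1\pm\sin$ makes nonnegativity automatic (no $\alpha$ to tune), the evenness of $\phi$ makes normalization automatic, and the modulation identity places $\text{supp}(\phi_\bb{P}-\phi_\bb{Q})$ inside $B(\omega_0,r)\cup B(-\omega_0,r)\subset U$, at which point Corollary~\ref{cor:L2distance}(i) gives $\gamma_k(\bb{P},\bb{Q})=0$ immediately. You do rely implicitly on the symmetry of $\Lambda$ (hence of $U$), which you correctly justify from $\psi$ being real-valued and even; this is needed so that $B(-\omega_0,r)\subset U$ comes for free. What the paper's longer route buys in exchange is a reusable equivalence (Lemma~\ref{lem:constructp}) that it then reuses verbatim in Theorem~\ref{Thm:compact} and Example~\ref{Exm:periodic-compact}; your self-contained construction proves Theorem~\ref{Thm:countablyinfinite} more efficiently but does not by itself yield those downstream corollaries. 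In the sufficiency direction you follow the same route as the paper (spectral formula plus the uniqueness theorem), but you make the final step more precise: rather than asserting $\phi_\bb{P}=\phi_\bb{Q}$ ``a.e.'' you observe that $\{\phi_\bb{P}\neq\phi_\bb{Q}\}$ is open and $\Lambda$-null, hence empty when $\text{supp}(\Lambda)=\bb{R}^d$, which is exactly the point the paper leaves implicit.
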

First, note that the condition $\text{supp}(\Lambda)=\bb{R}^d$ is easy to check compared to all other, aforementioned characterizations for characteristic $k$. Table~\ref{tab:kernel} shows some popular translation invariant kernels on $\bb{R}$ along with their Fourier spectra, $\widehat{\psi}$ and its support: Gaussian, Laplacian, $B_{2n+1}$-spline\footnote{A $B_{2n+1}$-spline is a $B_n$-spline of odd order. Only $B_{2n+1}$-splines are admissible, i.e. $B_n$ splines of odd order are positive definite kernels whereas the ones of even order have negative components in their Fourier spectrum, $\widehat{\psi}$ and, therefore, are not admissible kernels. In Table~\ref{tab:kernel}, the symbol $\ast^{(2n+2)}_1$ represents the $(2n+2)$-fold convolution. An important point to be noted with the $B_{2n+1}$-spline kernel is that its Fourier spectrum, $\widehat{\psi}$ has vanishing points at $\omega=2\pi\alpha,\,\alpha\in\bb{Z}\backslash\{0\}$ unlike Gaussian and Laplacian kernels which do not have any vanishing points in their Fourier spectrum. Nevertheless, the spectrum of all these kernels has support $\bb{R}$.} \citep{Scholkopf-02} and Sinc kernels are aperiodic while Poisson \citep{Bremaud-01, Steinwart-01, Vapnik-98}, Dirichlet \citep{Bremaud-01, Scholkopf-02}, F\'{e}jer \citep{Bremaud-01} and cosine kernels are periodic. Although the Gaussian and Laplacian kernels are shown to be characteristic by all the characterizations we have mentioned so far, the case of $B_{2n+1}$-splines is addressed only by Theorem~\ref{Thm:countablyinfinite}, which shows them to be characteristic (note that $B_{2n+1}$-splines being integrally strictly pd also follow from Theorem~\ref{Thm:countablyinfinite}). In fact, one can provide a more general result on compactly supported translation invariant kernels, which we do later in Corollary~\ref{cor:compactsupport}. The Mat\'{e}rn class of kernels \citep[Section 4.2.1]{Rasmussen-06}, given by
\begin{equation}\label{Eq:matern}
k(x,y)=\psi(x-y)=\frac{2^{1-\nu}}{\Gamma(\nu)}\left(\frac{\sqrt{2\nu}\Vert x-y\Vert_2}{\sigma}\right)^\nu K_\nu\left(\frac{\sqrt{2\nu}\Vert x-y\Vert_2}{\sigma}\right),\,\nu>0,\,\sigma>0,
\end{equation}
is characteristic as the Fourier spectrum of $\psi$, given by
\begin{equation}\label{Eq:inversequadratic}
\widehat{\psi}(\omega)=\frac{2^{d+\nu}\pi^{d/2}\Gamma(\nu+d/2)\nu^\nu}{\Gamma(\nu)\sigma^{2\nu}}\left(\frac{2\nu}{\sigma^2}+4\pi^2\Vert\omega\Vert^2_2\right)^{-(\nu+d/2)},\,\omega\in\bb{R}^d,
\end{equation}
is positive for any $\omega\in\bb{R}^d$. Here, $\Gamma$ is the Gamma function, $K_\nu$ is the modified Bessel function of the second kind of order $\nu$, where $\nu$ controls the smoothness of $k$. The case of $\nu=\frac{1}{2}$ in the Mat\'{e}rn class gives the exponential kernel, $k(x,y)=\exp(-\Vert x-y\Vert_2/\sigma)$, while $\nu\rightarrow\infty$ gives the Gaussian kernel. Note that $\widehat{\psi}(x-y)$ in (\ref{Eq:inversequadratic}) is actually the inverse multiquadratic kernel, which is characteristic both by Theorem~\ref{thm:strictpd} and Theorem~\ref{Thm:countablyinfinite}.
\begin{table}[t]
\begin{center}
\small{
\begin{tabular}{|cccc|}\hline
& & &\\
Kernel & $\psi(x)$ & $\widehat{\psi}(\omega)$ & $\text{supp}(\widehat{\psi})$ \\
& & &\\\hline
& & &\\
Gaussian & $\exp\left(-\frac{x^2}{2\sigma^2}\right)$& $\sigma\exp\left(-\frac{\sigma^2\omega^2}{2}\right)$& $\bb{R}$\\
& & &\\
Laplacian &$\exp(-\sigma|x|)$ &$\sqrt{\frac{2}{\pi}}\frac{\sigma}{\sigma^2+\omega^2}$ &$\bb{R}$ \\
& & &\\
$B_{2n+1}$-spline & $\ast^{(2n+2)}_{1}\mathds{1}_{\left[-\frac{1}{2},\frac{1}{2}\right]}(x)$& $\frac{4^{n+1}}{\sqrt{2\pi}}\frac{\sin^{2n+2}\left(\frac{
\omega}{2}\right)}{\omega^{2n+2}}$& $\bb{R}$\\
& & &\\
Sinc & $\frac{\sin(\sigma x)}{x}$&$\sqrt{\frac{\pi}{2}}\mathds{1}_{[-\sigma,\sigma]}(\omega)$ & $[-\sigma,\sigma]$ \\
& & &\\
\hline
& & &\\
Poisson & $\frac{1-\sigma^2}{\sigma^2-2\sigma\cos(x)+1},\,0<\sigma<1$ &
$\sqrt{2\pi}\sum^\infty_{j=-\infty}\sigma^{|j|}\,\delta(\omega-j)$ & $\bb{Z}$\\
& & &\\
Dirichlet & $\frac{\sin\frac{(2n+1)x}{2}}{\sin\frac{x}{2}}$ & $\sqrt{2\pi}\sum^n_{j=-n}\delta(\omega-j)$ & $\{0,\pm 1,\ldots,\pm n\}$ \\
& & &\\
F\'{e}jer & $\frac{1}{n+1}\frac{\sin^2\frac{(n+1)x}{2}}{\sin^2\frac{x}{2}}$ & $\sqrt{2\pi}\sum^n_{j=-n}\left(1-\frac{|j|}{n+1}\right)\delta(\omega-j)$ & $\{0,\pm 1,\ldots,\pm n\}$\\
& & &\\
Cosine & $\cos(\sigma x)$&$\sqrt{\frac{\pi}{2}}\left[\delta(\omega-\sigma)+\delta(\omega+\sigma)\right]$& $\{-\sigma,\sigma\}$\\
& & &\\
\hline
\end{tabular}
}
\caption{Translation invariant kernels on $\bb{R}$ defined by $\psi$, their spectra, $\widehat{\psi}$ and its support, $\text{supp}(\widehat{\psi})$. The first four are aperiodic kernels while the last four are periodic. The domain is considered to be $\bb{R}$ for simplicity. For $x\in\bb{R}^d$, the above formulae can be extended by computing $\psi(x)=\prod^d_{j=1}\psi(x_j)$ where $x=(x_1,\ldots,x_d)$ and $\widehat{\psi}(\omega)=\prod^d_{j=1}\widehat{\psi}(\omega_j)$ where $\omega=(\omega_1,\ldots,\omega_d)$. $\delta$ represents the Dirac-delta function.\label{tab:kernel}}
\end{center}
\vspace{-6mm}
\end{table}
\vspace{1mm}
\par By Theorem~\ref{Thm:countablyinfinite}, the Sinc kernel in Table~\ref{tab:kernel} is not characteristic, which is not easy to show using other characterizations. By combining Theorem~\ref{thm:strictpd} with Theorem~\ref{Thm:countablyinfinite}, it can be shown that the Sinc, Poisson, Dirichlet, F\'{e}jer and cosine kernels are not integrally strictly pd. Therefore, for translation invariant kernels on $\bb{R}^d$, the integral strict positive definiteness of the kernel (or the lack of it) can be tested using Theorems \ref{thm:strictpd} and \ref{Thm:countablyinfinite}.
\par We note that, of all the kernels shown in Table~\ref{tab:kernel}, only the Gaussian, Laplacian and $B_{2n+1}$-spline kernels are integrable and their corresponding $\widehat{\psi}$ are computed using (\ref{Eq:FT}). The other kernels shown in Table~\ref{tab:kernel} are not integrable and their corresponding $\widehat{\psi}$ have to be treated as distributions (see \citet[Chapter 9]{Folland-99} and \citet[Chapter 6]{Rudin-91} for details), except for the Sinc kernel whose Fourier transform can be computed in the $L^2$ sense.\footnote{If $f\in L^2(\bb{R}^d)$, the Fourier transform $\digamma[f]:=\widehat{f}$ of $f$ is defined to be the limit, in the $L^2$-norm, of the sequence $\{\widehat{f}_n\}$ of Fourier transforms of any sequence $\{f_n\}$ of functions belonging to $\Scr{S}_d$, such that $f_n$ converges in the $L^2$-norm to the given function $f\in L^2(\bb{R}^d)$, as $n\rightarrow\infty$. The function $\widehat{f}$ is defined almost everywhere on $\bb{R}^d$ and belongs to $L^2(\bb{R}^d)$. Thus, $\digamma$ is a linear operator, mapping $L^2(\bb{R}^d)$ into $L^2(\bb{R}^d)$. See \citet[Chapter IV, Lesson 22]{Gasquet-99} for details.\label{footnote:schwartz}} \vspace{1mm}
\begin{proof}
\hspace{-.05in}\textbf{(Theorem~\ref{Thm:countablyinfinite})} We provide an outline of the complete proof, which is presented in Section~\ref{subsubsec:proofs}. The sufficient condition in Theorem~\ref{Thm:countablyinfinite} is simple to prove and follows from Corollary~\ref{cor:L2distance}\emph{(i)}, whereas we need a supplementary result to prove its necessity, which is presented in Lemma~\ref{lem:constructp} (see Section~\ref{subsubsec:proofs}). Proving the necessity of Theorem~\ref{Thm:countablyinfinite} is equivalent to showing that if $\text{supp}(\Lambda)\subsetneq\bb{R}^d$, then $\exists\,\bb{P}\ne\bb{Q}$, $\bb{P},\bb{Q}\in\Scr{P}$ such that $\gamma_k(\bb{P},\bb{Q})=0$. In Lemma~\ref{lem:constructp}, we present equivalent conditions for the existence of $\bb{P}\ne\bb{Q}$ such that $\gamma_k(\bb{P},\bb{Q})=0$ if $\text{supp}(\Lambda)\subsetneq\bb{R}^d$, using which we prove the necessity of Theorem~\ref{Thm:countablyinfinite}.\vspace{-2mm}
\end{proof}
\par The whole family of compactly supported translation invariant continuous bounded kernels on $\bb{R}^d$ is characteristic, as shown by the following corollary to Theorem~\ref{Thm:countablyinfinite}.
\begin{corollary}\label{cor:compactsupport}
Suppose $k\ne 0$ satisfies Assumption~\ref{assume-1} and $\emph{supp}(\psi)$ is compact. Then $k$ is characteristic.
\end{corollary}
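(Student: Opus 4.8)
The plan is to invoke Theorem~\ref{Thm:countablyinfinite} and show that a nonzero compactly supported continuous bounded positive definite $\psi$ must have $\text{supp}(\Lambda)=\bb{R}^d$, where $\Lambda$ is the finite nonnegative measure from Bochner's theorem. Since $\psi$ is compactly supported, it is in particular integrable, so $\Lambda$ is absolutely continuous with respect to Lebesgue measure with density $(2\pi)^{-d/2}\widehat{\psi}$, i.e. $d\Lambda = (2\pi)^{-d/2}\widehat{\psi}\,d\omega$, and moreover $\widehat\psi\ge 0$ everywhere since $\psi$ is positive definite. Thus $\text{supp}(\Lambda)=\overline{\{\omega:\widehat\psi(\omega)>0\}}$, and it suffices to prove that $\widehat\psi$ cannot vanish on any nonempty open set of $\bb{R}^d$ unless $\psi\equiv 0$.

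The key step is an analyticity argument. Because $\psi$ has compact support and is (at least) in $L^1$, its Fourier transform $\widehat\psi$ extends to an entire function on $\bb{C}^d$ — this is the Paley–Wiener phenomenon; one sees it directly by differentiating under the integral sign in $\widehat\psi(z)=(2\pi)^{-d/2}\int_{\text{supp}(\psi)} e^{-iz^Tx}\psi(x)\,dx$, the integral over a bounded set converging for all $z\in\bb{C}^d$ and defining a holomorphic function. An entire function on $\bb{C}^d$ that vanishes on a nonempty open subset of $\bb{R}^d$ vanishes identically (by the identity theorem, applied coordinatewise or via restriction to complex lines), hence $\widehat\psi\equiv 0$ on $\bb{R}^d$, hence $\psi\equiv 0$ by Fourier inversion, contradicting $k\ne 0$. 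Therefore $\{\omega:\widehat\psi(\omega)>0\}$ is dense in $\bb{R}^d$, so $\text{supp}(\Lambda)=\bb{R}^d$, and Theorem~\ref{Thm:countablyinfinite} yields that $k$ is characteristic.

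The main obstacle is ensuring the analytic-extension step is stated at the right level of generality: $\psi$ is only assumed bounded continuous (not smooth), so one should not claim $\widehat\psi\in\Scr{S}_d$, but only that compact support plus $\psi\in L^1$ (which follows from boundedness and compact support) gives $\widehat\psi$ entire of exponential type. A secondary care point is the precise relation $d\Lambda=(2\pi)^{-d/2}\widehat\psi\,d\omega$: this requires noting that the $\Lambda$ from Bochner's theorem and the $L^1$-Fourier transform of $\psi$ agree, which is standard once $\psi\in L^1$ (uniqueness of the representing measure). Everything else is routine, so the proof should be short: compact support $\Rightarrow$ $\widehat\psi$ real-analytic/entire $\Rightarrow$ $\widehat\psi>0$ on a dense set $\Rightarrow$ $\text{supp}(\Lambda)=\bb{R}^d$ $\Rightarrow$ characteristic by Theorem~\ref{Thm:countablyinfinite}.
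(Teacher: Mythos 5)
Your proposal is correct and follows essentially the same route as the paper's proof: compact support of $\psi$ gives, via the Paley--Wiener theorem, that $\widehat\psi$ extends to an entire function, which cannot vanish on any nonempty open set without being identically zero, so $\text{supp}(\Lambda)=\bb{R}^d$ and Theorem~\ref{Thm:countablyinfinite} applies. You merely spell out a couple of details the paper leaves implicit, namely the identification $d\Lambda=(2\pi)^{-d/2}\widehat\psi\,d\omega$ (valid since boundedness plus compact support puts $\psi\in L^1$) and the use of the identity theorem on an open subset rather than all of $\bb{R}^d$.
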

\begin{proof}
Since $\text{supp}(\psi)$ is compact in $\bb{R}^d$, by the Paley-Wiener theorem (Theorem~\ref{Thm:paley-wiener} in Appendix A) and Lemma~\ref{lem:entire} (see Appendix A), 
we deduce that $\text{supp}(\Lambda)=\bb{R}^d$. Therefore, the result follows from Theorem~\ref{Thm:countablyinfinite}.
\end{proof}
The above result is interesting in practice because of the computational advantage in dealing with compactly supported kernels. Note that proving such a general result for compactly supported kernels on $\bb{R}^d$ is not straightforward (maybe not even possible) with the other characterizations.
\par As a corollary to Theorem~\ref{Thm:countablyinfinite}, the following result provides a method to construct new characteristic kernels from a given one.
\begin{corollary}\label{cor:newkernel}
Let $k$, $k_1$ and $k_2$ satisfy Assumption~\ref{assume-1}. Suppose $k$ is characteristic and $k_2\ne 0$. Then $k+k_1$ and $k\cdot k_2$ are characteristic.
\end{corollary}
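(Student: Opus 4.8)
The plan is to reduce everything to Theorem~\ref{Thm:countablyinfinite}, i.e., to the statement that a kernel satisfying Assumption~\ref{assume-1} is characteristic if and only if $\text{supp}(\Lambda) = \bb{R}^d$, where $\Lambda$ is the nonnegative finite Borel measure whose Fourier transform is $\psi$ (Bochner's theorem). Write $k(x,y) = \psi(x-y)$, $k_1(x,y) = \psi_1(x-y)$, $k_2(x,y) = \psi_2(x-y)$ with associated Bochner measures $\Lambda, \Lambda_1, \Lambda_2$ respectively. First I would observe that $k + k_1$ again satisfies Assumption~\ref{assume-1}: $\psi + \psi_1$ is bounded, continuous, real-valued, and positive definite (sums of pd functions are pd), and by linearity of the Fourier transform its Bochner measure is $\Lambda + \Lambda_1$. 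Since both measures are nonnegative, $\text{supp}(\Lambda + \Lambda_1) \supseteq \text{supp}(\Lambda)$, and because $k$ is characteristic, Theorem~\ref{Thm:countablyinfinite} gives $\text{supp}(\Lambda) = \bb{R}^d$; hence $\text{supp}(\Lambda + \Lambda_1) = \bb{R}^d$, and applying Theorem~\ref{Thm:countablyinfinite} in the other direction shows $k + k_1$ is characteristic.

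For the product $k \cdot k_2$, the key fact is that $\psi \psi_2$ is again a bounded continuous real-valued pd function (Schur product of pd kernels), so $k \cdot k_2$ satisfies Assumption~\ref{assume-1}, and the convolution theorem identifies its Bochner measure as $\Lambda * \Lambda_2$ — the convolution of the two Bochner measures, which is again a finite nonnegative Borel measure. So by Theorem~\ref{Thm:countablyinfinite} it suffices to show $\text{supp}(\Lambda * \Lambda_2) = \bb{R}^d$. I would use the standard support identity $\text{supp}(\Lambda * \Lambda_2) = \overline{\text{supp}(\Lambda) + \text{supp}(\Lambda_2)}$ (closure of the Minkowski sum). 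Since $k$ is characteristic, $\text{supp}(\Lambda) = \bb{R}^d$, and since $k_2 \ne 0$, $\Lambda_2$ is a nonzero measure so $\text{supp}(\Lambda_2) \ne \emptyset$; picking any $\omega_0 \in \text{supp}(\Lambda_2)$ we get $\bb{R}^d = \bb{R}^d + \{\omega_0\} \subseteq \text{supp}(\Lambda) + \text{supp}(\Lambda_2)$, hence $\text{supp}(\Lambda * \Lambda_2) = \bb{R}^d$, and $k \cdot k_2$ is characteristic.

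The main obstacle — and the only place one must be slightly careful — is the identification of the Bochner measure of $\psi \psi_2$ as $\Lambda * \Lambda_2$ together with the support-of-convolution formula, since $\Lambda$ and $\Lambda_2$ are general finite measures rather than $L^1$ densities. This is routine measure theory: $(\psi \psi_2)(x) = \int\!\!\int e^{-ix^T(\omega + \eta)}\,d\Lambda(\omega)\,d\Lambda_2(\eta)$ by Fubini (the integrand is bounded, measures finite), and pushing forward $\Lambda \otimes \Lambda_2$ under $(\omega,\eta) \mapsto \omega + \eta$ shows this equals $\int e^{-ix^T\xi}\,d(\Lambda * \Lambda_2)(\xi)$; the support formula $\text{supp}(\Lambda * \Lambda_2) = \overline{\text{supp}(\Lambda) + \text{supp}(\Lambda_2)}$ is a well-known property of convolution of finite Borel measures on $\bb{R}^d$. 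One should also note $\Lambda * \Lambda_2$ is finite with total mass $\Lambda(\bb{R}^d)\Lambda_2(\bb{R}^d) = \psi(0)\psi_2(0) < \infty$, so Bochner/Theorem~\ref{Thm:countablyinfinite} applies. Everything else is immediate from monotonicity of support under addition of nonnegative measures and from Theorem~\ref{Thm:countablyinfinite}.
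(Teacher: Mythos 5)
Your proof is correct and follows essentially the same route as the paper: reduce to Theorem~\ref{Thm:countablyinfinite} by identifying the Bochner measures of $k+k_1$ and $k\cdot k_2$ as $\Lambda+\Lambda_1$ and $\Lambda\ast\Lambda_2$ respectively, and then argue about supports. On the one step where you diverge slightly --- the product case --- you are in fact more careful than the paper: the paper asserts the inclusion $\text{supp}(\Lambda)\subset\text{supp}(\Lambda\ast\Lambda_2)$, which is not a general fact about convolutions (take $\Lambda=\delta_0$ and $\Lambda_2=\tfrac12(\delta_1+\delta_{-1})$; then $\Lambda\ast\Lambda_2$ has support $\{\pm 1\}\not\supset\{0\}$), whereas your appeal to $\text{supp}(\Lambda\ast\Lambda_2)=\overline{\text{supp}(\Lambda)+\text{supp}(\Lambda_2)}$ together with $\text{supp}(\Lambda)=\bb{R}^d$ and $\text{supp}(\Lambda_2)\ne\emptyset$ makes the conclusion immediate and transparent. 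Both arguments reach the same end, but yours is the more defensible phrasing of that step.
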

\begin{proof}
Since $k$, $k_1$ and $k_2$ satisfy Assumption~\ref{assume-1}, $k+k_1$ and $k_2\cdot k$ also satisfy Assumption~\ref{assume-1}. In addition, 
\begin{eqnarray}
(k+k_1)(x,y)&\!\!\!:=\!\!\!&k(x,y)+k_1(x,y)=\psi(x-y)+\psi_1(x-y)=\int_{\bb{R}^d}e^{-i(x-y)^T\omega}\,d(\Lambda+\Lambda_1)(\omega),\nonumber\\
(k\cdot k_2)(x,y)&\!\!\!:=\!\!\!&k(x,y)k_2(x,y)=\psi(x-y)\psi_2(x-y)=\int\!\!\!\int_{\bb{R}^d}e^{-i(x-y)^T(\omega+\xi)}\,d\Lambda(\omega)\,d\Lambda_2(\xi)\nonumber\\
&\!\!\!\stackrel{(a)}{=:}\!\!\!&\int_{\bb{R}^d}e^{-i(x-y)^T\omega}\,d(\Lambda\ast\Lambda_2)(\omega),\nonumber
\end{eqnarray}
where $(a)$ follows from the definition of convolution of measures (see \citet[Section 9.14]{Rudin-91} for details). 
Since $k$ is characteristic, i.e., $\text{supp}(\Lambda)=\bb{R}^d$, and $\text{supp}(\Lambda)\subset\text{supp}(\Lambda+\Lambda_1)$, 
we have $\text{supp}(\Lambda+\Lambda_1)=\bb{R}^d$ and therefore $k+k_1$ is characteristic. Similarly, since $\text{supp}(\Lambda)\subset\text{supp}(\Lambda*\Lambda_2)$, 
we have $\text{supp}(\Lambda*\Lambda_2)=\bb{R}^d$ and therefore, $k\cdot k_2$ is characteristic.
\end{proof}
Note that in the above result, we do not need $k_1$ or $k_2$ to be characteristic. Therefore, one can generate all sorts of kernels that are characteristic by starting with a characteristic kernel, $k$.
\par So far, we have considered characterizations for $k$ such that it is characteristic to $\Scr{P}$. We showed in Theorem~\ref{Thm:countablyinfinite} that kernels with $\text{supp}(\Lambda)\subsetneq\bb{R}^d$ are not characteristic to $\Scr{P}$. Now, we can question whether such kernels can be characteristic to some proper subset $\Scr{Q}$ of $\Scr{P}$. The following result addresses this. Note that these kernels, i.e., the kernels with $\text{supp}(\Lambda)\subsetneq\bb{R}^d$ are usually not useful in practice, especially in statistical inference applications, because the conditions on $\Scr{Q}$ are usually not easy to check. On the other hand, the following result is of theoretical interest: along with Theorem~\ref{Thm:countablyinfinite}, it completes the characterization of characteristic kernels that are translation invariant on $\bb{R}^d$. Before we state the result, we denote $\bb{P}\ll\bb{Q}$ to mean that $\bb{P}$ is absolutely continuous w.r.t. $\bb{Q}$.
\begin{theorem}\label{Thm:compact}
Let $\Scr{P}_1:=\{\bb{P}\in\Scr{P}:\phi_\bb{P}\in L^1(\bb{R}^d)\cup L^2(\bb{R}^d),\,\,\bb{P}\ll\lambda\,\,\text{and}\,\,\emph{supp}(\bb{P})\,\,\text{is compact}\}$, where $\lambda$ is the Lebesgue measure. Suppose $k$ satisfies Assumption~\ref{assume-1} and $\emph{supp}(\Lambda)\subsetneq\bb{R}^d$ has a non-empty interior, where $\Lambda$ is defined as in (\ref{Eq:Bochner}). Then $k$ is characteristic to $\Scr{P}_1$.
\end{theorem}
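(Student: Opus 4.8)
The plan is to show that for $\bb{P},\bb{Q}\in\Scr{P}_1$, $\gamma_k(\bb{P},\bb{Q})=0$ forces $\bb{P}=\bb{Q}$, using the frequency-domain representation of $\gamma_k$ from Corollary~\ref{cor:L2distance}\emph{(i)}. By that corollary, $\gamma_k(\bb{P},\bb{Q})=\Vert\phi_\bb{P}-\phi_\bb{Q}\Vert_{L^2(\bb{R}^d,\Lambda)}$, so $\gamma_k(\bb{P},\bb{Q})=0$ means $\phi_\bb{P}=\phi_\bb{Q}$ $\Lambda$-almost everywhere, and since $\phi_\bb{P},\phi_\bb{Q}$ are continuous, $\phi_\bb{P}=\phi_\bb{Q}$ on all of $\text{supp}(\Lambda)$. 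By hypothesis $\Omega:=\text{supp}(\Lambda)$ has non-empty interior, so there is an open ball $U\subset\Omega$ on which $\phi_\bb{P}\equiv\phi_\bb{Q}$. The goal is then to promote agreement on an open set to agreement everywhere, which forces $\bb{P}=\bb{Q}$ by the uniqueness theorem for characteristic functions.

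\textbf{First} I would exploit the compact support of $\bb{P}$ and $\bb{Q}$: if $\text{supp}(\bb{P})$ is compact, then $\phi_\bb{P}(\omega)=\int e^{i\omega^Tx}\,d\bb{P}(x)$ extends to an entire function on $\bb{C}^d$ (the integral converges for complex $\omega$ because the integrand is bounded on the compact support), and likewise for $\phi_\bb{Q}$. This is a Paley--Wiener-type statement; the relevant machinery (Paley--Wiener theorem, Lemma~\ref{lem:entire}) is already invoked in the excerpt for Corollary~\ref{cor:compactsupport}. Two entire functions on $\bb{C}^d$ that agree on a non-empty open subset of $\bb{R}^d$ agree on all of $\bb{C}^d$ by the identity theorem for holomorphic functions (restricting to complex lines, or using the standard several-variables version). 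Hence $\phi_\bb{P}=\phi_\bb{Q}$ on all of $\bb{R}^d$, and the uniqueness theorem \citep[Theorem 9.5.1]{Dudley-02} gives $\bb{P}=\bb{Q}$.

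\textbf{The role of the condition} $\phi_\bb{P}\in L^1(\bb{R}^d)\cup L^2(\bb{R}^d)$ and $\bb{P}\ll\lambda$ is presumably to give a cleaner alternative argument (or to handle the case where one prefers not to invoke analyticity): if $\phi_\bb{P}-\phi_\bb{Q}\in L^1$ or $L^2$, it is the Fourier transform of $p-q\in L^\infty$ (or $L^2$) where $d\bb{P}=p\,dx,\,d\bb{Q}=q\,dx$; agreement of $\phi_\bb{P},\phi_\bb{Q}$ on an open set $U$ plus compact support of $p-q$ again propagates by analyticity of the Fourier transform of a compactly supported integrable function. Either way, the key mechanism is: \emph{vanishing of a characteristic-function difference on an open set, combined with compact support of the underlying measures, forces the difference to vanish identically.} I would write the proof via the entire-function route as the main line, citing Lemma~\ref{lem:entire} and the Paley--Wiener theorem from Appendix A, and noting that the $L^1\cup L^2$ hypothesis ensures $\phi_{\bb{P}}-\phi_{\bb{Q}}$ is well-behaved enough for the inverse transform / uniqueness step.

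\textbf{The main obstacle} is the step that lifts ``$\phi_\bb{P}=\phi_\bb{Q}$ on an open set'' to ``$\phi_\bb{P}=\phi_\bb{Q}$ everywhere.'' This genuinely requires the compact-support hypothesis (without it the claim is false — that is exactly the content of Theorem~\ref{Thm:countablyinfinite}'s negative direction), and it requires carefully invoking analyticity in several complex variables: one must check that $\phi_\bb{P}$ really is entire (uniform convergence of the defining integral on compact subsets of $\bb{C}^d$, differentiation under the integral sign), and then apply the identity theorem correctly in the multivariate setting, for which it suffices to connect any two real points by a polygonal path of complex line segments and propagate the vanishing along each segment. Everything else — reducing $\gamma_k=0$ to equality of characteristic functions on $\Omega$, and concluding $\bb{P}=\bb{Q}$ from global equality of characteristic functions — is routine given Corollary~\ref{cor:L2distance}\emph{(i)} and the classical uniqueness theorem.
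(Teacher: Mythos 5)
Your proposal matches the paper's proof in substance: both arguments use Corollary~\ref{cor:L2distance}\emph{(i)} to conclude that $\theta:=\phi_\bb{P}-\phi_\bb{Q}$ vanishes on $\text{supp}(\Lambda)$ (hence on an open set, since $\text{supp}(\Lambda)$ has non-empty interior), invoke the Paley--Wiener theorem together with compact support of $\bb{P},\bb{Q}$ to show $\theta$ is entire on $\bb{C}^d$, and then apply the identity theorem (the paper's Lemma~\ref{lem:entire}) to force $\theta\equiv 0$ and hence $\bb{P}=\bb{Q}$. Your observation that the $L^1\cup L^2$ and $\bb{P}\ll\lambda$ hypotheses play no essential role in this direction also agrees with the paper, whose proof cites Remark~\ref{rem:theta} only as a shortcut to the statement $\text{supp}(\theta)\subset\text{cl}(\bb{R}^d\setminus\text{supp}(\Lambda))$, which follows directly from $\int|\theta|^2\,d\Lambda=0$ and continuity.
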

\begin{proof}
See Section~\ref{subsubsec:proofs}.
\end{proof}
Although, by Theorem~\ref{Thm:countablyinfinite}, the kernels with $\text{supp}(\Lambda)\subsetneq\bb{R}^d$ are not characteristic to $\Scr{P}$, Theorem~\ref{Thm:compact} shows that there exists a subset of $\Scr{P}$ to which a subset of these kernels are characteristic. This type of result is not available for the previously mentioned characterizations. An example of a kernel that satisfies the conditions in Theorem~\ref{Thm:compact} is the Sinc kernel, $\psi(x)=\frac{\sin(\sigma x)}{x}$ which has $\text{supp}(\Lambda)=[-\sigma,\sigma]$. 
The condition that $\text{supp}(\Lambda)\subsetneq\bb{R}^d$ has a non-empty interior is important for Theorem~\ref{Thm:compact} to hold. If $\text{supp}(\Lambda)$ has an empty interior (examples include periodic kernels), then one can construct $\bb{P}\ne \bb{Q},\,\bb{P},\bb{Q}\in\Scr{P}_1$ such that $\gamma_k(\bb{P},\bb{Q})=0$. This is illustrated in Example~\ref{Exm:periodic-compact}, which is deferred to Section~\ref{subsubsec:proofs}. 

\par So far, we have characterized the characteristic property of kernels that satisfy (a) $\text{supp}(\Lambda)=\bb{R}^d$ or (b) $\text{supp}(\Lambda)\subsetneq\bb{R}^d$ with $\text{int}(\text{supp}(\Lambda))\ne\emptyset$. In the following section, we investigate kernels that have $\text{supp}(\Lambda)\subsetneq\bb{R}^d$ with $\text{int}(\text{supp}(\Lambda))=\emptyset$, examples of which include periodic kernels on $\bb{R}^d$. This discussion uses the fact that a periodic function on $\bb{R}^d$ can be treated as a function on $\bb{T}^d$, the $d$-Torus.

\subsection{Characterization for translation invariant $k$ on $\bb{T}^d$}\label{subsec:Td}
Let $M=\times^d_{j=1}[0,\tau_j)$ and $\tau:=(\tau_1,\ldots,\tau_d)$. A function defined on $M$ with periodic boundary conditions is equivalent to considering a periodic function on $\bb{R}^d$ with period $\tau$. With no loss of generality, we can choose $\tau_j=2\pi,\,\forall\,j$ which yields $M=[0,2\pi)^d=:\bb{T}^d$, called the $d$-Torus. The results presented here hold for any $0<\tau_j<\infty,\,\forall\,j$ but we choose $\tau_j=2\pi$ for simplicity. Similar to Assumption~\ref{assume-1}, we now make the following assumption.
\begin{assumption}\label{assume-2}
$k(x,y)=\psi((x-y)_{mod\,\,2\pi})$, where $\psi$ is a continuous real-valued positive definite function on $M=\bb{T}^d$. 
\end{assumption}
Similar to Theorem~\ref{Theorem:Bochner}, we now state Bochner's theorem on $M=\bb{T}^d$.
\begin{theorem}[Bochner]\label{Theorem:Bochnerdiscrete}
A continuous function $\psi:\bb{T}^d\rightarrow\bb{R}$ is positive definite if and only if 
\begin{equation}\label{Eq:discreteBochner}
\psi(x)=\sum_{n\in\bb{Z}^d}A_\psi(n)\,e^{ix^Tn},\,\,x\in\bb{T}^d,
\end{equation}
where $A_\psi:\bb{Z}^d\rightarrow\bb{R}_+$, $A_\psi(-n)=A_\psi(n)$ and $\sum_{n\in \bb{Z}^d}A_\psi(n)<\infty$. $A_\psi$ are called the Fourier series coefficients of $\psi$. 
\end{theorem}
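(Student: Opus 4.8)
The plan is to prove the two implications separately, following the template of the classical proof of Bochner's theorem on $\bb{R}^d$ (Theorem~\ref{Theorem:Bochner}) but with Fourier integrals replaced by Fourier series on the compact group $\bb{T}^d$.

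For the easy (sufficiency) direction I would assume $\psi$ is given by (\ref{Eq:discreteBochner}) with $A_\psi(n)\ge 0$, $A_\psi(-n)=A_\psi(n)$ and $\sum_{n\in\bb{Z}^d}A_\psi(n)<\infty$. Summability makes the series absolutely and uniformly convergent, hence $\psi\in C(\bb{T}^d)$; grouping $n$ with $-n$ and using $A_\psi(-n)=A_\psi(n)\in\bb{R}_+$ shows $\psi$ is real-valued; and for any $x_1,\dots,x_m\in\bb{T}^d$ and $\alpha_1,\dots,\alpha_m\in\bb{R}$, interchanging the absolutely convergent sums gives $\sum_{j,l}\alpha_j\alpha_l\psi(x_j-x_l)=\sum_{n}A_\psi(n)\,\bigl|\sum_{j}\alpha_j e^{ix_j^Tn}\bigr|^2\ge 0$, so $k(x,y)=\psi(x-y)$ is positive definite.

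For the necessity direction, assume $\psi\in C(\bb{T}^d)$ is positive definite; since a real-valued positive definite function is even, $\psi(-x)=\psi(x)$, and since $\bb{T}^d$ is compact, $\psi\in L^1(\bb{T}^d)$, so its Fourier coefficients $A_\psi(n):=(2\pi)^{-d}\int_{\bb{T}^d}\psi(x)e^{-ix^Tn}\,dx$ are defined. The first step is to upgrade positive definiteness from finite sums to the integral form $\int\!\int_{\bb{T}^d}\psi(x-y)f(x)\overline{f(y)}\,dx\,dy\ge 0$ for all $f\in C(\bb{T}^d)$, by approximating the double integral with Riemann sums $\sum_{j,l}\psi(x_j-x_l)f(x_j)\overline{f(x_l)}\,\Delta^2$ over refining partitions (valid since $(x,y)\mapsto\psi(x-y)$ and $f$ are continuous on the compact torus) and checking each sum is $\ge 0$: writing $f(x_j)=a_j+ib_j$, its real part is $\sum a_ja_l\psi(x_j-x_l)+\sum b_jb_l\psi(x_j-x_l)\ge 0$, and its imaginary part vanishes because $\psi(x_j-x_l)=\psi(x_l-x_j)$. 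Plugging $f(x)=e^{ix^Tn}$ and using translation invariance of Lebesgue measure on $\bb{T}^d$ (substitute $u=x-y$) then gives $(2\pi)^{2d}A_\psi(-n)\ge 0$, i.e. $A_\psi(n)\ge 0$ for every $n\in\bb{Z}^d$; in particular each $A_\psi(n)$ is a nonnegative real, and combined with $\overline{A_\psi(n)}=A_\psi(-n)$ (realness of $\psi$) this yields $A_\psi(-n)=A_\psi(n)$.

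It remains to get summability and the reconstruction $\psi=\sum_n A_\psi(n)e^{ix^Tn}$. For summability I would pass through the Ces\`{a}ro (Fej\'{e}r) means $\sigma_N\psi(x)=\sum_n F_N(n)A_\psi(n)e^{ix^Tn}$, where $F_N(n)$ is the product of one-dimensional Fej\'{e}r weights, so $0\le F_N(n)\le 1$ and $F_N(n)\to 1$ for each fixed $n$; by Fej\'{e}r's theorem $\sigma_N\psi\to\psi$ uniformly on $\bb{T}^d$, so $\sigma_N\psi(0)=\sum_n F_N(n)A_\psi(n)\to\psi(0)$, and since all terms are nonnegative, Fatou's lemma for the counting measure on $\bb{Z}^d$ gives $\sum_n A_\psi(n)\le\liminf_N\sum_n F_N(n)A_\psi(n)=\psi(0)<\infty$. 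Once $\sum_n A_\psi(n)<\infty$, the series $\sum_n A_\psi(n)e^{ix^Tn}$ converges uniformly to some $g\in C(\bb{T}^d)$ whose Fourier coefficients are precisely the $A_\psi(n)$, so $g$ and $\psi$ are continuous with identical Fourier coefficients and hence $g=\psi$ by uniqueness of Fourier series; this is (\ref{Eq:discreteBochner}). The routine parts are sufficiency and the final identification $g=\psi$; the two delicate points are (a) handling the complex test functions $e^{ix^Tn}$ when upgrading positive definiteness to the integral inequality, which forces use of the evenness of $\psi$, and (b) deducing $\sum_n A_\psi(n)<\infty$, for which continuity alone does not suffice and one really needs the positivity of the coefficients together with a summability kernel. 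I expect (b) to be the main obstacle.
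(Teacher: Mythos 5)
The paper does not actually prove Theorem~\ref{Theorem:Bochnerdiscrete}: it simply states it as the torus analogue of Theorem~\ref{Theorem:Bochner} (``Similar to Theorem~\ref{Theorem:Bochner}, we now state Bochner's theorem on $M=\bb{T}^d$''), i.e.\ it is quoted as a known result (the Herglotz--Bochner theorem on a compact abelian group), so there is no ``paper proof'' to compare against.

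Your blind proof is correct and is the standard textbook argument. The sufficiency direction (absolute convergence, even/real by $A_\psi(-n)=A_\psi(n)\in\bb{R}_+$, positive definiteness via the identity $\sum_{j,l}\alpha_j\alpha_l\psi(x_j-x_l)=\sum_n A_\psi(n)\lvert\sum_j\alpha_j e^{ix_j^Tn}\rvert^2$) is routine and complete. The necessity direction is also sound: upgrading the finite-sum positivity to $\int\!\!\int\psi(x-y)f(x)\overline{f(y)}\,dx\,dy\ge 0$ by Riemann sums, with the complex coefficients split into real and imaginary parts and the cross term killed by evenness of $\psi$, is the right way to accommodate the paper's real-coefficient definition of positive definiteness; plugging $f(x)=e^{ix^Tn}$ and substituting $u=x-y$ indeed gives $(2\pi)^{2d}A_\psi(-n)\ge0$; and the Fej\'er/Fatou argument $\sum_n A_\psi(n)\le\liminf_N\sum_n F_N(n)A_\psi(n)=\psi(0)<\infty$ is exactly the standard device that turns nonnegativity of the coefficients into summability, after which the uniformly convergent series reproduces $\psi$ by uniqueness of Fourier series. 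One small point to tighten: the claim ``a real-valued positive definite function is even'' does not follow from the real-coefficient inequality in (\ref{Eq:pd}) alone; it follows because the reproducing kernel $k(x,y)=\psi(x-y)$ is symmetric by convention, so $\psi(x-y)=k(x,y)=k(y,x)=\psi(y-x)$. Stating it that way keeps the argument self-contained within the paper's definitions.
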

Examples for $\psi$ include the Poisson, Dirichlet, F\'{e}jer and cosine kernels, which are shown in Table~\ref{tab:kernel}. We now state the result that defines characteristic kernels on $\bb{T}^d$.
\begin{theorem}\label{Thm:torus}
Suppose $k$ satisfies Assumption~\ref{assume-2}. Then $k$ is characteristic (to the set of all Borel probability measures on $\bb{T}^d$) if and only if $A_\psi(0)\ge 0$, $A_\psi(n)>0,\,\forall\,n\ne 0$.
\end{theorem}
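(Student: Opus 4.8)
The plan is to mirror the structure of the $\bb{R}^d$ case (Theorem~\ref{Thm:countablyinfinite}), transporting everything through the Fourier series representation of Bochner's theorem on $\bb{T}^d$ (Theorem~\ref{Theorem:Bochnerdiscrete}). The key object is an analogue of Corollary~\ref{cor:L2distance}\emph{(i)}: for $\bb{P},\bb{Q}$ Borel probability measures on $\bb{T}^d$, using (\ref{Eq:computeMMD-1}) with $k(x,y)=\psi((x-y)_{\mathrm{mod}\,2\pi})$ and substituting the expansion $\psi(x)=\sum_{n\in\bb{Z}^d}A_\psi(n)e^{ix^Tn}$, Fubini (justified by $\sum_n A_\psi(n)<\infty$ and finiteness of the measures) gives
\begin{equation}
\gamma_k^2(\bb{P},\bb{Q})=\sum_{n\in\bb{Z}^d}A_\psi(n)\,\bigl|\widehat{\bb{P}}(n)-\widehat{\bb{Q}}(n)\bigr|^2,\nonumber
\end{equation}
where $\widehat{\bb{P}}(n):=\int_{\bb{T}^d}e^{-ix^Tn}\,d\bb{P}(x)$ are the Fourier coefficients of $\bb{P}$.

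For the \textbf{sufficiency} direction, assume $A_\psi(0)\ge 0$ and $A_\psi(n)>0$ for all $n\ne 0$. If $\gamma_k(\bb{P},\bb{Q})=0$, then every term in the sum vanishes, so $\widehat{\bb{P}}(n)=\widehat{\bb{Q}}(n)$ for all $n\ne 0$; and $\widehat{\bb{P}}(0)=\widehat{\bb{Q}}(0)=1$ automatically since both are probability measures. Hence all Fourier coefficients of the finite signed measure $\bb{P}-\bb{Q}$ are zero, and by the uniqueness theorem for Fourier coefficients of measures on $\bb{T}^d$ (the trigonometric polynomials are dense in $C(\bb{T}^d)$, so a measure with all Fourier coefficients zero is the zero measure), we conclude $\bb{P}=\bb{Q}$. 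This is the easy half.

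For the \textbf{necessity} direction, suppose the condition fails: either $A_\psi(0)<0$ (impossible, since Theorem~\ref{Theorem:Bochnerdiscrete} already gives $A_\psi:\bb{Z}^d\to\bb{R}_+$, so this case is vacuous) or there exists $n_0\ne 0$ with $A_\psi(n_0)=0$. I would then construct explicit $\bb{P}\ne\bb{Q}$ with $\gamma_k(\bb{P},\bb{Q})=0$. Using $A_\psi(-n_0)=A_\psi(n_0)=0$, take densities of the form $p(x)=\frac{1}{(2\pi)^d}\bigl(1+\varepsilon\cos(x^Tn_0)\bigr)$ and $q(x)=\frac{1}{(2\pi)^d}$ on $\bb{T}^d$, with $\varepsilon\in(0,1]$ so that $p\ge 0$; then $\bb{P}\ne\bb{Q}$, and $\widehat{\bb{P}}(n)-\widehat{\bb{Q}}(n)$ is supported on $\{n_0,-n_0\}$, where $A_\psi$ vanishes, so the displayed sum is zero. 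Hence $k$ is not characteristic. One should check the density $p$ is a genuine probability density (nonnegativity from $|\varepsilon\cos|\le 1$, total mass $1$ from orthogonality of the trigonometric system) and that $\bb{P},\bb{Q}$ are indeed distinct.

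The main obstacle is not conceptual but one of rigor: justifying the interchange of summation and the double integral in deriving the closed-form for $\gamma_k^2$, and invoking the correct uniqueness statement for measures on the torus in the sufficiency half. Both are standard—the first follows from absolute convergence $\sum_n A_\psi(n)<\infty$ together with $|\widehat{\bb{P}}(n)|\le 1$ and Fubini/Tonelli, and the second from Stone–Weierstrass density of trigonometric polynomials in $C(\bb{T}^d)$—so I expect the proof to be short, essentially a transcription of the $\bb{R}^d$ argument with Lebesgue measure on $\bb{R}^d$ replaced by counting measure on $\bb{Z}^d$.
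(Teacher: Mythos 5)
Your proposal is correct and follows essentially the same route as the paper: both establish the identity $\gamma^2_k(\bb{P},\bb{Q}) \propto \sum_{n\in\bb{Z}^d} A_\psi(n)\,|\widehat{\bb{P}}(n)-\widehat{\bb{Q}}(n)|^2$ via Bochner's theorem on $\bb{T}^d$ together with Fubini, deduce sufficiency from uniqueness of Fourier coefficients (with the $n=0$ term handled automatically by the probability normalization, as you note), and prove necessity by taking $\bb{Q}$ uniform and perturbing only at the frequency $n_0\ne 0$ where $A_\psi$ vanishes. The only cosmetic difference is that the paper perturbs by a sine, $p(x)=\tfrac{1}{(2\pi)^d}-2\alpha\sin(x^Tn_0)$, whereas you use a cosine; both yield a valid density supported in Fourier on $\{\pm n_0\}$, so the argument is the same.
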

The proof is provided in Section~\ref{subsubsec:proofs} and the idea is similar to that of Theorem~\ref{Thm:countablyinfinite}. Based on the above result, one can generate characteristic kernels by constructing an infinite sequence of positive numbers that are summable and then using them in (\ref{Eq:discreteBochner}). It can be seen from Table~\ref{tab:kernel} that the Poisson kernel on $\bb{T}$ is characteristic while the Dirichlet, F\'{e}jer and cosine kernels are not. Some examples of characteristic kernels on $\bb{T}$ are:
\begin{list}{\labelitemi}{\leftmargin=2em}
\item[(1)] $k(x,y)=e^{\alpha\cos(x-y)}\cos(\alpha\sin(x-y)),\,0<\alpha\le 1\, \leftrightarrow\, A_\psi(0)=1,\,A_\psi(n)=\frac{\alpha^{|n|}}{2|n|!},\,\forall\,n\ne 0.$
\item[(2)] $k(x,y)=-\log(1-2\alpha\cos(x-y)+\alpha^2),\,|\alpha|< 1 \,\leftrightarrow\,A_\psi(0)=0,\,A_\psi(n)=\frac{\alpha^n}{n},\,\forall\,n\ne 0.$
\item[(3)] $k(x,y)=(\pi-(x-y)_{mod\,\,2\pi})^2\,\leftrightarrow\,A_\psi(0)=\frac{\pi^2}{3},\,A_\psi(n)=\frac{2}{n^2},\,\forall\,n\ne 0.$
\item[(4)] $k(x,y)=\frac{\sinh \alpha}{\cosh \alpha-\cos(x-y)},\,\alpha>0\,\leftrightarrow
\,A_\psi(0)=1, A_\psi(n)=e^{-\alpha|n|},\,\forall\,n\ne 0.$
\item[(5)] $k(x,y)=\frac{\pi\cosh(\alpha(\pi-(x-y)_{mod\,2\pi}))}{\alpha\sinh(\pi\alpha)}\,\leftrightarrow\,A_\psi(0)=\frac{1}{\alpha^2},\,A_\psi(n)=\frac{1}{n^2+\alpha^2},\,\forall\,n\ne 0$.
\end{list}
The following result relates characteristic kernels and universal kernels defined on $\bb{T}^d$.
\begin{corollary}\label{cor:Td}
Let $k$ be a characteristic kernel satisfying Assumption~\ref{assume-2} with $A_\psi(0)>0$. Then $k$ is also universal.
\end{corollary}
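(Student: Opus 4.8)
The plan is to show that when $k$ satisfies Assumption~\ref{assume-2}, is characteristic, and $A_\psi(0)>0$, then $\eu{H}$ is dense in $C(\bb{T}^d)$ with respect to the supremum norm (universality on the compact space $\bb{T}^d$). By Theorem~\ref{Thm:torus}, being characteristic forces $A_\psi(n)>0$ for all $n\ne 0$, and the extra hypothesis gives $A_\psi(0)>0$ as well; hence \emph{all} Fourier coefficients of $\psi$ are strictly positive. I would exploit the classical Mercer-type description of the RKHS of a translation-invariant kernel on the torus: from (\ref{Eq:discreteBochner}), $k(x,y)=\sum_{n\in\bb{Z}^d}A_\psi(n)e^{i(x-y)^Tn}$, so the functions $\{e_n(x):=e^{ix^Tn}\}_{n\in\bb{Z}^d}$ (suitably normalized) form an orthogonal system in $\eu{H}$, and $f=\sum_n c_n e_n\in\eu{H}$ with $\Vert f\Vert^2_\eu{H}=\sum_n |c_n|^2/A_\psi(n)$. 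Because every $A_\psi(n)>0$, each trigonometric monomial $e_n$ lies in $\eu{H}$, hence every trigonometric polynomial lies in $\eu{H}$.

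The key steps, in order, are: (1) invoke Theorem~\ref{Thm:torus} and the hypothesis $A_\psi(0)>0$ to conclude $A_\psi(n)>0$ for every $n\in\bb{Z}^d$; (2) recall/establish the spectral description of $\eu{H}$ above, so that $\eu{H}\supseteq\operatorname{span}\{e_n:n\in\bb{Z}^d\}$, the space of trigonometric polynomials on $\bb{T}^d$; (3) apply the Stone–Weierstrass theorem (or the classical Fejér/density theorem for trigonometric polynomials) to conclude that trigonometric polynomials are dense in $C(\bb{T}^d)$ in the supremum norm; (4) conclude $\eu{H}$ is dense in $C(\bb{T}^d)$, i.e., $k$ is universal on $\bb{T}^d$ in the sense of \citet[Definition 4]{Steinwart-01}. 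Since $\bb{T}^d$ is compact, no issue of unbounded functions or noncompact support arises, so Steinwart's notion of universality applies directly.

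The main obstacle — really the only nonroutine point — is step (2): one must be careful that the monomials $e_n$ genuinely belong to $\eu{H}$ and are not merely formal. This follows because for a translation-invariant kernel on the torus the RKHS is exactly $\{f:\Vert f\Vert^2_\eu{H}=\sum_n|\widehat{f}(n)|^2/A_\psi(n)<\infty\}$ (the discrete analogue of (\ref{Eq:rkhsnorm}) in Remark~\ref{rem:L2distance}), and for $f=e_m$ this sum is $1/A_\psi(m)<\infty$; one should either cite this standard fact or verify it quickly via the reproducing property, $\langle e_m,k(\cdot,y)\rangle_\eu{H}=\sum_n A_\psi(n)\langle e_m,e_n\rangle_\eu{H}e^{-iy^Tn}$, which must equal $e_m(y)=e^{iy^Tm}$, forcing $\langle e_m,e_n\rangle_\eu{H}=A_\psi(n)^{-1}\delta_{mn}$. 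Once the monomials are in $\eu{H}$, everything else is a direct appeal to Stone–Weierstrass, and the complex-conjugation closure needed there is immediate since $\overline{e_n}=e_{-n}$ and $A_\psi(-n)=A_\psi(n)>0$.
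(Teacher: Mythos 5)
Your proof is correct, but it takes a more elaborate route than the paper. The paper's proof is a one-liner: once one observes (exactly as you do in your step (1)) that $k$ characteristic together with $A_\psi(0)>0$ forces $A_\psi(n)>0$ for all $n\in\bb{Z}^d$, it simply invokes Corollary 11 of \citet{Steinwart-01}, which states precisely that a translation-invariant kernel on the torus is universal when all its Fourier coefficients are strictly positive. What you have done in steps (2)--(4) is to re-derive that cited corollary from scratch: you give the Mercer/spectral description of $\eu{H}$ (the discrete analogue of (\ref{Eq:rkhsnorm})), verify via the reproducing property that each trigonometric monomial lies in $\eu{H}$, and then appeal to Stone--Weierstrass to get density in $C(\bb{T}^d)$. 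Both routes are valid; the paper's is more economical since it delegates the analytic work to a citation, while yours is self-contained and makes the mechanism explicit. One small technical caveat worth flagging in your version: since $\psi$ is real-valued, the RKHS $\eu{H}$ of the paper consists of real-valued functions, so strictly speaking you should work with $\cos(n^Tx)$ and $\sin(n^Tx)$ (each of which is a linear combination of $e_n$ and $e_{-n}$, both in the complexified RKHS because $A_\psi(\pm n)>0$) and invoke the real form of Stone--Weierstrass; or equivalently note that your complex argument applies to the complexification and then pass to real and imaginary parts. This is routine, and your closing remark about conjugation closure essentially anticipates it, so it is not a gap, only a presentational choice.
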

\begin{proof}
Since $k$ is characteristic with $A_\psi(0)>0$, we have $A_\psi(n)>0,\,\forall\,n$. Therefore, by Corollary 11 of \citet{Steinwart-01}, $k$ is universal.
\end{proof}
Since $k$ being universal implies that it is characteristic, the above result shows that the converse is not true (though almost true except that $A_\psi(0)$ can be zero for characteristic kernels). The condition on $A_\psi$ in Theorem~\ref{Thm:torus}, i.e., $A_\psi(0)\ge 0,\,A_\psi(n)>0,\,\forall\,n\ne 0$ can be equivalently written as $\text{supp}(A_\psi)=\bb{Z}^d$. Therefore, Theorems~\ref{Thm:countablyinfinite} and \ref{Thm:torus} are of similar flavor. In fact, these results can be generalized to locally compact Abelian groups. \citet{Fukumizu-08b} shows that a bounded continuous translation invariant kernel on a locally compact Abelian group, $G$ is characteristic to the set of all probability measures on $G$ if and only if the support of the Fourier transform of the translation invariant kernel is the dual group of $G$. In our case, $(\bb{R}^d,+)$  and $(\bb{T}^d,+)$ are locally compact Abelian groups with $(\bb{R}^d,+)$ and $(\bb{Z}^d,+)$ as their respective dual groups. In \citet{Fukumizu-08b}, these results are also extended to translation invariant kernels on non-Abelian compact groups and the semigroup $\bb{R}^d_+$. 
\subsection{Relation between various characterizations of characteristic kernels}\label{subsec:relation}
So far, we have presented various characterizations of characteristic kernels, which are easily checkable compared to the characterizations proposed in literature \citep{Gretton-06, Fukumizu-08a, Fukumizu-08b}. Now, it is of interest to understand the relation between these characterizations. A summary of the relationship between these characterizations is shown in Figure~\ref{Fig:relation}, which is discussed below.\vspace{1mm}
\begin{figure*}
  \centering
\begin{tabular}{cccccc}
    \begin{minipage}{12cm} 
      \center{\epsfxsize=10cm 
      \epsffile{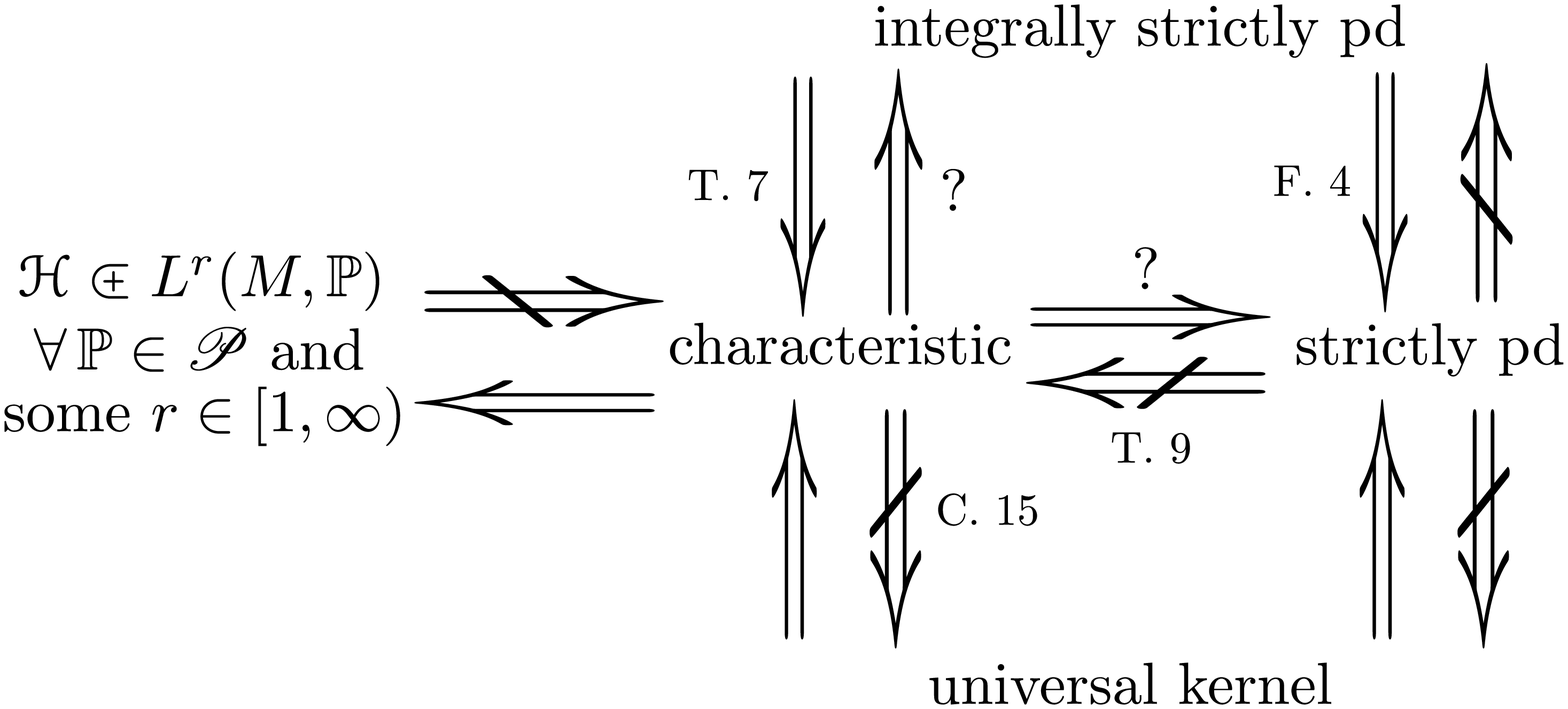}}
      \end{minipage}
  \end{tabular}
  \caption{Summary of the relationship between various characterizations is shown along with the reference. The letters ``C", ``F", and ``T" refer to Corollary, Footnote and Theorem respectively. For example, T. 7 refers to Theorem 7. The implications which are open problems are shown with ``?". $A\subsetplus B$ indicates that $A$ is a dense subset of $B$. Refer to Section~\ref{subsec:relation} for details.}
\label{Fig:relation}
\vspace{-6mm}
\end{figure*}
\par\noindent\textbf{Characteristic kernels vs. Integrally strictly pd kernels:} It is clear from Theorem~\ref{thm:strictpd} that integrally strictly pd kernels on a topological space, $M$ are characteristic, while it is not clear whether the converse is true or not. However, when $k$ is translation invariant on $\bb{R}^d$, then the converse holds. This is because if $k$ is characteristic, then by Theorem~\ref{Thm:countablyinfinite}, $\text{supp}(\Lambda)=\bb{R}^d$, where $\Lambda$ is defined as in (\ref{Eq:Bochner}). It is easy to check that if $\text{supp}(\Lambda)=\bb{R}^d$, then $k$ is integrally strictly pd.\vspace{1mm}
\par\noindent\textbf{Integrally strictly pd kernels vs. Strictly pd kernels:} The relation between integrally strictly pd and strictly pd kernels shown in Figure~\ref{Fig:relation} is straightforward, as one direction follows from footnote~\ref{footnote:ispd}, while the other direction is not true, which follows from \citet[Proposition 4.60, Theorem 4.62]{Steinwart-08}. However, if $M$ is a finite set, then $k$ being strictly pd also implies it is integrally strictly pd.\vspace{1mm}
\par\noindent\textbf{Characteristic kernels vs. Strictly pd kernels:} Since integrally strictly pd kernels are characteristic and are also strictly pd, a natural question to ask is, ``What is the relation between characteristic and strictly pd kernels?'' It can be seen that strictly pd kernels need not be characteristic because the sinc-squared kernel, $k(x,y)=\frac{\sin^2(\sigma(x-y))}{(x-y)^2}$ on $\bb{R}$, which has $\text{supp}(\Lambda)=[-\sigma,\sigma]\subsetneq\bb{R}$ is strictly pd \citep[Theorem 6.11]{Wendland-05}, while it is not characteristic by Theorem~\ref{Thm:countablyinfinite}. However, for any general $M$, it is not clear whether $k$ being characteristic implies that it is strictly pd. As a special case, if $M=\bb{R}^d$ or $M=\bb{T}^d$, then by Theorems~\ref{Thm:countablyinfinite} and \ref{Thm:compact}, it follows that a translation invariant $k$ being characteristic also implies that it is strictly pd.\vspace{1mm}
\par\noindent\textbf{Universal kernels vs. Characteristic kernels:} \citet{Gretton-06} have shown that if $k$ is universal in the sense of \citet{Steinwart-01}, then it is characteristic. As mentioned in Section~\ref{subsec:Td}, the converse is not true, i.e., if a kernel is characteristic, then it need not be universal, which follows from Corollary~\ref{cor:Td}. Note that in this case, $M$ is assumed to be a compact metric space. The notion of universality of kernels was extended to non-compact domains by \citet{Micchelli-06}: $k$ is said to universal on a non-compact Hausdroff space, $M$, if for any compact $Z\subset M$, the set $K(Z):=\overline{\text{span}}\{k(\cdot,y):y\in Z\}$ is dense in $C(Z)$ w.r.t. the supremum norm, where $C(Z)$ represents the space of continuous functions defined on $Z$. It is to be noted that when $M$ is compact, this notion of universality is same as that of \citet{Steinwart-01}. \citet[Proposition 15]{Micchelli-06} have provided a characterization of universality for translation invariant kernels on $\bb{R}^d$: $k$ is universal if $\lambda(\text{supp}(\Lambda))>0$, where $\lambda$ is the Lebesgue measure and $\Lambda$ is defined as in (\ref{Eq:Bochner}). This means, if a translation invariant kernel on $\bb{R}^d$ is characteristic, then it is also universal in the sense of \citet{Micchelli-06}, while the converse is not true. However, the relation between these notions for a general non-compact Hausdorff space, $M$ is not clear.
\par \citet{Fukumizu-08a, Fukumizu-08b} have shown that $k$ is characteristic if and only if $\eu{H}+\bb{R}$ is dense in $L^r(M,\bb{P})$ for all $\bb{P}\in\Scr{P}$ and for some $r\in [1,\infty)$. This means, if $k$ is characteristic, then $\eu{H}+\bb{R}\subsetplus L^r(M,\bb{P})$, which implies $\eu{H}\subsetplus L^r(M,\bb{P})$ for all $\bb{P}\in\Scr{P}$ and for some $r\in [1,\infty)$. Clearly, the converse is not true (refer to Figure~\ref{Fig:relation} for the definition of $\subsetplus$). However, if constant functions are included in $\eu{H}$, then it is easy to see that the converse is also true. \vspace{1mm}
\par\noindent \textbf{Universal kernels vs. Strictly pd kernels:} If a kernel is universal, then it is strictly pd, which follows from \citet[Definition 4.53, Proposition 4.54, Exercise 4.11]{Steinwart-08}. On the other hand, if a kernel is strictly pd, then it need not be universal, which follows from the results due to \citet{Dahmen-87} and \citet{Pinkus-04} for Taylor kernels \citep[Lemma 4.8, Corollary 4.57]{Steinwart-08}. Refer to \citet[Section 4.7, p. 161]{Steinwart-08} for more details.\\
\par\noindent Recently, in \citet{Sriperumbudur-09d,Sriperumbudur-09e}, we carried out a thorough study of relating characteristic kernels to various notions of universality, wherein we addressed some open questions mentioned in the above discussion and Figure~\ref{Fig:relation}. This is done by relating universality to the injective embedding of regular Borel measures into an RKHS, which can therefore be seen as a generalization of the notion of characteristic kernels, as the latter deal with the injective RKHS embedding of probability measures.

\subsection{Proofs}\label{subsubsec:proofs}
First, we present a supplementary result in Lemma~\ref{lem:constructp} that will be used to prove Theorem~\ref{Thm:countablyinfinite}. The idea of Lemma~\ref{lem:constructp} is to characterize the equivalent conditions for the existence of $\bb{P}\ne\bb{Q}$ such that $\gamma_k(\bb{P},\bb{Q})=0$ when $\text{supp}(\Lambda)\subsetneq\bb{R}^d$. Its proof
relies on the properties of characteristic functions, which we have collected in Theorem~\ref{Theorem:FTmeasure} in Appendix A.
\begin{lemma}\label{lem:constructp}
Let $\Scr{P}_0:=\{\bb{P}\in\Scr{P}:\phi_\bb{P}\in L^1(\bb{R}^d)\cup L^2(\bb{R}^d)\,\,\text{and}\,\,\bb{P}\ll\lambda\}$, where $\lambda$ is the Lebesgue measure. Suppose $k$ satisfies Assumption~\ref{assume-1} and $\emph{supp}(\Lambda)\subsetneq\bb{R}^d$, where $\Lambda$ is defined as in (\ref{Eq:Bochner}). Then, for any $\bb{Q}\in\Scr{P}_0$, $\exists\, \bb{P}\ne \bb{Q}$, $\bb{P}\in\Scr{P}_0$ 
such that $\gamma_k(\bb{P},\bb{Q})=0$ if and only if there exists a non-zero function $\theta:\bb{R}^d\rightarrow\bb{C}$ that satisfies the following conditions:
\begin{itemize}
\item[(i)] 
$\theta\in (L^1(\bb{R}^d)\cup L^2(\bb{R}^d))\cap C_b(\bb{R}^d)$ is conjugate symmetric\footnote{Note that $\text{Re}[\theta]$ and $\text{Im}[\theta]$ are even and odd functions in $\bb{R}^d$.}, i.e., $\overline{\theta(x)}=\theta(-x),\,\forall\,x\in\bb{R}^d$,
\item[(ii)] $\theta^\vee\in L^1(\bb{R}^d)\cap (L^2(\bb{R}^d)\cup C_b(\bb{R}^d))$,
\item[(iii)]  $\int_{\bb{R}^d}|\theta(x)|^2\,d\Lambda(x)=0$,
\item[(iv)] $\theta(0)=0$,
\item[(v)]$\inf_{x\in\bb{R}^d}\{\theta^\vee(x)+q(x)\}\ge 0$.
\end{itemize}
\end{lemma}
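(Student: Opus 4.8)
The plan is to translate the condition $\gamma_k(\bb{P},\bb{Q})=0$ into a frequency-domain statement via Corollary~\ref{cor:L2distance}\emph{(i)}, and then recognize that the natural object to work with is $\theta:=\phi_\bb{P}-\phi_\bb{Q}$, the difference of characteristic functions. First I would prove the forward direction ($\Rightarrow$): given $\bb{P}\neq\bb{Q}$ in $\Scr{P}_0$ with $\gamma_k(\bb{P},\bb{Q})=0$, set $\theta=\phi_\bb{P}-\phi_\bb{Q}$. By Corollary~\ref{cor:L2distance}\emph{(i)}, $\gamma_k^2(\bb{P},\bb{Q})=\int_{\bb{R}^d}|\theta(x)|^2\,d\Lambda(x)=0$, giving \emph{(iii)}. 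Since $\phi_\bb{P},\phi_\bb{Q}$ are characteristic functions they are bounded continuous and conjugate symmetric ($\overline{\phi_\bb{P}(x)}=\phi_\bb{P}(-x)$), so $\theta\in C_b(\bb{R}^d)$ is conjugate symmetric; the hypothesis $\bb{P},\bb{Q}\in\Scr{P}_0$ forces $\phi_\bb{P},\phi_\bb{Q}\in L^1\cup L^2$, hence $\theta\in L^1(\bb{R}^d)\cup L^2(\bb{R}^d)$ (a short argument is needed here because the sum of an $L^1$ function and an $L^2$ function need not lie in either space, so one should use the further structure from Theorem~\ref{Theorem:FTmeasure} in Appendix~A relating integrability of $\phi_\bb{P}$ to boundedness/continuity of the density), establishing \emph{(i)}. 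Next, $\theta^\vee=p-q$ where $p,q$ are the Radon--Nikodym derivatives of $\bb{P},\bb{Q}$ w.r.t.\ $\lambda$ (by Fourier inversion, valid since $\phi_\bb{P}\in L^1\cup L^2$): this is in $L^1(\bb{R}^d)$ since $p,q$ are densities, and in $L^2(\bb{R}^d)\cup C_b(\bb{R}^d)$ by the integrability assumptions on $\phi_\bb{P}$, giving \emph{(ii)}. Condition \emph{(iv)} is $\theta(0)=\phi_\bb{P}(0)-\phi_\bb{Q}(0)=1-1=0$. Finally \emph{(v)}: $\theta^\vee(x)+q(x)=p(x)-q(x)+q(x)=p(x)\ge 0$ a.e., and one upgrades ``a.e.'' to ``everywhere'' using the continuity afforded by \emph{(ii)} (or interprets the infimum in the essential sense consistent with the paper's usage).

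For the converse ($\Leftarrow$), suppose such a $\theta$ exists and $\bb{Q}\in\Scr{P}_0$ with density $q$. Define $p:=\theta^\vee+q$. The candidate $\bb{P}$ is the measure $d\bb{P}=p\,d\lambda$. By \emph{(v)}, $p\ge 0$; by \emph{(ii)}, $p\in L^1(\bb{R}^d)$; and $\int_{\bb{R}^d}p\,dx=\int\theta^\vee\,dx+\int q\,dx$, where the first integral equals $\widehat{\theta^\vee}(0)\cdot(2\pi)^{d/2}$-type constant $=\theta(0)=0$ by \emph{(iv)} and Fourier inversion, so $\int p\,dx=1$; hence $\bb{P}\in\Scr{P}$. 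Moreover $\phi_\bb{P}=\phi_\bb{Q}+\theta$ (taking Fourier transforms and using that $\theta=(\theta^\vee)^\wedge$, which is legitimate given the $L^1\cap(L^2\cup C_b)$ regularity in \emph{(ii)}), so $\phi_\bb{P}-\phi_\bb{Q}=\theta$, and $\phi_\bb{P}\in L^1\cup L^2$ because $\theta$ and $\phi_\bb{Q}$ both are (again invoking the structural results to keep within $\Scr{P}_0$); hence $\bb{P}\in\Scr{P}_0$. Since $\theta\neq 0$, $\phi_\bb{P}\neq\phi_\bb{Q}$, so by the uniqueness theorem for characteristic functions $\bb{P}\neq\bb{Q}$. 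Finally, by Corollary~\ref{cor:L2distance}\emph{(i)} and \emph{(iii)}, $\gamma_k^2(\bb{P},\bb{Q})=\int_{\bb{R}^d}|\phi_\bb{P}-\phi_\bb{Q}|^2\,d\Lambda=\int_{\bb{R}^d}|\theta|^2\,d\Lambda=0$, as required.

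The main obstacle, and where I would expect to spend the most care, is bookkeeping the precise function-space memberships so that every Fourier-inversion and Fubini step is justified and so that $\bb{P}$ lands exactly in $\Scr{P}_0$ rather than some larger class. The disjunction $L^1\cup L^2$ (rather than a single space) in the definition of $\Scr{P}_0$ is what makes conditions \emph{(i)} and \emph{(ii)} look asymmetric, and handling the two cases — $\phi_\bb{P}\in L^1$ versus $\phi_\bb{P}\in L^2$ — requires slightly different inversion arguments (pointwise inversion with a continuous density in the $L^1$ case; $L^2$-Plancherel inversion in the $L^2$ case). I would isolate these technicalities by appealing to the properties of characteristic functions collected in Theorem~\ref{Theorem:FTmeasure} in Appendix~A, which is exactly why the lemma is stated as relying on that result. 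The conceptual content — that $\gamma_k(\bb{P},\bb{Q})=0$ is equivalent to $\phi_\bb{P}-\phi_\bb{Q}$ being supported (in an $L^2(\Lambda)$ sense) off $\text{supp}(\Lambda)$, which is possible for a nonzero admissible $\theta$ precisely when $\text{supp}(\Lambda)\subsetneq\bb{R}^d$ — is the easy part; converting it into a usable existence criterion with the right integrability and normalization constraints (\emph{(iv)}) and positivity constraint (\emph{(v)}) is the delicate part, and it is these constraints that Lemma~\ref{lem:constructp} will later reduce to an explicit construction in the proof of the necessity of Theorem~\ref{Thm:countablyinfinite}.
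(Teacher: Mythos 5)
Your proposal is correct and follows the same route as the paper's proof: define $\theta:=\phi_{\bb{P}}-\phi_{\bb{Q}}$ for the forward direction and $p:=q+\theta^\vee$ for the converse, then verify each of (i)--(v) via Corollary~\ref{cor:L2distance}\emph{(i)}, the properties of characteristic functions in Theorem~\ref{Theorem:FTmeasure}, and Fourier inversion. The one place where you are actually more careful than the paper is flagging that $\theta\in L^1(\bb{R}^d)\cup L^2(\bb{R}^d)$ is not automatic for a difference of two such functions (the paper just asserts it); the clean justification is that characteristic functions are bounded, so $\phi_{\bb{P}},\phi_{\bb{Q}}\in(L^1\cup L^2)\cap L^\infty\subset L^2$, whence $\theta\in L^2(\bb{R}^d)$.
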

\begin{proof}
Define $L^1:=L^1(\bb{R}^d)$, $L^2:=L^2(\bb{R}^d)$ and $C_b:=C_b(\bb{R}^d)$.\vspace{2mm}\\\noindent
($\,\Leftarrow\,$) Suppose there exists a non-zero function $\theta$ satisfying \emph{(i)} \emph{--} \emph{(v)}. 
For any $\bb{Q}\in\Scr{P}_0$, we have $\phi_\bb{Q}\in(L^1\cup L^2)\cap C_b$. When $\phi_\bb{Q}\in L^1\cap C_b$, the Riemann-Lebesgue lemma (Lemma~\ref{lem:Riemann-Lebesgue} in Appendix A) implies that $q=[\overline{\phi_\bb{Q}}]^\vee\in L^1\cap C_b$, where $q$ is the Radon-Nikodym derivative of $\bb{Q}$ w.r.t. $\lambda$. When $\phi_\bb{Q}\in L^2\cap C_b$, the Fourier transform in the $L^2$ sense (see footnote~\ref{footnote:schwartz}) implies that $q=[\overline{\phi_\bb{Q}}]^\vee\in L^1\cap L^2$. Therefore, $q\in L^1\cap(L^2\cup C_b)$. Define $p:=q+\theta^\vee$. Clearly $p\in L^1\cap(L^2\cup C_b)$. In addition, $\overline{\phi_\bb{P}}=\widehat{p}=\widehat{q}+\widehat{\theta^\vee}=\overline{\phi_\bb{Q}}+\theta\in(L^1\cup L^2)\cap C_b$. Since $\theta$ is conjugate symmetric, $\theta^\vee$ is real valued and so is $p$. Consider \begin{equation}
\int_{\bb{R}^d}p(x)\,dx=\int_{\bb{R}^d}q(x)\,dx+\int_{\bb{R}^d}\theta^\vee(x)\,dx =1+\theta(0)=1.\nonumber
\end{equation} \emph{(v)} implies that $p$ is non-negative. Therefore, $p$ is the Radon-Nikodym derivative of a probability measure $\bb{P}$ w.r.t. $\lambda$, where $\bb{P}$ is such that $\bb{P}\ne \bb{Q}$ and $\bb{P}\in\Scr{P}_0$. By (\ref{Eq:L2distance}), we have \begin{equation}
\gamma^2_k(\bb{P},\bb{Q})=\int_{\bb{R}^d}|\phi_\bb{P}(x)-\phi_\bb{Q}(x)|^2\,d\Lambda(x)=\int_{\bb{R}^d}|\theta(x)|^2\,d\Lambda(x)=0.\nonumber
\end{equation}\\
($\,\Rightarrow\,$) Suppose that there exists $\bb{P}\ne\bb{Q}$, $\bb{P},\bb{Q}\in\Scr{P}_0$ such that $\gamma_k(\bb{P},\bb{Q})=0$. Define $\theta:=\phi_\bb{P}-\phi_\bb{Q}$. We need to show that $\theta$ satisfies \emph{(i)} \emph{--} \emph{(v)}. $\bb{P},\bb{Q}\in\Scr{P}_0$ implies $\phi_\bb{P},\phi_\bb{Q}\in (L^1\cup L^2)\cap C_b$ and $p,q\in L^1\cap(L^2\cup C_b)$. Therefore, $\theta=\overline{\phi}_\bb{P}-\overline{\phi}_\bb{Q}\in (L^1\cup L^2)\cap C_b$ and $\theta^\vee=p-q\in L^1\cap(L^2\cup C_b)$. By Theorem~\ref{Theorem:FTmeasure} (see Appendix A), $\phi_\bb{P}$ and $\phi_\bb{Q}$ are conjugate symmetric and so is $\theta$. Therefore $\theta$ satisfies \emph{(i)} and $\theta^\vee$ satisfies \emph{(ii)}. $\theta$ satisfies \emph{(iv)} as 
\begin{equation}
\theta(0)=\int_{\bb{R}^d}\theta^\vee(x)\,dx=\int_{\bb{R}^d} (p(x)-q(x))\,dx=0.\nonumber
\end{equation}
Non-negativity of $p$ yields \emph{(v)}. By (\ref{Eq:L2distance}), $\gamma_k(\bb{P},\bb{Q})=0$ implies \emph{(iii)}.\vspace{-4mm}
\end{proof}
\begin{remark}\label{rem:theta}
Note that the dependence of $\theta$ on the kernel appears in the form of (iii) in Lemma~\ref{lem:constructp}. This condition shows that $\lambda(\emph{supp}(\theta)\cap \emph{supp}(\Lambda))=0$, i.e., the supports of $\theta$ and $\Lambda$ are disjoint w.r.t. the Lebesgue measure, $\lambda$. In other words, $\emph{supp}(\theta)\subset\emph{cl}(\bb{R}^d\backslash\emph{supp}(\Lambda))$. So, the idea is to introduce the perturbation, $\theta$ over an open set, $U$ where $\Lambda(U)=0$. The remaining conditions characterize the nature of this perturbation so that the constructed measure, $p=q+\theta^\vee$, is a valid probability measure. Conditions (i), (ii) and (iv) simply follow from $\theta=\phi_\bb{P}-\phi_\bb{Q}$, while (v) ensures that $p(x)\ge 0,\,\forall\,x$.
\end{remark}
Using Lemma~\ref{lem:constructp}, we now present the proof of Theorem~\ref{Thm:countablyinfinite}.
\vspace{2mm}
\begin{proof}
\hspace{-.05in}\textbf{(Theorem~\ref{Thm:countablyinfinite})} The sufficiency follows from (\ref{Eq:L2distance}): if $\text{supp}(\Lambda)=\bb{R}^d$, then $\gamma^2_k(\bb{P},\bb{Q})=\int_{\bb{R}^d}|\phi_\bb{P}(x)-\phi_\bb{Q}(x)|^2\,d\Lambda(x)=0\Rightarrow \phi_\bb{P}=\phi_\bb{Q}$, a.e., which implies $\bb{P}=\bb{Q}$ and therefore $k$ is characteristic. To prove necessity, we need to show that if $\text{supp}(\Lambda)\subsetneq\bb{R}^d$, then there exists $\bb{P}\ne\bb{Q}$, $\bb{P},\bb{Q}\in\Scr{P}$ such that $\gamma_k(\bb{P},\bb{Q})=0$. By Lemma~\ref{lem:constructp}, this is equivalent to showing that there exists a non-zero $\theta$ satisfying the conditions in Lemma~\ref{lem:constructp}. Below, we provide a constructive procedure for such a $\theta$ when $\text{supp}(\Lambda)\subsetneq\bb{R}^d$, thereby proving the result.
\par Consider the following function, $f_{\beta,\omega_0}\in C^\infty(\bb{R}^d)$ supported in $[\omega_0-\beta,\omega_0+\beta]$,
\begin{equation}
f_{\beta,\omega_0}(\omega)=\prod^d_{j=1}h_{\beta_j,\omega_{0,j}}(\omega_j)\,\,\text{with}\,\,h_{a,b}(y):=\mathds{1}_{[-a,a]}(y-b)\,e^{-\frac{a^2}{a^2-(y-b)^2}},
\end{equation}
where $\omega=(\omega_1,\ldots,\omega_d),\,\omega_0=(\omega_{0,1},\ldots,\omega_{0,d}),\,\beta=(\beta_1,\ldots,\beta_d),\,a\in\bb{R}_{++},\,b\in\bb{R}$ and $y\in\bb{R}$. Since $\text{supp}(\Lambda)\subsetneq\bb{R}^d$, there exists an open set $U\subset\bb{R}^d$ such that $\Lambda(U)=0$. So, there exists $\beta\in\bb{R}^d_{++}$ and $\omega_0>\beta$ (element-wise inequality) such that $[\omega_0-\beta,\omega_0+\beta]\subset U$. Let 
\begin{equation}
\theta=\alpha(f_{\beta,\omega_0}+f_{\beta,-\omega_0}),\,\alpha\in\bb{R}\backslash\{0\},
\end{equation}
which implies $\text{supp}(\theta)=[-\omega_0-\beta,-\omega_0+\beta]\cup[\omega_0-\beta,\omega_0+\beta]$ is compact. Clearly $\theta\in\Scr{D}_d\subset\Scr{S}_d$ which implies 
$\theta^\vee\in\Scr{S}_d\subset L^1(\bb{R}^d)\cap L^2(\bb{R}^d)$. 
Therefore, by construction, $\theta$ satisfies \emph{(i)} -- \emph{(iv)} in Lemma~\ref{lem:constructp}. Since $\int_{\bb{R}^d}\theta^\vee(x)\,dx=\theta(0)=0$ (by construction), $\theta^\vee$ will take negative values, so we need to show that there exists $\bb{Q}\in\Scr{P}_0$ such that \emph{(v)} in Lemma~\ref{lem:constructp} holds. Let $\bb{Q}$ be such that it has a density given by 
\begin{equation}\label{Eq:cauchy-proof}
q(x)=C_l\prod^d_{j=1}\frac{1}{(1+|x_j|^2)^l},\,l\in\bb{N}\,\,\text{where }\,\,C_l=\prod^d_{j=1}\left(\int_\bb{R}(1+|x_j|^2)^{-l}\,dx_j\right)^{-1},
\end{equation}
and $x=(x_1,\ldots,x_d)$. It can be verified that choosing $\alpha$ such that
\begin{equation}
0<|\alpha|\le\frac{C_l}{2\sup_x\left|\prod^d_{j=1}h^\vee_{\beta_j,0}(x_j)(1+|x_j|^2)^l\cos(\omega^T_0x)\right|}<\infty,\nonumber
\end{equation} ensures that $\theta$ satisfies \emph{(v)} in Lemma~\ref{lem:constructp}. The existence of finite $\alpha$ is guaranteed as $h_{a,0}\in\mathscr{D}_1\subset\mathscr{S}_1$ which implies $h^\vee_{a,0}\in\mathscr{S}_1,\,\forall\,a$. 
We conclude there exists a non-zero $\theta$ as claimed earlier, which completes the proof. 
\end{proof}
\noindent To elucidate the necessity part in the above proof, in the following, we present a simple example that provides an intuitive understanding about the construction of $\theta$ such that for a given $\bb{Q}$, $\bb{P}\ne\bb{Q}$ can be constructed with $\gamma_k(\bb{P},\bb{Q})=0$.
\begin{example}\label{Exm:noncompact}
Let $\bb{Q}$ be a Cauchy distribution in $\bb{R}$, i.e., $q(x)=\frac{1}{\pi(1+x^2)}$ with characteristic function, $\phi_\bb{Q}(\omega)=\frac{1}{\sqrt{2\pi}}e^{-|\omega|}$ in $L^1(\bb{R})$. Let $\psi$ be a Sinc kernel, i.e., $\psi(x)=\sqrt{\frac{2}{\pi}}\frac{\sin(\beta x)}{x}$ with Fourier transform given by $\widehat{\psi}(\omega)=\mathds{1}_{[-\beta,\beta]}(\omega)$ and $\emph{supp}(\widehat{\psi})=[-\beta,\beta]\subsetneq\bb{R}$. Let $\theta$ be
\begin{equation}
\theta(\omega)=\frac{\alpha}{2i}\left[\ast^N_1\mathds{1}_{\left[-\frac{\beta}{2},\frac{\beta}{2}\right]}(\omega)\right]\ast\left[\delta(\omega-\omega_0)-\delta(\omega+\omega_0)\right],
\end{equation}
where $|\omega_0|\ge\left(\frac{N+2}{2}\right)\beta,\,N\ge 2$ and $\alpha\ne 0$. $\ast^N_1$ represents the $N$-fold convolution. Note that $\theta$ is such that $\emph{supp}(\theta)\cap\emph{supp}(\widehat{\psi})$ is a null set w.r.t. the Lebesgue measure, which satisfies (iii) in Lemma~\ref{lem:constructp}. It is easy to verify that $\theta\in L^1(\bb{R})\cap L^2(\bb{R})\cap C_b(\bb{R})$ also satisfies conditions (i) and (iv) in Lemma~\ref{lem:constructp}. $\theta^\vee$ can be computed as
\begin{equation}
\theta^\vee(x)=\frac{2^N\alpha}{\sqrt{2\pi}}\sin(\omega_0 x)\frac{\sin^N\left(\frac{\beta x}{2}\right)}{x^N},
\end{equation}
and $\theta^\vee\in L^1(\bb{R})\cap L^2(\bb{R})\cap C_b(\bb{R})$ satisfies $(ii)$ in Lemma~\ref{lem:constructp}. 
Choose 
\begin{equation}
0<|\alpha|\le\frac{\sqrt{2}}{\sqrt{\pi}\beta^N\sup_x\left|(1+x^2)\sin(\omega_0x)\emph{sinc}^N\left(\frac{\beta x}{2\pi}\right)\right|},
\end{equation}
where $\emph{sinc}(x):=\frac{\sin(\pi x)}{\pi x}$. Define $g(x):=\sin(\omega_0x)\emph{sinc}^N\left(\frac{\beta x}{2\pi}\right)$. Since $g\in \Scr{S}_1$, $0<\sup_x |(1+x^2)g(x)|<\infty$ and, therefore, $\alpha$ is a finite non-zero number. It is easy to see that $\theta$ satisfies $(v)$ of Lemma~\ref{lem:constructp}. Then, by Lemma~\ref{lem:constructp}, there exists $\bb{P}\ne\bb{Q}$, $\bb{P}\in \Scr{P}_0$, given by
\begin{equation}
p(x)=\frac{1}{\pi(1+x^2)}+\frac{2^N\alpha}{\sqrt{2\pi}}\sin(\omega_0 x)\frac{\sin^N\left(\frac{\beta x}{2}\right)}{x^N},
\end{equation}
with $\phi_\bb{P}=\phi_\bb{Q}+\theta=\phi_\bb{Q}+i\theta_I$ where $\theta_I=\emph{Im}[\theta]$ and $\phi_\bb{P}\in L^1(\bb{R})$. So, we have constructed $\bb{P}\ne \bb{Q}$, such that $\gamma_k(\bb{P},\bb{Q})=0$. Figure~\ref{fig:noncompact} shows the plots of $\psi$, $\widehat{\psi}$, $\theta$, $\theta^\vee$, $q$, $\phi_\bb{Q}$, $p$ and $|\phi_\bb{P}|$ for $\beta=2\pi$, $N=2$, $\omega_0=4\pi$ and $\alpha=\frac{1}{50}$.
\end{example}
\begin{figure*}
\vspace{-6mm}
  \centering
  \begin{tabular}{ccc}
    \begin{minipage}{7cm}
      \center{\epsfxsize=5.5cm
      \epsffile{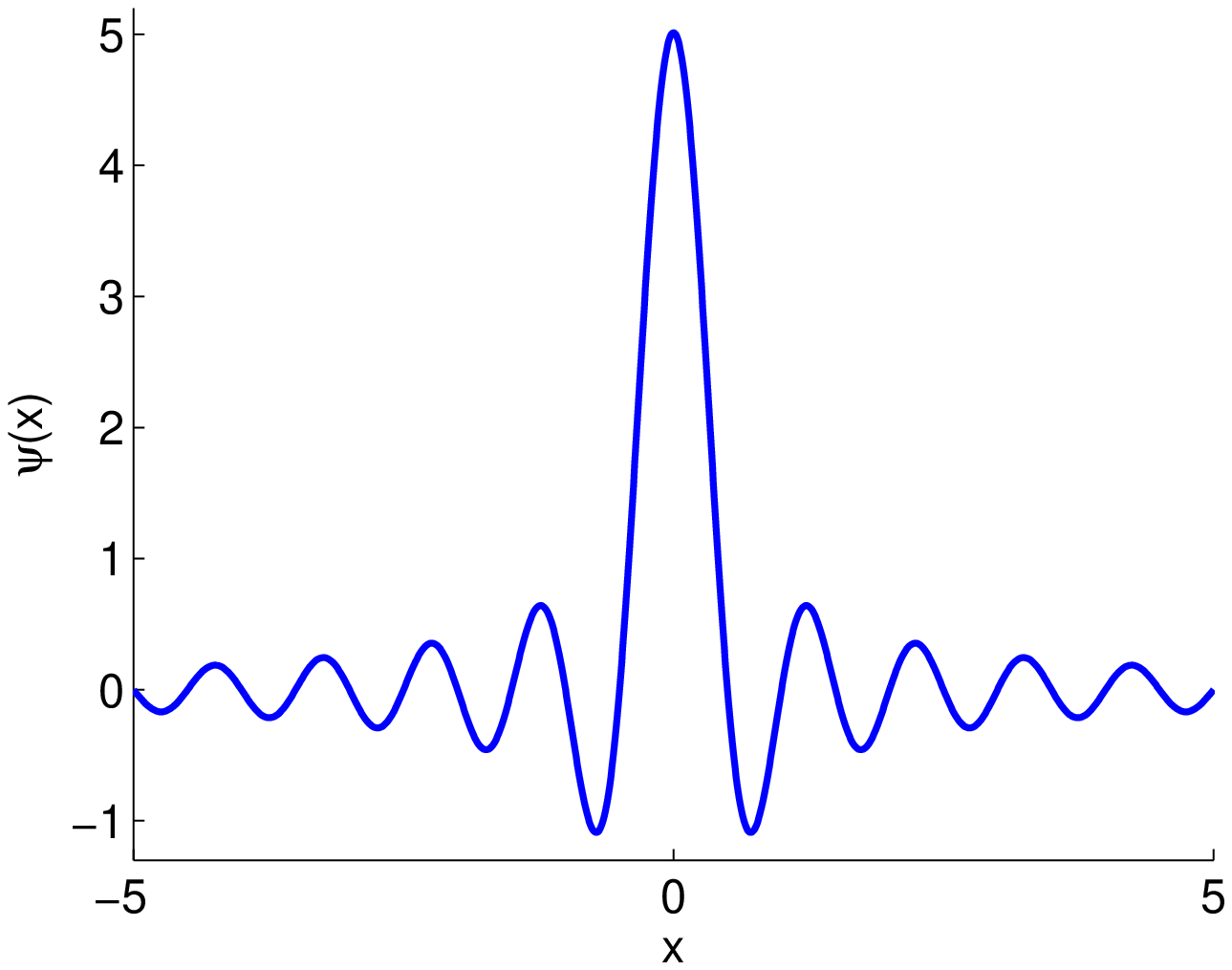}}\vspace{-3mm}
      {\small \center{(a)}}
    \end{minipage}
    \begin{minipage}{7cm}
      \center{\epsfxsize=5.5cm
      \epsffile{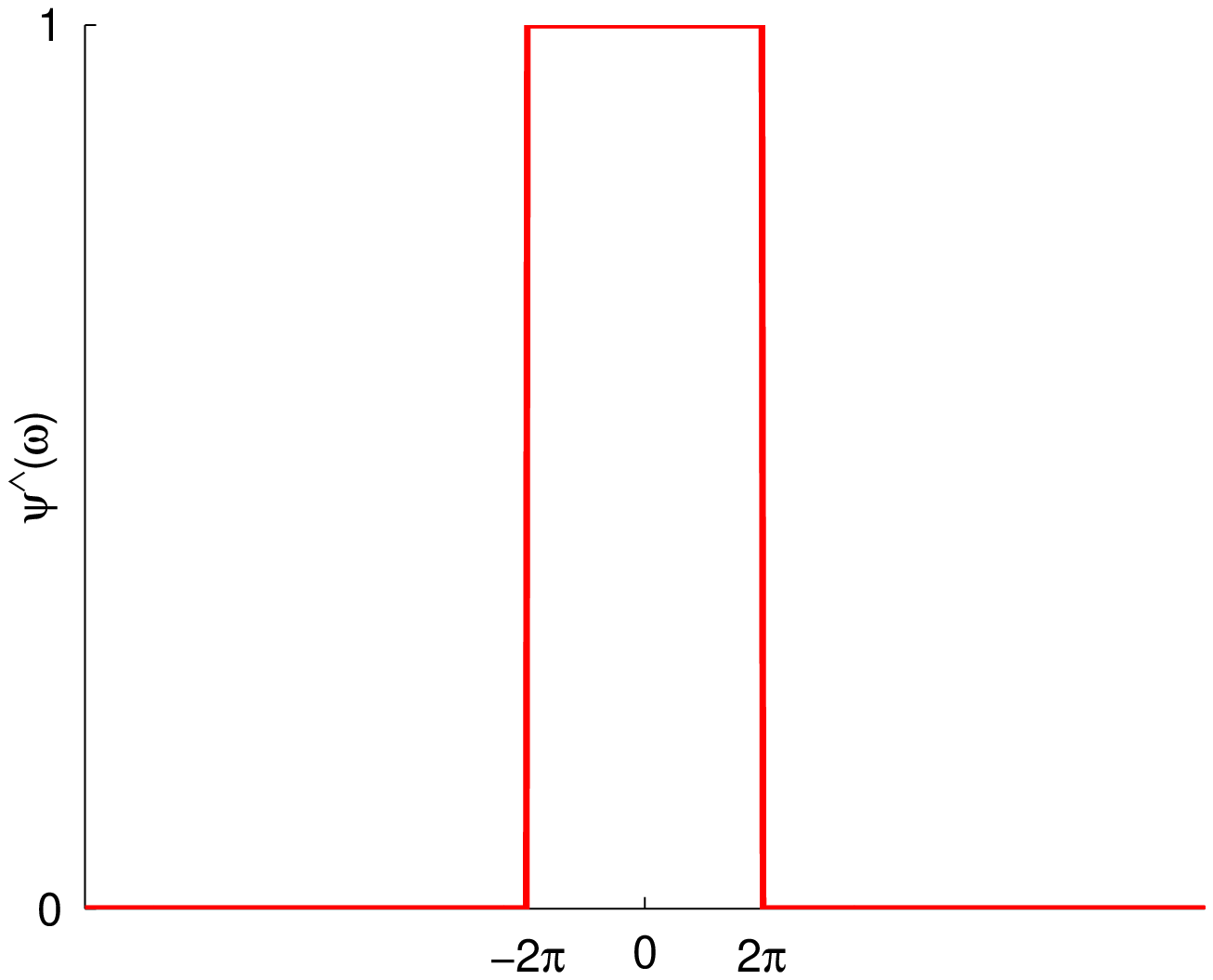}}\vspace{-3mm}
      {\small \center{(a$^\prime$)}}
    \end{minipage}
    \vspace{3mm}
  \end{tabular}
  \begin{tabular}{ccc}
    \begin{minipage}{7cm}
      \center{\epsfxsize=5.5cm
      \epsffile{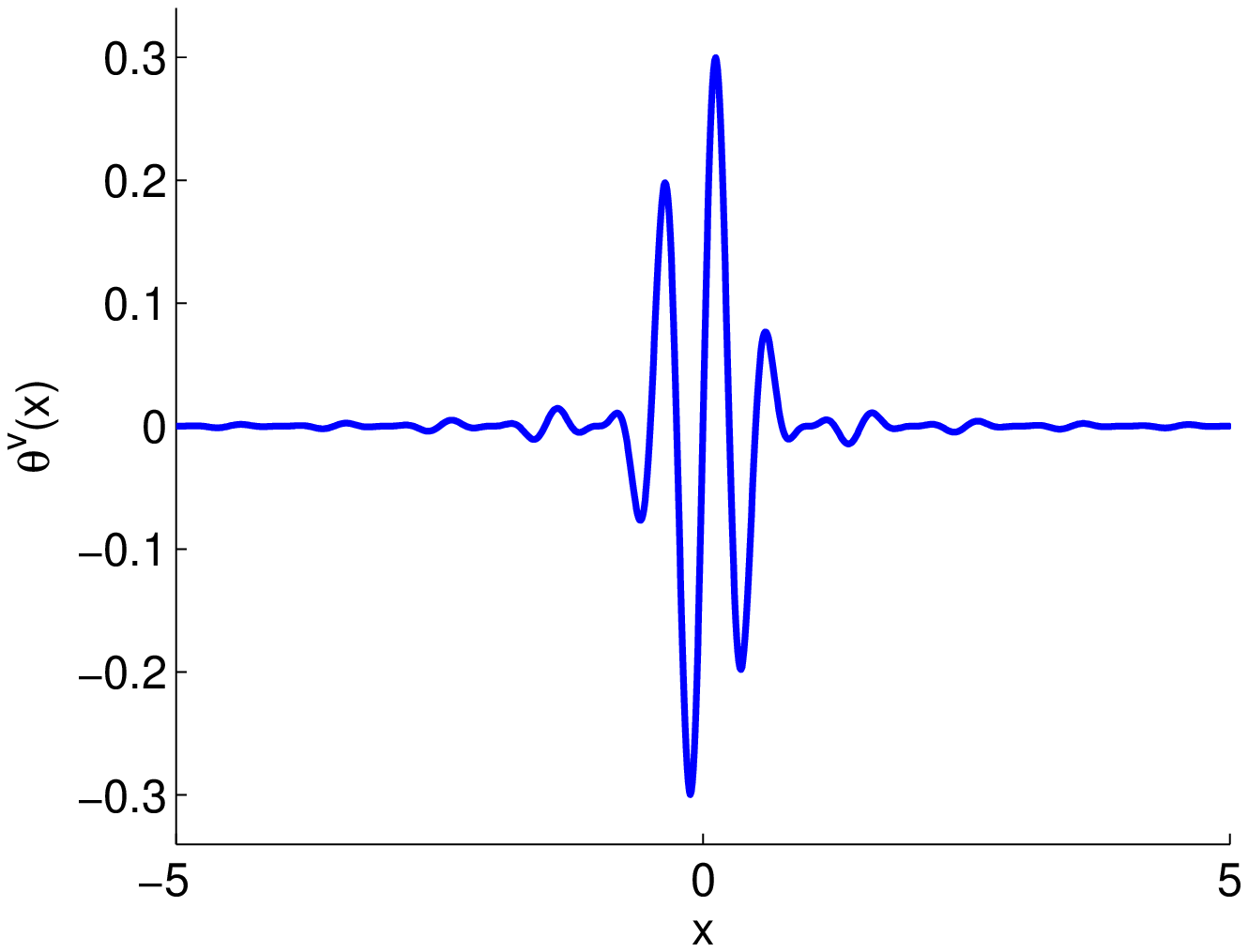}}\vspace{-3mm}
      {\small \center{(b)}}
    \end{minipage}
    \begin{minipage}{7cm}
      \center{\epsfxsize=5.5cm
      \epsffile{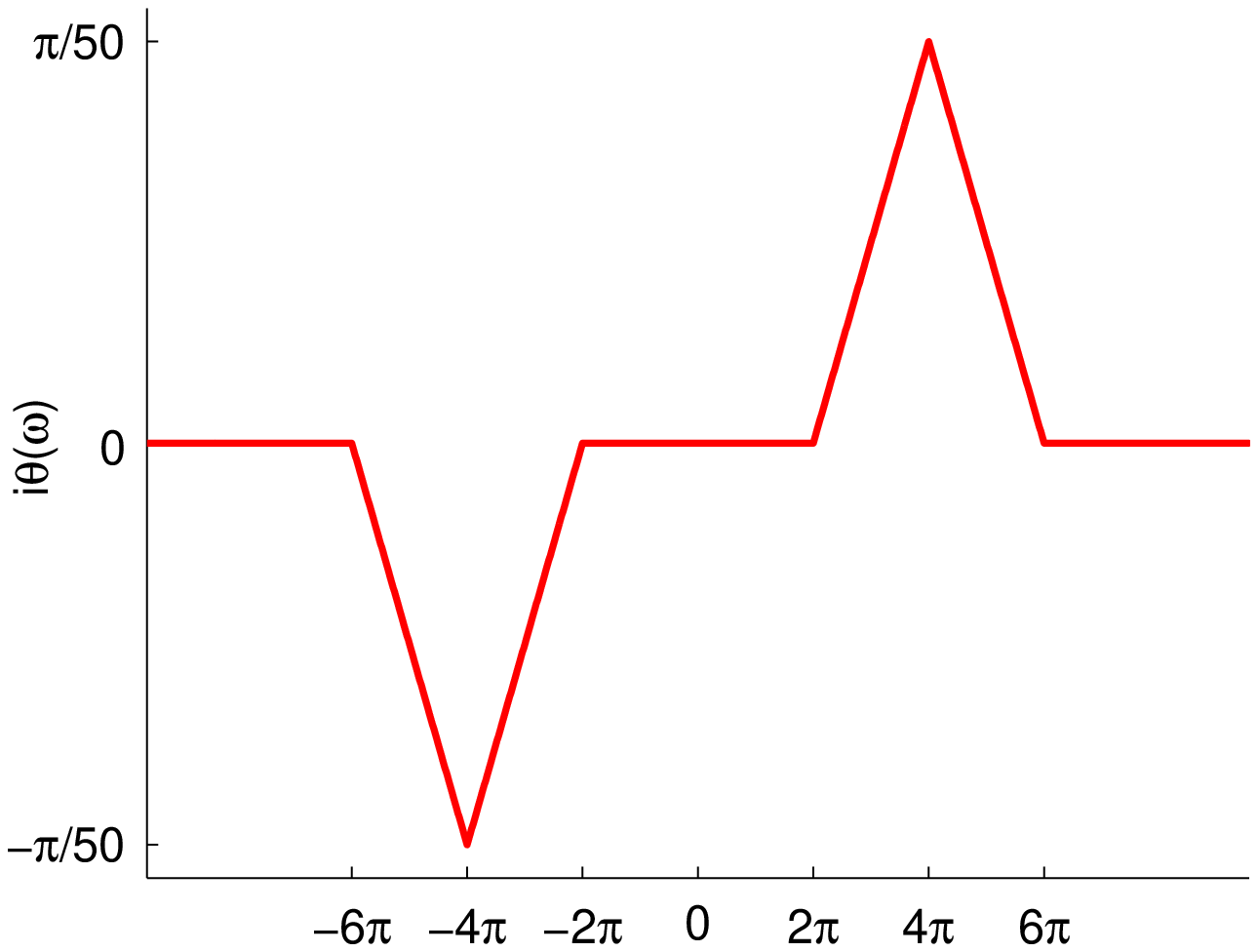}}\vspace{-3mm}
      {\small \center{(b$^\prime$)}}
    \end{minipage}
    \vspace{3mm}
  \end{tabular}
  \begin{tabular}{ccc}
    \begin{minipage}{7cm}
      \center{\epsfxsize=5.5cm
      \epsffile{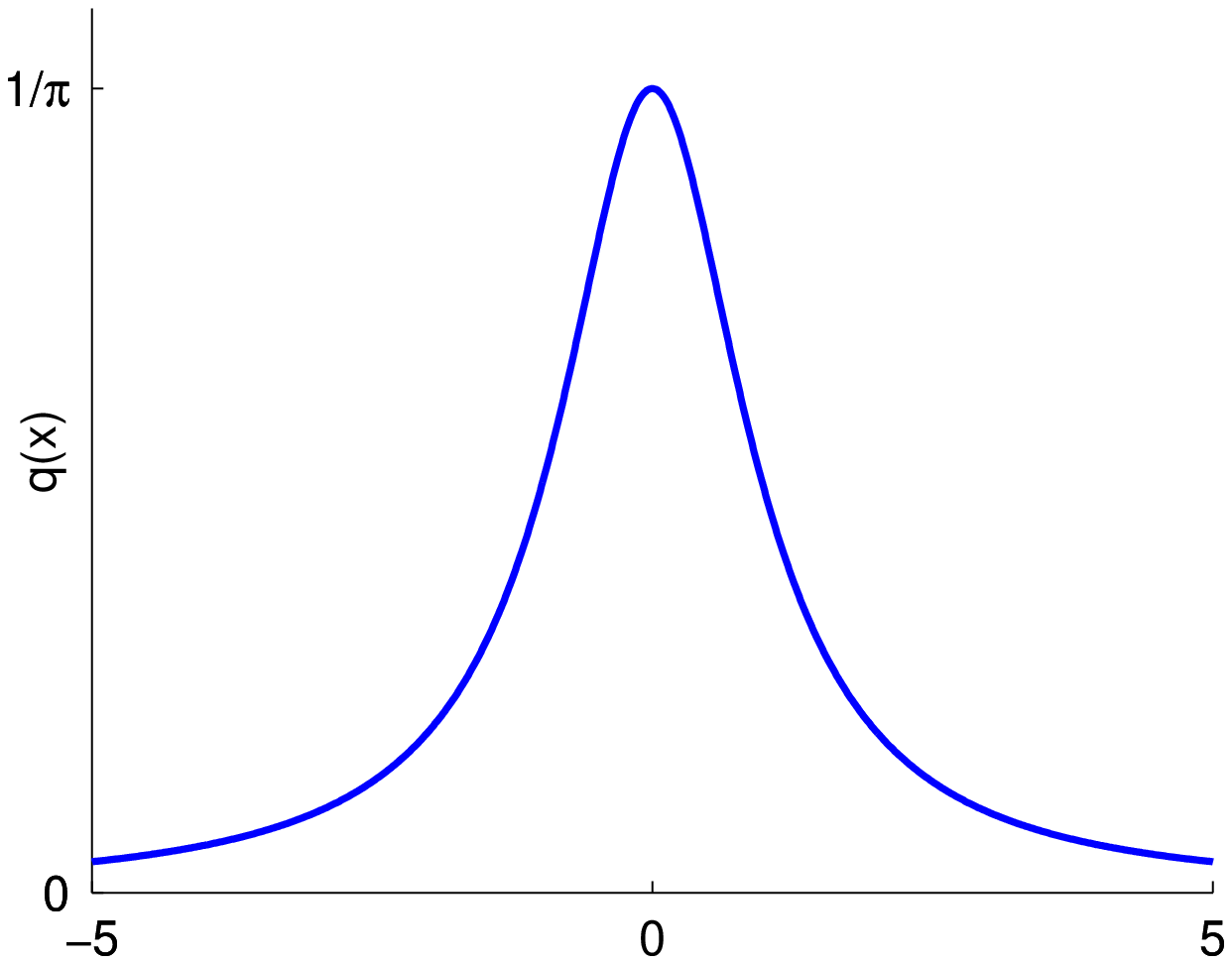}}\vspace{-3mm}
      {\small \center{(c)}}
    \end{minipage}
    \begin{minipage}{7cm}
      \center{\epsfxsize=5.5cm
      \epsffile{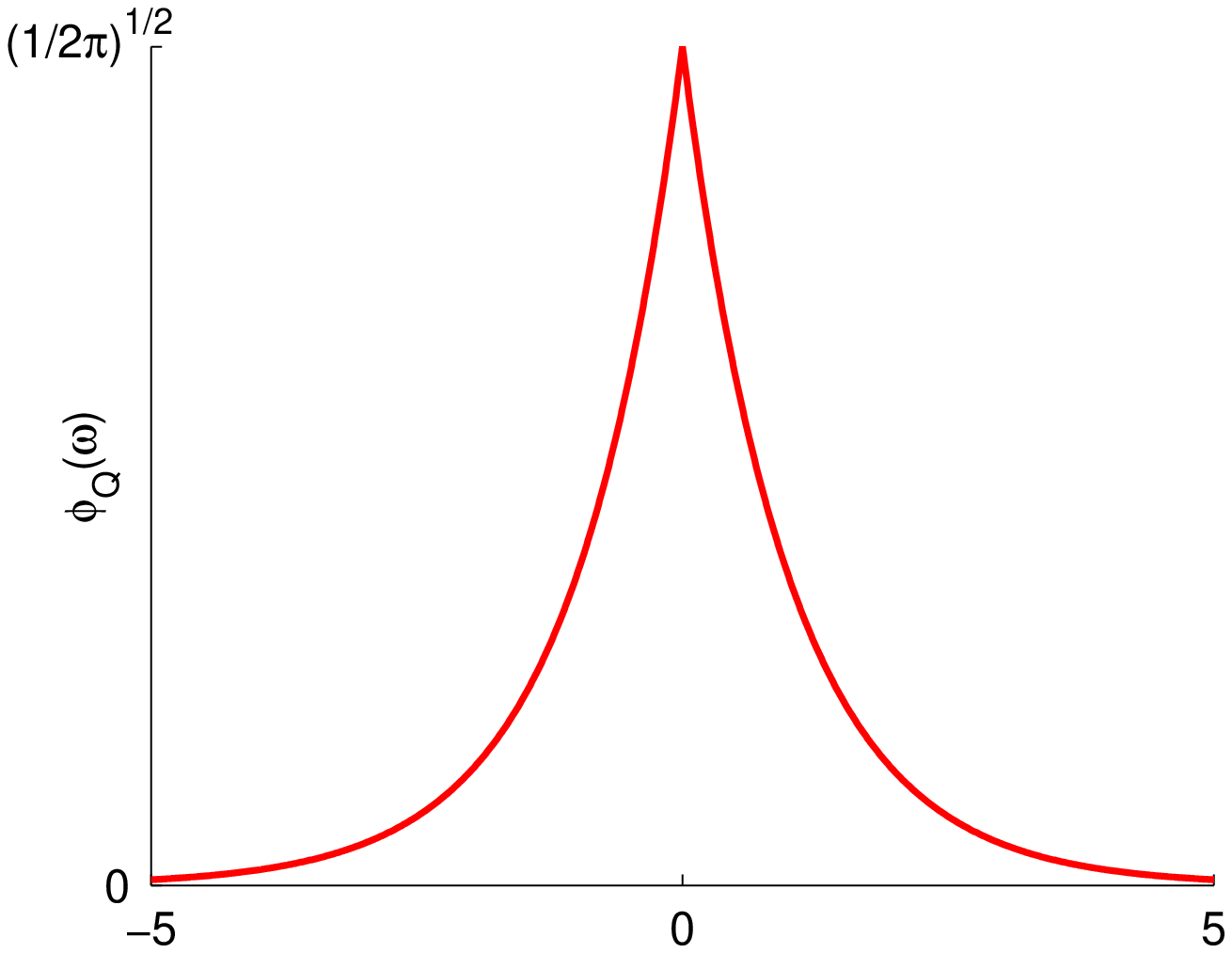}}\vspace{-3mm}
      {\small \center{(c$^\prime$)}}
    \end{minipage}
    \vspace{3mm}
  \end{tabular}
\begin{tabular}{ccc}
    \begin{minipage}{7cm}
      \center{\epsfxsize=5.5cm
      \epsffile{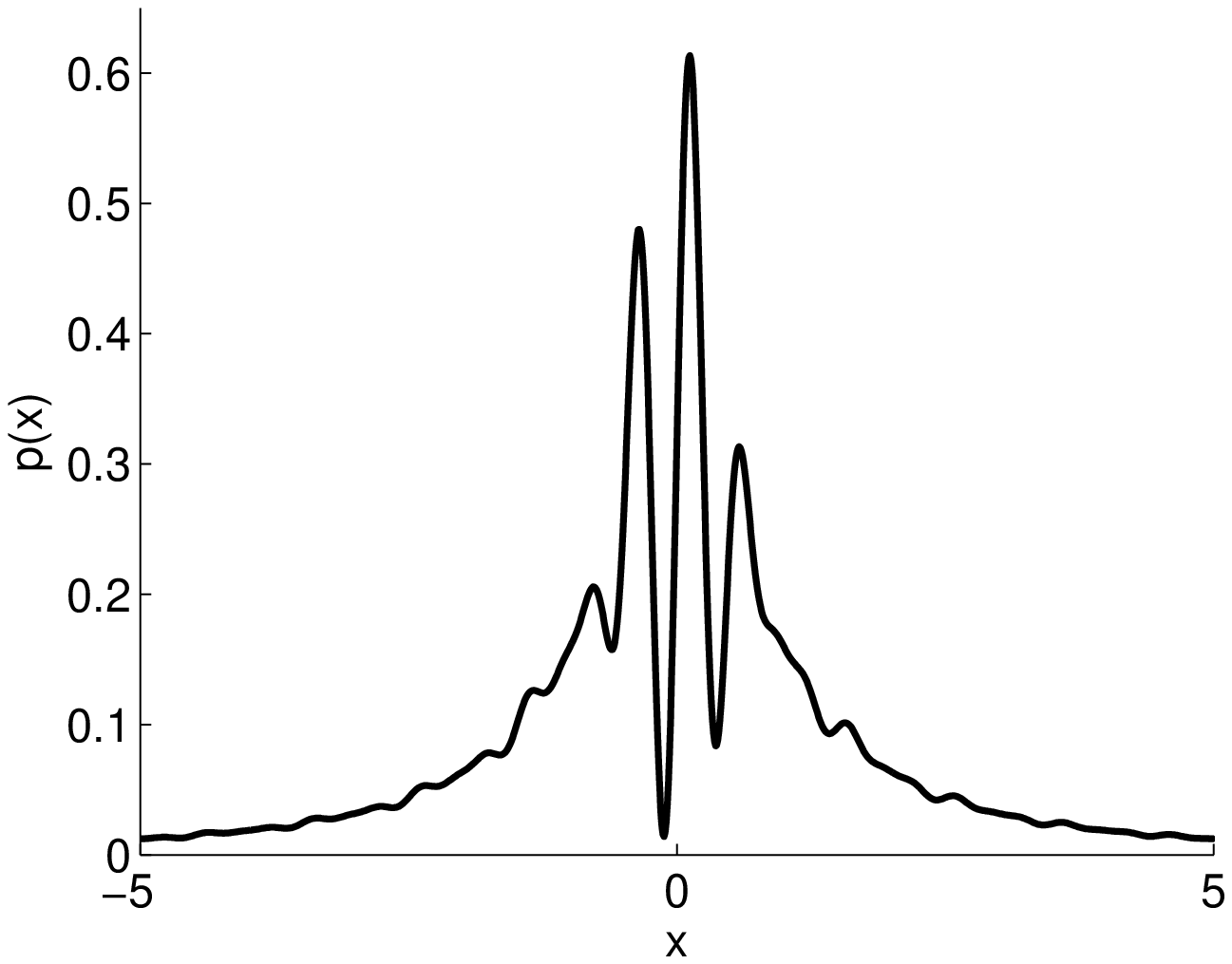}}\vspace{-3mm}
      {\small \center{(d)}}
    \end{minipage}
    \begin{minipage}{7cm}
      \center{\epsfxsize=5.5cm
      \epsffile{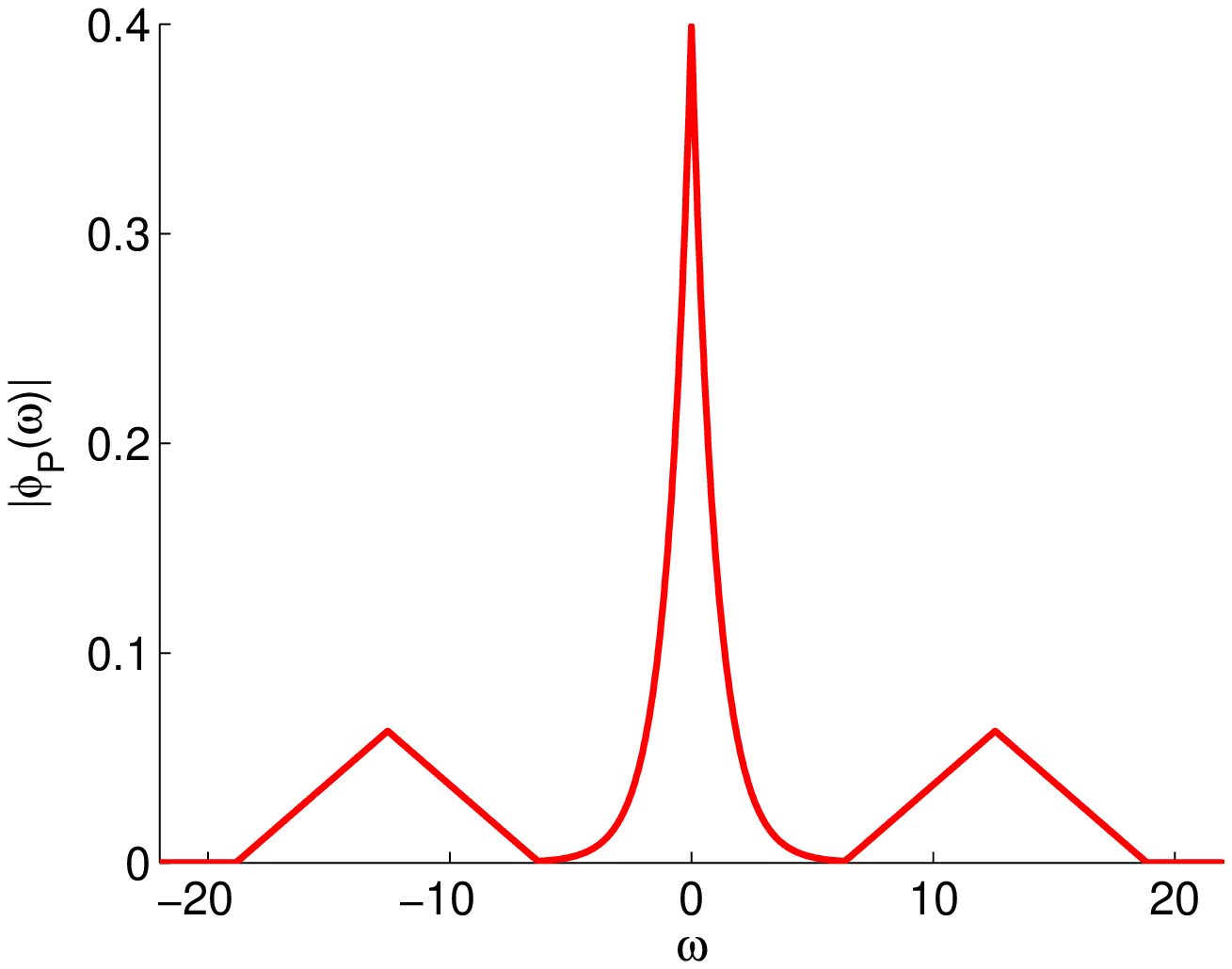}}\vspace{-3mm}
      {\small \center{(d$^\prime$)}}
    \end{minipage}
  \end{tabular}
\caption{(a-a$^\prime$) $\psi$ and its Fourier spectrum $\widehat{\psi}$, (b-b$^\prime$) $\theta^\vee$ and $i\theta$, (c-c$^\prime$) the Cauchy distribution, $q$ and its characteristic function $\phi_\bb{Q}$, and (d-d$^\prime$) $p=q+\theta^\vee$ and $|\phi_\bb{P}|$. See Example~\ref{Exm:noncompact} for details.}\vspace{-3mm}
  \label{fig:noncompact}
 \end{figure*}
We now prove Theorem~\ref{Thm:compact}.
\vspace{2mm}
\begin{proof}
\hspace{-.075in}\textbf{(Theorem~\ref{Thm:compact})} 
Suppose $\exists\, \bb{P}\ne \bb{Q},\,\bb{P},\bb{Q}\in\Scr{P}_1$ 
such that $\gamma_k(\bb{P},\bb{Q})=0$. Since any positive Borel measure on $\bb{R}^d$ is a distribution \citep[p. 157]{Rudin-91}, $\bb{P}$ and $\bb{Q}$ can be treated as distributions with compact support. By the Paley-Wiener theorem (Theorem~\ref{Thm:paley-wiener} in Appendix A), $\phi_\bb{P}$ and $\phi_\bb{Q}$ are entire on $\bb{C}^d$. Let $\theta:=\phi_\bb{P}-\phi_\bb{Q}$. Since $\gamma_k(\bb{P},\bb{Q})=0$, we have from (\ref{Eq:L2distance}) that $\int_{\bb{R}^d}|\theta(\omega)|^2\,d\Lambda(\omega)=0$. From Remark~\ref{rem:theta}, it follows that 
$\text{supp}(\theta)\subset\text{cl}(\bb{R}^d\backslash\text{supp}(\Lambda))$. Since $\text{supp}(\Lambda)$ has a non-empty interior, we have $\text{supp}(\theta)\subsetneq\bb{R}^d$. Thus, there exists an open set, $U\subset\bb{R}^d$ such that $\theta(x)=0,\,\forall\,x\in U$. Therefore, by Lemma~\ref{lem:entire} (see Appendix A), $\theta=0$, which means $\phi_\bb{P}=\phi_\bb{Q}\Rightarrow \bb{P}=\bb{Q}$, leading to a contradiction. So, there does not exist $\bb{P}\neq\bb{Q},\,\bb{P},\bb{Q}\in\Scr{P}_1$ such that $\gamma_k(\bb{P},\bb{Q})=0$ and $k$ is therefore characteristic to $\Scr{P}_1$.
\end{proof}
The condition that $\text{supp}(\Lambda)$ has a non-empty interior is important for Theorem~\ref{Thm:compact} to hold. 
In the following, we provide a simple example to show that $\bb{P}\ne \bb{Q}$, $\bb{P},\bb{Q}\in\Scr{P}_1$ can be constructed such that $\gamma_k(\bb{P},\bb{Q})=0$, if $k$ is a periodic translation invariant kernel for which $\text{int}(\text{supp}(\Lambda))=\emptyset$.
\begin{example}\label{Exm:periodic-compact}
Let $\bb{Q}$ be a uniform distribution on $[-\beta,\beta]\subset\bb{R}$, i.e., $q(x)=\frac{1}{2\beta}\mathds{1}_{[-\beta,\beta]}(x)$ with its characteristic function, $\phi_\bb{Q}(\omega)=\frac{1}{\beta\sqrt{2\pi}}\frac{\sin(\beta\omega)}{\omega}\in L^2(\bb{R})$. Let $\psi$ be the Dirichlet kernel with period $\tau$, where $\tau\le\beta$, i.e., $\psi(x)=\frac{\sin\frac{(2l+1)\pi x}{\tau}}{\sin\frac{\pi x}{\tau}}$ and $\widehat{\psi}(\omega)=\sqrt{2\pi}\sum^l_{j=-l}\delta\left(\omega-\frac{2\pi j}{\tau}\right)$ with $\emph{supp}(\widehat{\psi})=\{\frac{2\pi j}{\tau},\,j\in\{0,\pm 1,\ldots,\pm l\}\}$. Clearly, $\emph{supp}(\widehat{\psi})$ has an empty interior. Let $\theta$ be
\begin{equation}
\theta(\omega)=\frac{8\sqrt{2}\alpha}{i\sqrt{\pi}}\sin\left(\frac{\omega\tau}{2}\right)\frac{\sin^2\left(\frac{\omega\tau}{4}\right)}{\tau\omega^2},
\end{equation}
with $\alpha\le\frac{1}{2\beta}$. It is easy to verify that $\theta\in L^1(\bb{R})\cap L^2(\bb{R})\cap C_b(\bb{R})$, so $\theta$ satisfies $(i)$ in Lemma~\ref{lem:constructp}. Since $\theta(\omega)=0$ at $\omega=\frac{2\pi l}{\tau},\,l\in\bb{Z}$, $\emph{supp}(\theta)\cap\emph{supp}(\widehat{\psi})\subset\emph{supp}(\widehat{\psi})$ is a set of Lebesgue measure zero, so $(iii)$ and $(iv)$ in Lemma~\ref{lem:constructp} are satisfied. $\theta^\vee$ is given by
\begin{equation}
\theta^\vee(x)=\left\{\begin{array}{c@{\quad\quad}l}
\frac{2\alpha\left|x+\frac{\tau}{2}\right|}{\tau}-\alpha,& -\tau\le x\le 0\\
\alpha-\frac{2\alpha\left|x-\frac{\tau}{2}\right|}{\tau},& 0\le x\le \tau\\
0,&\text{otherwise,}
\end{array}\right.
\end{equation}
where $\theta^\vee\in L^1(\bb{R})\cap L^2(\bb{R})\cap C_b(\bb{R})$ satisfies $(ii)$ in Lemma~\ref{lem:constructp}. Now, consider $p=q+\theta^\vee$, which is given as
\begin{equation}
p(x)=\left\{\begin{array}{c@{\quad\quad}l}
\frac{1}{2\beta},& x\in[-\beta,-\tau]\cup[\tau,\beta]\\
\frac{2\alpha\left|x+\frac{\tau}{2}\right|}{\tau}+\frac{1}{2\beta}-\alpha,& x\in[-\tau,0]\\
\alpha+\frac{1}{2\beta}-\frac{2\alpha\left|x-\frac{\tau}{2}\right|}{\tau}, & x\in[0,\tau]\\
0,& \text{otherwise.}
\end{array}\right.
\end{equation}
Clearly, $p(x)\ge 0,\,\forall\,x$ and $\int_{\bb{R}}p(x)\,dx=1$. $\phi_\bb{P}=\phi_\bb{Q}+\theta=\phi_\bb{Q}+i\theta_I$ where $\theta_I=\emph{Im}[\theta]$ and $\phi_\bb{P}\in L^2(\bb{R})$. We have therefore constructed $\bb{P}\ne \bb{Q}$, such that $\gamma_k(\bb{P},\bb{Q})=0$, where $\bb{P}$ and $\bb{Q}$ are compactly supported in $\bb{R}$ with characteristic functions in $L^2(\bb{R})$, i.e., $\bb{P},\bb{Q}\in\Scr{P}_1$. Figure~\ref{fig:compact} shows the plots of $\psi$, $\widehat{\psi}$, $\theta$, $\theta^\vee$, $q$, $\phi_\bb{Q}$, $p$ and $|\phi_\bb{P}|$ for $\tau=2$, $l=2$, $\beta=3$ and $\alpha=\frac{1}{8}$.
\end{example}
\begin{figure*}
\vspace{-6mm}
  \centering
  \begin{tabular}{ccc}
    \begin{minipage}{7cm}
      \center{\epsfxsize=5.5cm
      \epsffile{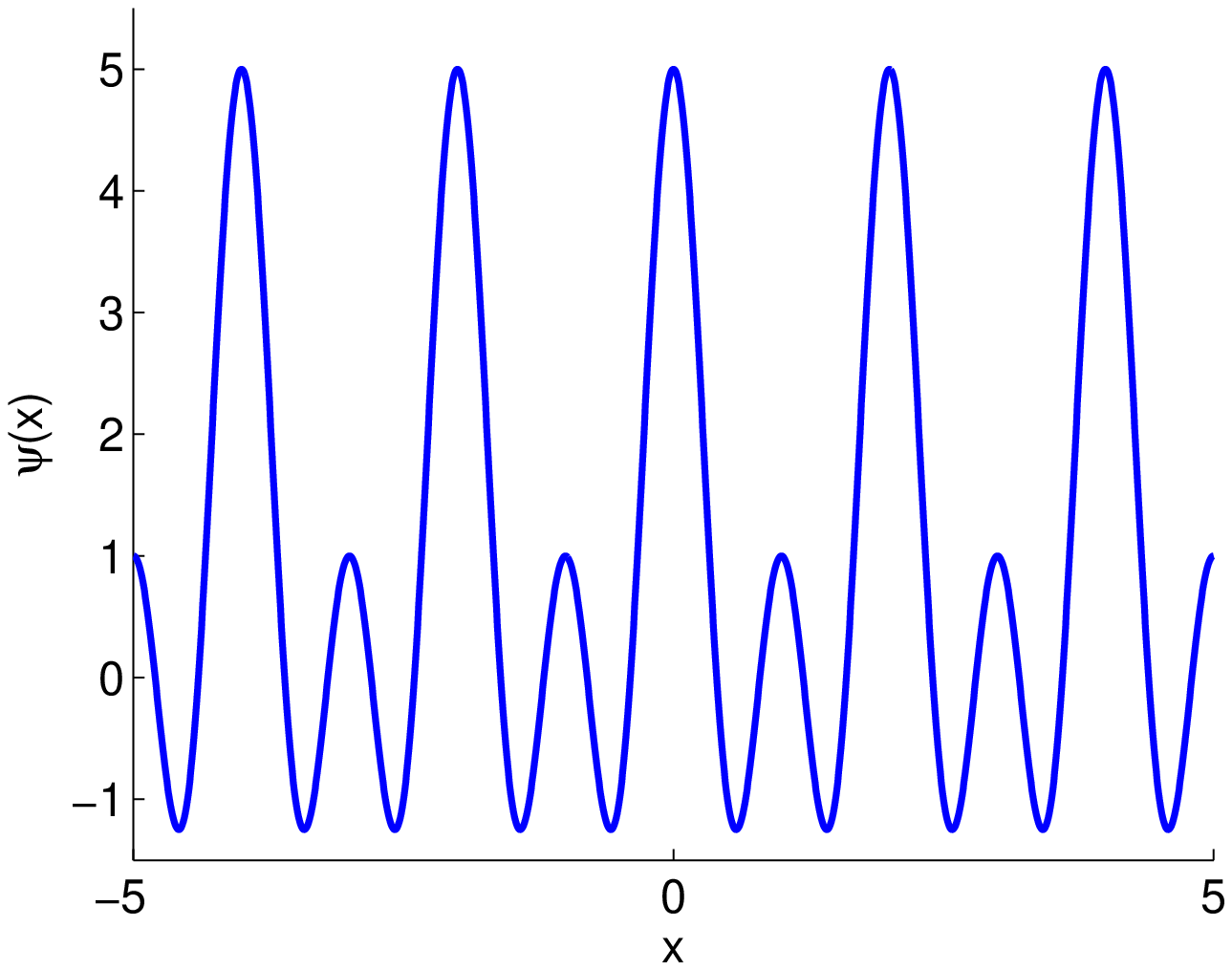}}\vspace{-5mm}
      {\small \center{(a)}}
    \end{minipage}
    \begin{minipage}{7cm}
      \center{\epsfxsize=5.5cm
      \epsffile{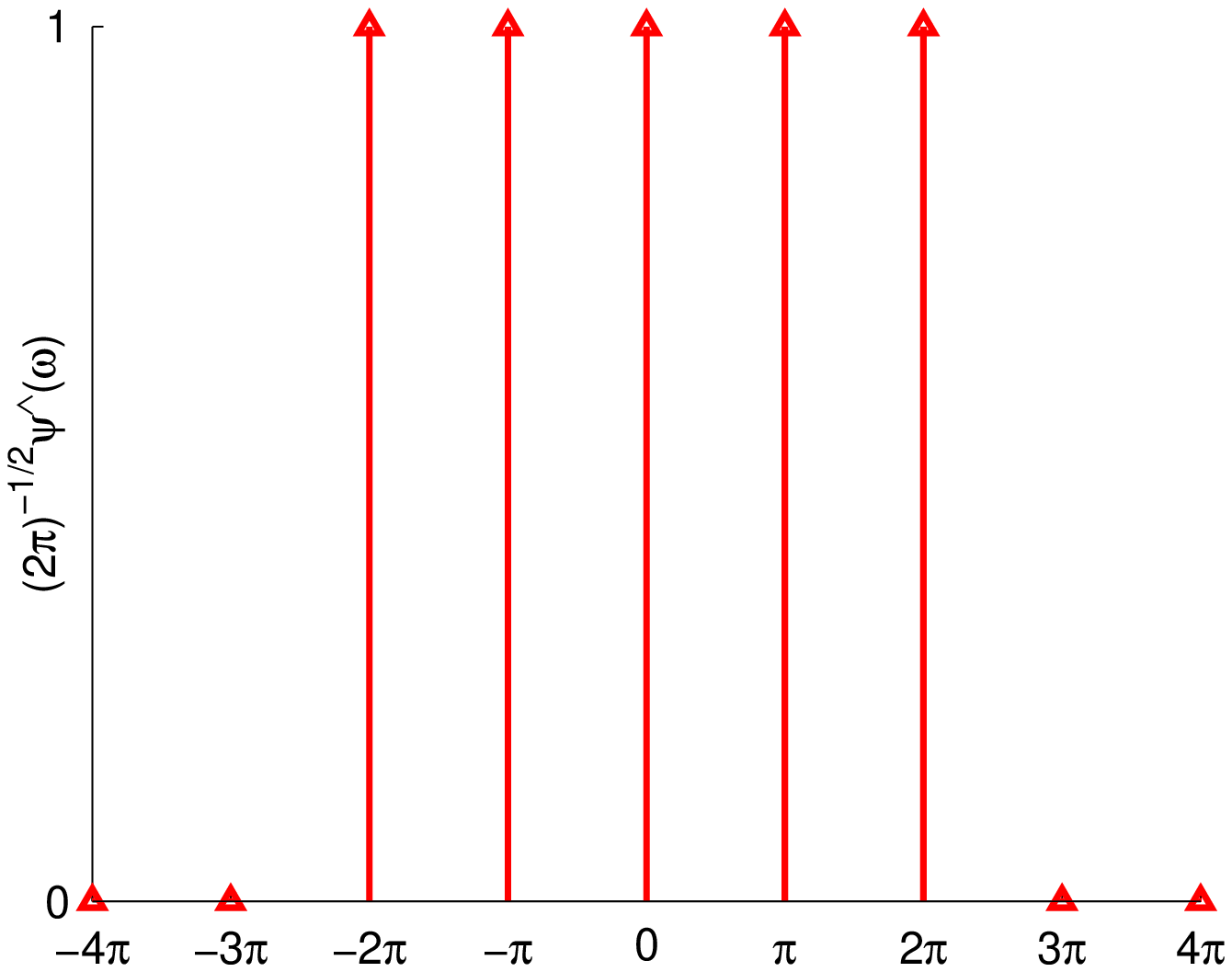}}\vspace{-3mm}
      {\small \center{(a$^\prime$)}}
    \end{minipage}
    \vspace{3mm}
  \end{tabular}
  \begin{tabular}{ccc}
    \begin{minipage}{7cm}
      \center{\epsfxsize=5.5cm
      \epsffile{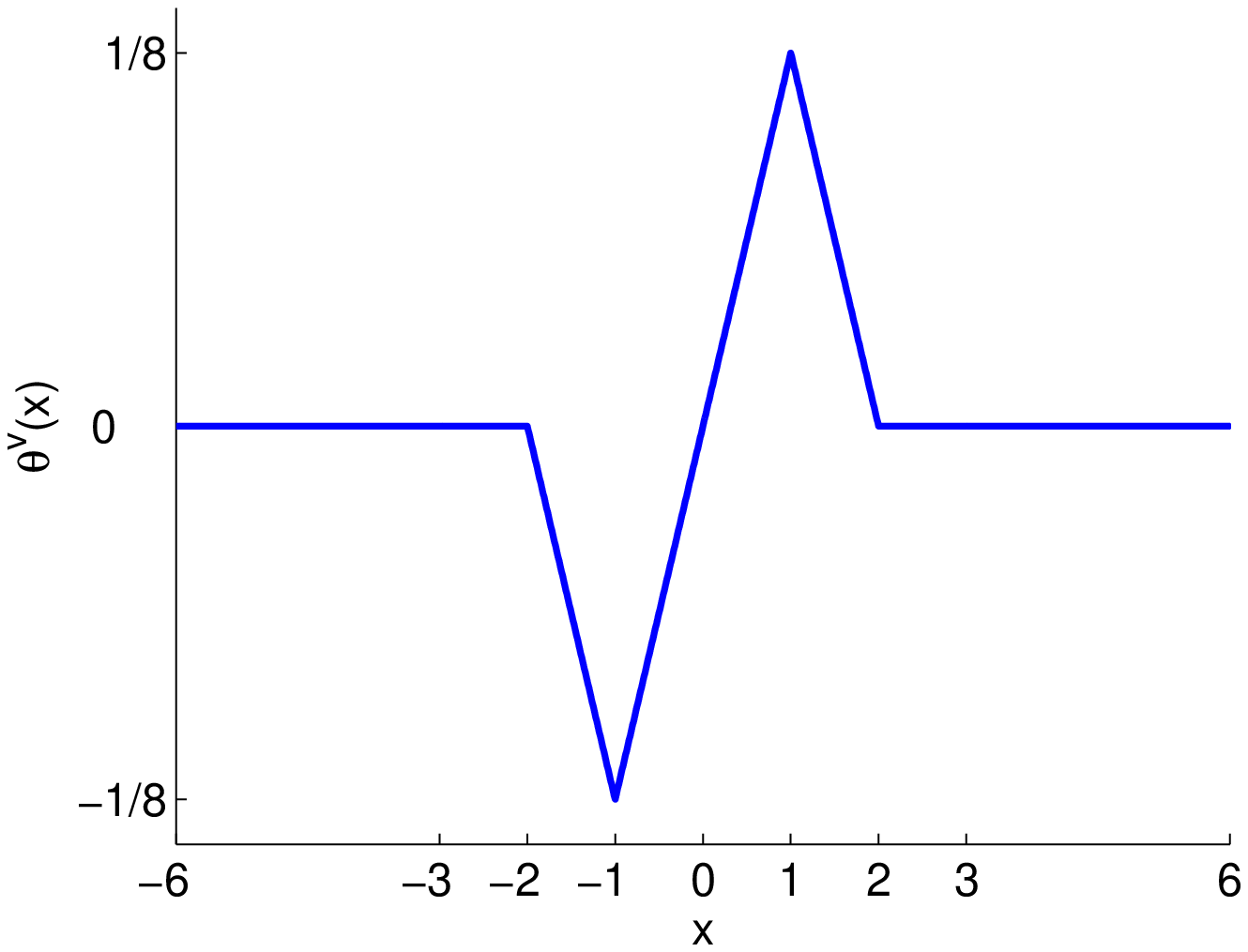}}\vspace{-3mm}
      {\small \center{(b)}}
    \end{minipage}
    \begin{minipage}{7cm}
      \center{\epsfxsize=5.5cm
      \epsffile{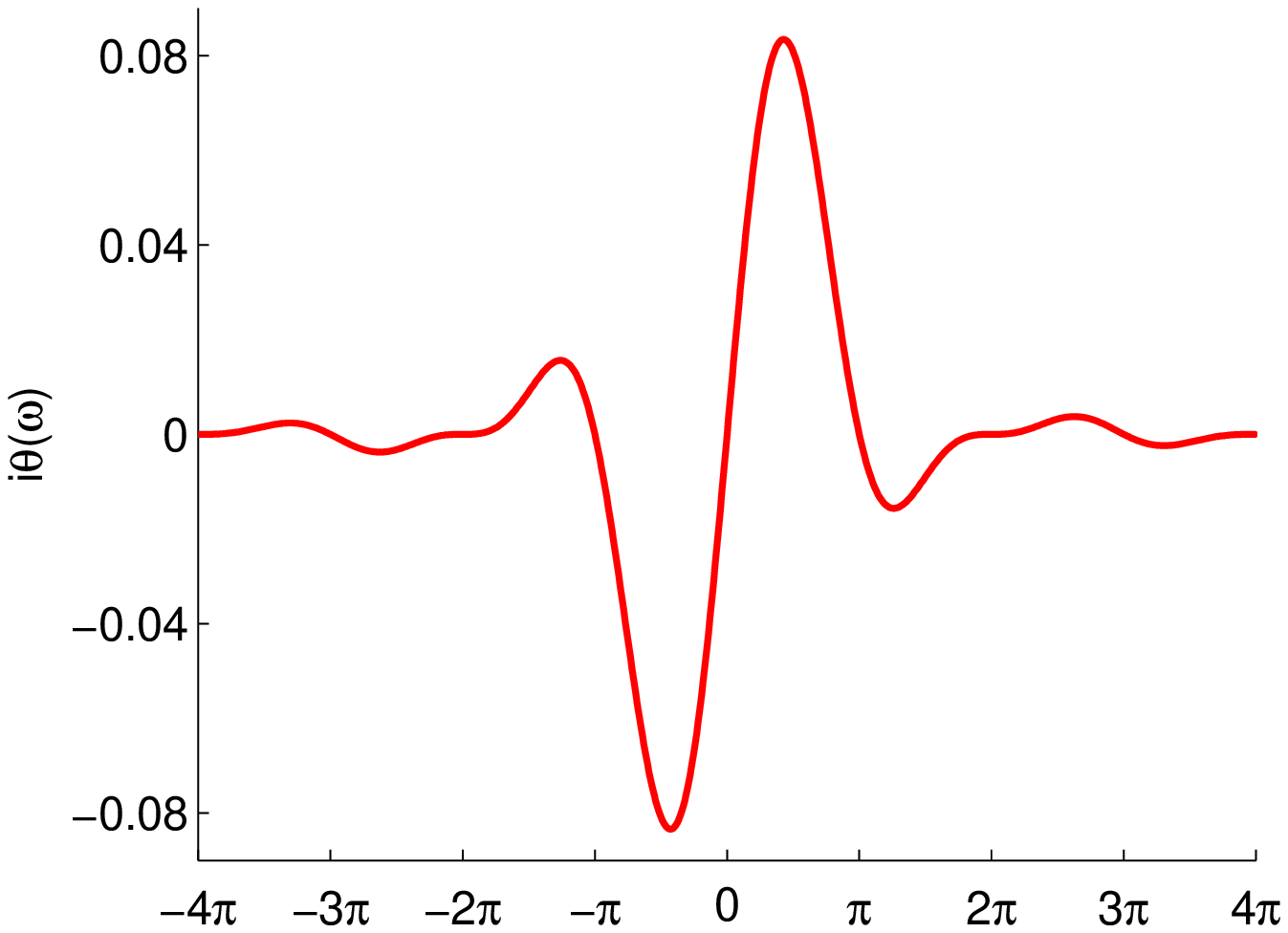}}\vspace{-3mm}
      {\small \center{(b$^\prime$)}}
    \end{minipage}
    \vspace{3mm}
  \end{tabular}
  \begin{tabular}{ccc}
    \begin{minipage}{7cm}
      \center{\epsfxsize=5.5cm
      \epsffile{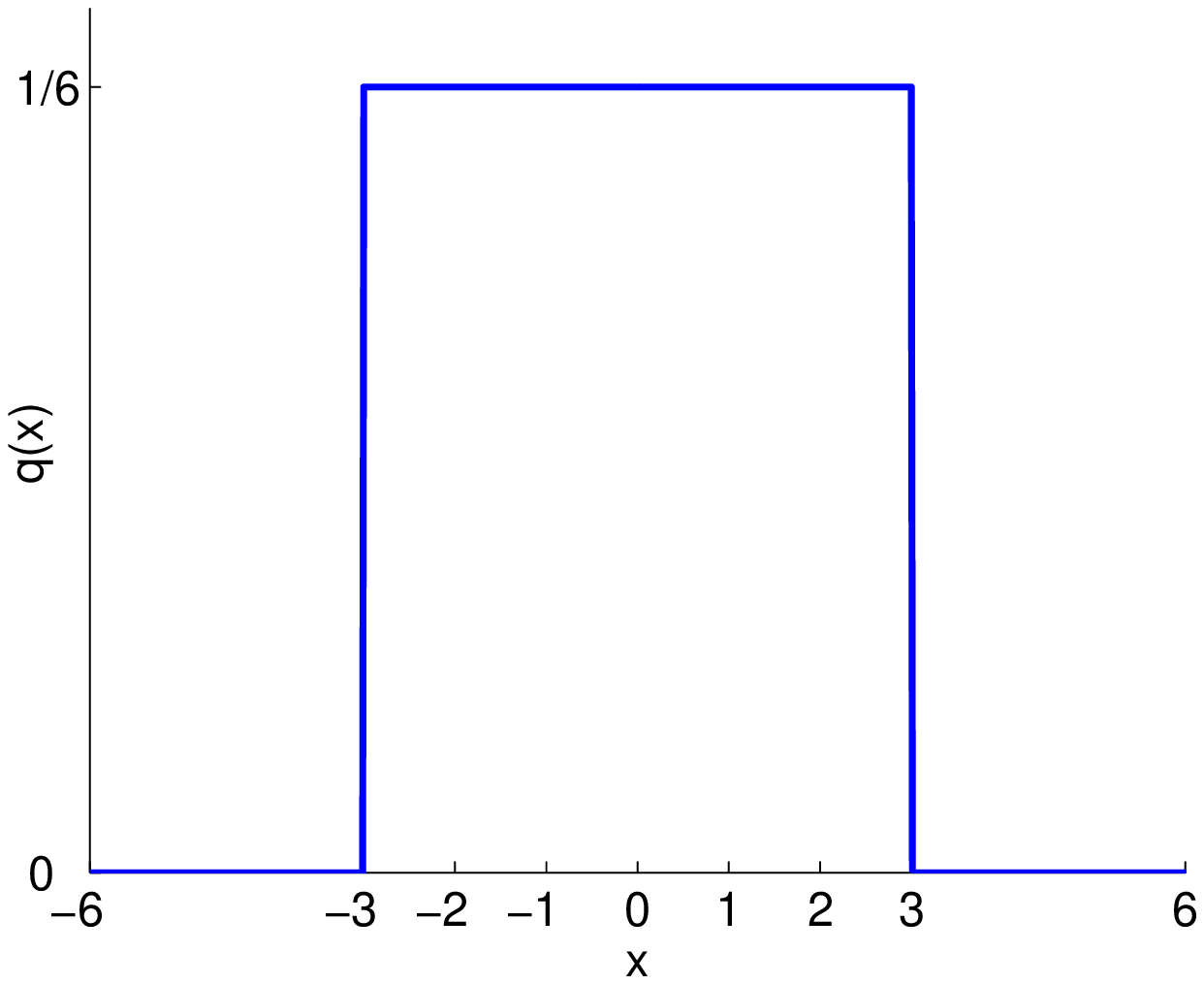}}\vspace{-3mm}
      {\small \center{(c)}}
    \end{minipage}
    \begin{minipage}{7cm}
      \center{\epsfxsize=5.5cm
      \epsffile{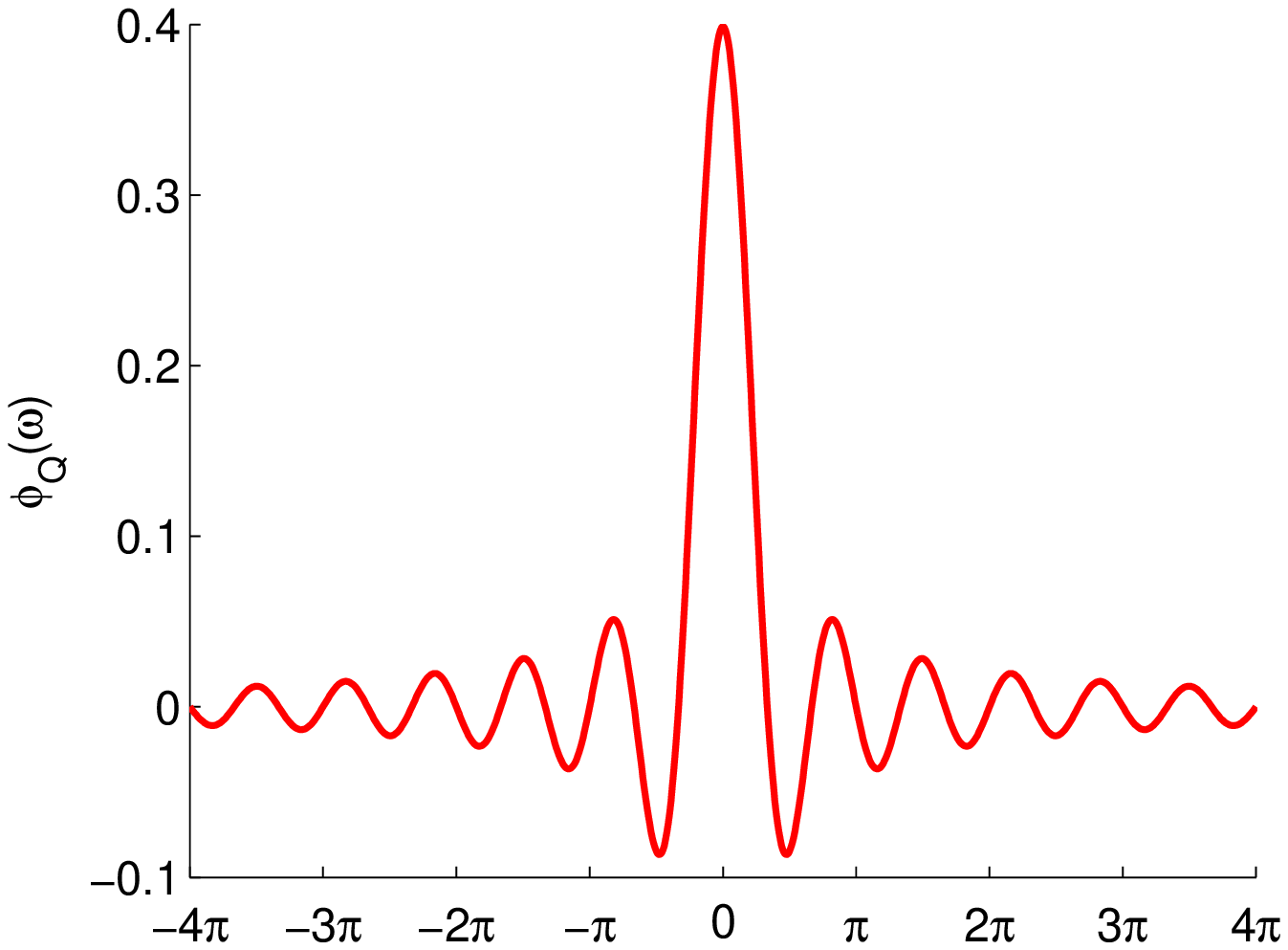}}\vspace{-3mm}
      {\small \center{(c$^\prime$)}}
    \end{minipage}
    \vspace{3mm}
  \end{tabular}
\begin{tabular}{ccc}
    \begin{minipage}{7cm}
      \center{\epsfxsize=5.5cm
      \epsffile{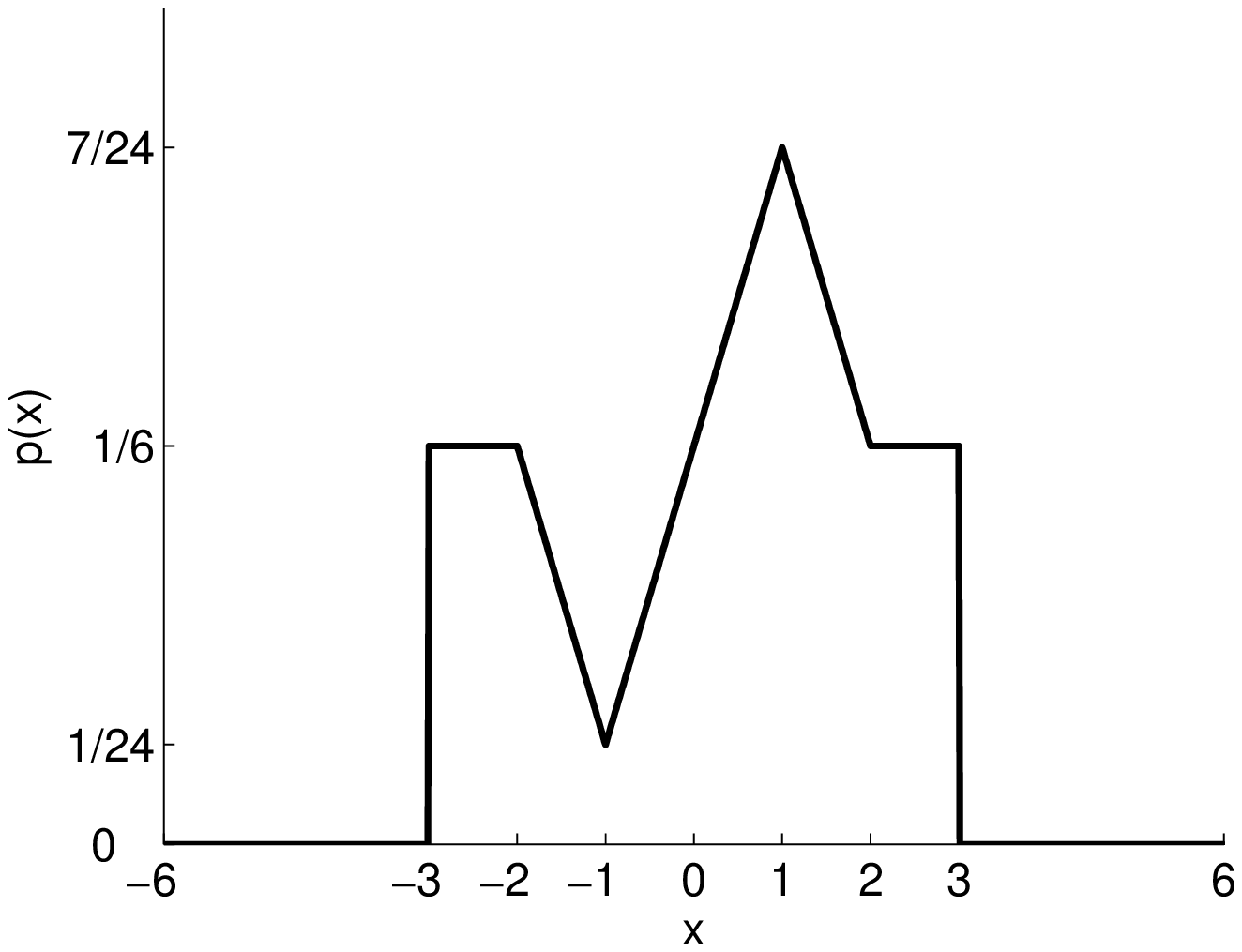}}\vspace{-3mm}
      {\small \center{(d)}}
    \end{minipage}
    \begin{minipage}{7cm}
      \center{\epsfxsize=5.5cm
      \epsffile{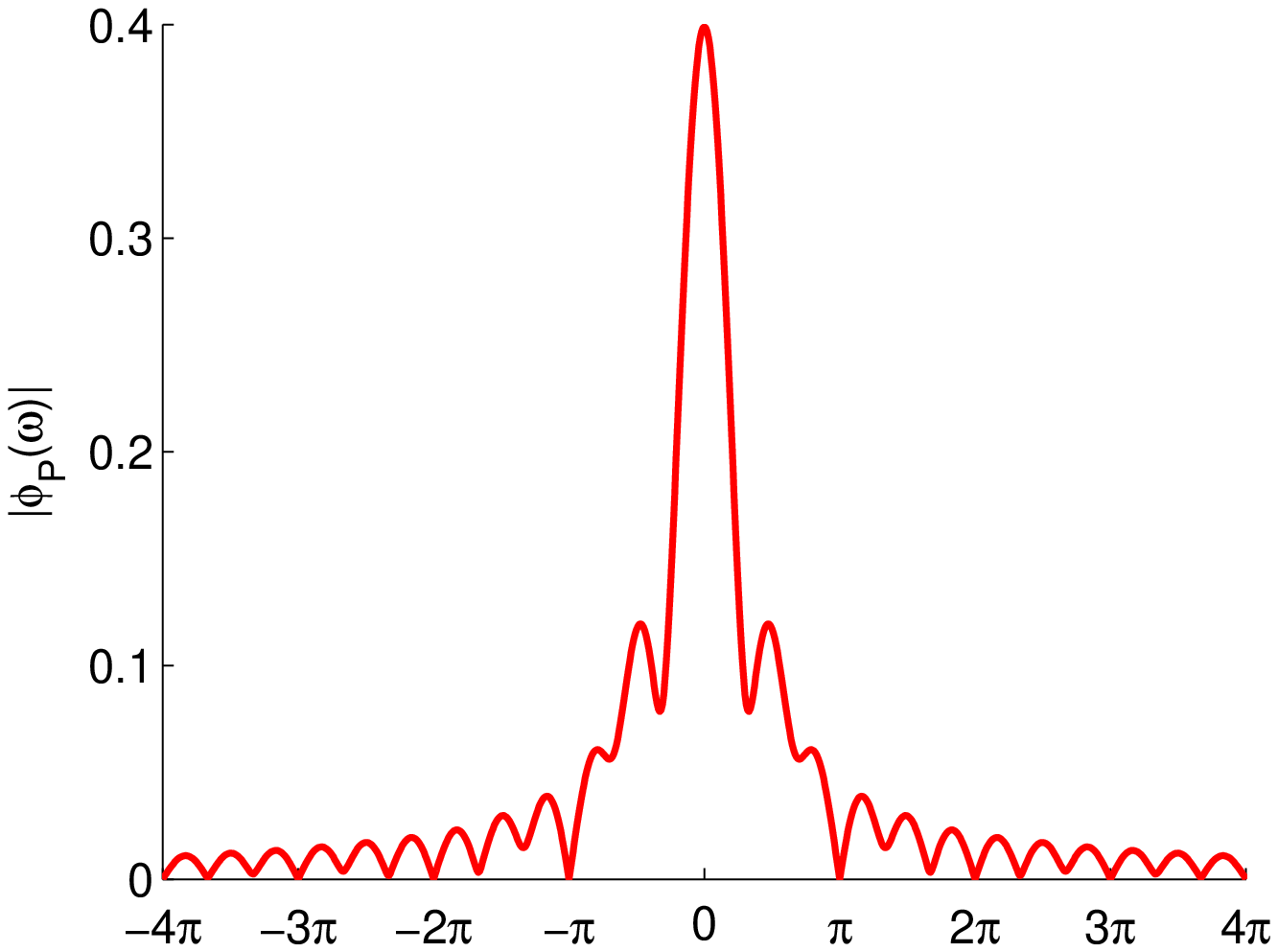}}\vspace{-3mm}
      {\small \center{(d$^\prime$)}}
    \end{minipage}
  \end{tabular}
  \caption{(a-a$^\prime$) $\psi$ and its Fourier spectrum $\widehat{\psi}$, (b-b$^\prime$) $\theta^\vee$ and $i\theta$, (c-c$^\prime$) the uniform distribution, $q$ and its characteristic function $\phi_\bb{Q}$, and (d-d$^\prime$) $p=q+\theta^\vee$ and $|\phi_\bb{P}|$. See Example~\ref{Exm:periodic-compact} for details.}\vspace{-3mm}
  \label{fig:compact}
 \end{figure*}
We now present the proof of Theorem~\ref{Thm:torus}, which is similar to that of Theorem~\ref{Thm:countablyinfinite}.
\vspace{2mm}
\begin{proof}
\hspace{-.05in}\textbf{(Theorem~\ref{Thm:torus})} ($\,\Leftarrow\,$) From (\ref{Eq:computeMMD-1}), we have
\begin{eqnarray}\label{Eq:Td-charactersistic}
\gamma^2_k(\bb{P},\bb{Q})&\!\!\!=\!\!\!&\int\!\!\!\int_{\bb{T}^d}\psi(x-y)\,d(\bb{P}-\bb{Q})(x)\,d(\bb{P}-\bb{Q})(y)\nonumber\\
&\!\!\!\stackrel{(a)}{=}\!\!\!&\int\!\!\!\int_{\bb{T}^d}\sum_{n\in\bb{Z}^d}A_\psi(n)\,e^{i(x-y)^Tn}\,d(\bb{P}-\bb{Q})(x)\,d(\bb{P}-\bb{Q})(y)\nonumber\\
&\!\!\!\stackrel{(b)}{=}\!\!\!&\sum_{n\in\bb{Z}^d}A_\psi(n)\left|\int_{\bb{T}^d}e^{-ix^Tn}\,d(\bb{P}-\bb{Q})(x)\right|^2\nonumber\\
&\!\!\!\stackrel{(c)}{=}\!\!\!&(2\pi)^{2d}\sum_{n\in\bb{Z}^d}A_\psi(n)\left|A_\bb{P}(n)-A_\bb{Q}(n)\right|^2,\label{Eq:L2-discrete}
\end{eqnarray} 
where we have invoked Bochner's theorem (Theorem~\ref{Theorem:Bochnerdiscrete}) in $(a)$, Fubini's theorem in $(b)$ and 
\begin{equation}
A_\bb{P}(n):=\frac{1}{(2\pi)^d}\int_{\bb{T}^d}e^{-in^Tx}\,d\bb{P}(x),\,n\in\bb{Z},
\end{equation}
in $(c)$. $A_\bb{P}$ is the Fourier transform of $\bb{P}$ in $\bb{T}^d$.
Since $A_\psi(0)\ge 0$ and $A_\psi(n)>0,\,\forall\,n\ne 0$, we have $A_\bb{P}(n)=A_\bb{Q}(n),\,\forall\,n$. Therefore, by the uniqueness theorem of Fourier transform, we have $\bb{P}=\bb{Q}$.\vspace{2mm}\\
($\,\Rightarrow\,$) Proving the necessity is equivalent to proving that if $A_\psi(0)\ge 0,\,A_\psi(n)>0,\,\forall\,n\ne 0$ is violated, then $k$ is not characteristic, which is equivalent to showing that $\exists\,\bb{P}\ne \bb{Q}$ such that $\gamma_k(\bb{P},\bb{Q})=0$. Let $\bb{Q}$ be a uniform probability measure with $q(x)=\frac{1}{(2\pi)^d},\,\forall\,x\in\bb{T}^d$. Let $k$ be such that $A_\psi(n)=0$ for some $n=n_0\ne 0$. Define
\begin{equation}\label{Eq:A_p}
A_\bb{P}(n):=\left\{\begin{array}{c@{\quad\quad}l}
A_\bb{Q}(n),& n\ne \pm n_0\\
A_\bb{Q}(n)+\theta(n),& n=\pm n_0
\end{array}\right.,
\end{equation} 
where $A_\bb{Q}(n)=\frac{1}{(2\pi)^d}\delta_{n_0}$ and $\theta(-n_0)=\overline{\theta(n_0)}$. So, 
\begin{equation}
p(x)=\sum_{n\in\bb{Z}^d}A_P(n)e^{ix^Tn}=\frac{1}{(2\pi)^d}+\theta(n_0)e^{ix^Tn_0}+\theta(-n_0)e^{-ix^Tn_0}.
\end{equation}
Choose $\theta(n_0)=i\alpha$, $\alpha\in\bb{R}$. Then, $p(x)=\frac{1}{(2\pi)^d}-2\alpha\sin(x^Tn_0)$. It is easy to check that $p$ integrates to one. Choosing $|\alpha|\le\frac{1}{2(2\pi)^d}$ ensures that $p(x)\ge 0,\forall\,x\in\bb{T}^d$. By using $A_\bb{P}(n)$ in (\ref{Eq:Td-charactersistic}), it is clear that $\gamma_k(\bb{P},\bb{Q})=0$. Therefore, $\exists\,\bb{P}\ne \bb{Q}$ such that $\gamma_k(\bb{P},\bb{Q})=0$, which means $k$ is not characteristic.\vspace{-3mm}
\end{proof}

\section{Dissimilar Distributions with Small $\gamma_k$}\label{Sec:limitation}
\par So far, we have studied different characterizations for the kernel $k$ such that $\gamma_k$ is a metric on $\Scr{P}$. As mentioned in Section~\ref{Sec:Introduction}, the metric property of $\gamma_k$ is crucial in many statistical inference applications like hypothesis testing. Therefore, in practice, it is important to use characteristic kernels.  
However, in this section, we show that characteristic kernels, while guaranteeing $\gamma_k$ to be a metric on $\Scr{P}$, may nonetheless have difficulty in distinguishing certain distributions on the basis of finite samples. More specifically, in Theorem~\ref{Thm:generic} we show that for a given kernel, $k$ and for any $\varepsilon>0$, there exist $\bb{P}\ne\bb{Q}$ such that $\gamma_k(\bb{P},\bb{Q})<\varepsilon$. 
Before proving the result, we motivate it through the following example.
\begin{example}\label{Exm:spline}
Let $\bb{P}$ be absolutely continuous w.r.t. the Lebesgue measure on $\bb{R}$ with the Radon-Nikodym derivative defined as 
\begin{equation}\label{Eq:lim}
p(x)=q(x)+\alpha q(x)\sin(\nu\pi x),
\end{equation}
where $q$ is the Radon-Nikodym derivative of $\bb{Q}$ w.r.t. the Lebesgue measure satisfying $q(x)=q(-x),\,\forall\,x$ and $\alpha\in[-1,1]\backslash\{0\},\,\nu\in\bb{R}\backslash\{0\}$. It is obvious that $\bb{P}\ne\bb{Q}$. The characteristic function of $\bb{P}$ is given as 
\begin{equation}
\phi_\bb{P}(\omega)=\phi_\bb{Q}(\omega)-\frac{i\alpha}{2}\left[\phi_\bb{Q}(\omega-\nu\pi)-\phi_\bb{Q}(\omega+\nu\pi)\right],\,\omega\in\bb{R},
\end{equation}
where $\phi_\bb{Q}$ is the characteristic function associated with $\bb{Q}$. Note that with increasing $|\nu|$, $p$ has higher frequency components in its Fourier spectrum
and therefore appears more noisy as shown in Figure~\ref{fig:noisy}. In Figure~\ref{fig:noisy}, (a-c) show the plots of $p$ when $q=\eu{U}[-1,1]$ (uniform distribution) and (a$^\prime$-c$^\prime$) show the plots of $p$ when $q=\eu{N}(0,2)$ (zero mean normal distribution with variance $2$) for $\nu=0,2$ and $7.5$ with $\alpha=\frac{1}{2}$.
\par Consider the $B_1$-spline kernel on $\bb{R}$ given by $k(x,y)=\psi(x-y)$ where
\begin{equation}\label{Eq:B1spline}
\psi(x)=\left\{\begin{array}{c@{\quad\quad}l}
1-|x|,& |x|\le 1\\
0,& \text{otherwise}
\end{array}\right.,
\end{equation} 
with its Fourier transform given by \begin{equation}
\widehat{\psi}(\omega)=\frac{2\sqrt{2}}{\sqrt{\pi}}\frac{\sin^2\frac{\omega}{2}}{\omega^2}.
\end{equation}
Since $\psi$ is characteristic to $\Scr{P}$, $\gamma_k(\bb{P},\bb{Q})>0$ (see Theorem~\ref{Thm:countablyinfinite}). However, it would be of interest to study the behavior of $\gamma_k(\bb{P},\bb{Q})$ as a function of $\nu$. We study the behavior of $\gamma^2_k(\bb{P},\bb{Q})$ through its unbiased, consistent estimator,\footnote{Let $\{X_j\}^m_{j=1}$ and $\{Y_j\}^m_{j=1}$ be random samples drawn i.i.d. from $\bb{P}$ and $\bb{Q}$ respectively. An unbiased \emph{empirical estimate} of $\gamma^2_k(\bb{P},\bb{Q})$, denoted as $\gamma^2_{k,u}(m,m)$ is given by $\gamma^2_{k,u}(m,m)=\frac{1}{m(m-1)}\sum^m_{l\ne j}h(Z_l,Z_j)$, which is a one-sample $U$-statistic with $h(Z_l,Z_j):=k(X_l,X_j)+k(Y_l,Y_j)-k(X_l,Y_j)-k(X_j,Y_l)$, where $Z_1,\ldots,Z_m$ are $m$ i.i.d. random variables with $Z_j:=(X_j,Y_j)$. See \citet[Lemma 7]{Gretton-06} for details.} $\gamma^2_{k,u}(m,m)$ as considered by \citet[Lemma 7]{Gretton-06}. 
\par Figure~\ref{fig:spline-gaussian}(a) shows the behavior of $\gamma^2_{k,u}(m,m)$ as a function of $\nu$ for $q=\eu{U}[-1,1]$ and $q=\eu{N}(0,2)$ using the $B_1$-spline kernel in (\ref{Eq:B1spline}). Since the Gaussian kernel, $k(x,y)=e^{-(x-y)^2}$ is also a characteristic kernel, its effect on the behavior of $\gamma^2_{k,u}(m,m)$ is shown in Figure~\ref{fig:spline-gaussian}(b) in comparison to that of the $B_1$-spline kernel. 
\begin{figure}[t]
  \centering
  \begin{tabular}{ccc}
    \begin{minipage}{5cm}
      \center{\epsfxsize=5cm
      \epsffile{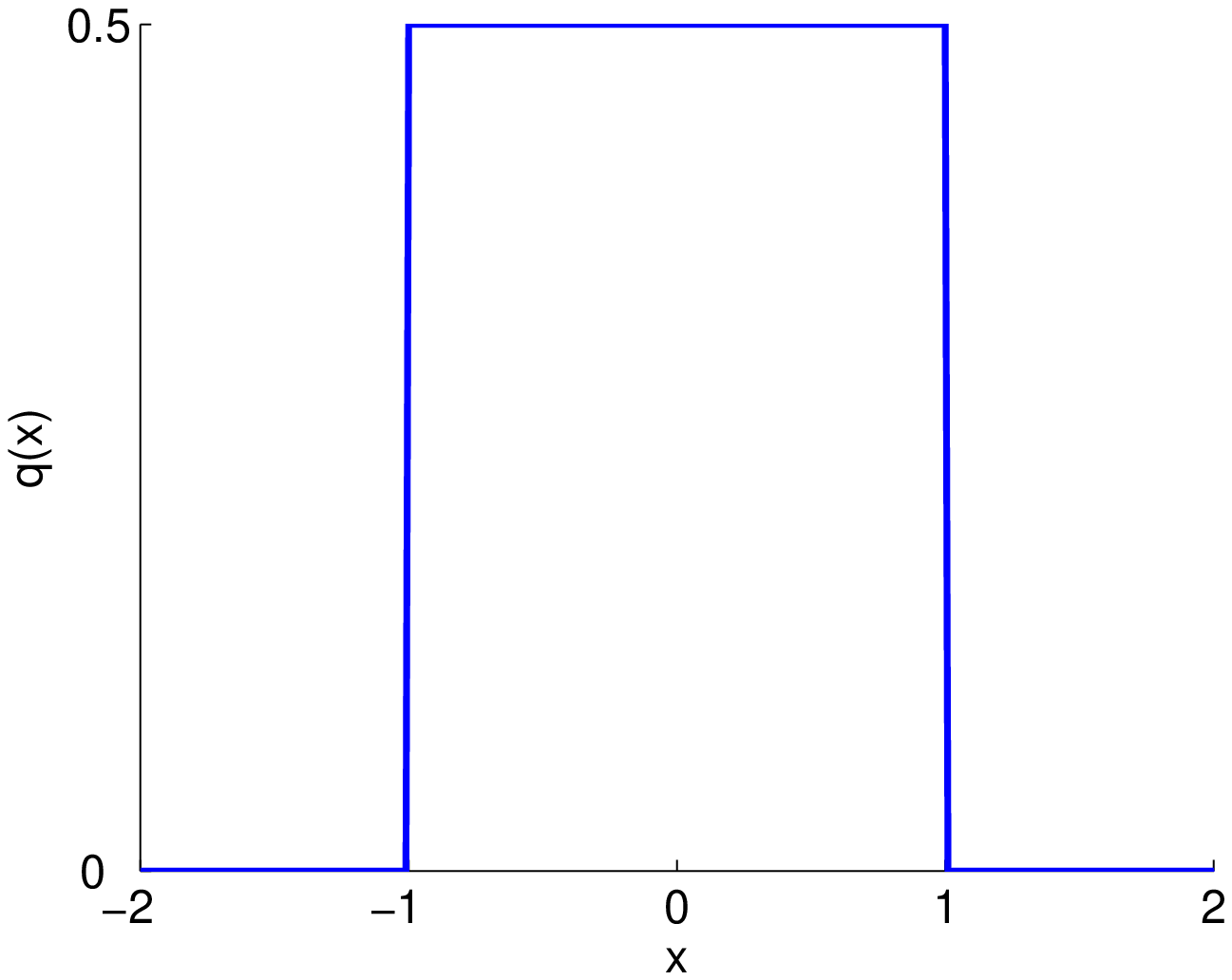}}\vspace{-4mm}
      {\small \center{(a)}}
    \end{minipage}
    \begin{minipage}{5.25cm}
      \center{\epsfxsize=5cm
      \epsffile{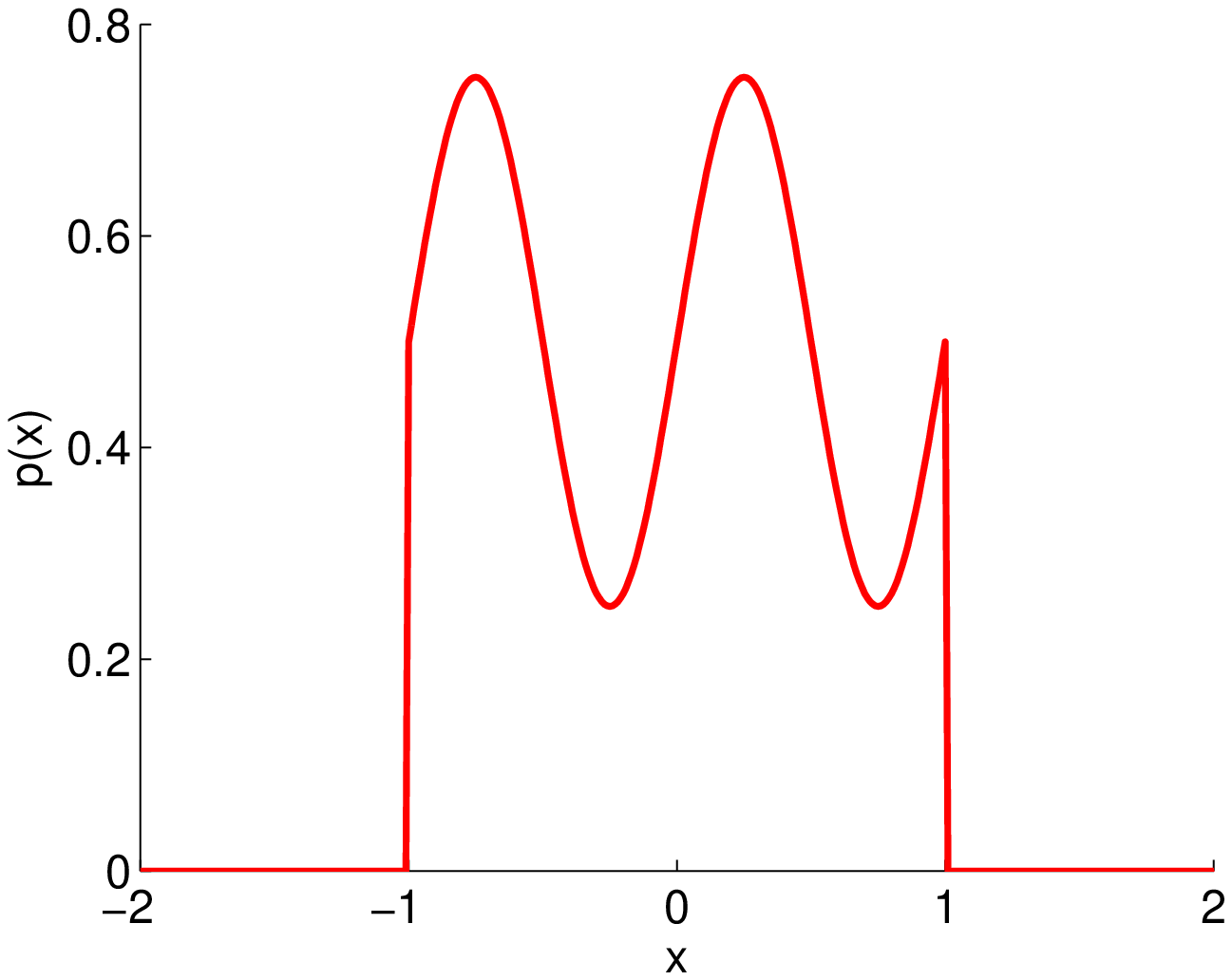}}\vspace{-4mm}
      {\small \center{(b)}}
    \end{minipage}
    \begin{minipage}{5cm}
      \center{\epsfxsize=5cm
      \epsffile{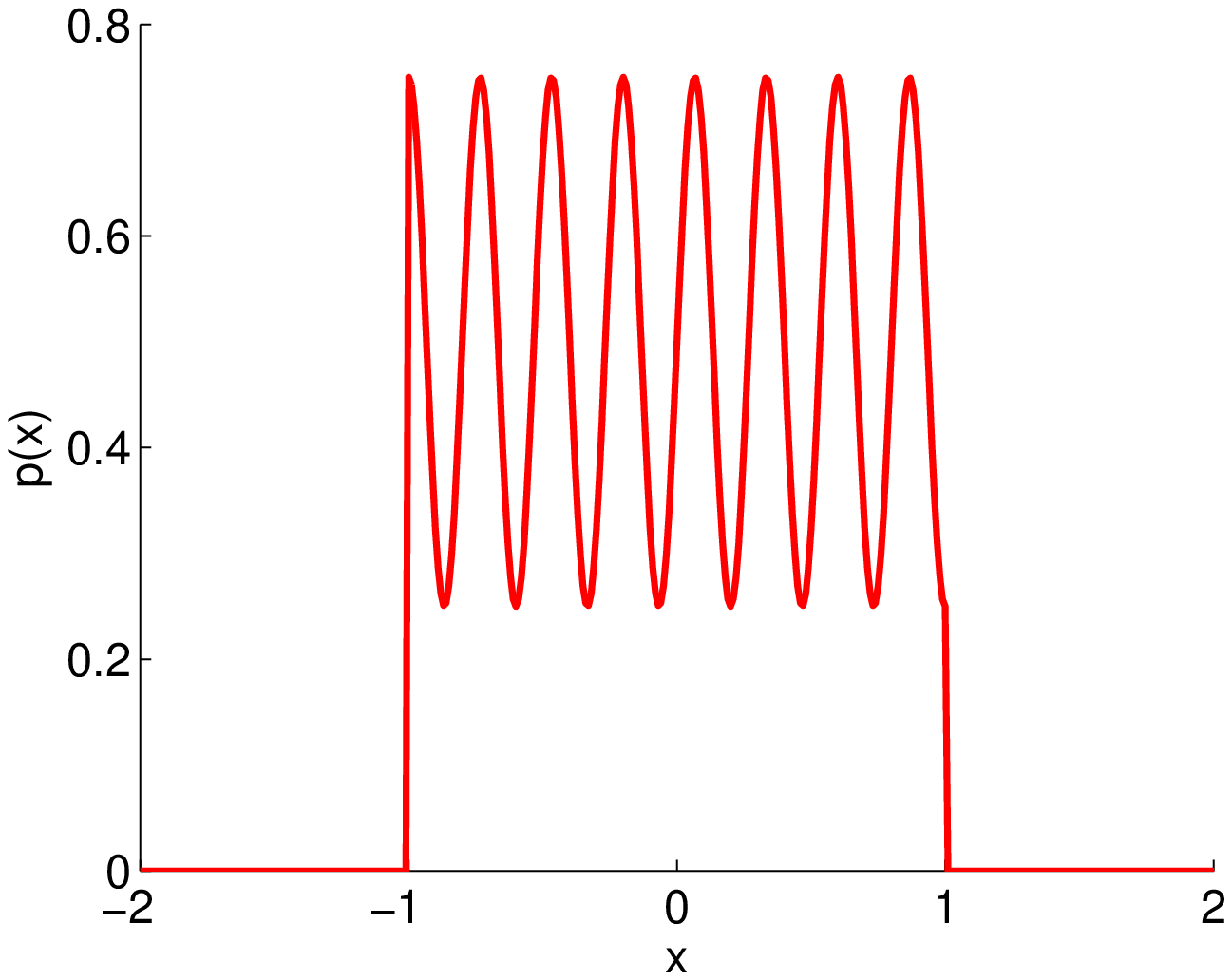}}\vspace{-4mm}
      {\small \center{(c)}}
    \end{minipage}
  \end{tabular}
  \begin{tabular}{ccc}
    \begin{minipage}{5cm}
      \center{\epsfxsize=5cm
      \epsffile{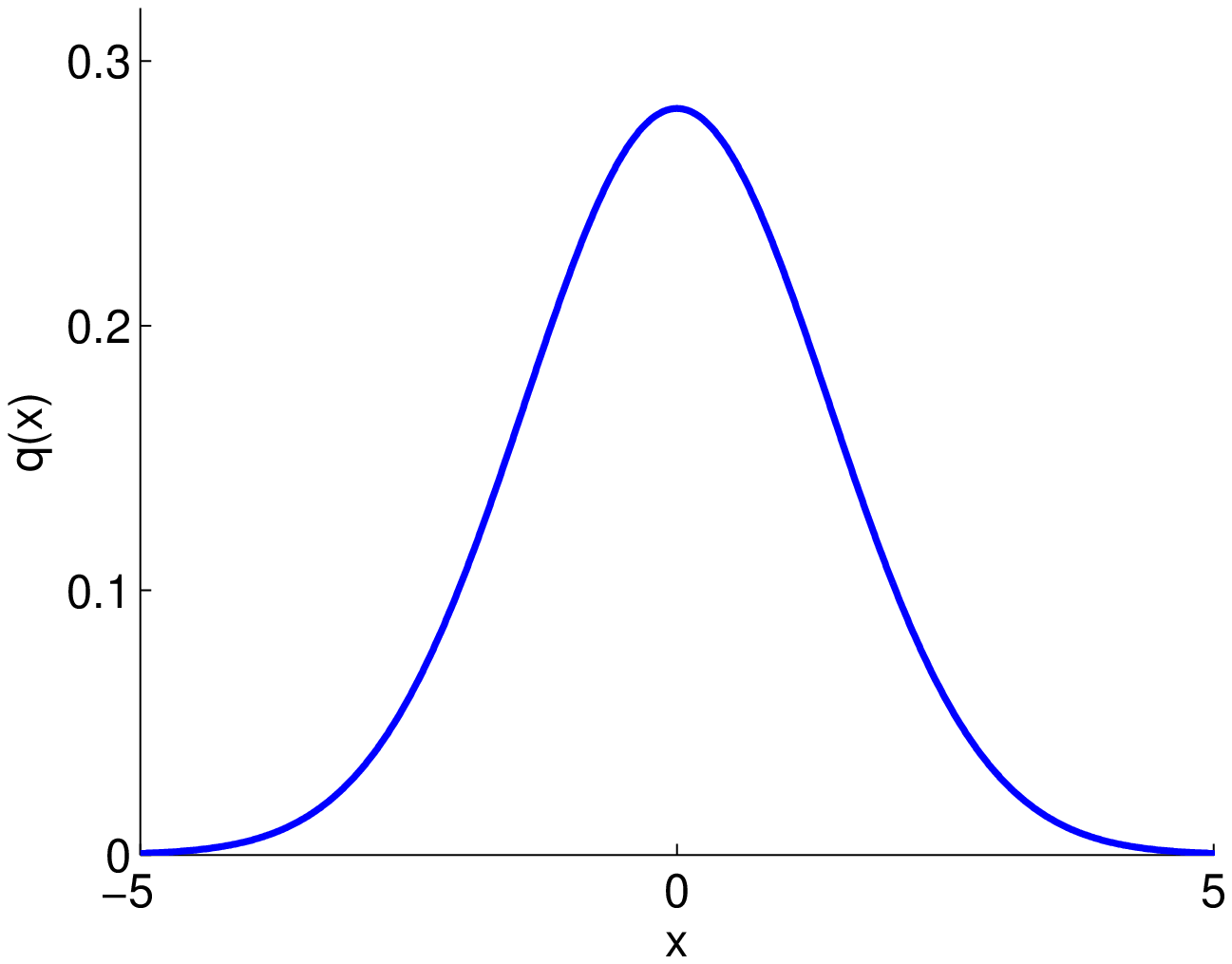}}\vspace{-4mm}
      {\small \center{(a$^\prime$)}}
    \end{minipage}
    \begin{minipage}{5cm}
      \center{\epsfxsize=5cm
      \epsffile{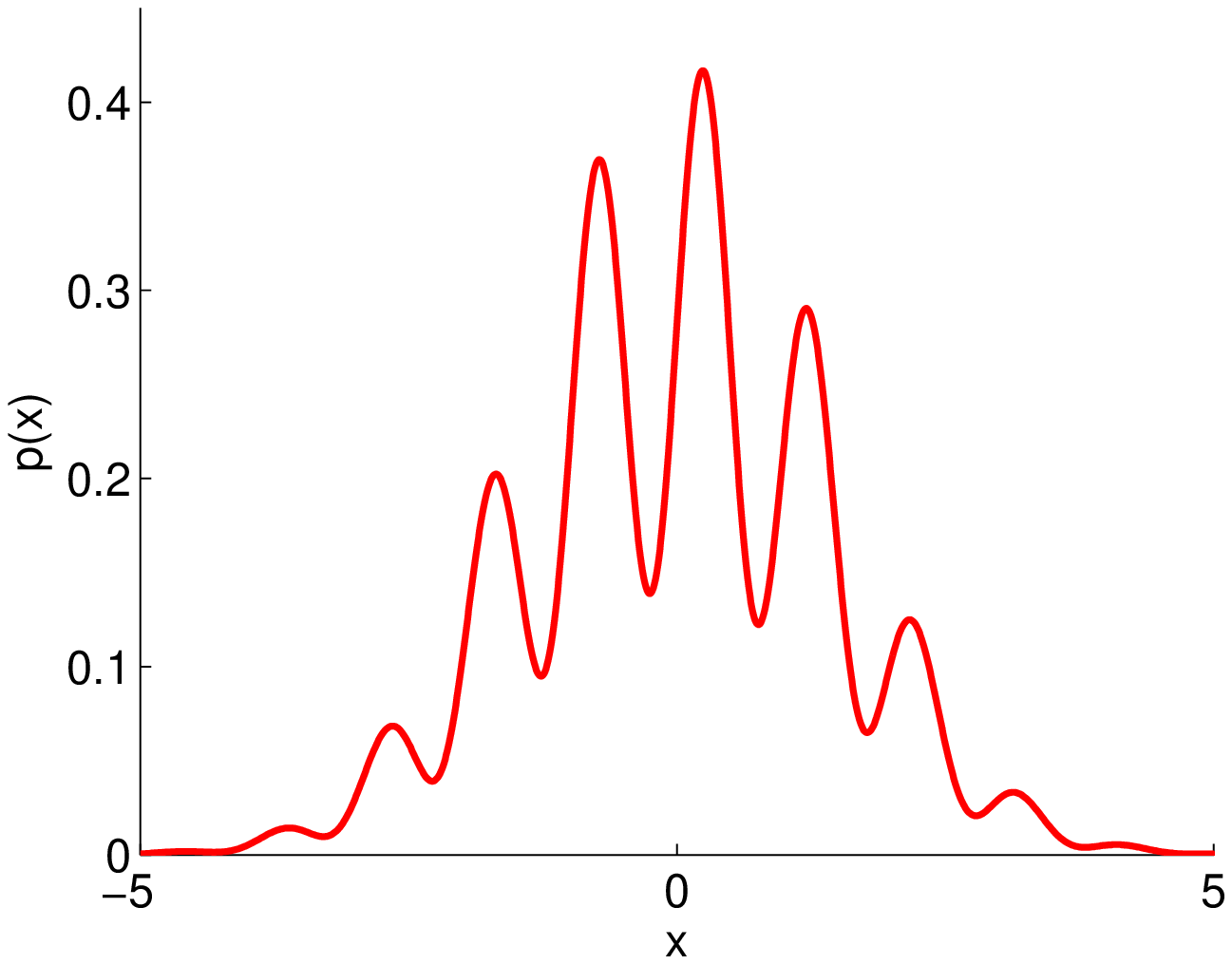}}\vspace{-4mm}
      {\small \center{(b$^\prime$)}}
    \end{minipage}
    \begin{minipage}{5cm}
      \center{\epsfxsize=5cm
      \epsffile{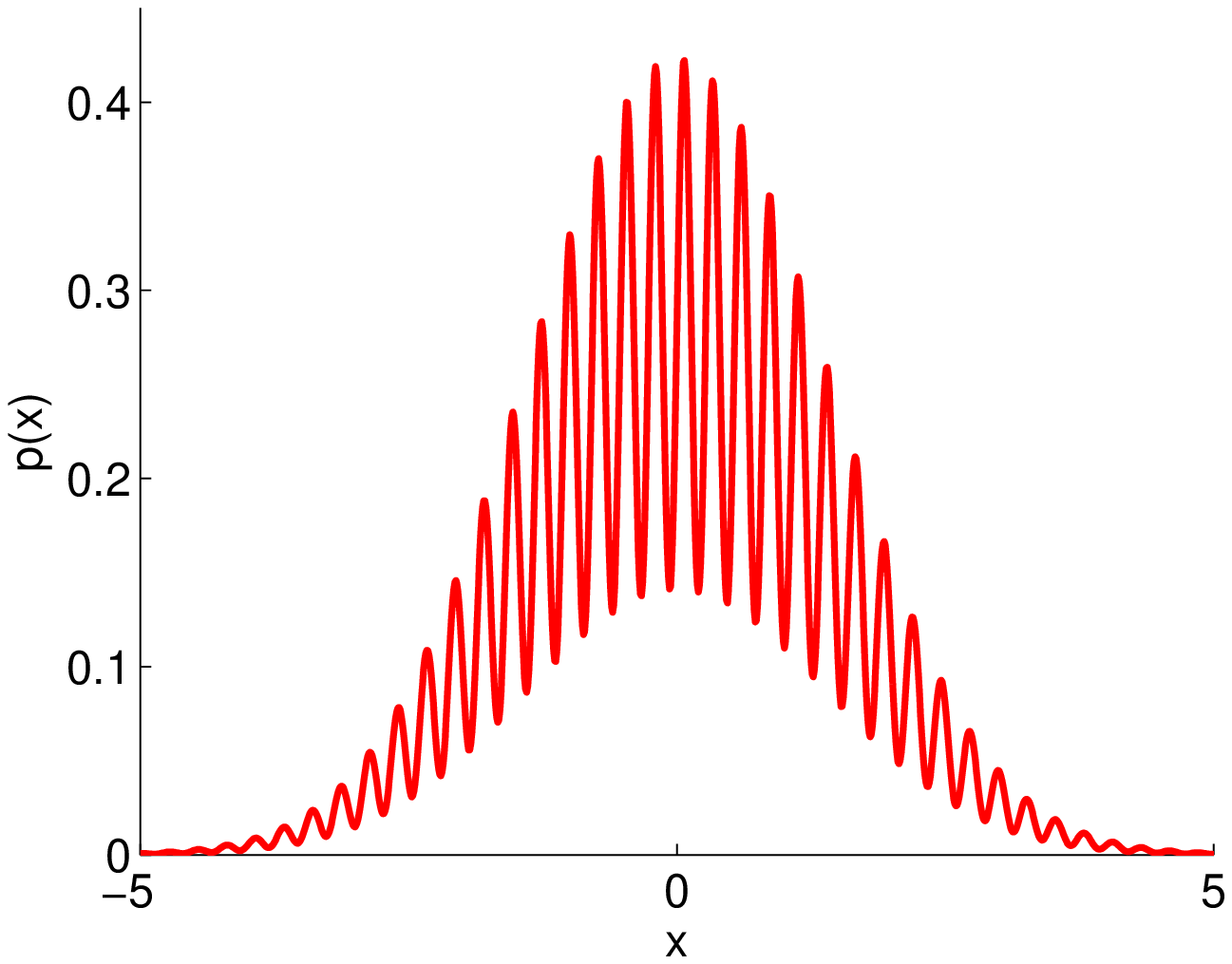}}\vspace{-4mm}
      {\small \center{(c$^\prime$)}}
    \end{minipage}
  \end{tabular}
  \vspace{-2mm}
  \caption{(a) $q=\eu{U}[-1,1]$, (a$^\prime$) $q=\eu{N}(0,2)$. (b-c) and (b$^\prime$-c$^\prime$) denote $p(x)$ computed as $p(x)=q(x)+\frac{1}{2}q(x)\sin(\nu\pi x)$ with $q=\eu{U}[-1,1]$ and $q=\eu{N}(0,2)$ respectively. $\nu$ is chosen to be $2$ in (b,b$^\prime$) and $7.5$ in (c,c$^\prime$). See Example~\ref{Exm:spline} for details.}\vspace{-4mm}
  \label{fig:noisy}
\end{figure}
\begin{figure}[h]
  \centering
  \begin{tabular}{cc}
    \begin{minipage}{8cm}
      \center{\epsfxsize=6cm
      \epsffile{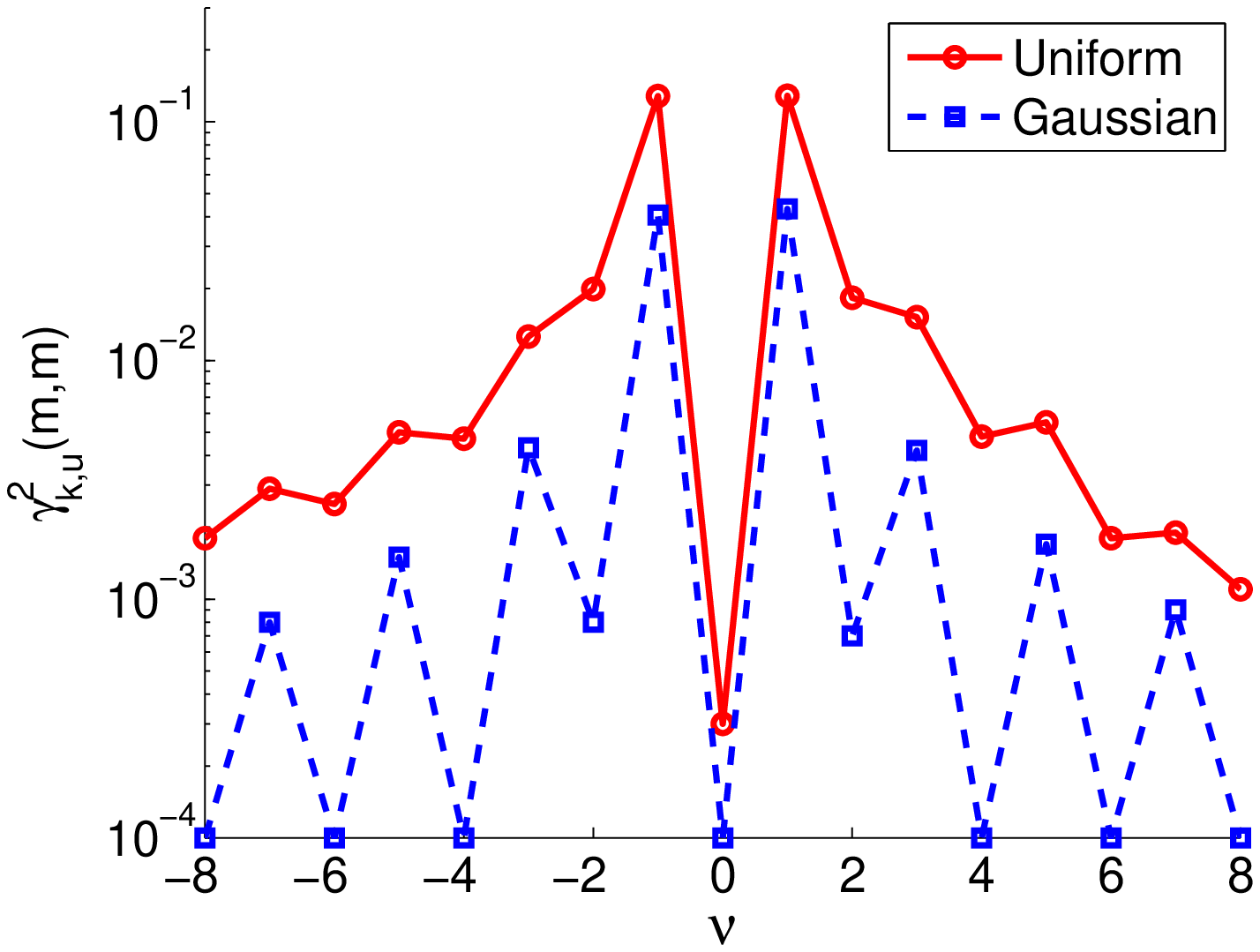}}\vspace{-4mm}
      {\small \center{(a)}}
    \end{minipage}
    \begin{minipage}{8cm}
      \center{\epsfxsize=6cm
      \epsffile{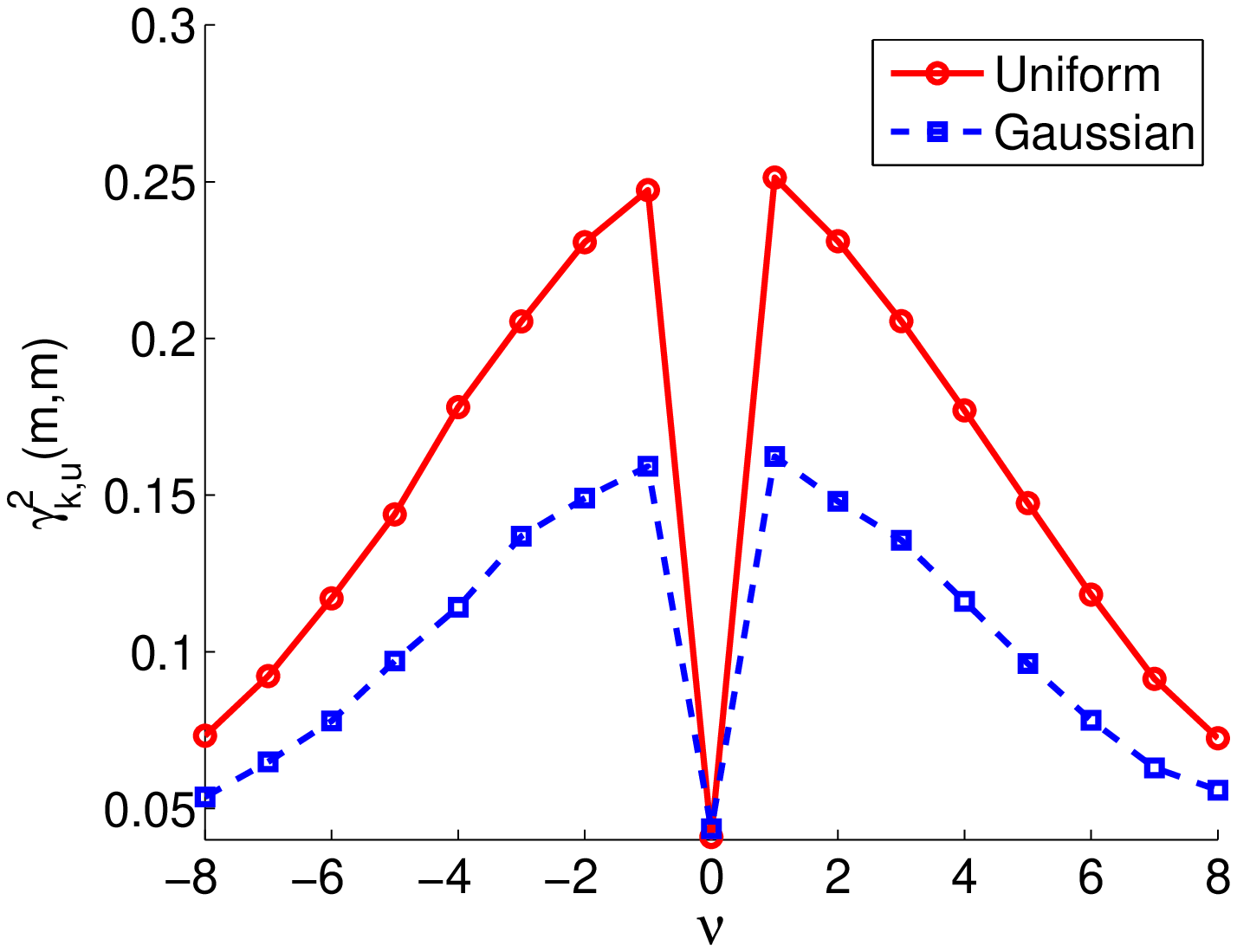}}\vspace{-4mm}
      {\small \center{(b)}}
    \end{minipage}
  \end{tabular}
  \vspace{-2mm}
  \caption{Behavior of the empirical estimate of $\gamma^2_k(\bb{P},\bb{Q})$ w.r.t. $\nu$ for (a) the $B_1$-spline kernel and (b) the Gaussian kernel. $\bb{P}$ is constructed from $\bb{Q}$ as defined in (\ref{Eq:lim}). ``Uniform" corresponds to $\bb{Q}=\eu{U}[-1,1]$ and ``Gaussian" corresponds to $\bb{Q}=\eu{N}(0,2)$. $m=1000$ samples are generated from $\bb{P}$ and $\bb{Q}$ to estimate $\gamma^2_k(\bb{P},\bb{Q})$ through $\gamma^2_{k,u}(m,m)$. This is repeated $100$ times and the average $\gamma^2_{k,u}(m,m)$ is plotted in both figures. Since the quantity of interest is the average behavior of $\gamma^2_{k,u}(m,m)$, we omit the error bars. See Example~\ref{Exm:spline} for details.}\vspace{-7mm}
  \label{fig:spline-gaussian}
\end{figure}
\par In Figure~\ref{fig:spline-gaussian}, we observe two circumstances under which $\gamma^2_k$ may be small. First, $\gamma^2_{k,u}(m,m)$ decays with increasing $|\nu|$, and can be made as small as desired by choosing a sufficiently large $|\nu|$. Second, in Figure~\ref{fig:spline-gaussian}(a), $\gamma^2_{k,u}(m,m)$ has troughs at $\nu=\frac{\omega_0}{\pi}$ where $\omega_0=\{\omega:\widehat{\psi}(\omega)=0\}$. Since $\gamma^2_{k,u}(m,m)$ is a consistent estimate of $\gamma^2_k(\bb{P},\bb{Q})$, one would expect similar behavior from $\gamma^2_k(\bb{P},\bb{Q})$. This means that, although the $B_1$-spline kernel is characteristic to $\Scr{P}$, in practice, it becomes harder to distinguish between $\bb{P}$ and $\bb{Q}$ with finite samples, when $\bb{P}$ is constructed as in (\ref{Eq:lim}) with $\nu=\frac{\omega_0}{\pi}$. In fact, one can observe from a straightforward spectral argument that the troughs in $\gamma^2_k(\bb{P},\bb{Q})$ can be made arbitrarily deep by widening $q$, when $q$ is Gaussian.
\end{example}
For characteristic kernels, although $\gamma_k(\bb{P},\bb{Q})>0$ when $\bb{P}\ne \bb{Q}$, Example~\ref{Exm:spline} demonstrates that one can construct distributions such that $\gamma^2_{k,u}(m,m)$ is indistinguishable from zero with high probability, for a given sample size $m$. 
Below, in Theorem~\ref{Thm:generic}, 
we explicitly construct $\bb{P}\ne \bb{Q}$ such that $|\bb{P}\varphi_l-\bb{Q}\varphi_l|$ is large for some large $l$, but $\gamma_k(\bb{P},\bb{Q})$ is arbitrarily small, making it hard to detect a non-zero value of $\gamma_k(\bb{P},\bb{Q})$ based on finite samples. Here, $\varphi_l\in L^2(M)$ represents the bounded orthonormal eigenfunctions of a positive definite integral operator 
associated with $k$. Based on this theorem, e.g., in Example~\ref{Exm:spline}, the decay mode of $\gamma_k$ for large $|\nu|$ can be investigated.

\par Consider the formulation of $\gamma_\eu{F}$ with $\eu{F}=\eu{F}_k$ in (\ref{Eq:MMD}). The construction of $\bb{P}$ for a given $\bb{Q}$ such that $\gamma_k(\bb{P},\bb{Q})$ is small, though not zero, can be intuitively understood by re-writing (\ref{Eq:MMD}) as 
\begin{equation}\label{Eq:MMD-rewrite}
\gamma_k(\bb{P},\bb{Q})=\sup_{f\in\eu{H}}\frac{|\bb{P}f-\bb{Q}f|}{\Vert f\Vert_{\eu{H}}}.
\end{equation}
When $\bb{P}\ne \bb{Q}$, $|\bb{P}f-\bb{Q}f|$ can be large for some $f\in\eu{H}$. However, $\gamma_k(\bb{P},\bb{Q})$ can be made small by selecting $\bb{P}$ such that the maximization of $\frac{|\bb{P}f-\bb{Q}f|}{\Vert f\Vert_{\eu{H}}}$ over $\eu{H}$ requires an $f$ with large $\Vert f\Vert_{\eu{H}}$. More specifically, higher order eigenfunctions of the kernel ($\varphi_l$ for large $l$) have large RKHS norms, so, if they are prominent in $\bb{P}$ and $\bb{Q}$ (i.e., highly non-smooth distributions), one can expect $\gamma_k(\bb{P},\bb{Q})$ to be small even when there exists an $l$ for which $|\bb{P}\varphi_l-\bb{Q}\varphi_l|$ is large. To this end, we need the following lemma, which we quote from \citet[Lemma 6]{Gretton-04}.
\begin{lemma}[\cite{Gretton-04}]\label{Lem:expansion}
Let $\eu{F}$ be the unit ball in an RKHS $(\eu{H},k)$ defined on a compact topological space, $M$, with $k$ being measurable. Let $\varphi_l\in L^2(M,\mu)$ be absolutely bounded orthonormal eigenfunctions and $\lambda_l$ be the corresponding eigenvalues (arranged in a decreasing order for increasing $l$) of a positive definite integral operator associated with $k$ and a $\sigma$-finite measure, $\mu$. Assume $\lambda^{-1}_l$ increases superlinearly with $l$. Then, for $f\in \eu{F}$ where $f(x)=\sum^{\infty}_{j=1}\widetilde{f}_j\varphi_j(x)$, $\widetilde{f}_j:=\langle f,\varphi_j\rangle_{L^2(M,\mu)}$, we have $\sum^\infty_{j=1}|\widetilde{f}_j|<\infty$ and for every $\varepsilon>0$, $\exists\,l_0\in\bb{N}$ such that $|\widetilde{f}_l|<\varepsilon$ if $l>l_0$.\vspace{-1.25mm}
\end{lemma}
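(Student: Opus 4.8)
The plan is to pass to the Mercer expansion of $k$, use it to express both the RKHS norm of $f$ and its $L^2(M,\mu)$-coefficients in terms of a single square-summable sequence, and then read off the two claims from a weighted Cauchy--Schwarz inequality and from the decay of the eigenvalues.

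First I would invoke Mercer's theorem for the positive definite integral operator $T_k:L^2(M,\mu)\to L^2(M,\mu)$, $T_kg=\int_M k(\cdot,y)g(y)\,d\mu(y)$: since $M$ is compact and $\{(\lambda_l,\varphi_l)\}$ is the eigensystem of $T_k$, one has $k(x,y)=\sum_l\lambda_l\varphi_l(x)\varphi_l(y)$, the $\varphi_l$ are orthonormal in $L^2(M,\mu)$, and $\{\sqrt{\lambda_l}\,\varphi_l\}_l$ is an orthonormal basis of $\eu{H}$. Hence every $f\in\eu{H}$ has the expansion $f=\sum_l\widetilde f_l\varphi_l$ with $\widetilde f_l=\langle f,\varphi_l\rangle_{L^2(M,\mu)}$ and $\Vert f\Vert_\eu{H}^2=\sum_l\widetilde f_l^2/\lambda_l$. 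Setting $b_l:=\widetilde f_l/\sqrt{\lambda_l}$, this reads $\widetilde f_l=\sqrt{\lambda_l}\,b_l$ with $\sum_l b_l^2=\Vert f\Vert_\eu{H}^2\le 1$ for $f\in\eu{F}$.

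For the first claim I would apply the weighted Cauchy--Schwarz inequality,
\[
\sum_{l\ge 1}|\widetilde f_l|=\sum_{l\ge 1}\frac{|\widetilde f_l|}{\sqrt{\lambda_l}}\sqrt{\lambda_l}\le\Bigl(\sum_{l\ge 1}\frac{\widetilde f_l^2}{\lambda_l}\Bigr)^{1/2}\Bigl(\sum_{l\ge 1}\lambda_l\Bigr)^{1/2}\le\Bigl(\sum_{l\ge 1}\lambda_l\Bigr)^{1/2},
\]
so it remains to note that $\{\lambda_l\}$ is summable; this is the trace-class property of $T_k$ (indeed $\sum_l\lambda_l=\int_M k(x,x)\,d\mu(x)$ is finite when $\mu$ is finite and $k$ bounded, and it is certainly implied by a sufficiently strong form of the superlinear-growth hypothesis on $\lambda_l^{-1}$). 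The resulting bound is moreover uniform over $\eu{F}$. For the second claim I would use the cruder estimate $|\widetilde f_l|=\sqrt{\lambda_l}\,|b_l|\le\sqrt{\lambda_l}\,\Vert f\Vert_\eu{H}\le\sqrt{\lambda_l}$ together with $\lambda_l\to 0$ (a consequence of $\lambda_l^{-1}\to\infty$): given $\varepsilon>0$, any $l_0$ for which $\lambda_l<\varepsilon^2$ whenever $l>l_0$ does the job, again uniformly over $\eu{F}$.

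I expect the only real obstacle to be the analytic justification of the Mercer step under the hypotheses as literally stated: the classical Mercer theorem assumes $k$ continuous and $\mu$ finite, whereas here $k$ is merely measurable, $M$ a general compact topological space, and $\mu$ only $\sigma$-finite. One therefore has to appeal to a suitable generalization~---~the Hilbert--Schmidt/$L^2$ spectral decomposition of $T_k$ together with the standard identification of its range (with the weighting $\lambda_l^{-1/2}$) as $\eu{H}$~---~and it is precisely here that the assumption that the $\varphi_l$ are \emph{absolutely bounded} is used, so that the expansion $f=\sum_l\widetilde f_l\varphi_l$ and the inequality $|\widetilde f_l|\le\sqrt{\lambda_l}$ are valid pointwise rather than only $\mu$-almost everywhere. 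Once this machinery is granted, both conclusions follow from the elementary estimates above.
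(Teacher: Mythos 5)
Since the paper does not reprove this lemma~---~it is cited verbatim from \citet[Lemma~6]{Gretton-04}~---~there is no internal proof of record here, so I am judging your argument on its own terms. Your route is the standard one and is essentially correct: $\{\sqrt{\lambda_l}\,\varphi_l\}_l$ is an orthonormal basis of $\eu{H}$, so $\Vert f\Vert_\eu{H}^2=\sum_l\widetilde f_l^2/\lambda_l\le 1$ for $f\in\eu{F}$; weighted Cauchy--Schwarz then gives $\sum_l|\widetilde f_l|\le(\sum_l\lambda_l)^{1/2}$, and $|\widetilde f_l|\le\sqrt{\lambda_l}\to 0$ supplies the second conclusion (in fact uniformly over $\eu{F}$, which is stronger than what is stated).

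The point you yourself flagged is the one worth pressing on, because it is a real gap in the hypotheses as literally quoted rather than a mere technicality: your Cauchy--Schwarz step reduces the first conclusion to $\sum_l\lambda_l<\infty$, but ``$\lambda_l^{-1}$ increases superlinearly,'' read as $\lambda_l^{-1}/l\to\infty$, does \emph{not} yield trace class~---~for instance $\lambda_l=1/(l\log l)$ has $\lambda_l^{-1}$ superlinear yet $\sum_l\lambda_l=\infty$. And since $\mu$ is only assumed $\sigma$-finite, one cannot fall back on $\sum_l\lambda_l=\int_M k(x,x)\,d\mu<\infty$ either; that identity needs $\mu(M)<\infty$ (which is in fact what is implicitly used later, when $e=\mathds{1}_M$ must lie in $L^2(M,\mu)$). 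So the first claim genuinely requires either reading the growth hypothesis as a polynomial rate $\lambda_l^{-1}\gtrsim l^{1+\epsilon}$ or strengthening $\sigma$-finiteness of $\mu$ to finiteness; with either of these, your argument closes. The second conclusion needs only $\lambda_l\to 0$ and is fine as stated. One minor correction: the bound $|\widetilde f_l|\le\sqrt{\lambda_l}$ is an $L^2$-inner-product estimate and does not use a pointwise Mercer expansion, so absolute boundedness of $\varphi_l$ is not what makes it valid; that hypothesis is rather what lets $f(x)=\sum_j\widetilde f_j\varphi_j(x)$ (and the expansion of $e=\mathds{1}_M$) hold as pointwise identities, which is the form actually exploited when this lemma is invoked in the proof of Theorem~\ref{Thm:generic}.
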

\begin{theorem}[$\bb{P}\ne \bb{Q}$ can have arbitrarily small  $\gamma_k$]\label{Thm:generic}
Assume the conditions in Lemma~\ref{Lem:expansion} hold. Then, there exist probability measures $\bb{P}\ne \bb{Q}$ defined on $M$ such that
$\gamma_k(\bb{P},\bb{Q})<\varepsilon$ for any arbitrarily small $\varepsilon>0$.\vspace{-2mm}
\end{theorem}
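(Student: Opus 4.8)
The strategy is to realise $\gamma_k$ in the eigenbasis of the kernel and then perturb a fixed probability measure by a single, arbitrarily high-index eigenfunction, so that the difference of the two measures lives at a very high ``frequency'' that the RKHS norm heavily penalises.

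\emph{Step 1 (eigen-expansion of $\gamma_k$).} Since $M$ is compact and $k$ is bounded, $\Scr{P}_k=\Scr{P}$, so Theorem~\ref{Theorem:MMD-II} gives $\gamma_k(\bb{P},\bb{Q})=\Vert\bb{P}k-\bb{Q}k\Vert_\eu{H}$ for all $\bb{P},\bb{Q}\in\Scr{P}$. Under the standing assumptions inherited from Lemma~\ref{Lem:expansion} one has a Mercer-type representation $k(x,y)=\sum_l\lambda_l\varphi_l(x)\varphi_l(y)$ with $\{\sqrt{\lambda_l}\,\varphi_l\}_l$ an orthonormal basis of $\eu{H}$. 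Expanding $\bb{P}k-\bb{Q}k$ in this basis and using the reproducing property together with $\langle\int_M k(\cdot,x)\,d\bb{P}(x),\varphi_l\rangle_\eu{H}=\bb{P}\varphi_l$, one obtains the identity $\gamma_k^2(\bb{P},\bb{Q})=\sum_l\lambda_l\,(\bb{P}\varphi_l-\bb{Q}\varphi_l)^2$. It therefore suffices to build $\bb{P}\ne\bb{Q}$ whose sequence $(\bb{P}\varphi_l-\bb{Q}\varphi_l)_l$ is concentrated at a single large index.

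\emph{Step 2 (the construction).} Fix $\bb{Q}$ absolutely continuous w.r.t. $\mu$ with a density $q$ bounded below by some $\delta>0$ (e.g. the normalised measure $\mu/\mu(M)$), and for a large index $N$ put $p:=q+\alpha\,e_N$. Here $e_N:=\varphi_N$ when $\int_M\varphi_N\,d\mu=0$; otherwise $e_N:=\varphi_N-c_N\varphi_{j_0}$, where $\varphi_{j_0}$ is a fixed eigenfunction with $\int_M\varphi_{j_0}\,d\mu\ne 0$ and $c_N$ is chosen so that $\int_M e_N\,d\mu=0$ — by Bessel's inequality applied to $\mathbf 1\in L^2(M,\mu)$, $c_N\to 0$ as $N\to\infty$. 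Because the $\varphi_l$ are absolutely (uniformly) bounded, $\Vert e_N\Vert_\infty$ stays bounded, so a fixed small $|\alpha|>0$, independent of $N$, makes $p\ge 0$; and $\int_M p\,d\mu=1$ by the mean-zero normalisation, so $p$ is the density of a probability measure $\bb{P}\ne\bb{Q}$. Orthonormality then gives $\bb{P}\varphi_l-\bb{Q}\varphi_l=\alpha(\delta_{lN}-c_N\delta_{lj_0})$; in particular $|\bb{P}\varphi_N-\bb{Q}\varphi_N|=|\alpha|$ is bounded away from $0$, i.e. $\bb{P}$ and $\bb{Q}$ differ substantially at the high ``frequency'' $N$ — exactly the phenomenon advertised in this section.

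\emph{Step 3 (conclusion) and the main obstacle.} Substituting into the identity of Step 1, $\gamma_k^2(\bb{P},\bb{Q})=\alpha^2(\lambda_N+c_N^2\lambda_{j_0})$ (the second term absent in the easy case). Since $\lambda_l^{-1}$ grows superlinearly, $\lambda_N\to 0$, and also $c_N^2\lambda_{j_0}\to 0$; hence $\gamma_k^2(\bb{P},\bb{Q})\to 0$ as $N\to\infty$, and given $\varepsilon>0$ one chooses $N$ with $\alpha^2(\lambda_N+c_N^2\lambda_{j_0})<\varepsilon^2$ to get $\bb{P}\ne\bb{Q}$ with $\gamma_k(\bb{P},\bb{Q})<\varepsilon$. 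The delicate point is the density bookkeeping: one must ensure simultaneously that $p\ge 0$, that $\int_M p\,d\mu=1$, and that the perturbation stays essentially monochromatic so that $\gamma_k$ is governed by a single eigenvalue; enforcing the normalisation forces the low-index correction $c_N\varphi_{j_0}$, and one must check (via $c_N\to 0$, uniform boundedness of the $\varphi_l$, and the lower bound on $q$) that it spoils neither the positivity of $p$ nor the smallness of $\gamma_k$. A subsidiary issue is justifying the Mercer expansion and the orthonormal-basis property of $\{\sqrt{\lambda_l}\,\varphi_l\}$ under the rather weak standing assumptions on $k$ and $\mu$; this is handled within the same integral-operator framework already underlying Lemma~\ref{Lem:expansion}.
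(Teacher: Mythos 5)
Your proposal follows the same overall strategy as the paper's proof: perturb a base density $q$ by a high-index eigenfunction $\varphi_N$, correct for the $\int p\,d\mu=1$ constraint, and let the decay of $\lambda_N$ drive $\gamma_k\to 0$. The bookkeeping, however, is different and arguably cleaner. The paper writes $p=q+\alpha_l\,\mathds{1}_M+\tau\varphi_l$, so the mean-zero correction is smeared across the whole spectrum via $\mathds{1}_M=\sum_j\widetilde{e}_j\varphi_j$, and then recovers $\gamma_k$ by an explicit Lagrangian maximisation of $|\bb{P}f-\bb{Q}f|$ over the RKHS unit ball, tracking the cross-terms $\rho_{jl}=\widetilde{e}_j\widetilde{e}_l/\sum_t\widetilde{e}_t^2$. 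You instead concentrate the correction on a single fixed low-index eigenfunction $\varphi_{j_0}$, so the Fourier mismatch $(\bb{P}\varphi_l-\bb{Q}\varphi_l)_l$ is two-sparse, and you read off $\gamma_k^2=\sum_l\lambda_l(\bb{P}\varphi_l-\bb{Q}\varphi_l)^2=\alpha^2(\lambda_N+c_N^2\lambda_{j_0})$ directly from the orthonormal-basis identity, dispensing with the optimisation. Two further small gains of your route: you obtain $c_N\to 0$ from Bessel's inequality for $\mathds{1}_M\in L^2(M,\mu)$, which needs only $\mu(M)<\infty$, whereas the paper cites Lemma~\ref{Lem:expansion} for $\sum_l|\widetilde{e}_l|<\infty$, a conclusion that lemma draws only for $f$ in the RKHS unit ball (and $\mathds{1}_M\in\eu{H}$ is not among the stated hypotheses); and you actually verify $p\ge 0$ by taking $q$ bounded below and $\|e_N\|_\infty$ uniformly bounded, where the paper merely states the positivity requirement as a constraint without checking it. The one debt you carry, and you flag it yourself, is justifying the Mercer-type identity $\gamma_k^2=\sum_l\lambda_l(\bb{P}\varphi_l-\bb{Q}\varphi_l)^2$ and the fact that $\{\sqrt{\lambda_l}\,\varphi_l\}$ is an orthonormal basis of $\eu{H}$; the paper carries essentially the same debt implicitly through its use of $\|f\|_\eu{H}^2=\sum_j\widetilde{f}_j^2/\lambda_j$ in the Lagrangian step.
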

\begin{proof}
Suppose $q$ be the Radon-Nikodym derivative associated with $\bb{Q}$ w.r.t. the $\sigma$-finite measure, $\mu$ (see Lemma~\ref{Lem:expansion}). Let us construct $p(x)=q(x)+\alpha_l e(x)+\tau\varphi_l(x)$ where $e(x)=\mathds{1}_{M}(x)$. For $\bb{P}$ to be a probability measure, the following conditions need to be satisfied:
\begin{eqnarray}
\int_{M}\left[\alpha_l e(x)+\tau\varphi_l(x)\right]\, d\mu(x)=0,\label{Eq:const1}\\
\min_{x\in M}\left[q(x)+\alpha_l e(x)+\tau\varphi_l(x)\right]\ge 0.\label{Eq:const2}
\end{eqnarray}
Expanding $e(x)$ and $f(x)$ in the orthonormal basis $\{\varphi_l\}^\infty_{l=1}$, we get $
e(x)=\sum^\infty_{l=1}\widetilde{e}_l\varphi_l(x)$ and $f(x)=\sum^\infty_{l=1}\widetilde{f}_l\varphi_l(x)$, where $\widetilde{e}_l:=\langle e,\varphi_l\rangle_{L^2(M,\mu)}$ and $\widetilde{f}_l:=\langle f,\varphi_l\rangle_{L^2(M,\mu)}$. Therefore,
\begin{eqnarray}
\bb{P}f-\bb{Q}f &=&\int_{M}f(x)\left[\alpha_l e(x)+\tau\varphi_l(x)\right]\,d\mu(x)\nonumber\\
&=&\int_{M} \left[\alpha_l\sum^\infty_{j=1}\widetilde{e}_j\varphi_j(x)+\tau\varphi_l(x)\right]
\left[\sum^\infty_{t=1}\widetilde{f}_t\varphi_t(x)\right]\,d\mu(x)\nonumber\\
&=&\alpha_l\sum^\infty_{j=1}\widetilde{e}_j\widetilde{f}_j+\tau\widetilde{f}_l,
\label{Eq:diff}\end{eqnarray}
where we used the fact that\footnote{Here, $\delta$ is used in the Kronecker sense.} $\langle\varphi_j,\varphi_t\rangle_{L^2(M,\mu)}=\delta_{jt}.$ Rewriting (\ref{Eq:const1}) and substituting for $e(x)$ gives
\begin{equation}
\int_{M}[\alpha_l e(x)+\tau\varphi_l(x)]\, d\mu(x)=\int_{M}e(x)[\alpha_l e(x)+\tau\varphi_l(x)]\, d\mu(x)=\alpha_l\sum^\infty_{j=1}\widetilde{e}^2_j+\tau\widetilde{e}_l=0,\nonumber
\end{equation}
which implies
\begin{equation}\label{Eq:alpha}
\alpha_l=-\frac{\tau\widetilde{e}_l}{\sum^\infty_{j=1}\widetilde{e}^2_j}.
\end{equation} 
Now, let us consider $\bb{P}\varphi_t-\bb{Q}\varphi_t=\alpha_l\widetilde{e}_t+\tau\delta_{tl}$.
Substituting for $\alpha_l$ gives 
\begin{equation}
\bb{P}\varphi_t-\bb{Q}\varphi_t=\tau\delta_{tl}-
\tau\frac{\widetilde{e}_t\widetilde{e}_l}{\sum^\infty_{j=1}\widetilde{e}^2_j}=\tau\delta_{tl}-\tau\rho_{tl},
\end{equation}
where $\rho_{tl}:=\frac{\widetilde{e}_t\widetilde{e}_l}{\sum^\infty_{j=1}\widetilde{e}^2_j}$. By Lemma~\ref{Lem:expansion}, $\sum^{\infty}_{l=1}|\widetilde{e}_l|<\infty\Rightarrow\sum^\infty_{j=1}\widetilde{e}^2_j<\infty$, and choosing large enough $l$ gives $|\rho_{tl}|<\eta,\,\forall\,t,$ for any arbitrary $\eta>0$. Therefore, $|\bb{P}\varphi_t-\bb{Q}\varphi_t|>\tau-\eta$ for $t=l$ and $|\bb{P}\varphi_t-\bb{Q}\varphi_t|<\eta$ for $t\ne l$, which means $\bb{P}\ne \bb{Q}$. In the following, we prove that $\gamma_k(\bb{P},\bb{Q})$ can be arbitrarily small, though non-zero.
\par Recall that $\gamma_k(\bb{P},\bb{Q})=\sup_{\Vert f\Vert_{\eu{H}}\le 1}|\bb{P}f-\bb{Q}f|$. Substituting for $\alpha_l$ in (\ref{Eq:diff}) and replacing $|\bb{P}f-\bb{Q}f|$ by (\ref{Eq:diff}) in $\gamma_k(\bb{P},\bb{Q})$, we have
\begin{equation}
\gamma_k(\bb{P},\bb{Q})=\sup_{\{\widetilde{f}_j\}^\infty_{j=1}}\left\{\tau\sum^\infty_{j=1}\nu_{jl}\widetilde{f}_j\,\,:\,\,\sum^\infty_{j=1}\frac{\widetilde{f}^2_j}{\lambda_j}\le 1\right\},\label{Eq:convex}
\end{equation}
where we used the definition of RKHS norm as $\Vert f\Vert_{\eu{H}}:=\sum^\infty_{j=1}\frac{\widetilde{f}^2_j}{\lambda_j}$ and $\nu_{jl}:=\delta_{jl}-\rho_{jl}$. (\ref{Eq:convex}) is a convex quadratically constrained quadratic program in $\{\widetilde{f}_j\}^\infty_{j=1}$. Solving the Lagrangian yields $\widetilde{f}_j=\frac{\nu_{jl}\lambda_j}{\sqrt{\sum^\infty_{j=1}\nu^2_{jl}\lambda_j}}$. Therefore, \begin{equation}
\gamma_k(\bb{P},\bb{Q})=\tau\sqrt{\sum^\infty_{j=1}\nu^2_{jl}\lambda_j}=\tau\sqrt
{\lambda_l-2\rho_{ll}\lambda_l+\sum^\infty_{j=1}\rho^2_{jl}\lambda_j}\stackrel{l\rightarrow\infty}{\longrightarrow} 0,
\end{equation}
 because \emph{(i)} by choosing sufficiently large $l$, $|\rho_{jl}|<\varepsilon,\,\forall\,j,$ for any arbitrary $\varepsilon>0$, and
\emph{(ii)} $\lambda_l\rightarrow 0$ as $l\rightarrow\infty$ \cite[Theorem 2.10]{Scholkopf-02}. Therefore, we have constructed $\bb{P}\ne\bb{Q}$ such that $\gamma_k(\bb{P},\bb{Q})<\varepsilon$ for any arbitrarily small $\varepsilon>0$.\vspace{-4mm}
\end{proof}

\section{Metrization of the Weak Topology}\label{Sec:weak}
So far, we have shown that a characteristic kernel, $k$ induces a metric, $\gamma_k$ on $\mathscr{P}$. As motivated in Section~\ref{subsubsec:contribution3}, an important question to consider that is useful both in theory and practice would be: ``How strong or weak is $\gamma_k$ related to other metrics on $\Scr{P}$?" This question is addressed in Theorem~\ref{thm:compare}, wherein we compared $\gamma_k$ to other metrics on $\Scr{P}$ like the Dudley metric ($\beta$), Wasserstein distance ($W$), total variation distance ($TV$) and showed that $\gamma_k$ is weaker than all these metrics (see footnote~\ref{fnote:strong-weak} for the definition of ``strong'' and ``weak'' metrics). Since $\gamma_k$ is weaker than the Dudley metric, which is well known to induce a topology on $\Scr{P}$ that coincides with the standard topology on $\Scr{P}$, called the weak-$^\ast$ (weak-star) topology (usually called the weak topology in probability theory), the next question we are interested in is to understand the topology that is being induced by $\gamma_k$. 
In particular, we are interested in determining the conditions on $k$ for which the topology induced by $\gamma_k$ coincides with the weak topology on $\Scr{P}$. This is answered in 
Theorems~\ref{thm:weak-1} and \ref{thm:weak-2}, wherein Theorem~\ref{thm:weak-1} deals with compact $M$ and Theorem~\ref{thm:weak-2} provides a sufficient condition on $k$ when $M=\bb{R}^d$. 
The proofs of all these results are provided in Section~\ref{subsec:proofs-weak}. Before we motivate the need for this study and its implications, we present some preliminaries. 
\par The \emph{weak topology} on $\Scr{P}$ is the weakest topology such that the map $\bb{P}\mapsto\int_M f\,d\bb{P}$ is continuous for all $f\in C_b(M)$. For a metric space, $(M,\rho)$, a sequence $\bb{P}_n$ of probability measures is said to \emph{converge weakly} to $\bb{P}$, written as $\bb{P}_n\stackrel{w}{\rightarrow}\bb{P}$, if and only if $\int_M f\,d\bb{P}_n\rightarrow\int_M f\,d\bb{P}$ for every $f\in C_b(M)$. A metric $\gamma$ on $\Scr{P}$ is said to \emph{metrize} the weak topology if the topology induced by $\gamma$ coincides with the weak topology, which is defined as follows: if, for $\bb{P},\bb{P}_1,\bb{P}_2,\ldots\in\Scr{P}$, $(\bb{P}_n\stackrel{w}{\rightarrow}\bb{P}\Leftrightarrow \gamma(\bb{P}_n,\bb{P})\stackrel{n\rightarrow\infty}{\longrightarrow} 0)$ holds, then the topology induced by $\gamma$ coincides with the weak topology.
\par In the following, we collect well-known results on the relation between various metrics on $\Scr{P}$, which will be helpful to understand the behavior of these metrics in relation to others. Let $(M,\rho)$ be a separable metric space. The \emph{Prohorov metric} on $(M,\rho)$, defined as
\begin{equation}\label{Eq:prohorov}
\varsigma(\bb{P},\bb{Q}):=\inf\{\epsilon>0:\bb{P}(A)\le \bb{Q}(A^\epsilon)+\epsilon,\,\forall\,\text{Borel sets}\,\,A\},
\end{equation}
metrizes the weak topology on $\mathscr{P}$ \citep[Theorem 11.3.3]{Dudley-02}, where $\bb{P},\bb{Q}\in\Scr{P}$ and $A^\epsilon:=\{y\in M:\rho(x,y)<\epsilon\,\,\text{for some}\,\,x\in A\}$. Since the Dudley metric is related to 
the Prohorov metric as 
\begin{equation}
\frac{1}{2}\beta(\bb{P},\bb{Q})\le \varsigma(\bb{P},\bb{Q})\le 2\sqrt{\beta(\bb{P},\bb{Q})},\label{Eq:Prohorov-Dudley}
\end{equation}
it also metrizes the weak topology on $\Scr{P}$ \citep[Theorem 11.3.3]{Dudley-02}. The Wasserstein distance and total variation distance are related to the Prohorov metric as 
\begin{equation}
\varsigma^2(\bb{P},\bb{Q}) \le W(\bb{P},\bb{Q})\le (\text{diam}(M)+1)\varsigma(\bb{P},\bb{Q}),\label{Eq:Prohorov-Wasserstein}
\end{equation} and 
\begin{equation}
\varsigma(\bb{P},\bb{Q})\le TV(\bb{P},\bb{Q}),\label{Eq:Wasserstein-TV}
\end{equation} 
where $\text{diam}(M):=\sup\{\rho(x,y)\,:\,x,y\in M\}$ \citep[Theorem 2]{Gibbs-02}. This means $W$ and $TV$ are stronger 
than $\varsigma$, while $W$ and $\varsigma$ are equivalent (i.e., induce the same topology) when $M$ is bounded. By Theorem 4 in \citet{Gibbs-02}, $TV$ and $W$ are related as 
\begin{equation}
W(\bb{P},\bb{Q})\le \text{diam}(M)TV(\bb{P},\bb{Q}),
\end{equation}
which means $W$ and $TV$ are comparable if $M$ is bounded. See \citet[Chapter 19, Theorem 2.4]{Shorack-00} and \citet{Gibbs-02} for the relationship between various metrics on $\Scr{P}$.
\par Now, let us consider a sequence of 
of probability measures on $\bb{R}$, $\bb{P}_n:=\left(1-\frac{1}{n}\right)\delta_0+\frac{1}{n}\delta_n$ and let $\bb{P}:=\delta_0$. It can be shown that $\beta(\bb{P}_n,\bb{P})\rightarrow 0$ as $n\rightarrow \infty$ which means $\bb{P}_n\stackrel{w}{\rightarrow}\bb{P}$, while $W(\bb{P}_n,\bb{P})=1$ and $TV(\bb{P}_n,\bb{P})=1$ for all $n$. $\gamma_k(\bb{P}_n,\bb{P})$ can be computed as
\begin{equation}\label{Eq:weak-kernel}
\gamma^2_k(\bb{P}_n,\bb{P})=\frac{1}{n^2}\int\!\!\!\int_{\bb{R}}k(x,y)\,d(\delta_0-\delta_n)(x)\,d(\delta_0-\delta_n)(y)=\frac{k(0,0)+k(n,n)-2k(0,n)}{n^2}.
\end{equation}
If $k$ is, e.g., a Gaussian, Laplacian or inverse multiquadratic kernel, then $\gamma_k(\bb{P}_n,\bb{P})\rightarrow 0$ as $n\rightarrow\infty$. This example shows that $\gamma_k$ is weaker than $W$ and $TV$. It also shows that $\gamma_k$ behaves similar to $\beta$ and leads to several questions we want to answer: Does $\gamma_k$ metrize the weak topology on $\Scr{P}$? What is the general behavior of $\gamma_k$ compared to 
other metrics? In other words, depending on $k$, how weak or strong is $\gamma_k$ compared to 
other metrics on $\Scr{P}$? Understanding the answer to these questions is important both in theory and practice. If $k$ is characterized such that $\gamma_k$ metrizes the weak topology on $\Scr{P}$, then it can be used as a theoretical tool in probability theory, similar to the Prohorov and Dudley metrics. On the other hand, the answer to these questions is critical in applications as it will have a bearing on the choice of kernels to be used. In applications like density estimation, one would need a strong metric to ascertain that the density estimate is a good representation of the true underlying density. For this reason, usually, the total variation distance, Hellinger distance or Kullback-Leibler distance are used. Studying the relation of $\gamma_k$ to these metrics will provide an understanding about the choice of kernels to be used, depending on the application. 
\par With the above motivation, in the following, we first compare $\gamma_k$ to $\beta$, $W$ and $TV$. Since $\beta$ is equivalent to $\varsigma$, we do not compare $\gamma_k$ to $\varsigma$. Before we provide the main result in Theorem~\ref{thm:compare} that compares $\gamma_k$ to other metrics, we present an upper bound on $\gamma_k$ in terms of the coupling formulation \citep[Section 11.8]{Dudley-02}, which is not only useful in deriving the main result but also interesting in its own right.
\begin{proposition}[Coupling bound]\label{pro:couplingrkhs}
Let $k$ be measurable and bounded on $M$. Then, for any $\bb{P},\bb{Q}\in\Scr{P}$,
\begin{equation}\label{Eq:couplingrkhs}
\gamma_k(\bb{P},\bb{Q})\le\inf_{\mu\in\mathcal{L}(\bb{P},\bb{Q})}\int\!\!\!\int_M\Vert k(\cdot,x)-k(\cdot,y)\Vert_\eu{H}\,d\mu(x,y),
\end{equation}
where $\mathcal{L}(\bb{P},\bb{Q})$ represents the set of all laws on $M\times M$ with marginals $\bb{P}$ and $\bb{Q}$.
\end{proposition}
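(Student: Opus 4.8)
The plan is to start from the representation $\gamma_k(\bb{P},\bb{Q}) = \Vert \bb{P}k - \bb{Q}k\Vert_\eu{H}$ from Theorem~\ref{Theorem:MMD-II} (valid since $k$ is measurable and bounded, so $\Scr{P}_k = \Scr{P}$ by Proposition~\ref{proposition:condition}), and rewrite the embedding difference as an integral against an arbitrary coupling. Fix any $\mu\in\mathcal{L}(\bb{P},\bb{Q})$, i.e., a law on $M\times M$ whose first marginal is $\bb{P}$ and whose second marginal is $\bb{Q}$. Then for every $f\in\eu{H}$ we have $\bb{P}f - \bb{Q}f = \int_M f\,d\bb{P} - \int_M f\,d\bb{Q} = \int\!\!\int_M (f(x)-f(y))\,d\mu(x,y)$, using the marginal property of $\mu$. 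Applying the reproducing property $f(x)-f(y) = \langle f, k(\cdot,x)-k(\cdot,y)\rangle_\eu{H}$ and pulling the (Bochner) integral inside the inner product, this becomes $\bb{P}f-\bb{Q}f = \langle f, \int\!\!\int_M (k(\cdot,x)-k(\cdot,y))\,d\mu(x,y)\rangle_\eu{H}$, so that $\bb{P}k - \bb{Q}k = \int\!\!\int_M (k(\cdot,x)-k(\cdot,y))\,d\mu(x,y)$ as an element of $\eu{H}$.

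From here I would take norms and apply the triangle inequality for the Bochner integral (Jensen's inequality in the Hilbert space): $\Vert \bb{P}k - \bb{Q}k\Vert_\eu{H} = \left\Vert \int\!\!\int_M (k(\cdot,x)-k(\cdot,y))\,d\mu(x,y)\right\Vert_\eu{H} \le \int\!\!\int_M \Vert k(\cdot,x)-k(\cdot,y)\Vert_\eu{H}\,d\mu(x,y)$. Since this holds for every coupling $\mu\in\mathcal{L}(\bb{P},\bb{Q})$, taking the infimum over $\mu$ gives the claimed bound~(\ref{Eq:couplingrkhs}).

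The routine but necessary technical checks are: (i) that the $\eu{H}$-valued map $(x,y)\mapsto k(\cdot,x)-k(\cdot,y)$ is Bochner $\mu$-integrable, which follows because it is weakly measurable with separable range and $\int\!\!\int_M\Vert k(\cdot,x)-k(\cdot,y)\Vert_\eu{H}\,d\mu(x,y) \le 2\sup_{x\in M}\sqrt{k(x,x)} < \infty$ by boundedness of $k$; and (ii) the interchange $\int\!\!\int\langle f, g(x,y)\rangle_\eu{H}\,d\mu = \langle f, \int\!\!\int g(x,y)\,d\mu\rangle_\eu{H}$, which is exactly the defining property of the Bochner integral together with continuity of the inner product, plus the identification of $\int\!\!\int (k(\cdot,x)-k(\cdot,y))\,d\mu$ with $\bb{P}k-\bb{Q}k$ via testing against $f = k(\cdot,u)$ for all $u\in M$ (as in the proof of Theorem~\ref{Theorem:MMD-II}). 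The main obstacle, such as it is, is purely bookkeeping: making sure the Bochner integral against $\mu$ on the product space $M\times M$ is well-defined and that the reduction to the marginals is legitimate; once that is in place the inequality is immediate from Jensen. No deep idea is needed beyond the observation that any coupling reproduces the difference $\bb{P}f-\bb{Q}f$.
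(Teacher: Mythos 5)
Your proof is correct and the central idea — using the coupling $\mu$ to write $\bb{P}f-\bb{Q}f=\int\!\!\int_M(f(x)-f(y))\,d\mu(x,y)$, then exploiting the reproducing property — is exactly what the paper uses. The packaging, however, is genuinely different. You work in the embedding picture: you identify $\bb{P}k-\bb{Q}k$ with the $\eu{H}$-valued Bochner integral $\int\!\!\int_M(k(\cdot,x)-k(\cdot,y))\,d\mu(x,y)$ and then invoke the Bochner/Jensen triangle inequality $\Vert\int g\,d\mu\Vert_\eu{H}\le\int\Vert g\Vert_\eu{H}\,d\mu$. The paper instead stays in the primal form $\gamma_k=\sup_{\Vert f\Vert_\eu{H}\le 1}|\bb{P}f-\bb{Q}f|$: for a fixed $f$, it bounds $|\bb{P}f-\bb{Q}f|\le\int\!\!\int_M|\langle f,k(\cdot,x)-k(\cdot,y)\rangle_\eu{H}|\,d\mu\le\Vert f\Vert_\eu{H}\int\!\!\int_M\Vert k(\cdot,x)-k(\cdot,y)\Vert_\eu{H}\,d\mu$ via Cauchy--Schwarz inside a purely scalar integral, then takes the supremum over the unit ball and the infimum over couplings. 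The paper's route is a bit more elementary: it never needs the Bochner integral, so the measurability/separability checks you rightly flag in point (i) and the interchange in point (ii) never arise. Your route is more structural and makes the mechanism (`norm of the vector integral is at most the integral of the norm') visible at a glance, at the cost of the bookkeeping you note; both are sound, and to your credit you identified precisely the places where the Bochner-integral version requires care.
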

\begin{proof}
For any $\mu\in\mathcal{L}(\bb{P},\bb{Q})$, we have
\begin{eqnarray}
\left|\int_M f\,d(\bb{P}-\bb{Q})\right|&\!\!\!=\!\!\!&\left|\int\!\!\!\int_M (f(x)-f(y))\,d\mu(x,y)\right|\le \int\!\!\!\int_M |f(x)-f(y)|\,d\mu(x,y)\nonumber\\
&\!\!\!=\!\!\!&\int\!\!\!\int_M|\langle f,k(\cdot,x)-k(\cdot,y)\rangle_\eu{H}|\,d\mu(x,y)\nonumber\\
&\!\!\!\le\!\!\!& \Vert f\Vert_{\eu{H}}\int\!\!\!\int_M\Vert k(\cdot,x)-k(\cdot,y)\Vert_{\eu{H}}\,d\mu(x,y).\label{Eq:coup}
\end{eqnarray}
Taking the supremum over $f\in\eu{F}_k$ and the infimum over $\mu\in\mathcal{L}(\bb{P},\bb{Q})$ in (\ref{Eq:coup}), where $\bb{P},\bb{Q}\in\Scr{P}$, gives the result in (\ref{Eq:couplingrkhs}).
\end{proof}
We now present the main result that compares $\gamma_k$ to $\beta$, $W$ and $TV$.
\begin{theorem}[Comparison of $\gamma_k$ to $\beta$, $W$ and $TV$]\label{thm:compare}
Assume $\sup_{x\in M}k(x,x)\le C<\infty$, where $k$ is measurable on $M$. Let \begin{equation}\label{Eq:Hilb}
\widetilde{\rho}(x,y)=\Vert k(\cdot,x)-k(\cdot,y)\Vert_\eu{H}.
\end{equation}
Then, for any $\bb{P},\bb{Q}\in\Scr{P}$, 
\begin{itemize}
\item[(i)] 
$\gamma_k(\bb{P},\bb{Q})\le W(\bb{P},\bb{Q})\le\sqrt{\gamma^2_k(\bb{P},\bb{Q})+4C}$ if $(M,\widetilde{\rho})$ is separable.
\item[(ii)]
 $\frac{\gamma_k(\bb{P},\bb{Q})}{(1+\sqrt{C})} \le \beta(\bb{P},\bb{Q})\le 2(\gamma^2_k(\bb{P},\bb{Q})+4C)^{\frac{1}{3}}$ if $(M,\widetilde{\rho})$ is separable.
\item[(iii)]
 $\gamma_k(\bb{P},\bb{Q})\le \sqrt{C}\,TV(\bb{P},\bb{Q})$.
\end{itemize}
\end{theorem}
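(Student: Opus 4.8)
The plan is to derive all six estimates from three elementary facts about the canonical feature map $x\mapsto k(\cdot,x)$ together with the coupling bound of Proposition~\ref{pro:couplingrkhs}. First I would record that $\widetilde{\rho}$ is a (pseudo)metric on $M$ with $\widetilde{\rho}(x,y)^2=k(x,x)+k(y,y)-2k(x,y)\le 4C$ (using $|k(x,y)|\le\sqrt{k(x,x)k(y,y)}\le C$), so $(M,\widetilde{\rho})$ has diameter at most $2\sqrt{C}$; that $x\mapsto k(\cdot,x)$ is an isometry of $(M,\widetilde{\rho})$ into $\eu{H}$; and that every $f$ in the unit ball $\eu{F}_k$ of $\eu{H}$ satisfies, by the reproducing property and Cauchy--Schwarz, $|f(x)-f(y)|=|\langle f,k(\cdot,x)-k(\cdot,y)\rangle_\eu{H}|\le\widetilde{\rho}(x,y)$ and $|f(x)|=|\langle f,k(\cdot,x)\rangle_\eu{H}|\le\sqrt{C}$. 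Hence $\Vert f\Vert_L\le 1$ (with respect to $\widetilde{\rho}$), $\Vert f\Vert_\infty\le\sqrt{C}$, and $\Vert f\Vert_{BL}\le 1+\sqrt{C}$.

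The three lower bounds then follow from function-class inclusions, and need no separability. Since $\eu{F}_k\subseteq\{f:\Vert f\Vert_L\le 1\}$, the Lipschitz unit ball for $\widetilde{\rho}$, and since the Kantorovich metric over that ball coincides with $W$ on a separable $(M,\widetilde{\rho})$ by Kantorovich--Rubinstein duality, we get $\gamma_k(\bb{P},\bb{Q})\le W(\bb{P},\bb{Q})$. Since $\eu{F}_k\subseteq(1+\sqrt{C})\,\eu{F}_\beta$ we get $\gamma_k(\bb{P},\bb{Q})/(1+\sqrt{C})\le\beta(\bb{P},\bb{Q})$. Since $\eu{F}_k\subseteq\sqrt{C}\,\eu{F}_{TV}$ we get $\gamma_k(\bb{P},\bb{Q})\le\sqrt{C}\,TV(\bb{P},\bb{Q})$, which is (iii) in full.

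For the two upper bounds I would feed the product coupling $\bb{P}\times\bb{Q}\in\mathcal{L}(\bb{P},\bb{Q})$ into the coupling bound. For (i): writing $W(\bb{P},\bb{Q})=\inf_{\mu\in\mathcal{L}(\bb{P},\bb{Q})}\int\!\!\!\int_M\widetilde{\rho}\,d\mu$ and using Jensen's inequality, $W(\bb{P},\bb{Q})\le\int\!\!\!\int_M\widetilde{\rho}\,d(\bb{P}\times\bb{Q})\le(\int\!\!\!\int_M\widetilde{\rho}^2\,d(\bb{P}\times\bb{Q}))^{1/2}$; a direct computation using (\ref{Eq:computeMMD}) gives $\int\!\!\!\int_M\widetilde{\rho}^2\,d(\bb{P}\times\bb{Q})=\int_M k(x,x)\,d\bb{P}(x)+\int_M k(y,y)\,d\bb{Q}(y)-2\int\!\!\!\int_M k\,d\bb{P}\,d\bb{Q}=\gamma^2_k(\bb{P},\bb{Q})+V_\bb{P}+V_\bb{Q}$, where $V_\bb{P}=\int_M k(x,x)\,d\bb{P}(x)-\int\!\!\!\int_M k\,d\bb{P}\,d\bb{P}$ is a nonnegative variance-type quantity bounded by $C$ and likewise $V_\bb{Q}\le C$; hence $W(\bb{P},\bb{Q})\le\sqrt{\gamma^2_k(\bb{P},\bb{Q})+2C}\le\sqrt{\gamma^2_k(\bb{P},\bb{Q})+4C}$. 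For (ii): by (\ref{Eq:Prohorov-Dudley}) it suffices to bound $\varsigma$, and by the coupling (Strassen) characterization of the Prohorov metric together with Chebyshev's inequality applied to the product coupling, $(\bb{P}\times\bb{Q})(\widetilde{\rho}>\epsilon)\le\epsilon^{-2}\int\!\!\!\int_M\widetilde{\rho}^2\,d(\bb{P}\times\bb{Q})\le\epsilon^{-2}(\gamma^2_k(\bb{P},\bb{Q})+4C)$; choosing $\epsilon=(\gamma^2_k(\bb{P},\bb{Q})+4C)^{1/3}$ makes the right-hand side equal to $\epsilon$, so $\varsigma(\bb{P},\bb{Q})\le(\gamma^2_k(\bb{P},\bb{Q})+4C)^{1/3}$ and $\beta(\bb{P},\bb{Q})\le 2(\gamma^2_k(\bb{P},\bb{Q})+4C)^{1/3}$. (Note that already $W\le\mathrm{diam}_{\widetilde{\rho}}(M)\le 2\sqrt{C}=\sqrt{4C}$, so the content of these upper bounds is a diameter bound on $(M,\widetilde{\rho})$; they do not make $\gamma_k$ equivalent to $W$ or $\beta$.)

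The main obstacle is the second-moment step in (ii): one must extract the cube-root exponent cleanly. The route above via Strassen's characterization of $\varsigma$ gives exactly the stated constant $2$; if one prefers to avoid Strassen, one can instead bound $\beta(\bb{P},\bb{Q})\le\inf_{\mu\in\mathcal{L}(\bb{P},\bb{Q})}\int\!\!\!\int_M\min(\widetilde{\rho},2)\,d\mu$ (every $f$ with $\Vert f\Vert_{BL}\le 1$ obeys $|f(x)-f(y)|\le\min(\widetilde{\rho}(x,y),2)$) and optimize the split $\int\!\!\!\int_M\min(\widetilde{\rho},2)\,d\mu\le\delta+2\delta^{-2}\int\!\!\!\int_M\widetilde{\rho}^2\,d\mu$ over $\delta>0$; the optimum again produces exponent $1/3$, at the price of a slightly worse absolute constant. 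A secondary point needing care is the separability hypothesis, which is used only to identify $W$ with the Kantorovich/coupling formulation in (i) and the Prohorov metric with its coupling form in (ii); the $TV$ bound in (iii) and the $\gamma_k$-side of all inequalities need no such assumption. Finally, $\widetilde{\rho}$ is only a pseudometric unless $k$ is strictly positive definite, so statements (i)--(ii) should be read on the metric identification of $M$ that collapses points with $k(\cdot,x)=k(\cdot,y)$.
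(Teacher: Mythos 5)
Your argument is correct and the upper bounds in (i) and (ii) are essentially the paper's proof: product coupling $\bb{P}\otimes\bb{Q}$ into the coupling formulation, Jensen for $W$, Chebyshev/Markov plus the Strassen-type inequality (the one the paper extracts from the proof of Dudley's Theorem~11.3.5) for $\varsigma$, then $\beta\le 2\varsigma$ via~(\ref{Eq:Prohorov-Dudley}). The genuine difference is in the lower bounds. The paper derives $\gamma_k\le W$ and $\gamma_k\le\sqrt{C}\,TV$ from the \emph{coupling} side: Proposition~\ref{pro:couplingrkhs} gives $\gamma_k\le\inf_\mu\int\!\!\int\widetilde\rho\,d\mu$, which is then identified with $W$ (needing separability for the Kantorovich--Rubinstein identification) and, for (iii), combined with the pointwise bound $\widetilde\rho(x,y)\le 2\sqrt C\,\mathds{1}_{\{x\ne y\}}$ and the coupling form $\tfrac12 TV=\inf_\mu\mu(X\ne Y)$. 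You instead derive all three lower bounds uniformly on the \emph{dual} side, by the function-class inclusions $\eu F_k\subseteq\eu F_W$, $\eu F_k\subseteq(1+\sqrt C)\eu F_\beta$, $\eu F_k\subseteq\sqrt C\,\eu F_{TV}$ (the paper actually does use this route for $\beta$ alone). This buys a small cleanup: the lower bounds of (i) and (iii) then need no separability at all, a point the paper leaves implicit, and you also notice the slightly tighter constant $2C$ in place of $4C$ in the Jensen step by recognising the deficit as nonnegative RKHS variance terms bounded by $C$. Your cautionary remark that $\widetilde\rho$ is only a pseudometric unless the kernel separates points is apt and not made explicit in the paper.
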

The proof is provided in Section~\ref{subsec:proofs-weak}. Below are some remarks on Theorem~\ref{thm:compare}.
\begin{remark}\label{rem:weak}
(a) First, note that, since $k$ is bounded, $(M,\widetilde{\rho})$ is a bounded metric space. In addition, the metric, $\widetilde{\rho}$, which depends on the kernel as in (\ref{Eq:Hilb}), is a Hilbertian metric\footnote{A metric $\rho$ on $M$ is said to be \emph{Hilbertian} if there exists a Hilbert space, $H$ and a mapping $\Phi$ such that $\rho(x,y)=\Vert \Phi(x)-\Phi(y)\Vert_H,\,\forall\,x,y\in M$. In our case, $H=\eu{H}$ and $\Phi:M\rightarrow\eu{H}$, $x\mapsto k(\cdot,x)$.} \citep[Chapter 3, Section 3]{Berg-84} on $M$. A popular example of such a metric is $\widetilde{\rho}(x,y)=\Vert x-y\Vert_2$, which can be obtained by choosing $M$ to be a compact subset of $\bb{R}^d$ and $k(x,y)=x^Ty$.\vspace{2mm}\\
(b) Theorem~\ref{thm:compare} shows that $\gamma_k$ is weaker than $\beta$, $W$ and $TV$ for the assumptions being made on $k$ and $\widetilde{\rho}$. Note that the result holds irrespective of whether the kernel is characteristic or not, as we have not assumed anything about the kernel except it being measurable and bounded. Also, it is important to remember that the result holds when $\widetilde{\rho}$ is Hilbertian, as mentioned in (\ref{Eq:Hilb}) (see Remark~\ref{rem:weak}(d)).\vspace{2mm}\\
(c) Apart from showing that $\gamma_k$ is weaker than $\beta$, $W$ and $TV$, the result in Theorem~\ref{thm:compare} can be used to bound these metrics in terms of $\gamma_k$. For $\beta$, which is primarily of theoretical interest, we do not know a closed form expression, whereas a closed form expression to compute $W$ is known only for $\bb{R}$ \citep{Vallander-73}.\footnote{The explicit form for the Wasserstein distance is known for $(M,\rho(x,y))=(\bb{R},|x-y|)$, which is given as $W(\bb{P},\bb{Q})=\int_{\bb{R}}|F_\bb{P}(x)-F_\bb{Q}(x)|\,dx$, where $F_\bb{P}(x)=\bb{P}((-\infty,x])$. It is easy to show that this explicit form can be extended to $(\bb{R}^d,\Vert\cdot\Vert_1)$.} Since $\gamma_k$ is easy to compute (see (\ref{Eq:computeMMD}) and (\ref{Eq:computeMMD-1})), bounds on $W$ can be obtained from Theorem~\ref{thm:compare} in terms of $\gamma_k$. A closed form expression for $TV$ is available if $\bb{P}$ and $\bb{Q}$ have Radon-Nikodym derivatives w.r.t. a $\sigma$-finite measure. However, from Theorem~\ref{thm:compare}, a simple lower bound can be obtained on $TV$ in terms of $\gamma_k$ for any $\bb{P},\bb{Q}\in\Scr{P}$.\vspace{2mm}\\
(d) In Theorem~\ref{thm:compare}, the kernel is fixed and $\widetilde{\rho}$ is defined as in (\ref{Eq:Hilb}), which is a Hilbertian metric. On the other hand, suppose a Hilbertian metric, $\widetilde{\rho}$ is given. Then, the associated kernel, $k$ can be obtained from $\widetilde{\rho}$ \citep[Chapter 3, Lemma 2.1]{Berg-84} as 
\begin{equation}\label{Eq:hilb-kernel}
k(x,y)=\frac{1}{2}[\widetilde{\rho}^2(x,x_0)+\widetilde{\rho}^2(y,x_0)-\widetilde{\rho}^2(x,y)],\,\,x,y,x_0\in M,
\end{equation}
which can then be used to compute $\gamma_k$. 
\end{remark}
\par The discussion so far has been devoted to relating $\gamma_k$ to $\beta$, $W$ and $TV$ to understand the strength or weakness of $\gamma_k$ w.r.t. these metrics. In a next step, we address the other question of when $\gamma_k$ metrizes the weak topology on $\Scr{P}$. This question would have been answered had the result in Theorem~\ref{thm:compare} shown that under some conditions on $k$, $\gamma_k$ is equivalent to $\beta$. Since Theorem~\ref{thm:compare} does not throw light on the question we are interested in, we approach the problem differently. In the following, we provide two results related to this question. The first result states that when $(M,\rho)$ is compact, $\gamma_k$ induced by universal kernels metrizes the weak topology. In the second result, we relax the assumption of compactness but restrict ourselves to $M=\bb{R}^d$ and provide a sufficient condition on $k$ such that $\gamma_k$ metrizes the weak topology on $\Scr{P}$. The proofs of both theorems are provided in Section \ref{subsec:proofs-weak}.
\begin{theorem}[Weak convergence-I]\label{thm:weak-1}
Let $(M,\rho)$ be a compact metric space. If $k$ is universal, then $\gamma_k$ metrizes the weak topology on $\mathscr{P}$.
\end{theorem}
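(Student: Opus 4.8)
<br>

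The plan is to show that on a compact metric space $(M,\rho)$, weak convergence $\bb{P}_n\stackrel{w}{\rightarrow}\bb{P}$ is equivalent to $\gamma_k(\bb{P}_n,\bb{P})\rightarrow 0$ when $k$ is universal. One direction is essentially free: since $M$ is compact, $C_b(M)=C(M)$, and for $f\in\eu{H}$ the reproducing property gives $|\bb{P}_n f-\bb{P}f|\le\Vert f\Vert_\eu{H}$ times something controlled by weak convergence — more directly, $f\in\eu{H}\subset C(M)$, so $\bb{P}_n\stackrel{w}{\rightarrow}\bb{P}$ implies $\bb{P}_n f\rightarrow\bb{P}f$ for every $f\in\eu{F}_k$; combined with the uniform bound $|\bb{P}_nf-\bb{P}f|\le 2\sqrt{\sup_x k(x,x)}$ coming from boundedness of $k$, a standard argument (or the fact that $\{k(\cdot,x):x\in M\}$ is a compact, hence totally bounded, subset of $\eu{H}$, so the mean embeddings $\Pi[\bb{P}_n]$ live in a fixed compact set and pointwise convergence of the functionals upgrades to norm convergence of the embeddings) yields $\gamma_k(\bb{P}_n,\bb{P})=\Vert\Pi[\bb{P}_n]-\Pi[\bb{P}]\Vert_\eu{H}\rightarrow 0$.

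For the converse — that $\gamma_k(\bb{P}_n,\bb{P})\rightarrow 0$ forces $\bb{P}_n\stackrel{w}{\rightarrow}\bb{P}$ — the plan is to exploit universality of $k$. First, $\Scr{P}$ on a compact metric space is weakly compact (Prohorov's theorem, since $M$ compact makes any family tight), and the weak topology on it is metrizable. So it suffices to show: every weakly convergent subsequence of $\{\bb{P}_n\}$ has limit $\bb{P}$. Suppose $\bb{P}_{n_j}\stackrel{w}{\rightarrow}\bb{R}$ for some $\bb{R}\in\Scr{P}$. By the easy direction just proved, $\gamma_k(\bb{P}_{n_j},\bb{R})\rightarrow 0$, but also $\gamma_k(\bb{P}_{n_j},\bb{P})\rightarrow 0$ by hypothesis, so $\gamma_k(\bb{R},\bb{P})=0$. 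Now invoke that a universal kernel is characteristic (this is the result of Gretton et al.\ cited in Section~\ref{subsubsec:contribution1}, and also follows from Theorem~\ref{thm:strictpd} together with the fact that universal kernels on compact $M$ are integrally strictly pd, or can be argued directly: $\gamma_k(\bb{R},\bb{P})=0$ means $\int f\,d\bb{R}=\int f\,d\bb{P}$ for all $f\in\eu{H}$, and denseness of $\eu{H}$ in $C(M)$ plus boundedness of $\bb{R},\bb{P}$ extends this to all $f\in C(M)$, whence $\bb{R}=\bb{P}$). Therefore every weak limit point equals $\bb{P}$, and since the space is weakly sequentially compact, $\bb{P}_n\stackrel{w}{\rightarrow}\bb{P}$.

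I would organize the write-up as: (1) record $C_b(M)=C(M)$ and boundedness of $k$ on compact $M$; (2) prove $\bb{P}_n\stackrel{w}{\rightarrow}\bb{P}\Rightarrow\gamma_k(\bb{P}_n,\bb{P})\rightarrow0$ via total boundedness of $\{k(\cdot,x):x\in M\}\subset\eu{H}$ so that the $\gamma_k$-convergence follows from $\varepsilon$-net approximation plus pointwise $\bb{P}_nf\rightarrow\bb{P}f$; (3) prove the converse by the subsequence/Prohorov-compactness argument above, using that universal $\Rightarrow$ characteristic to conclude any weak limit point coincides with $\bb{P}$.

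The main obstacle is the converse direction, specifically the step that $\gamma_k(\bb{P}_n,\bb{P})\rightarrow0$ does not obviously control $\int f\,d\bb{P}_n$ for a \emph{fixed} $f\in C(M)\setminus\eu{H}$, because approximating $f$ by $\tilde f\in\eu{H}$ in $\Vert\cdot\Vert_\infty$ introduces an error $\Vert f-\tilde f\Vert_\infty$ but the $\gamma_k$-bound on $|\bb{P}_n\tilde f-\bb{P}\tilde f|$ scales with $\Vert\tilde f\Vert_\eu{H}$, which may blow up as $\tilde f\to f$. This is precisely why a direct estimate fails and why routing through weak compactness of $\Scr{P}$ (valid because $M$ is compact) together with the characteristic property is the right strategy — it sidesteps needing quantitative control of $\bb{P}_n f$ for non-RKHS $f$. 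I should make sure to state clearly that $\gamma_k$ is already known to be a metric here (since universal $\Rightarrow$ characteristic), so "convergence in $\gamma_k$" is unambiguous, and that metrizability of the weak topology on $\Scr{P}$ for compact metric $M$ is what licenses the sequential argument.
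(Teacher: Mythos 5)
Your proof is correct, but the claim that "a direct estimate fails" is mistaken, and it leads you to a heavier route than the one the paper takes. The paper proves the converse direction by precisely the direct approximation you dismiss: given $f\in C(M)$ and $\epsilon>0$, universality yields a \emph{fixed} $g\in\eu{H}$ with $\Vert f-g\Vert_\infty\le\epsilon$, and then
\begin{equation}
|\bb{P}_nf-\bb{P}f|\le 2\epsilon+\Vert g\Vert_\eu{H}\,\gamma_k(\bb{P}_n,\bb{P}).\nonumber
\end{equation}
The order of limits matters: with $g$ fixed (its RKHS norm finite, however large), you send $n\to\infty$, which kills the second term, and only then send $\epsilon\to 0$. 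You never need $\widetilde f\to f$ while also tracking $\Vert\widetilde f\Vert_\eu{H}$, so the blowup you worry about never arises. Your alternative — weak sequential compactness of $\Scr{P}$ on compact $M$, the easy direction applied along a weakly convergent subsequence, the triangle inequality to get $\gamma_k(\bb{R},\bb{P})=0$, and then universal $\Rightarrow$ characteristic to force $\bb{R}=\bb{P}$ — is also valid, but it imports Prohorov-type machinery and a priori metrizability of the weak topology, and gives no quantitative comparison, whereas the paper's argument is elementary and produces an explicit $\limsup$ bound. One place you are ahead of the paper: you correctly note that the "easy" direction ($\bb{P}_n\stackrel{w}{\rightarrow}\bb{P}\Rightarrow\gamma_k\to0$) is not completely immediate, since $\gamma_k$ is a supremum over an uncountable family; the paper simply asserts it as trivial. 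Your route via total boundedness of $\{k(\cdot,x):x\in M\}\subset\eu{H}$ works, as does observing that $\bb{P}_n\otimes\bb{P}_n\stackrel{w}{\rightarrow}\bb{P}\otimes\bb{P}$ and using continuity and boundedness of $k$ on $M\times M$ in the explicit formula (\ref{Eq:computeMMD}) for $\gamma_k^2$.
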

From Theorem~\ref{thm:weak-1}, it is clear that $\gamma_k$ is equivalent to $\varsigma$, $\beta$ and $W$ (see (\ref{Eq:Prohorov-Dudley}) and (\ref{Eq:Prohorov-Wasserstein})) when $M$ is compact and $k$ is universal. 
\begin{theorem}[Weak convergence-II]\label{thm:weak-2}
Let $M=\bb{R}^d$ and $k(x,y)=\psi(x-y)$, where $\psi\in C_0(\bb{R}^d)\cap L^1(\bb{R}^d)$ is a real-valued bounded strictly positive definite function. If there exists an $l\in\bb{N}$ such that 
\begin{equation}\label{Eq:weak-condition}
\int_{\bb{R}^d}\frac{1}{\widehat{\psi}(\omega)(1+\Vert\omega\Vert_2)^l}\,d\omega<\infty, 
\end{equation}
then $\gamma_k$ metrizes the weak topology on $\Scr{P}$.
\end{theorem}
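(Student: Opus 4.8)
The plan is to verify the two implications $\bb{P}_n\stackrel{w}{\rightarrow}\bb{P}\Rightarrow\gamma_k(\bb{P}_n,\bb{P})\rightarrow 0$ and $\gamma_k(\bb{P}_n,\bb{P})\rightarrow 0\Rightarrow\bb{P}_n\stackrel{w}{\rightarrow}\bb{P}$ for sequences in $\Scr{P}$. Observe first that (\ref{Eq:weak-condition}) forces $\widehat{\psi}>0$ Lebesgue-a.e.\ (a positive-measure set on which $\widehat{\psi}$ vanished would make the integral diverge), hence $\text{supp}(\Lambda)=\bb{R}^d$ and, by Theorem~\ref{Thm:countablyinfinite}, $k$ is characteristic, so that $\gamma_k$ is a genuine metric and the statement is meaningful. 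Moreover $\psi$ is bounded, so $\Scr{P}_k=\Scr{P}$ by Proposition~\ref{proposition:condition} and $\gamma_k$ is finite on $\Scr{P}\times\Scr{P}$.

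\emph{Forward implication.} Since $\psi\in L^1(\bb{R}^d)$ is bounded, continuous and positive definite, Bochner's theorem (Theorem~\ref{Theorem:Bochner}) and Corollary~\ref{cor:L2distance}(i) give $\gamma^2_k(\bb{P}_n,\bb{P})=\int_{\bb{R}^d}|\phi_{\bb{P}_n}(\omega)-\phi_\bb{P}(\omega)|^2\,d\Lambda(\omega)$, where $\Lambda$ is a \emph{finite} nonnegative measure with total mass $\Lambda(\bb{R}^d)=\psi(0)<\infty$. If $\bb{P}_n\stackrel{w}{\rightarrow}\bb{P}$, then for each fixed $\omega$ the map $x\mapsto e^{i\omega^Tx}$ is bounded and continuous, so $\phi_{\bb{P}_n}(\omega)\rightarrow\phi_\bb{P}(\omega)$; as the integrand is dominated by the $\Lambda$-integrable constant $4$, dominated convergence yields $\gamma_k(\bb{P}_n,\bb{P})\rightarrow 0$.

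\emph{Converse implication.} The crux is the inclusion $\Scr{D}_d\subset\eu{H}$. Because $\psi\in L^1(\bb{R}^d)$ and $\widehat{\psi}>0$ a.e., the RKHS of $k$ is the ``native space'' $\{f\in L^2(\bb{R}^d):\int_{\bb{R}^d}|\widehat{f}(\omega)|^2/\widehat{\psi}(\omega)\,d\omega<\infty\}$ equipped with the norm in (\ref{Eq:rkhsnorm}) (a standard fact from the theory of positive definite functions). For $f\in\Scr{D}_d$ we have $\widehat{f}\in\Scr{S}_d$, whence $|\widehat{f}(\omega)|^2\le C_f(1+\Vert\omega\Vert_2)^{-l}$ for a suitable constant $C_f$, and therefore $\Vert f\Vert^2_{\eu{H}}\le C_f\int_{\bb{R}^d}\widehat{\psi}(\omega)^{-1}(1+\Vert\omega\Vert_2)^{-l}\,d\omega<\infty$ by hypothesis (\ref{Eq:weak-condition}); thus $f\in\eu{H}$. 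Next, since $\int_M f\,d\bb{P}=\langle f,\bb{P}k\rangle_{\eu{H}}$ for every $f\in\eu{H}$ and $\bb{P}\in\Scr{P}$ (see the proof of Theorem~\ref{Theorem:MMD-II}), Cauchy--Schwarz gives $|\int_M f\,d(\bb{P}_n-\bb{P})|\le\Vert f\Vert_{\eu{H}}\,\gamma_k(\bb{P}_n,\bb{P})$, so $\gamma_k(\bb{P}_n,\bb{P})\rightarrow 0$ implies $\int_M f\,d\bb{P}_n\rightarrow\int_M f\,d\bb{P}$ for every $f\in\Scr{D}_d$. As $\Scr{D}_d$ is dense in $C_c(\bb{R}^d)$ in the supremum norm and each $\bb{P}_n$ has unit mass, the same convergence holds for all $f\in C_c(\bb{R}^d)$, i.e.\ $\bb{P}_n\rightarrow\bb{P}$ vaguely. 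Finally, since the vague limit is the probability measure $\bb{P}$, which has full mass equal to that of every $\bb{P}_n$, no mass escapes to infinity: choosing, via inner regularity of $\bb{P}$, a compactly supported $\varphi\in C_c(\bb{R}^d)$ with $0\le\varphi\le 1$ and $\int\varphi\,d\bb{P}>1-\varepsilon$, a routine truncation argument upgrades vague convergence to $\int_M g\,d\bb{P}_n\rightarrow\int_M g\,d\bb{P}$ for all $g\in C_b(\bb{R}^d)$, i.e.\ $\bb{P}_n\stackrel{w}{\rightarrow}\bb{P}$.

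The main obstacle — and the only place the hypothesis (\ref{Eq:weak-condition}) is used — is the inclusion $\Scr{D}_d\subset\eu{H}$, which relies on the Fourier/native-space description of $\eu{H}$ valid for $\psi\in L^1(\bb{R}^d)$ together with the rapid decay of $\widehat{f}$ for $f\in\Scr{D}_d$. Everything else is routine: the forward direction is dominated convergence, and the vague-to-weak upgrade is a standard consequence of the limit having full mass. One may alternatively prove the converse through Prohorov's theorem, first deducing tightness of $\{\bb{P}_n\}$ and then identifying each weak cluster point with $\bb{P}$ by the forward implication, the triangle inequality for $\gamma_k$, and the characteristic property of $k$; the assumption $\psi\in C_0(\bb{R}^d)$ is natural in that route.
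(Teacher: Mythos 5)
Your proof is correct and follows essentially the same route as the paper's: both use the Fourier/native-space characterization of $\eu{H}$ for $\psi\in L^1(\bb{R}^d)$ together with hypothesis (\ref{Eq:weak-condition}) to show that a dense class of rapidly decreasing test functions lies in $\eu{H}$, and then pass from convergence against those test functions to weak convergence. The only differences are presentational --- you work with $\Scr{D}_d$ and density in $C_c(\bb{R}^d)$ and spell out the vague-to-weak upgrade and the forward direction via dominated convergence on the representation in Corollary~\ref{cor:L2distance}(i), whereas the paper works with $\Scr{S}_d$ and density in $C_0(\bb{R}^d)$ and leaves both of those routine steps to the reader.
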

The entire Mat\'{e}rn class of kernels in (\ref{Eq:matern}) satisfies the conditions of Theorem~\ref{thm:weak-2} and, therefore, the corresponding $\gamma_k$ metrizes the weak topology on $\mathscr{P}$. Note that Gaussian kernels on $\bb{R}^d$ do not satisfy the condition in Theorem~\ref{thm:weak-2}. 
The characterization of $k$ for general non-compact domains $M$ (not necessarily $\bb{R}^d$), such that $\gamma_k$ metrizes the weak topology on $\mathscr{P}$, still remains an open problem.

\subsection{Proofs}\label{subsec:proofs-weak}
We now present the proofs of Theorems~\ref{thm:compare}, \ref{thm:weak-1} and \ref{thm:weak-2}.\vspace{2mm}
\begin{proof}
\hspace{-.05in}\textbf{(Theorem~\ref{thm:compare})} \emph{(i)} When $(M,\rho)$ is separable, $W(\bb{P},\bb{Q})$ has a coupling formulation \cite[p. 420]{Dudley-02}, given as
\begin{equation}\label{Eq:wasserstein-coupling}
W(\bb{P},\bb{Q})=\inf_{\mu\in\mathcal{L}(\bb{P},\bb{Q})}\int\!\!\!\int_M\rho(x,y)\,d\mu(x,y),
\end{equation}
where $\bb{P},\bb{Q}\in\{\bb{P}\in\Scr{P}:\int_M\rho(x,y)\,d\bb{P}(y)<\infty,\,\forall\,x\in M\}$. In our case $\rho(x,y)=\Vert k(\cdot,x)-k(\cdot,y)\Vert_\eu{H}$. In addition, $(M,\rho)$ is bounded, which means (\ref{Eq:wasserstein-coupling}) holds for all $\bb{P},\bb{Q}\in\Scr{P}$. The lower bound therefore follows from (\ref{Eq:couplingrkhs}). The upper bound can be obtained as follows. Consider $W(\bb{P},\bb{Q})=\inf_{\mu\in\mathcal{L}(\bb{P},\bb{Q})}\int\!\!\!\int_M\Vert k(\cdot,x)-k(\cdot,y)\Vert_\eu{H}\,d\mu(x,y)$, which can be bounded as
\begin{eqnarray}\label{Eq:proof-1}
W(\bb{P},\bb{Q})&\!\!\!\le\!\!\!&\int\!\!\!\int_M\Vert k(\cdot,x)-k(\cdot,y)\Vert_\eu{H}\,d\bb{P}(x)\,d\bb{Q}(y)\nonumber\\
&\!\!\!\stackrel{(a)}{\le}\!\!\!& \left[\int\!\!\!\int_M\Vert k(\cdot,x)-k(\cdot,y)\Vert^2_\eu{H}\,d\bb{P}(x)\,d\bb{Q}(y)\right]^{\frac{1}{2}}\nonumber\\
&\!\!\!\le\!\!\!& \left[\int_M k(x,x)\,d(\bb{P}+\bb{Q})(x)-2\int\!\!\!\int_Mk(x,y)\,d\bb{P}(x)\,d\bb{Q}(y)\right]^{\frac{1}{2}}\nonumber\\
&\!\!\!\le\!\!\!& \left[\gamma^2_k(\bb{P},\bb{Q})+\int\!\!\!\int_M (k(x,x)-k(x,y))\,d(\bb{P}\otimes\bb{P}+\bb{Q}\otimes\bb{Q})(x,y)\right]^{\frac{1}{2}}\nonumber\\
&\!\!\!\le\!\!\!& \sqrt{\gamma^2_k(\bb{P},\bb{Q})+4C},
\end{eqnarray}
where we have used Jensen's inequality \citep[p. 109]{Folland-99} in $(a)$.\vspace{2mm}\\
\emph{(ii)} Let $\eu{F}:=\{f:\Vert f\Vert_{\eu{H}}<\infty\}$ and $\eu{G}:=\{f:\Vert f\Vert_{BL}<\infty\}$. For $f\in\eu{F}$, we have
\begin{eqnarray}\label{Eq:proof-3}
\Vert f\Vert_{BL}&\!\!\!=\!\!\!&\sup_{x\ne y}\frac{|f(x)-f(y)|}{\rho(x,y)}+\sup_{x\in M}|f(x)|=\sup_{x\ne y}\frac{|\langle f,k(\cdot,x)-k(\cdot,y)\rangle_\eu{H}|}{\Vert k(\cdot,x)-k(\cdot,y)\Vert_\eu{H}}+\sup_{x\in M}|\langle f,k(\cdot,x)\rangle_\eu{H}|\nonumber\\
&\!\!\!\le\!\!\!& (1+\sqrt{C})\Vert f\Vert_{\eu{H}}<\infty,
\end{eqnarray}
which implies $f\in\eu{G}$ and, therefore, $\eu{F}\subset\eu{G}$. For any $\bb{P},\bb{Q}\in\Scr{P}$, 
\begin{eqnarray}
\gamma_k(\bb{P},\bb{Q})&\!\!\!=\!\!\!&\sup\{|\bb{P}f-\bb{Q}f|:f\in\eu{F}_k\}\nonumber\\
&\!\!\!\le\!\!\!&\sup\{|\bb{P}f-\bb{Q}f|:\Vert f\Vert_{BL}\le (1+\sqrt{C}),\,f\in\eu{F}\}\nonumber\\
&\!\!\!\le\!\!\!& \sup\{|\bb{P}f-\bb{Q}f|:\Vert f\Vert_{BL}\le (1+\sqrt{C}),\,f\in\eu{G}\}\nonumber\\
&\!\!\!=\!\!\!&(1+\sqrt{C})\beta(\bb{P},\bb{Q}).\nonumber
\end{eqnarray}
The upper bound is obtained as follows. For any $\bb{P},\bb{Q}\in\Scr{P}$, by Markov's inequality \citep[Theorem 6.17]{Folland-99}, 
for all $\epsilon>0$, we have
\begin{equation}
\epsilon^2\mu(\Vert k(\cdot,X)-k(\cdot,Y)\Vert_\eu{H}>\epsilon)\le
\int\!\!\!\int_M\Vert k(\cdot,x)-k(\cdot,y)\Vert^2_\eu{H}\,d\mu(x,y),\nonumber
\end{equation}
where $X$ and $Y$ are distributed as $\bb{P}$ and $\bb{Q}$ respectively. Choose $\epsilon$ such that $\epsilon^3=\int\!\!\!\int_M\Vert k(\cdot,x)-k(\cdot,y)\Vert^2_\eu{H}\,d\mu(x,y)$, such that $\mu(\Vert k(\cdot,X)-k(\cdot,Y)\Vert_\eu{H}>\epsilon)\le\epsilon$. From the proof of Theorem 11.3.5 of \citet{Dudley-02}, when $(M,\rho)$ is separable, we have
\begin{equation}
\mu(\rho(X,Y)\ge\epsilon)<\epsilon\,\Rightarrow\,\varsigma(\bb{P},\bb{Q})\le\epsilon,\nonumber
\end{equation}
which implies that 
\begin{eqnarray}\label{Eq:proof-2}
\varsigma(\bb{P},\bb{Q})&\!\!\!\le\!\!\!& \left(\inf_{\mu\in\mathcal{L}(\bb{P},\bb{Q})}\int\!\!\!\int_M\Vert k(\cdot,x)-k(\cdot,y)\Vert^2_\eu{H}\,d\mu(x,y)\right)^{\frac{1}{3}}\nonumber\\
&\!\!\!\le\!\!\!& \left(\int\!\!\!\int_M\Vert k(\cdot,x)-k(\cdot,y)\Vert^2_\eu{H}\,d\bb{P}(x)\,d\bb{Q}(y)\right)^{\frac{1}{3}}\nonumber\\
&\!\!\!\stackrel{(b)}{\le}\!\!\!&\left(\gamma^2_k(\bb{P},\bb{Q})+4C\right)^{\frac{1}{3}},
\end{eqnarray}
where $(b)$ follows from (\ref{Eq:proof-1}). The result follows from (\ref{Eq:Prohorov-Dudley}).\vspace{2mm}\\
\emph{(iii)} The proof of this result was presented in \citet{Sriperumbudur-09} and is provided here for completeness. To prove the result, we use (\ref{Eq:couplingrkhs}) and the coupling formulation for $TV$ \citep[p. 19]{Lindvall-92}, given as
\begin{equation}
\frac{1}{2}TV(\bb{P},\bb{Q})=\inf_{\mu\in\mathcal{L}(\bb{P},\bb{Q})}\mu(X\ne Y),
\end{equation}
where $\mathcal{L}(\bb{P},\bb{Q})$ is the set of all measures on $M\times M$ with marginals $\bb{P}$ and $\bb{Q}$. Here, $X$ and $Y$ are distributed as $\bb{P}$ and $\bb{Q}$ respectively. Consider 
\begin{equation}\label{Eq:proof-4}
\Vert k(\cdot,x)-k(\cdot,y)\Vert_\eu{H}\le \mathds{1}_{\{x\ne y\}}\Vert k(\cdot,x)-k(\cdot,y)\Vert_\eu{H}\le 2\sqrt{C}\mathds{1}_{\{x\ne y\}}.
\end{equation} 
Taking expectations w.r.t. $\mu$ and the infimum over $\mu\in\mathcal{L}(\bb{P},\bb{Q})$ on both sides of (\ref{Eq:proof-4}) gives the desired result, which follows from (\ref{Eq:couplingrkhs}).\vspace{-4mm}
\end{proof}
\begin{proof}
\hspace{-.02in}\textbf{(Theorem~\ref{thm:weak-1})} We need to show that for measures $\bb{P},\bb{P}_1,\bb{P}_2,\ldots\in\Scr{P}$, $\bb{P}_n\stackrel{w}{\rightarrow}\bb{P}$ if and only if $\gamma_k(\bb{P}_n,\bb{P})\rightarrow 0$ as $n\rightarrow \infty$. One direction is trivial as $\bb{P}_n\stackrel{w}{\rightarrow} \bb{P}$ implies $\gamma_k(\bb{P}_n,\bb{P})\rightarrow 0$ as $n\rightarrow\infty$. We prove the other direction as follows. Since $k$ is universal, $\eu{H}$ is dense in $C_b(M)$, the space of bounded continuous functions, w.r.t. the uniform norm, i.e., for any $f\in C_b(M)$ and every $\epsilon>0$, there exists a $g\in \eu{H}$ such that $\Vert f-g\Vert_\infty \le \epsilon$. Therefore, \begin{eqnarray}
|\bb{P}_nf-\bb{P}f|&\!\!\!=\!\!\!&|\bb{P}_n(f-g)+\bb{P}(g-f)+(\bb{P}_ng-\bb{P}g)|\nonumber\\
&\!\!\!\le\!\!\!&  \bb{P}_n|f-g|+\bb{P}|f-g|+|\bb{P}_ng-\bb{P}g|\nonumber\\
&\!\!\!\le\!\!\!& 2\epsilon+|\bb{P}_ng-\bb{P}g|\le 2\epsilon+\Vert g\Vert_\eu{H}\gamma_k(\bb{P}_n,\bb{P}).
\end{eqnarray}
Since $\gamma_k(\bb{P}_n,\bb{P})\rightarrow 0$ as $n\rightarrow\infty$ and $\epsilon$ is arbitrary, $|\bb{P}_nf-\bb{P}f|\rightarrow 0$ for any $f\in C_b(M)$.\vspace{-4mm}
\end{proof}
\begin{proof}
\hspace{-.05in}\textbf{(Theorem~\ref{thm:weak-2})} As mentioned in the proof of Theorem~\ref{thm:weak-1}, one direction of the proof is straightforward: $\bb{P}_n\stackrel{w}{\rightarrow}\bb{P}\, \Rightarrow\, \gamma_k(\bb{P}_n,\bb{P})\rightarrow 0$ as $n\rightarrow\infty$. Let us consider the other direction. Since $\psi\in C_0(\bb{R}^d)\cap L^1(\bb{R}^d)$ is a strictly positive definite function, any $f\in\eu{H}$ satisfies \citep[Theorem 10.12]{Wendland-05}
\begin{equation}
\int_{\bb{R}^d}\frac{|\widehat{f}(\omega)|^2}{\widehat{\psi}(\omega)}\,d\omega<\infty.
\end{equation}
Assume that
\begin{equation}
\sup_{\omega\in\bb{R}^d}(1+\Vert\omega\Vert_2)^l|\widehat{f}(\omega)|^2<\infty,
\end{equation}
for any $l\in\bb{N}$, which means $f\in\Scr{S}_d$. Let (\ref{Eq:weak-condition}) be satisfied for some $l=l_0$. Then, \begin{eqnarray}\label{Eq:weak-2}
\int_{\bb{R}^d}\frac{|\widehat{f}(\omega)|^2}{\widehat{\psi}(\omega)}\,d\omega&\!\!\!=\!\!\!& \int_{\bb{R}^d}\frac{|\widehat{f}(\omega)|^2(1+\Vert\omega\Vert_2)^{l_0}}{\widehat{\psi}(\omega)(1+\Vert\omega\Vert_2)^{l_0}}\,d\omega\nonumber\\
&\!\!\!\le\!\!\!& \sup_{\omega\in\bb{R}^d}(1+\Vert\omega\Vert_2)^{l_0}|\widehat{f}(\omega)|^2\int_{\bb{R}^d}\frac{1}{\widehat{\psi}(\omega)(1+\Vert\omega\Vert_2)^{l_0}}\,d\omega<\infty,\nonumber
\end{eqnarray}
which means $f\in\eu{H}$, i.e., if $f\in\Scr{S}_d$, then $f\in\eu{H}$, which implies $\Scr{S}_d\subset\eu{H}$. Note that $\mathscr{S}(\bb{R}^d)$ is dense in $C_0(\bb{R}^d)$. Since $\psi\in C_0(\bb{R}^d)$, we have $\eu{H}\subset C_0(\bb{R}^d)$ and, therefore, $\eu{H}$ is dense in $C_0(\bb{R}^d)$ w.r.t. the uniform norm. Suppose $\bb{P},\bb{P}_1,\bb{P}_2,\ldots\in\mathscr{P}$. Using a similar analysis as in the proof of Theorem~\ref{thm:weak-1}, it can be shown that for any $f\in C_0(\bb{R}^d)$ and every $\epsilon>0$, there exists a $g\in\eu{H}$ such that $|\bb{P}_nf-\bb{P}f|\le 2\epsilon+|\bb{P}_ng-\bb{P}g|$. Since $\epsilon$ is arbitrary and $\gamma_k(\bb{P}_n,\bb{P})\rightarrow 0$ as $n\rightarrow\infty$, the result follows.\vspace{-4mm}
\end{proof}

\section{Conclusion and Discussion}\label{Sec:Conclusion}
In this paper, we have studied various properties associated with a pseudometric, $\gamma_k$ on $\Scr{P}$, which is based on the Hilbert space embedding of probability measures. First, we studied the conditions on the kernel (called the characteristic kernel) under which $\gamma_k$ is a metric and showed that, apart from universal kernels, a large family of bounded continuous kernels induce a metric on $\Scr{P}$: (a) integrally strictly pd kernels and (b) translation invariant kernels on $\bb{R}^d$ and $\bb{T}^d$ that have the support of their Fourier transform to be $\bb{R}^d$ and $\bb{Z}^d$ respectively. Next, we showed that there exist distinct distributions  
which will be considered close according to $\gamma_k$ (whether or not the  kernel is characteristic), and thus may be hard to distinguish based on finite samples. 
Finally, we compared $\gamma_k$ to other metrics on $\Scr{P}$ and explicitly presented the conditions under which it induces a weak topology on $\Scr{P}$. 
These results together provide a strong theoretical foundation for using the $\gamma_k$ metric in both statistics and machine learning applications.
\par Now, we discuss two topics related to $\gamma_k$, one about the choice of kernel parameter and the other about kernels defined on $\Scr{P}$. 
\par An important question that we did not discuss in this paper is how to choose a characteristic kernel. Let us consider the following setting: $M=\bb{R}^d$ and $k_\sigma(x,y)=\exp(-\sigma\Vert x-y\Vert^2_2),\,\sigma\in\bb{R}_+$, a Gaussian kernel with $\sigma$ as the bandwidth parameter. $\{k_\sigma:\sigma\in\bb{R}_+\}$ is the family of Gaussian kernels and $\{\gamma_{k_\sigma}:\sigma\in\bb{R}_+\}$ is the associated family of distance measures indexed by the kernel parameter, $\sigma$. Note that $k_\sigma$ is characteristic for any $\sigma\in\bb{R}_{++}$ and, therefore, $\gamma_{k_\sigma}$ is a metric on $\Scr{P}$ for any $\sigma\in\bb{R}_{++}$. In practice, one would prefer a single number that defines the distance between $\bb{P}$ and $\bb{Q}$. The question therefore to be addressed is how to choose an appropriate $\sigma$. Note that as $\sigma\rightarrow 0$, $k_\sigma\rightarrow 1$ and as $\sigma\rightarrow\infty$, $k_\sigma\rightarrow 0$ a.e., which means $\gamma_{k_\sigma}(\bb{P},\bb{Q})\rightarrow 0$ as $\sigma\rightarrow 0$ or $\sigma\rightarrow\infty$ for all $\bb{P},\bb{Q}\in\mathscr{P}$. This behavior is also exhibited by $k_\sigma(x,y)=\exp(-\sigma\Vert x-y\Vert_1),\,\sigma>0$ and $k_\sigma(x,y)=\sigma^2/(\sigma^2+\Vert x-y\Vert^2_2),\,\sigma>0$, which are also characteristic. This means choosing \emph{sufficiently small} or \emph{sufficiently large} $\sigma$ (depending on $\bb{P}$ and $\bb{Q}$) makes $\gamma_{k_\sigma}(\bb{P},\bb{Q})$ arbitrarily small. 
Therefore, $\sigma$ must be chosen appropriately in applications to effectively distinguish between $\bb{P}$ and $\bb{Q}$. 
\par To this end, one can consider the following modification to $\gamma_k$, which yields a pseudometric on $\Scr{P}$,
\begin{equation}\label{Eq:uniform}
\gamma(\bb{P},\bb{Q})=\sup\{\gamma_{k}(\bb{P},\bb{Q}):k\in\eu{K}\}=\sup\{\Vert\bb{P}k-\bb{Q}k\Vert_\eu{H}:k\in\eu{K}\}.
\end{equation}
Note that $\gamma$ is the maximal RKHS distance between $\bb{P}$ and $\bb{Q}$ over a family, $\eu{K}$ of measurable and bounded positive definite kernels. It is easy to check that, if any $k\in\eu{K}$ is characteristic, then $\gamma$ is a metric on $\mathscr{P}$. Examples for $\eu{K}$ include: 
\begin{enumerate}
\item $\eu{K}_g:=\{e^{-\sigma\Vert x-y\Vert^2_2},\,x,y\in\bb{R}^d\,:\,\sigma\in\bb{R}_+\}$.
\item $\eu{K}_l:=\{e^{-\sigma\Vert x-y\Vert_1},\,x,y\in\bb{R}^d\,:\,\sigma\in\bb{R}_+\}$.
\item $\eu{K}_\psi:=\{e^{-\sigma\psi(x,y)},\,x,y\in M\,:\sigma\in\bb{R}_+\}$, where $\psi:M\times M \rightarrow\bb{R}$ is a negative definite kernel \citep[Chapter 3]{Berg-84}.
\item $\eu{K}_{rbf}:=\{\int^\infty_0e^{-\lambda\Vert x-y\Vert^2_2}\,d\mu_\sigma(\lambda),x,y\in\bb{R}^d,\,\mu_\sigma\in\mathscr{M}^+\,:\,\sigma\in\Sigma\subset\bb{R}^d\}$, where $\mathscr{M}^+$ is the set of all finite nonnegative Borel measures, $\mu_\sigma$ on $\bb{R}_+$ that are not concentrated at zero, etc.
\item $\eu{K}_{lin}:=\{k_\lambda=\sum^l_{j=1}\lambda_jk_j\,|\, k_\lambda\,\,\text{is pd},\,\sum^l_{j=1}\lambda_j=1\}$, which is the linear combination of pd kernels $\{k_j\}_{j=1}^l$.
\item $\eu{K}_{con}:=\{k_\lambda=\sum^l_{j=1}\lambda_jk_j\,|\, \lambda_j\ge 0,\,\sum^l_{j=1}\lambda_j=1\}$, which is the convex combination of pd kernels $\{k_j\}_{j=1}^l$. 
\end{enumerate}
\par The idea and validity behind the proposal of $\gamma$ in (\ref{Eq:uniform}) can be understood from a Bayesian perspective, where we define a non-negative finite measure $\lambda$ over $\eu{K}$, and average $\gamma_k$ over that measure, i.e., $\alpha(\bb{P},\bb{Q}):=\int_{\eu{K}}\gamma_k(\bb{P},\bb{Q})\,d\lambda(k)$. This  also yields a pseudometric on $\Scr{P}$. That said, $\alpha(\bb{P},\bb{Q})\le\lambda(\eu{K})\gamma(\bb{P},\bb{Q}),\,\forall\,\bb{P},\bb{Q}$, which means that, if $\bb{P}$ and $\bb{Q}$ can be distinguished by $\alpha$, then they can be distinguished by $\gamma$, but not vice-versa. In this sense, $\gamma$ is stronger than $\alpha$ and therefore studying $\gamma$ makes sense. One further complication with the Bayesian approach is in defining a sensible $\lambda$ over $\eu{K}$. Note that $\gamma_{k_0}$ can be obtained by defining $\lambda(k)=\delta(k-k_0)$ in $\alpha(\bb{P},\bb{Q})$. Future work will include analyzing $\gamma$ and investigating its utility in applications compared to that of $\gamma_k$ (with a fixed kernel, $k$). Refer to \citet{Sriperumbudur-09c} for some preliminary work, wherein we showed that $\gamma(\bb{P}_m,\bb{Q}_n)$ is a $\sqrt{mn/(m+n)}$-consistent estimator of $\gamma(\bb{P},\bb{Q})$, for the class $\eu{K}$ of kernels shown in the previous page.
\par We now discuss how kernels on $\Scr{P}$ can be obtained from $\gamma_k$. As discussed in the paper, $\gamma_k$ is a \emph{Hilbertian metric} on $\Scr{P}$. Therefore, using (\ref{Eq:hilb-kernel}), the associated kernel can be easily computed as 
\begin{equation}
K(\bb{P},\bb{Q})=\left\langle \int_M k(\cdot,x)\,d\bb{P}(x),\int_M k(\cdot,x)\,d\bb{Q}(x)\right\rangle_\eu{H}=\int\!\!\!\int_M k(x,y)\,d\bb{P}(x)\,d\bb{Q}(y),\nonumber
\end{equation} 
where $K:\Scr{P}\times\Scr{P}\rightarrow\bb{R}$ is a positive definite kernel, which can be seen as the dot-product kernel on $\Scr{P}$. Using the results in \citet[Chapter 3, Theorems 2.2 and 2.3]{Berg-84}, Gaussian and inverse multi-quadratic kernels on $\Scr{P}$ can be defined as 
\begin{equation}
K(\bb{P},\bb{Q})=\exp(-\sigma\gamma^2_k(\bb{P},\bb{Q})),\,\sigma>0\,\,\,\text{and}\,\,\, K(\bb{P},\bb{Q})=(\sigma+\gamma^2_k(\bb{P},\bb{Q}))^{-1},\,\sigma>0\nonumber
\end{equation}
respectively. Broadly, this relates to the work on Hilbertian metrics and positive definite kernels on probability measures by \citet{Hein-05} and \citet{Fuglede-03}.
\appendix
\newcommand{\appsection}[1]{\let\oldthesection\thesection
  \renewcommand{\thesection}{Appendix \oldthesection}
  \section{#1}\let\thesection\oldthesection}
\acks{B. K. S. wishes to acknowledge support from the Max Planck Institute (MPI) for Biological Cybernetics, the National Science Foundation (grant DMS-MSPA 0625409), the Fair Isaac Corporation and the University of California MICRO program. Part of this work was done while the first author was an intern at the MPI, and part was done while A. G. was a project scientist at CMU, under grants DARPA IPTO FA8750-09-1-0141, ONR MURI N000140710747, and ARO MURI W911NF0810242. This work is also supported by the IST Program of the EC, under the FP7 Network of Excellence, ICT-216886-NOE. B. K. S. wishes to thank Agnes Radl for her comments on the manuscript.}

\appendix
\section*{Appendix A. Supplementary Results}\label{appendix-a}
For completeness, we present the supplementary results that were used to prove the results in this paper. The following result is quoted from \citet[Theorem 8.14]{Folland-99}.
\begin{theorem}\label{Thm:Folland-1}
Suppose $\phi\in L^1(\bb{R}^d)$, $\int\phi(x)\,dx=a$ and $\phi_t(x)=t^{-d}\phi(t^{-1}x)$ for $t>0$. If $f$ is bounded and uniformly continuous on $\bb{R}^d$, then $f\ast\phi_t\rightarrow af$ uniformly as $t\rightarrow 0$.
\end{theorem}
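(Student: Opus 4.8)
The plan is to reduce everything to a single integral estimate that is uniform in $x$, exploiting the change of variables that turns the dilation $\phi_t$ into a translation of $f$. First I would substitute $y\mapsto ty$ in the convolution to write
\[
(f\ast\phi_t)(x)=\int_{\bb{R}^d}f(x-y)\phi_t(y)\,dy=\int_{\bb{R}^d}f(x-ty)\phi(y)\,dy,
\]
and simultaneously note that $af(x)=f(x)\int_{\bb{R}^d}\phi(y)\,dy=\int_{\bb{R}^d}f(x)\phi(y)\,dy$. Subtracting gives the key identity
\[
(f\ast\phi_t)(x)-af(x)=\int_{\bb{R}^d}\big(f(x-ty)-f(x)\big)\phi(y)\,dy,
\]
so that $\sup_{x}\lvert(f\ast\phi_t)(x)-af(x)\rvert\le\int_{\bb{R}^d}\big(\sup_{x}\lvert f(x-ty)-f(x)\rvert\big)\lvert\phi(y)\rvert\,dy$.

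Next I would split the domain of integration into $\{\Vert y\Vert_2\le R\}$ and $\{\Vert y\Vert_2>R\}$ for a parameter $R$ to be chosen. On the tail, the crude bound $\lvert f(x-ty)-f(x)\rvert\le 2\Vert f\Vert_\infty$ together with $\phi\in L^1(\bb{R}^d)$ lets me make $2\Vert f\Vert_\infty\int_{\Vert y\Vert_2>R}\lvert\phi(y)\rvert\,dy$ smaller than any prescribed $\varepsilon$ by taking $R$ large; crucially this choice depends on neither $x$ nor $t$. On the bounded part, uniform continuity of $f$ supplies a $\delta>0$ (independent of $x$) with $\lvert f(u)-f(v)\rvert<\varepsilon$ whenever $\Vert u-v\Vert_2<\delta$; then for $t<\delta/R$ we have $\Vert ty\Vert_2<\delta$ for all $\Vert y\Vert_2\le R$, hence $\sup_{x}\lvert f(x-ty)-f(x)\rvert<\varepsilon$, contributing at most $\varepsilon\Vert\phi\Vert_{L^1(\bb{R}^d)}$.

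Combining the two pieces yields $\sup_{x}\lvert(f\ast\phi_t)(x)-af(x)\rvert\le\varepsilon\big(\Vert\phi\Vert_{L^1(\bb{R}^d)}+2\Vert f\Vert_\infty\big)$ for all $t<\delta/R$, and since $\varepsilon$ is arbitrary the uniform convergence follows. The only point requiring care — and the place where uniform (rather than merely pointwise) continuity of $f$ is essential — is that both the tail cutoff $R$ and the modulus-of-continuity threshold $\delta$ must be chosen independently of $x$; everything else is a routine application of a dominated-type splitting argument, so I anticipate no genuine obstacle beyond keeping track of the order of the quantifiers.
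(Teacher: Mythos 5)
Your proof is correct and is the standard argument for this approximate-identity result; the paper itself gives no proof, simply citing Folland's \emph{Real Analysis} (Theorem 8.14), whose proof is essentially the one you present: substitute $y\mapsto ty$, write the difference as $\int(f(x-ty)-f(x))\phi(y)\,dy$, and split into a tail (controlled by $\phi\in L^1$ and boundedness of $f$) and a core (controlled by uniform continuity). One cosmetic slip: if you choose $R$ so that $2\Vert f\Vert_\infty\int_{\Vert y\Vert_2>R}\lvert\phi\rvert<\varepsilon$, the final bound should read $\varepsilon(\Vert\phi\Vert_{L^1(\bb{R}^d)}+1)$ rather than $\varepsilon(\Vert\phi\Vert_{L^1(\bb{R}^d)}+2\Vert f\Vert_\infty)$; the latter would follow from instead choosing $R$ with $\int_{\Vert y\Vert_2>R}\lvert\phi\rvert<\varepsilon$. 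Either way the constant is harmless and the conclusion stands, and your emphasis on keeping both the cutoff $R$ and the modulus $\delta$ independent of $x$ correctly identifies where uniform (as opposed to merely pointwise) continuity is used.
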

\par\noindent
By imposing slightly stronger conditions on $\phi$, the following result quoted from \citet[Theorem 8.15]{Folland-99} shows that $f\ast\phi_t\rightarrow af$ almost everywhere for $f\in L^r(\bb{R}^d)$.
\begin{theorem}\label{Thm:Folland-2}
Suppose $|\phi(x)|\le C(1+\Vert x\Vert_2)^{-d-\varepsilon}$ for some $C,\varepsilon>0$, and $\int \phi(x)\,dx=a$. If $f\in L^r(\bb{R}^d)$ ($1\le r\le\infty$), then $f\ast\phi_t(x)\rightarrow af(x)$ as $t\rightarrow 0$ for every $x$ in the Lebesgue set of $f$~---~in particular, for almost every $x$, and for every $x$ at which $f$ is continuous.
\end{theorem}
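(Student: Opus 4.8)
The plan is to run the classical approximate-identity argument: show that $\phi_t$ concentrates its mass near the origin rapidly enough that $f\ast\phi_t(x)$ effectively only sees $f$ in shrinking neighborhoods of $x$, which is controlled precisely when $x$ lies in the Lebesgue set. First I would record the elementary identity $f\ast\phi_t(x)-af(x)=\int_{\bb{R}^d}\phi_t(y)\,[f(x-y)-f(x)]\,dy$, valid because $\int_{\bb{R}^d}\phi_t(y)\,dy=\int_{\bb{R}^d}\phi(y)\,dy=a$ and because $f\ast\phi_t(x)$ is well defined pointwise: the majorant $\psi(z):=C(1+\Vert z\Vert_2)^{-d-\varepsilon}$ of $|\phi|$ lies in $L^{r'}(\bb{R}^d)$ for every $r'\ge 1$ (since $d/(d+\varepsilon)<1$), so with $\psi_t(z):=t^{-d}\psi(t^{-1}z)$ we get $\psi_t\ast|f|<\infty$ everywhere by H\"older. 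Hence $|f\ast\phi_t(x)-af(x)|\le\int_{\bb{R}^d}\psi_t(y)\,|f(x-y)-f(x)|\,dy$, and it suffices to prove this last integral tends to $0$ for every $x$ in the Lebesgue set of $f$.

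Fix such an $x$, put $h(y):=|f(x-y)-f(x)|$ and $\omega(\rho):=\int_{\Vert y\Vert_2\le\rho}h(y)\,dy$. Since $f\in L^1_{\mathrm{loc}}$, $\omega$ is finite, continuous and nondecreasing, and the Lebesgue-point hypothesis is exactly $\eta(\rho):=\omega(\rho)/\rho^d\to 0$ as $\rho\downarrow0$, so $\eta$ is bounded on every interval $(0,\rho_0]$. A crude H\"older bound $\omega(\rho)\le\Vert f\Vert_{L^r}(\mathrm{vol}\,B_\rho)^{1/r'}+|f(x)|\,\mathrm{vol}\,B_\rho$ shows in addition that $\eta(\rho)$ stays bounded for $\rho\ge1$, so $M:=\sup_{\rho>0}\eta(\rho)<\infty$. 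Now split $\bb{R}^d$ into $A_0:=\{\Vert y\Vert_2\le t\}$ and the dyadic annuli $A_j:=\{2^{j-1}t<\Vert y\Vert_2\le 2^jt\}$ for $j\ge1$. On $A_0$ one has $\psi_t(y)\le Ct^{-d}$, contributing $\le Ct^{-d}\omega(t)=C\eta(t)$; on $A_j$ one has $\psi_t(y)\le C t^{-d}(2^{j-1})^{-d-\varepsilon}\le C' t^{-d}2^{-j(d+\varepsilon)}$, contributing $\le C'2^{-j(d+\varepsilon)}t^{-d}\omega(2^jt)=C'2^{-j\varepsilon}\eta(2^jt)$. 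Summing over $j$,
\begin{equation*}
\int_{\bb{R}^d}\psi_t(y)\,h(y)\,dy\;\le\;C\,\eta(t)+C'\sum_{j\ge1}2^{-j\varepsilon}\,\eta(2^jt).
\end{equation*}

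Finally I would let $t\downarrow 0$. The first term vanishes since $\eta(t)\to 0$. For the series, given $\epsilon_0>0$ pick $J$ with $C'M\sum_{j>J}2^{-j\varepsilon}<\epsilon_0/2$; this bounds the tail uniformly in $t$ because $\eta\le M$. For the finite head, $\sum_{j=1}^{J}2^{-j\varepsilon}\eta(2^jt)\le\big(\sum_j 2^{-j\varepsilon}\big)\sup_{0<s\le 2^Jt}\eta(s)$, which is $<\epsilon_0/2$ once $t$ is small, since $2^Jt\to0$ and $\limsup_{\rho\downarrow0}\sup_{0<s\le\rho}\eta(s)=\limsup_{s\downarrow0}\eta(s)=0$. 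Thus the bound is $<\epsilon_0$ for all small $t$, giving $f\ast\phi_t(x)\to af(x)$ at every point of the Lebesgue set; almost every $x$ is such a point by the Lebesgue differentiation theorem, and any continuity point of $f$ trivially lies in the Lebesgue set. The main obstacle is the uniform-in-$t$ control of the dyadic tail, and this is exactly where the hypothesis $|\phi(x)|\le C(1+\Vert x\Vert_2)^{-d-\varepsilon}$ enters: the exponent $d+\varepsilon$ produces the summable weights $2^{-j\varepsilon}$ which, combined with the global bound on $\eta$, dominate the portion of the mollifier far from the origin — something that would fail for a majorant decaying only like $\Vert x\Vert_2^{-d}$.
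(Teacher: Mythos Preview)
The paper does not supply a proof of this statement; it is quoted without proof in Appendix~A as a supplementary result taken from \citet[Theorem 8.15]{Folland-99}. Your argument is correct and is essentially the classical approximate-identity proof one finds in Folland: pass to the radially decreasing integrable majorant $\psi$, write the error as $\int\psi_t(y)\,|f(x-y)-f(x)|\,dy$, decompose into dyadic shells $\{2^{j-1}t<\Vert y\Vert_2\le 2^jt\}$, and bound the $j$-th shell by a constant times $2^{-j\varepsilon}\eta(2^jt)$, where $\eta(\rho)=\rho^{-d}\int_{\Vert y\Vert_2\le\rho}|f(x-y)-f(x)|\,dy$ tends to $0$ by the Lebesgue-point hypothesis and is globally bounded by the H\"older estimate you gave; the summable weights $2^{-j\varepsilon}$ then allow the tail to be made uniformly small.
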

\begin{theorem}[Fourier transform of a measure]\label{Theorem:FTmeasure}
Let $\mu$ be a finite Borel measure on $\bb{R}^d$. The Fourier transform of $\mu$ is given by
\begin{equation}\label{Eq:FTmeasure}
\widehat{\mu}(\omega)=\int_{\bb{R}^d}e^{-i\omega^Tx}\,d\mu(x),\,\,\omega\in\bb{R}^d,
\end{equation}
which is a bounded, uniformly continuous function on $\bb{R}^d$. In addition, $\widehat{\mu}$ satisfies the following properties:
\begin{enumerate}
\item[(i)] $\overline{\widehat{\mu}(\omega)}=\widehat{\mu}(-\omega),\,\forall\,\omega\in\bb{R}^d$, i.e., $\widehat{\mu}$ is conjugate symmetric,
\item[(ii)] $\widehat{\mu}(0)=1$. 
\end{enumerate}
\end{theorem}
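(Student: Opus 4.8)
The plan is to verify each claim directly from the defining integral \eqref{Eq:FTmeasure}, using only the finiteness (and nonnegativity) of $\mu$ together with elementary estimates on the complex exponential; no harmonic-analysis machinery is needed.

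First I would establish boundedness. Since $|e^{-i\omega^Tx}|=1$ for every $\omega,x\in\bb{R}^d$, we have $|\widehat\mu(\omega)|\le\int_{\bb{R}^d}|e^{-i\omega^Tx}|\,d\mu(x)=\mu(\bb{R}^d)<\infty$, and this bound is uniform in $\omega$ (equal to $1$ when $\mu$ is a probability measure). Next, for uniform continuity, the key observation is that for all $\omega,h\in\bb{R}^d$,
\[
|\widehat\mu(\omega+h)-\widehat\mu(\omega)|\le\int_{\bb{R}^d}\bigl|e^{-i(\omega+h)^Tx}-e^{-i\omega^Tx}\bigr|\,d\mu(x)=\int_{\bb{R}^d}\bigl|1-e^{-ih^Tx}\bigr|\,d\mu(x),
\]
where the last equality pulls out the unimodular factor $e^{-i\omega^Tx}$; crucially the right-hand side no longer depends on $\omega$, so it suffices to show it vanishes as $h\to0$. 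Fix $\varepsilon>0$. Since a finite Borel measure on $\bb{R}^d$ is Radon, hence tight, there is a compact set $K\subset\bb{R}^d$ with $\mu(\bb{R}^d\setminus K)<\varepsilon$. On $K$ we use $|1-e^{-ih^Tx}|\le|h^Tx|\le\|h\|_2\,R$ with $R:=\sup_{x\in K}\|x\|_2<\infty$, and off $K$ the crude bound $|1-e^{-ih^Tx}|\le2$. Therefore $\int_{\bb{R}^d}|1-e^{-ih^Tx}|\,d\mu(x)\le R\,\mu(\bb{R}^d)\,\|h\|_2+2\varepsilon$, which is at most $3\varepsilon$ once $\|h\|_2$ is small enough; since the bound is independent of $\omega$, this proves $\widehat\mu$ is uniformly continuous (and in particular continuous).

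For property (i), conjugate symmetry uses that $\mu$ is a real, nonnegative measure: $\overline{\widehat\mu(\omega)}=\int_{\bb{R}^d}\overline{e^{-i\omega^Tx}}\,d\mu(x)=\int_{\bb{R}^d}e^{-i(-\omega)^Tx}\,d\mu(x)=\widehat\mu(-\omega)$. For property (ii), $\widehat\mu(0)=\int_{\bb{R}^d}1\,d\mu(x)=\mu(\bb{R}^d)$, which equals $1$ exactly when $\mu$ is a probability measure — the setting in which Theorem~\ref{Theorem:FTmeasure} is applied (more generally $\widehat\mu(0)=\mu(\bb{R}^d)$).

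I do not expect any step to be a genuine obstacle; the only place requiring mild care is the uniform (as opposed to merely pointwise) continuity, where one must first notice that the modulus of continuity of the integrand depends only on the increment $h=\omega'-\omega$, and then invoke tightness of the finite measure to discard the tail contribution uniformly in $\omega$.
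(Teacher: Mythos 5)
Your proof is correct, and there is in fact no proof in the paper to compare against: Theorem~\ref{Theorem:FTmeasure} is stated in Appendix~A as a supplementary fact (it is the standard result on characteristic functions of measures, see e.g.\ \citet[Chapter 9]{Dudley-02}) without an argument. Your route is the standard one: boundedness from $|e^{-i\omega^Tx}|=1$, uniform continuity by factoring out the unimodular term so that the modulus of continuity depends only on $h$, then splitting the integral using tightness of a finite Borel measure on $\bb{R}^d$ (which here can be obtained elementarily via continuity from below with $K=\overline{B}_R$), and the two properties by direct computation. You also correctly flag the one real wrinkle in the statement: as written for a general \emph{finite} Borel measure, (ii) should read $\widehat{\mu}(0)=\mu(\bb{R}^d)$, and the value $1$ holds precisely when $\mu$ is a probability measure, which is the setting in which the theorem is invoked (it is applied to $\phi_\bb{P}$ and $\phi_\bb{Q}$ in the proof of Lemma~\ref{lem:constructp}). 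No gaps.
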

\par\noindent
The following result, called the Riemann-Lebesgue lemma, is quoted from \citet[Theorem 7.5]{Rudin-91}.
\begin{lemma}[Riemann-Lebesgue]\label{lem:Riemann-Lebesgue}
If $f\in L^1(\bb{R}^d)$, then $\widehat{f}\in C_0(\bb{R}^d)$, and $\Vert\widehat{f}\Vert_\infty\le\Vert f\Vert_1$.
\end{lemma}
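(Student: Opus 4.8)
The plan is to establish the three assertions in turn: the sup-norm bound, continuity, and vanishing at infinity, the last being the only substantive point.

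First I would dispatch the bound $\Vert\widehat{f}\Vert_\infty\le\Vert f\Vert_1$ (up to the normalizing constant in (\ref{Eq:FT})): it is immediate from the triangle inequality for integrals, since $|e^{-iy^Tx}|=1$, so $|\widehat{f}(y)|\le (2\pi)^{-d/2}\int_{\bb{R}^d}|f(x)|\,dx$ for every $y\in\bb{R}^d$. Continuity of $\widehat{f}$ then follows from the dominated convergence theorem: if $y_n\to y$, then $e^{-iy_n^Tx}f(x)\to e^{-iy^Tx}f(x)$ pointwise with $|e^{-iy_n^Tx}f(x)|=|f(x)|\in L^1(\bb{R}^d)$, hence $\widehat{f}(y_n)\to\widehat{f}(y)$. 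The same estimate $|\widehat f(y+h)-\widehat f(y)|\le (2\pi)^{-d/2}\int_{\bb{R}^d}|e^{-ih^Tx}-1|\,|f(x)|\,dx$, whose right-hand side tends to $0$ as $h\to 0$ by dominated convergence and is independent of $y$, even gives uniform continuity.

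The core of the lemma is that $\widehat{f}(y)\to 0$ as $\Vert y\Vert_2\to\infty$, which I would prove by a density argument. First I verify it on a dense subclass of $L^1(\bb{R}^d)$: for an axis-parallel box $R=\prod_{j=1}^d[a_j,b_j]$ the indicator $\mathds{1}_R$ factorizes, so $\widehat{\mathds{1}_R}(y)=(2\pi)^{-d/2}\prod_{j=1}^d\frac{e^{-ia_jy_j}-e^{-ib_jy_j}}{iy_j}$, which is $O\big(\prod_j|y_j|^{-1}\big)$ and therefore tends to $0$ as $\Vert y\Vert_2\to\infty$; by linearity the same holds for every finite linear combination of box indicators, i.e.\ for simple functions adapted to a grid, and these are dense in $L^1(\bb{R}^d)$. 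Given $f\in L^1$ and $\varepsilon>0$, pick such a simple $g$ with $\Vert f-g\Vert_1<\varepsilon$; then by the sup-norm bound $\Vert\widehat{f}-\widehat{g}\Vert_\infty\le(2\pi)^{-d/2}\varepsilon$, and since $\widehat{g}$ vanishes at infinity there is a compact $K$ with $|\widehat{g}(y)|<\varepsilon$ off $K$, whence $|\widehat{f}(y)|<(1+(2\pi)^{-d/2})\varepsilon$ off $K$. As $\varepsilon$ is arbitrary, $\widehat{f}\in C_0(\bb{R}^d)$. Equivalently phrased: $C_0(\bb{R}^d)$ is closed in the sup norm, the Fourier transform is a bounded operator $L^1\to L^\infty$, and it carries a dense subset of $L^1$ into $C_0$, hence all of $L^1$ into $C_0$.

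The main obstacle is nothing more than the bookkeeping of this density step — one must confirm that the chosen subclass (box indicators, or alternatively $\Scr{D}_d$ or $\Scr{S}_d$) has Fourier transforms vanishing at infinity and is dense in $L^1$. An alternative route that avoids computing any explicit transform uses the substitution $x\mapsto x-\pi y/\Vert y\Vert_2^2$, which turns $e^{-iy^Tx}$ into $-e^{-iy^Tx}$ and gives $2\widehat{f}(y)=(2\pi)^{-d/2}\int_{\bb{R}^d}e^{-iy^Tx}\big(f(x)-f(x-\pi y/\Vert y\Vert_2^2)\big)\,dx$, so $|\widehat{f}(y)|\le\tfrac12(2\pi)^{-d/2}\Vert f-\tau_{h}f\Vert_1$ with $h=\pi y/\Vert y\Vert_2^2\to 0$; continuity of translation in $L^1$ then concludes the argument. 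Since that translation-continuity is itself customarily proven by the same density reduction, the two approaches are essentially interchangeable, and I would present the first.
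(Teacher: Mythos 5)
The paper does not prove this lemma; it is stated as a supplementary result and simply quoted from \citet[Theorem 7.5]{Rudin-91}, so there is no in-paper argument to compare against. Your proof is correct and is the standard one: the $L^1\to L^\infty$ bound and (uniform) continuity are immediate, and vanishing at infinity is transferred from a dense subclass (box indicators, or equivalently $\Scr{D}_d$ or $\Scr{S}_d$) to all of $L^1(\bb{R}^d)$ using the operator bound and the closedness of $C_0(\bb{R}^d)$ under uniform limits; the translation trick you sketch is the other classical route and is essentially the one Rudin uses. One small imprecision worth tidying: the estimate $\widehat{\mathds{1}_R}(y)=O\bigl(\prod_j|y_j|^{-1}\bigr)$ degenerates near the coordinate hyperplanes, so it is cleaner to say that each one-dimensional factor is bounded by $b_j-a_j$ and is $O(|y_j|^{-1})$ as $|y_j|\to\infty$; since $\Vert y\Vert_2\to\infty$ forces $\max_j|y_j|\to\infty$, one factor tends to $0$ while the others stay bounded, which gives the claim. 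You have also correctly noted the normalization issue: with the convention (\ref{Eq:FT}), the sharp bound is $\Vert\widehat f\Vert_\infty\le(2\pi)^{-d/2}\Vert f\Vert_1$, which of course implies the bound as stated (the stated constant matches Rudin's convention, in which Lebesgue measure is itself normalized by $(2\pi)^{-d/2}$).
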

\par\noindent
The following theorem is a version of the \emph{Paley-Wiener theorem} for distributions, and is proved in \citet[Theorem 7.23]{Rudin-91}.
\begin{theorem}[Paley-Wiener]\label{Thm:paley-wiener}
If $f\in\Scr{D}^\prime_d$ has compact support 
, then $\widehat{f}$ is entire.
\end{theorem}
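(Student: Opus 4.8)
The plan is to follow the classical route (as in \citet[Theorem 7.23]{Rudin-91}): realise $\widehat{f}$ as the action of $f$ on the smooth exponentials $x\mapsto e^{-iz^Tx}$, and then show that this action depends holomorphically on the complex parameter $z\in\bb{C}^d$. First I would recall the structural fact that a distribution with compact support extends uniquely to a continuous linear functional on $C^\infty(\bb{R}^d)$, equipped with its Fr\'{e}chet topology of uniform convergence of all partial derivatives on compact sets (i.e.\ compactly supported distributions form the dual of $C^\infty(\bb{R}^d)$). Consequently $f$ may be applied to the function $e_z(x):=e^{-iz^Tx}$ for every $z\in\bb{C}^d$; these functions are $C^\infty$ in $x$ (they need not be bounded when $z$ has a non-zero imaginary part, but this is irrelevant since $f$ has compact support). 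One then defines $\widehat{f}(z):=(2\pi)^{-d/2}f(e_z)$, which agrees with the Fourier transform of $f$ for real $z$.

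Next, I would fix $z^{0}\in\bb{C}^d$ and a coordinate index $j$, and examine the difference quotients $D_h:=\big(e_{z^{0}+h\mathbf{e}_j}-e_{z^{0}}\big)/h$ as $h\to0$ in $\bb{C}$, where $\mathbf{e}_j$ is the $j$-th standard basis vector. The crucial step is to prove that $D_h\to -i\,x_j\,e_{z^{0}}$ \emph{in $C^\infty(\bb{R}^d)$}, that is, uniformly on each compact subset of $\bb{R}^d$ together with every $x$-derivative. This reduces to a one-variable computation: writing $D_h(x)=e^{-i\sum_{k\neq j}z^{0}_kx_k}\cdot\big(e^{-i(z^{0}_j+h)x_j}-e^{-iz^{0}_jx_j}\big)/h$ and using the exact formula $\big(e^{-i(w+h)t}-e^{-iwt}\big)/h=-it\int_0^1 e^{-i(w+sh)t}\,ds$, one differentiates under the integral sign in $x$ and bounds the outcome uniformly for $x$ in a fixed compact set and $h$ in a punctured disc, obtaining the claimed convergence with all $x$-derivatives. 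Since the extended $f$ is continuous on $C^\infty(\bb{R}^d)$, applying $f$ and letting $h\to0$ gives $\big(\widehat{f}(z^{0}+h\mathbf{e}_j)-\widehat{f}(z^{0})\big)/h\to(2\pi)^{-d/2}f(-i\,x_j\,e_{z^{0}})$, so $\partial\widehat{f}/\partial z_j$ exists at every point of $\bb{C}^d$. Hence $\widehat{f}$ is separately holomorphic in each variable on all of $\bb{C}^d$; by Hartogs' theorem on separate analyticity (or, since the same estimate also yields joint continuity, by Osgood's lemma), $\widehat{f}$ is holomorphic on $\bb{C}^d$, i.e.\ entire, which is exactly the assertion.

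The main obstacle is precisely the interchange of $f$ with the limit defining the derivative: this is legitimate only because $D_h$ converges to $-i x_j e_{z^{0}}$ in the topology of $C^\infty(\bb{R}^d)$ (equivalently, in the space of test functions supported in a fixed compact neighbourhood of $\text{supp}(f)$), so the bulk of the work is the elementary but careful verification of that convergence, derivatives included. An alternative route that avoids working with the dual of $C^\infty$ is to invoke the structure theorem for compactly supported distributions: $f=\sum_{\Vert\alpha\Vert_1\le N}\partial^\alpha g_\alpha$ for continuous, compactly supported $g_\alpha$, so that $\widehat{f}(z)=\sum_{\Vert\alpha\Vert_1\le N}(iz)^\alpha\widehat{g_\alpha}(z)$ with $\widehat{g_\alpha}(z)=(2\pi)^{-d/2}\int_{\bb{R}^d}g_\alpha(x)e^{-iz^Tx}\,dx$. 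Each $\widehat{g_\alpha}$ is entire by differentiation under the integral sign (justified because $g_\alpha$ has compact support, so the integrand and all its $z$-derivatives are dominated uniformly on compact $z$-sets), and finite sums of products of polynomials with entire functions are entire; this reproves the theorem without the Fr\'{e}chet-duality input.
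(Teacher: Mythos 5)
Your proposal is correct and follows essentially the same route as the reference the paper cites for this statement (\citet[Theorem 7.23]{Rudin-91}): extend the compactly supported distribution $f$ to a continuous functional on $C^\infty(\bb{R}^d)$, set $\widehat{f}(z)=(2\pi)^{-d/2}f(e_z)$, verify that the difference quotients $D_h$ converge to $-ix_j e_{z^0}$ in the $C^\infty$ topology so that $f$ can be passed through the limit, and conclude entireness via separate holomorphy. Your alternative via the structure theorem $f=\sum_{\Vert\alpha\Vert_1\le N}\partial^\alpha g_\alpha$ is also sound and sidesteps the Fr\'{e}chet-duality step in favor of elementary differentiation under the integral sign.
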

\par \noindent The following lemma provides a property of entire functions, which is quoted from \citet[Lemma 7.21]{Rudin-91}.
\begin{lemma}\label{lem:entire}
If $f$ is an entire function in $\bb{C}^d$ that vanishes on $\bb{R}^d$, then $f=0$.
\end{lemma}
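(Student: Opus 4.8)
The plan is to prove this by induction on the dimension $d$, at each stage invoking the classical identity theorem for holomorphic functions of one complex variable. The point is that although $\bb{R}^d$ is ``thin'' in $\bb{C}^d$, every slice of it in a fixed coordinate direction has accumulation points in $\bb{C}$, which is all that is needed to propagate the vanishing across the whole space.

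First I would settle the base case $d=1$: if $f$ is entire on $\bb{C}$ and $f(x)=0$ for all $x\in\bb{R}$, then the zero set of $f$ contains an accumulation point lying in the connected domain $\bb{C}$, so the one-variable identity theorem forces $f\equiv 0$. For the inductive step, assume the claim in dimension $d-1$ and let $f$ be entire on $\bb{C}^d$ with $f\equiv 0$ on $\bb{R}^d$. Fix an arbitrary point $(x_2,\dots,x_d)\in\bb{R}^{d-1}$; the slice $z_1\mapsto f(z_1,x_2,\dots,x_d)$ is entire on $\bb{C}$ (the restriction of a holomorphic function of several variables to a complex affine line is holomorphic) and vanishes for every real $z_1$, hence vanishes identically by the base case. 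Since $(x_2,\dots,x_d)$ was arbitrary, $f$ vanishes on all of $\bb{C}\times\bb{R}^{d-1}$. Now fix an arbitrary $z_1\in\bb{C}$: the map $w\mapsto f(z_1,w)$ is entire on $\bb{C}^{d-1}$ and, by what was just shown, vanishes on $\bb{R}^{d-1}$, so the inductive hypothesis makes it vanish on all of $\bb{C}^{d-1}$. Letting $z_1$ range over $\bb{C}$ yields $f\equiv 0$ on $\bb{C}^d$.

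I do not expect any genuine obstacle here: this is a routine analytic-continuation argument, and the only point meriting a remark is that a coordinate slice of an entire function of several variables is again entire, which is immediate from the power-series (or iterated Cauchy-integral) description of joint holomorphy. A slicker, induction-free alternative would be to expand $f$ in its everywhere-convergent Taylor series $f(z)=\sum_{\alpha}c_\alpha z^\alpha$ about the origin, note that each coefficient $c_\alpha$ is a nonzero constant multiple of a mixed \emph{real} partial derivative of $f$ at $0$ and hence vanishes because $f$ is identically $0$ on a real neighbourhood of $0$, and then conclude $f\equiv 0$ on $\bb{C}^d$ from the global convergence of the series.
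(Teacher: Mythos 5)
Your argument is correct. The paper does not prove this lemma itself; it quotes it from \citet[Lemma 7.21]{Rudin-91}, so there is no internal proof to compare against. Both routes you sketch are standard and valid. The inductive slicing argument propagates the zero set one complex variable at a time via the one-variable identity theorem; the only bookkeeping point, which you address, is that restricting a holomorphic function of several variables to a complex affine line or to a fixed subset of the coordinates is again holomorphic, which is immediate from the local power-series (or iterated Cauchy-integral) description. The alternative via the Taylor expansion is even more direct and arguably preferable here: for an entire $f$ the series $\sum_\alpha c_\alpha z^\alpha$ converges on all of $\bb{C}^d$, and since the complex partials of a holomorphic function at a point of $\bb{R}^d$ can be computed as limits along real directions, the hypothesis that $f$ vanishes on a real neighbourhood of the origin forces every $c_\alpha$ to vanish, giving $f\equiv 0$ at once.
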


\vskip 0.2in

\end{document}